\newtheorem{theorem}{Theorem}[section]
\newtheorem{lemma}{Lemma}[section]
\newtheorem{remark}{Remark}[section]
\newtheorem{definition}{Definition}[section]
\title{\textbf{\Large{Approximate Leave-One-Out for High-Dimensional Non-Differentiable Learning Problems}}}
\author{\small Shuaiwen Wang\tsup{1,*}, Wenda Zhou\tsup{1,*}, Arian Maleki\tsup{1}, Haihao Lu\tsup{2}, Vahab Mirrokni\tsup{3}}
\date{}
\begin{document}
\maketitle

{\let\thefootnote\relax\footnotetext{
    \tsup{1}Department of Statistics, Columbia University, New York, USA;
    \tsup{2}Mathematics Department and Operation Research Center, Massachusetts Institute of Technology, Massachusetts, USA;
    \tsup{3}Google Research, New York, USA;
    \tsup{*}Equal contributions.
}}

\begin{abstract}
Consider the following class of learning schemes:
\begin{equation}\label{eq:main-problem1}
    \hat{\betav} := \argmin_{\betav \in \calC}\;\sum_{j=1}^n \ell(\xv_j^\top\betav; y_j) + \lambda R(\betav),
\end{equation}
where $\xv_i \in \mathbb{R}^p$ and $y_i \in \mathbb{R}$ denote the $i^{\rm
th}$ feature and response variable respectively. Let $\ell$ and $R$ be
the convex loss function and regularizer, $\betav$ denote the unknown weights, and
$\lambda$ be a regularization parameter. $\calC \subset \mathbb{R}^{p}$ is a closed convex
set. Finding the optimal choice of $\lambda$ is a challenging problem in
high-dimensional regimes where both $n$ and $p$ are large. We propose three
frameworks to obtain a computationally efficient approximation of the
leave-one-out cross validation (LOOCV) risk for nonsmooth losses and
regularizers. Our three frameworks are based on the primal, dual, and proximal
formulations of \eqref{eq:main-problem1}. Each framework shows its strength in
certain types of problems. We prove the equivalence of the three
approaches under smoothness conditions. This equivalence enables us to justify
the accuracy of the three methods under such conditions. We use our approaches
to obtain a risk estimate for several standard problems, including generalized
LASSO, nuclear norm regularization, and support vector machines. We
empirically demonstrate the effectiveness of our results for non-differentiable
cases.
\end{abstract}

% \begin{keywords}
%     Risk Estimation, Cross-Validation, Parameter Tuning, High Dimensional
%     Problems, Convex Optimization
% \end{keywords}

\section{Introduction}
\label{intro}
\subsection{Motivation}
Consider a standard prediction problem in which a dataset $\{(y_j,
\xv_j)\}_{j=1}^n \subset \mathbb{R}\times\mathbb{R}^{p}$ is employed to learn
a model for inferring information about new datapoints that are yet to
be observed. One of the most popular classes of learning schemes, specially
in high-dimensional settings, studies the following optimization
problem:
\begin{equation}\label{eq:main-problem}
    \hat{\betav} := \argmin_{\betav \in \calC}\;\sum_{j=1}^n \ell(\xv_j^\top\betav; y_j) + \lambda R(\betav),
\end{equation}
where $\ell: \mathbb{R}^2 \rightarrow \mathbb{R}$ is a convex loss function, $R:
\mathbb{R}^p \rightarrow \mathbb{R}$ is a convex regularizer, $\calC \subset
\mathbb{R}^p$ is a closed convex set and $\lambda$ is
the tuning parameter that specifies the amount of regularization. By applying
an appropriate regularizer in \eqref{eq:main-problem},
we are able to achieve better bias-variance trade-off
and pursue special structures such as sparsity and low rank structure.
However, the performance of such techniques hinges upon the selection of tuning
parameters.

The most generally applicable tuning method is cross validation
\cite{stone1974cross}. One common choice
is $k$-fold cross validation. This method presents potential bias issues in
high-dimensional settings where $n$ is comparable to $p$, specially when the number of folds is not very large. For instance,
the phase transition phenomena that happen in such regimes
\cite{amelunxen2014living, donoho2009message, donoho2005neighborliness, weng2016overcoming}
indicate that any data splitting may cause dramatic effects on the solution of
\eqref{eq:main-problem} (see Figure \ref{fig:lasso-risk-loocv-kfold} for an
example). Hence, the
risk estimates obtained from $k$-fold cross validation may not be reliable.
The bias issues of $k$-fold cross validation may be alleviated by choosing
the number of folds $k$ to be large. This makes LOOCV particularly appealing, since it offers an approximately unbiased estimate of the risk. 
 However, the computation of LOOCV requires training the model $n$ times, which is unaffordable for large
datasets.

\begin{figure}[t!]
    \begin{center}
        \includegraphics[scale=0.4]{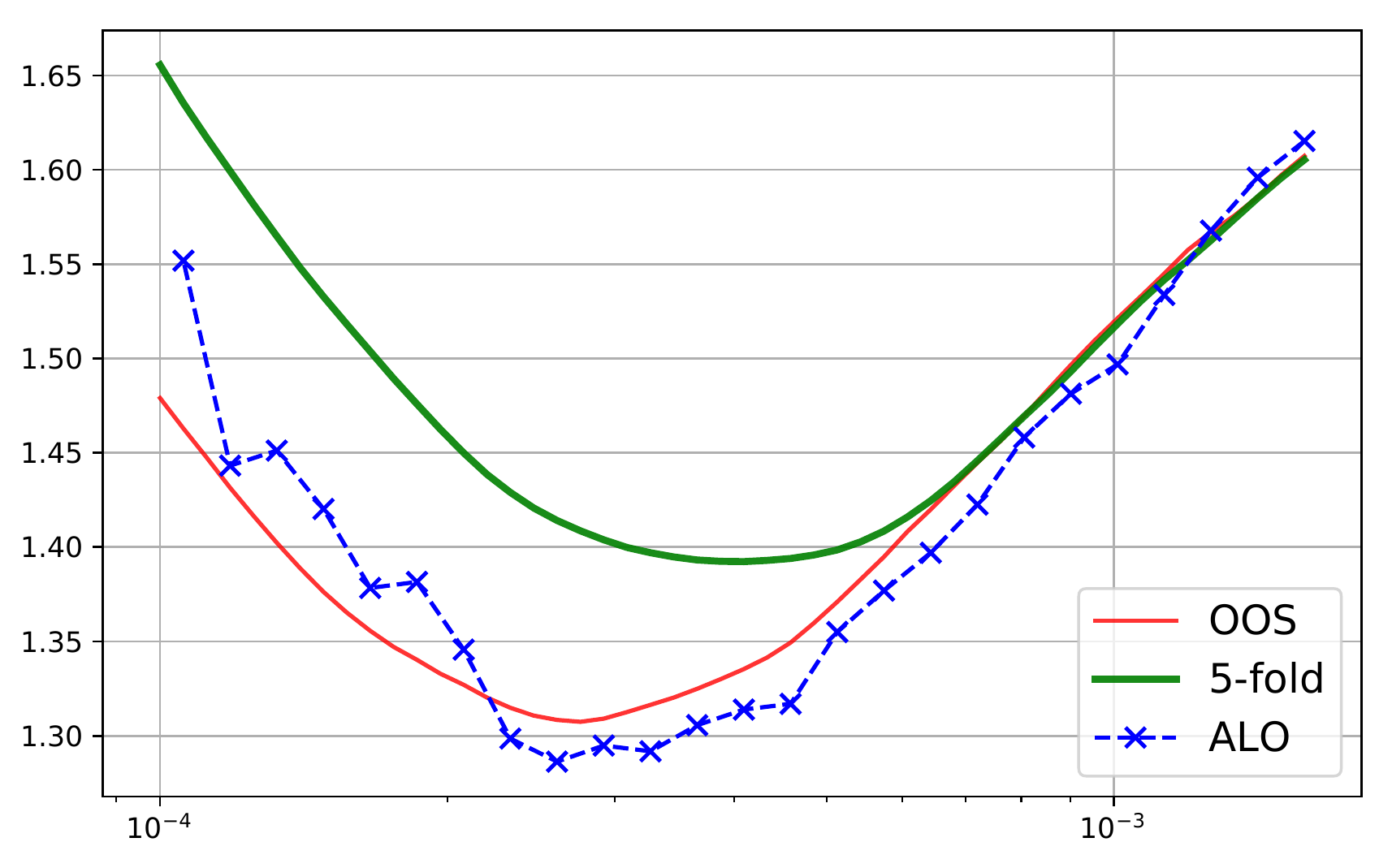}
        \caption{Risk estimates of LASSO based on 5-fold CV and ALO proposed in
        this paper, compared with the true out-of-sample prediction error
        (OOS). In this example, 5-fold CV provides biased estimates of OOS,
        while ALO works just fine. Here we use $n=5000$, $p=4000$ and $iid$ Gaussian design.}
        \label{fig:lasso-risk-loocv-kfold}
    \end{center}
\end{figure}

The high computational complexity of LOOCV has motivated researchers to propose
computationally less demanding approximations of the quantity. Early examples
offered approximations for the case $R(\betav) = \frac{1}{2}\|\betav\|_2^2$ and
the loss function being smooth \cite{allen1974relationship, o1986automatic,
le1992ridge, cawley2008efficient, meijer2013efficient, opper2000gaussian}.
In \cite{beirami2017optimal}, the authors considered such approximations for
smooth loss functions and smooth regularizers. In
this line of work, the accuracy of the approximations was either not studied or
was only studied in the $n$ large, $p$ fixed regime. In a recent paper,
\cite{kamiar2018scalable} employed a similar approximation strategy to obtain
approximate leave-one-out formulas for smooth loss functions and smooth
regularizers. They show that under some mild conditions, such approximations
are accurate in high-dimensional settings. Unfortunately, the approximations
offered in \cite{kamiar2018scalable} only cover twice differentiable loss
functions and regularizers. On the other hand, numerous modern regularizers,
such as generalized LASSO and nuclear norm, and also many loss functions, such as hinge loss, are
not smooth.

In this paper, we propose three powerful frameworks for
calculating an approximate leave-one-out estimator (ALO) of the LOOCV risk
that are capable of offering accurate parameter tuning even for
non-differentiable losses and regularizers. Our first approach is based on the
approximation of the dual of \eqref{eq:main-problem}. Our second approach is
based on the smoothing and quadratic approximation of the primal problem
\eqref{eq:main-problem}. The third approach is based  on the proximal
formulation of \eqref{eq:main-problem}.  While the three approaches consider
different approximations that happen in different domains, we will show that
when both $\ell$ and $r$ are twice differentiable, the three frameworks produce
the same ALO formulas, which are also the same as the formulas proposed in
\cite{kamiar2018scalable}.

We use our platforms to obtain concise formulas for several popular examples
including generalized LASSO, support vector machine (SVM) and nuclear norm
minimization. As will be clear from our examples, despite the equivalence of
the three frameworks for smooth loss functions and regularizers, the technical
aspects of deriving ALO formulas have major variations in different examples.
In Remark \ref{remark:three-approaches-strength} we have a short discussion
about the strength of different approaches on different problems.
Finally, we present extensive simulations to confirm the accuracy of our
formulas on various important machine learning models.

\subsection{Other Related Work}
The importance of parameter tuning in learning systems has encouraged many
researchers to study this problem from different perspectives. In addition to
cross validation, several other
approaches have been proposed including Stein's unbiased risk estimate
(SURE), Akaike information criterion (AIC), and Mallow's $C_p$. While AIC is
designed for smooth parametric models, SURE has been extended to emerging
optimization problems, such as generalized LASSO and nuclear norm
minimization \cite{candes2013unbiased, dossal2013degrees, tibshirani2012degrees,
vaiter2017degrees, zou2007degrees}.

Unlike cross validation which approximates the out-of-sample prediction
error, SURE, AIC, and $C_p$ offer estimates for in-sample prediction error
\cite{EOSL:chapter7}. This makes cross validation more appealing for
many learning systems. Furthermore, unlike ALO,
both SURE and $C_p$ only work on linear models (and not generalized linear
models) and their unbiasedness is only guaranteed under the Gaussian model
for the errors. There has been little success in extending SURE beyond this
model \cite{efron2004estimation}. 

Another class of parameter tuning schemes are based on approximate message
passing framework \cite{bayati2013estimating, mousavi2017consistent,
obuchi2016cross}. As pointed out in \cite{obuchi2016cross}, this approach is
intuitively related to LOOCV. It offers
consistent parameter tuning in high-dimensions \cite{mousavi2017consistent, wang2017bridge},
but the results strongly depend on the independence of the elements of $\Xv$. This limits to application of this approach to very specific problems.

\subsection{Organization of the Paper}
Our paper is organized as follows: Section \ref{sec:prelim} contributes to 
some preliminaries which will be uesd later. Section \ref{sec:approximatedual},
\ref{sec:primal-smoothing}, \ref{sec:proxformulation} introduce respectively
the dual approach, primal approach and proximal approach to obtain the ALO
formula. Then in Section \ref{sec:primal-dual-equiv} we prove the equivalence
of the three approaches under the smoothness conditions, followed by a corollary related
to accuracy. All the above sections discuss ALO without
including the intercept term in the model. Thus in Section \ref{sec:intercept}
we address the case when the intercept is contained. We then apply the ALO
approaches introduced in previous sections to several models and obtain
their specific ALO formula in Section \ref{sec:applications}. Experimental
results are presented in Section \ref{sec:experiments}. Finally, after a short discussion in Section \ref{sec:discussion}, we present all the proofs in
Section \ref{sec:proof}.

\subsection{Notation}

Lowercase and uppercase bold letters denote vectors and matrices,
respectively. For subsets $A \subset \{1,2, \ldots, n\}$ and $B \subset \{ 1,2,
\ldots, p\}$ of indices and a matrix $\Xv$, let $\Xv_{A, \cdot}$
and $\Xv_{\cdot, \Bv}$ denote the submatrices that include only
rows of $\Xv$ in $A$, and columns of $\Xv$ in $B$ respectively. Let
$\{a_i\}_{i \in S}$ denote the vector whose components are $a_i$ for $i \in S$. We
may omit $S$, in which case we consider all indices valid in the context.
For a function $f: \mathbb{R} \rightarrow \mathbb{R}$, let
$\dot{f}$, $\ddot{f}$ denote its 1\textsuperscript{st} and
2\textsuperscript{nd} derivatives. For a
vector $\av$, we use $\diag[\av]$ to denote a diagonal matrix
$\Av$ with $A_{ii} = a_i$. Finally,
let $\nabla R$ and $\nabla^2 R$ denote the gradient and Hessian of
a function $R: \mathbb{R}^p \rightarrow \mathbb{R}$.

\section{Preliminaries}\label{sec:prelim}
In this section we describe the problem to be studied in this paper and some
preliminary knowledge needed for subsequent analyses. We start with the
unconstrained learning problems. In Section \ref{ssec:constrainedopt}, we
will discuss the generalization to the constrained ones.

\subsection{Problem Description}
In this paper, we study the statistical learning models in form
\eqref{eq:main-problem}. For each value of $\lambda$, we evaluate the
following LOOCV risk estimate with respect to some error function $d$:
\begin{equation}\label{eq:err-func}
    \loo_{\lambda}: = \frac{1}{n}\sum_{i=1}^n d(y_i, \xv_i^\top \leavei{\estim{\betav}}),
\end{equation}
where $\estimi{\betav}$ is the solution of the leave-$i$-out problem
\begin{equation}\label{eq:leave-i-out}
    \estimi{\betav} := \argmin_{\betav}\; \sum_{j\neq i} \ell(\xv_j^\top\betav; y_j) + \lambda R(\betav).
\end{equation}
Calculating \eqref{eq:leave-i-out} requires training the model $n$ times,
which may be time-consuming in high-dimensions. As an alternative, we
propose an estimator $\surrogi{\betav}$ to approximate $\estimi{\betav}$
based on the full-data estimator $\estim{\betav}$ to reduce the computational
complexity. We consider three frameworks for obtaining $\surrogi{\betav}$, and
denote the corresponding risk estimate by:
\begin{equation*}
    \alo_\lambda: = \frac{1}{n}\sum_{i = 1}^n d(y_i, \xv_i^\top \surrogi{\betav}).
\end{equation*}

The estimates we obtain will be called approximated leave-one-out (ALO) throughout the paper. 

\subsection{Primal and Dual Correspondence}

The objective function of penalized regression problem with loss $\ell$ and
regularizer $R$ is given by:
\begin{equation} \label{eq:methods:primal}
    P(\betav) := \sum_{j = 1}^n \ell(\xv_j^\top \betav; y_j) + R(\betav).
\end{equation}

Here and subsequently, unless necessary, we absorb the value of $\lambda$ into $R$
to simplify the notation. We also consider the Lagrangian dual problem, which can be written in the form:
\begin{equation} \label{eq:methods:dual}
    \min_{\dualv \in \mathbb{R}^n} D(\thetav) := \sum_{j = 1}^n
    \ell^*(-\theta_j; y_j) + R^*(\Xv^\top \thetav),
\end{equation}
where $\ell^*$ and $R^*$ denote the \textit{Fenchel conjugates}\footnote{The Fenchel conjugate $f^*$ of a function $f$ is defined as
$f^*(x):=\sup_y\{\langle x, y\rangle - f(y)\}$.}
of $\ell$ and $R$ respectively. See the derivation in Appendix \ref{sec:dual-derivation}.
It is known that under mild conditions, \eqref{eq:methods:primal}
and \eqref{eq:methods:dual} are equivalent \cite{boyd2004convex}. In this case,
we have the primal-dual correspondence relating the primal optimal
$\estim{\betav}$ and the dual optimal $\estim{\thetav}$:
\begin{equation} \label{eq:primal-dual-correspondence}
\begin{gathered}
    \estim{\betav} \in \partial R^* (\Xv^\top  \estim{\dualv}), \quad
    \Xv^\top \estim{\dualv} \in \partial R(\estim{\betav}), \\
    \xv_j^\top\estim{\betav} \in \partial \ell^* (-\estim{\duals}_j; y_j), \quad
    - \estim{\duals}_j \in \partial \ell(\xv_j^\top \estim{\betav}; y_j),
\end{gathered}
\end{equation}
where $\partial f$ denotes the set of subgradients of a function $f$ with
respect to its first argument. These relations will help us approximate
$\loo_\lambda$ from primal and dual perspectives.

\subsection{Proximal Formulation}\label{ssec:proxformula_intro}
In this section, we review another characterization of $\estim{\betav}$ that will be
used for approximating $\loo_\lambda$. Consider the following definition:
\begin{definition}
    The proximal operator $\proxv_h: \mathbb{R}^p \rightarrow \mathbb{R}^p$ of a
    function $h: \mathbb{R}^p \rightarrow \mathbb{R}$ is defined as
    \begin{equation*}
        \proxv_h(\zv; \tau) := \argmin_{\uv} \frac{1}{2\tau} \| \zv - \uv \|_2^2
        + h(\uv)
    \end{equation*}
\end{definition}
 When $\tau = 1$, we will write $\proxv_h(\zv)$ instead of $\proxv_h(\zv; 1)$ for notational
simplicity. For many modern regularizers $R$, such as LASSO and nuclear norm,
$\proxv_R (\cdot)$ has an explicit expression. We summarize some of the
properties of the proximal operator in the following lemma:

\begin{lemma}\label{lem:proxproperties}
    The proximal operator satisfies the following properties:
    \begin{enumerate}
        \item
            The proximal operator $\proxv_h$ is nonexpansive, i.e.,
            \begin{equation*}
                \|\proxv_h(\zv; \tau) - \proxv_h(\wv; \tau)\|_2^2
                \leq
                \langle \proxv_h(\zv; \tau) - \proxv_h(\wv; \tau), \zv - \wv \rangle.
            \end{equation*}
        \item
            $\proxv_h = (I + \partial h)^{-1}$;
        \item Let $h:\mathbb{R} \rightarrow \mathbb{R}$ be a convex and
            piecewise smooth function with $k$ number of zeroth-order
            singularities\footnote{
                A singular point of a function is called
                $q$\textsuperscript{th} order, if at this point the function is $q$
                times differentiable, but its $(q+1)$\textsuperscript{th} order
                derivative does not exist.
            } $\{v_1, \ldots, v_k\}\subset\mathbb{R}$,
            then $\prox_h(z; \tau)$ takes constant value $v_j$ when
            $z \in [v_j + \tau \dot{h}_-(v_j), v_j + \tau \dot{h}_+(v_j)]$
            with $\dot{h}_{-}$ denoting the left-derivative and $\dot{h}_+$ for the right.
            Note that for different value of $v_j$, the convexity guarantees
            these intervals do not overlap with each other. Further,
            $\prox_h(z; \tau)$ is differentiable as long as $z$ does not lie on the
            boundaries of these intervals;
            \item 
            If $h: \mathbb{R}^p \rightarrow \mathbb{R}$ is a twice
            differentiable convex function, then the Jacobian of $\proxv_{h}$
            exists. In addition, the Jacobian matrix is symmetric and its
            eigenvalues are all between zero and one.
        \item
            A function $\etav:\mathbb{R}^p \rightarrow \mathbb{R}^p$ is a
            proximal operator of a convex function if and only
            if $\etav$ is nonexpansive and a gradient of a convex function;
    \end{enumerate}
\end{lemma}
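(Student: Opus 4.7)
The plan is to handle the five assertions essentially independently, since each rests on a different standard tool (optimality conditions, resolvent identity, subdifferential geometry, implicit differentiation, and Moreau's decomposition). I would open each item by writing down the first-order optimality condition for the proximal minimization, which reads $\zv \in \uv + \tau\,\partial h(\uv)$ at $\uv = \proxv_h(\zv;\tau)$. Part 2 is then essentially a tautology: the condition rearranges to $\uv = (I+\tau\,\partial h)^{-1}(\zv)$, and at $\tau=1$ this is the claimed identity. Once part 2 is in hand it is useful for everything else.

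For part 1 (firm nonexpansiveness), I would let $\uv = \proxv_h(\zv;\tau)$, $\uv' = \proxv_h(\wv;\tau)$, pick subgradients $g \in \partial h(\uv)$, $g' \in \partial h(\uv')$ matching the optimality conditions, and use monotonicity of $\partial h$ (i.e.\ $\langle g-g', \uv-\uv'\rangle \ge 0$) together with $\zv - \uv = \tau g$ and $\wv - \uv' = \tau g'$ to expand $\langle \uv-\uv', \zv-\wv\rangle$ and produce the stated inequality. Part 3 is the one-dimensional specialization: at a zeroth-order singularity $v_j$ the subdifferential is the interval $[\dot h_-(v_j), \dot h_+(v_j)]$, so by part 2 the preimage of $v_j$ under $I + \tau\,\partial h$ is exactly the interval $[v_j + \tau\dot h_-(v_j),\, v_j + \tau\dot h_+(v_j)]$; convex monotonicity of $\dot h_\pm$ gives disjointness, and differentiability off the endpoints follows from the inverse function theorem applied to $z = u + \tau\dot h(u)$ on each smooth piece, where $\dot h$ is $C^1$.

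Part 4 is implicit differentiation in the smooth multivariate case: differentiating the identity $\zv = \uv + \tau\nabla h(\uv)$ at $\uv = \proxv_h(\zv;\tau)$ yields $I = (I + \tau\nabla^2 h(\uv))\,J_{\proxv_h}(\zv)$, and since $\nabla^2 h \succeq 0$ the matrix $I + \tau\nabla^2 h$ is symmetric positive definite, so its inverse is symmetric with eigenvalues $1/(1+\tau\mu_i) \in (0,1]$. A one-line justification that $I + \tau\nabla^2 h$ is indeed invertible (strict positivity of $1+\tau\mu_i$) closes this part.

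Part 5 is the deepest step and I expect it to be the main obstacle, since one direction requires exhibiting a convex $h$ producing a given $\etav$. For the forward direction I would invoke the Moreau envelope $h^{(\tau)}(\zv) := \min_{\uv} \{\tfrac{1}{2\tau}\|\zv-\uv\|^2 + h(\uv)\}$ and recall that it is convex and differentiable with $\nabla h^{(\tau)}(\zv) = \tfrac{1}{\tau}(\zv - \proxv_h(\zv;\tau))$; setting $\tau = 1$ gives $\proxv_h = \nabla\varphi$ for $\varphi(\zv) = \tfrac{1}{2}\|\zv\|^2 - h^{(1)}(\zv)$, which is convex because $\tfrac{1}{2}\|\zv\|^2 - h^{(1)}(\zv)$ is the conjugate-based expression one verifies directly; nonexpansiveness comes from part 1. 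For the converse, given $\etav = \nabla\varphi$ with $\varphi$ convex and $\etav$ nonexpansive, I would define $\psi(\zv) := \tfrac{1}{2}\|\zv\|^2 - \varphi(\zv)$, observe nonexpansiveness of $\nabla\varphi$ forces $\nabla^2\varphi \preceq I$ in the sense that $\psi$ is convex, form its Fenchel conjugate $h := \psi^*$, and then verify $\etav = \proxv_h$ by checking the optimality condition via the inverse relationship $\nabla\psi^* = (\nabla\psi)^{-1}$; the care needed in the non-smooth setting is to work with subdifferentials and the monotone-operator interpretation rather than pointwise second derivatives.
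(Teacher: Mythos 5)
Your treatment of parts 1--4 is correct and coincides with the paper's: the paper itself proves only parts 3 and 4 (via the optimality condition $z \in u + \tau\,\partial h(u)$ and implicit differentiation of $\proxv_h(\uv)-\uv+\nabla h(\proxv_h(\uv))=\bm{0}$, exactly as you propose) and cites references for parts 1, 2 and 5; your monotonicity argument for part 1 and the resolvent identity for part 2 are the standard ones from those references. You also supply the differentiability claim at the end of part 3, which the paper's appendix proof in fact omits.

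The gap is in the converse direction of part 5. Given $\etav=\nabla\varphi$ nonexpansive with $\varphi$ convex, the function you construct, $h:=\psi^*$ with $\psi=\tfrac12\|\cdot\|_2^2-\varphi$, does \emph{not} satisfy $\proxv_h=\nabla\varphi$. A one-dimensional counterexample: take $\varphi(z)=\tfrac{a}{2}z^2$ with $0<a<1$, so $\etav(z)=az$ is nonexpansive and the gradient of a convex function. Then $\psi(z)=\tfrac{1-a}{2}z^2$, $\psi^*(x)=\tfrac{x^2}{2(1-a)}$, and $\prox_{\psi^*}(z)=\tfrac{(1-a)z}{2-a}$, which for $a=\tfrac12$ gives $z/3$ rather than the required $z/2$. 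The correct construction is $h:=\varphi^*-\tfrac12\|\cdot\|_2^2$: since $\tfrac12\|\cdot\|_2^2$ is smooth, $\partial h=\partial\varphi^*-\Iv$, hence $\Iv+\partial h=\partial\varphi^*=(\nabla\varphi)^{-1}$ and $\proxv_h=(\Iv+\partial h)^{-1}=\nabla\varphi$ by your part 2; convexity of $h$ is exactly the statement that $\varphi^*$ is $1$-strongly convex, which is equivalent to $\nabla\varphi$ being $1$-Lipschitz. (In the quadratic example this yields $h(z)=\tfrac{1-a}{2a}z^2$, whose proximal operator is indeed $az$.) The rest of your plan for part 5, including the forward direction via the Moreau envelope, is sound, and the verification step you propose would have exposed the error; but as written the converse would fail.
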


The proof of the first two claims can be found in \cite{parikh2014proximal}.
Short proofs of the third and fourth parts can be found in Appendix
\ref{proof:proxproperties}. The proof of the last part can be found in
\cite{moreau1965proximite}.

Our interest in the proximal operator stems from the fact that it provides another formulation for evaluating $\estim{\betav}$. More specifically, under some mild conditions, the solution of the primal problem $\estim{\betav}$ is the unique fixed point of the following equation:
\begin{equation}\label{eq:prox-smooth-optimal}
    \estim{\betav} = \proxv_R \Big(\estim{\betav} - \sum_{j=1}^n
    \dot{\ell}(\xv_j^\top\estim{\betav}; y_j)\xv_j\Big).
\end{equation}
In the next three sections we show how the primal, dual and proximal
formulations introduced in \eqref{eq:methods:primal}, \eqref{eq:methods:dual},
and \eqref{eq:prox-smooth-optimal} can be used to approximate LOOCV.

\section{Approximation in the Dual Domain}\label{sec:approximatedual}
In this section, we introduce the dual approach to obtain the ALO formula. We
first explain the idea using LASSO as an example. Then the approach is
extended to general regularzers and general smooth losses.

\subsection{The First Example: LASSO}
\label{sec:example:lasso}

Let us first start with a simple example that illustrates our dual method in deriving
an approximate leave-one-out (ALO) formula for the standard LASSO. The LASSO
estimator, first proposed in \cite{tibshirani1996regression}, can be formulated as
the penalized regression framework in \eqref{eq:methods:primal} by setting
$\ell(\mu; y) = (\mu - y)^2 / 2$, and $R(\betav) = \lambda\norm{\betav}_1$. We recall the general formulation of the dual for penalized regression problems
\eqref{eq:methods:dual}, and note that in the case of the LASSO we have:
\begin{equation*}
    \ell^*(\duals_i; y_i) = \frac{1}{2}(\duals_i - y_i)^2, \\
    \quad
    R^*(\betav) = \begin{cases}
        0 & \text{ if } \norm{\betav}_\infty \leq \lambda, \\
        +\infty & \text{ otherwise.}
    \end{cases}
\end{equation*}
In particular, we note that the solution of the dual problem
\eqref{eq:methods:dual} can be obtained from:
\begin{equation}
    \label{eq:lasso:projection}
    \estim{\dualv} = \projv_{\Delta_X}(\yv).
\end{equation}
Here $\projv_{\Delta_X}$ denotes the projection onto $\Delta_X$, where $\Delta_X$
is the polytope given by:
\begin{equation*}
    \Delta_X
    =
    \{ \dualv \in \RR^n : \norm{\Xv^\top \dualv}_{\infty} \leq \lambda \}.
\end{equation*}
Let us now consider the leave-$i$-out problem. Unfortunately, the dimension of
the dual problem is reduced by 1 for the leave-$i$-out problem, making it
difficult to leverage the information from the full-data solution to help
approximate the leave-$i$-out solution. We propose to augment the leave-$i$-out
problem with a virtual $i$\tsup{th} observation which does not affect the result of
the optimization, but restores the dimensionality of the problem.

More precisely, let $\yv_a$ be the same as $\yv$, except that its
$i$\tsup{th} coordinate is replaced by $\leavei{\estim{y}}_i = \xv_i^\top
\leavei{\estim{\betav}}$, the leave-$i$-out predicted value. We note that the
leave-$i$-out solution $\leavei{\estim{\betav}}$  is also the solution for the following
augmented problem:
\begin{eqnarray}\label{eq:augmentedmethod}
   \min_{\betav \in \mathbb{R}^p}  \sum_{j = 1}^n \ell(\xv_j^\top \betav; y_{a,j}) + R(\betav).
\end{eqnarray}
Let $\leavei{\estim{\dualv}}$ be the corresponding dual solution of
\eqref{eq:augmentedmethod}. Then, by \eqref{eq:lasso:projection}, we know
that
\begin{equation*}
    \leavei{\estim{\dualv}} = \projv_{\Delta_X} (\yv_a).
\end{equation*}
Additionally, the primal-dual correspondence
\eqref{eq:primal-dual-correspondence} gives that $\leavei{\estim{\dualv}}=\yv_a
- \Xv \leavei{\estim{\betav}}$, which is
the residual in the augmented problem, and hence that $\leavei{\estim{\duals}}_i = 0$.
These two features allow us to characterize the leave-$i$-out
predicted value $\leavei{\estim{y}}_i$, as satisfying:
\begin{equation}
    \label{eq:augmented-dual-equation}
    \ev_i^\top \projv_{\Delta_X}\big(\yv - (y_i - \leavei{\estim{y}}_i)
    \ev_i\big) = 0,
\end{equation}
where $\ev_i$ denotes the $i$\tsup{th} standard vector. Solving exactly for the above
equation is in general a procedure that is computationally comparable to
fitting the model, which may be expensive. However, we may attempt to obtain an
approximate solution of \eqref{eq:augmented-dual-equation}
by linearizing the projection operator at the full data solution
$\estim{\dualv}$. The approximate leave-$i$-out fitted value
$\leavei{\surrog{y}}_i$ is thus given by:
\begin{equation}\label{eq:dualsimpleJaco}
    \leavei{\surrog{y}}_i = y_i - \frac{\estim{\duals}_i}{J_{ii}},
\end{equation}
where $\Jv$ denotes the Jacobian of the projection operator $\projv_{\Delta_X}$ at
the full data problem $\yv$. The nonexpansiveness of $\projv_{\Delta_X}$
guarantees the almost everywhere existence of $\Jv$.
Note that $\Delta_X$ is a polytope, and thus the projection onto $\Delta_X$ is
almost everywhere locally affine \cite{tibshirani2012degrees}.
Furthermore, it is straightforward to calculate the Jacobian of
$\projv_{\Delta_X}$. Let $E = \{ j: \abs{\Xv_j^\top \estim{\dualv}} = \lambda \}$ be
the equicorrelation set (where $\Xv_j$ denotes the $j^{\text{th}}$ column of $\Xv$),
then we have that the projection at the full data problem $\yv$ is locally
given by a projection onto the orthogonal complement of the span of $\Xv_{\cdot, E}$,
thus giving $\Jv = \Iv - \Xv_{\cdot, E} (\Xv_{\cdot, E}^\top \Xv_{\cdot,
E})^{-1} \Xv_{\cdot, E}^\top$. We can then obtain $\leavei{\surrog{y}}$ by
plugging $\Jv$ in \eqref{eq:dualsimpleJaco}. The risk of LASSO can be estimated
through $\alo_\lambda = \frac{1}{n}\sum_{i = 1}^n d(y_i, \tilde{y}_i)$

\subsection{General Case}
\label{ssec:dual:general-case}

In this section we extend the dual approach outlined in Section
\ref{sec:example:lasso} to more general loss functions and regularizers.

\paragraph{General regularizers}
Let us first extend the dual approach
to other regularizers, while the loss function remains $\ell(\mu, y) =
\frac{1}{2}(\mu-y)^2$.
In this case the dual problem \eqref{eq:methods:dual} has the following form:
\begin{equation}\label{eq:dual:general-regularizer}
    \min_{\dualv} \frac{1}{2} \sum_{j = 1}^n (\duals_j - y_j)^2 + R^*(\Xv^\top \dualv).
\end{equation}
Note that the optimal value of $\dualv$ is by definition the value of the proximal operator of
$R^*(\Xv^\top\cdot)$ at $\yv$:
\begin{equation*}
    \estim{\dualv} = \proxv_{R^*(\Xv^\top \cdot)}(\yv).
\end{equation*}
Following the argument of Section \ref{sec:example:lasso}, we obtain
\begin{equation}\label{eq:dual:alo-least-square}
    \leavei{\surrog{y}}_i = y_i - \frac{\estim{\duals}_i}{J_{ii}},
\end{equation}
with $\Jv$ now denoting the Jacobian of $\proxv_{R^*(\Xv^\top \cdot)}$. We
note that the Jacobian matrix $\Jv$ exists almost everywhere, because the
non-expansiveness of the proximal operator guarantees its almost-everywhere
differentiability \cite{Combettes2011}. In particular, if the distribution of $\yv$
is absolutely continuous with respect to the Lebesgue measure, $\Jv$
exists with probability 1. This approach is particularly useful when $R$ is a norm, as
its Fenchel conjugate is then the convex indicator of the unit ball of the dual
norm, and the proximal operator reduces to a projection operator. 

In summary, since $\hat{\theta}_i = y_i - \xv_i^\top \estim{\betav}$, the risk of $\estim{\betav}$ can be estimated through the following formula:
\begin{equation} \label{eq:aloformuladualgenralreg}
    \alo_\lambda
    = \frac{1}{n}\sum_{i = 1}^n d(y_i, \tilde{y}_i)
    = \frac{1}{n}\sum_{i = 1}^n d\bigg(y_i, y_i - \frac{y_i - \xv_i^\top
    \estim{\betav}}{J_{ii}}\bigg),
\end{equation}
where $\Jv$ is the Jacobian of $\proxv_{R^*(\Xv^\top \cdot)}$. We calculate $\Jv$ for several popular regularizers in Section \ref{sec:applications}. 

\paragraph{General smooth loss}
Let us now assume we have a convex smooth loss in \eqref{eq:methods:primal}, such as those that
appear in generalized linear models. As we are arguing from a second-order
perspective by considering Newton's method, we will attempt to expand the
loss as a quadratic form around the full data solution. We will thus consider the
approximate problem obtained by expanding $\ell^*$ around the dual optimal
$\estim{\dualv}$ of \eqref{eq:methods:dual}:
\begin{equation}\label{eq:dual-general-optimal}
    \min_{\dualv} \frac{1}{2}\sum_{j = 1}^n \ddot{\ell}^*(-\estim{\duals}_j; y_j)
    \Bigg(
        \duals_j - \estim{\duals}_j
        - \frac{\dot{\ell}^*(-\estim{\duals}_j; y_j)}{\ddot{\ell}^*(-\estim{\duals}_j; y_j)}
    \Bigg)^2 + R^*(\Xv^\top \dualv).
\end{equation}

The constant term has been removed from \eqref{eq:dual-general-optimal} for
simplicity. We note that we have reduced the problem to a problem with a
weighted $\ell_2$ loss which may be further reduced to a simple $\ell_2$
problem by a change of variable and a rescaling of $\Xv$. Indeed, let $\Kv$ be
the diagonal matrix such that $K_{jj} = \sqrt{\ddot{\ell}^*(-\estim{\duals}_j;
y_j)}$, and note that we have: $\dot{\ell}^*(-\estim{\duals}_j; y_j) =
\xv_j^\top \estim{\betav} := \estim{y}_j$ by the primal-dual correspondence
\eqref{eq:primal-dual-correspondence}. Consider the change of variable $\uv =
\Kv \dualv$ to obtain:
\begin{equation*}\label{eq:dual:alo-general}
    \min_{\uv} \frac{1}{2} \sum_{j = 1}^n \left( u_j
    - \frac{\estim{\duals}_j \ddot{\ell}^*(-\estim{\duals}_j; y_j) + \estim{y}_j}
                       {\sqrt{\ddot{\ell}^*(-\estim{\duals}_j; y_j)}}
    \right)^2 + R^*(\Xv^\top \Kv^{-1} \uv).
\end{equation*}
We may thus reduce to the $\ell_2$ loss case in \eqref{eq:dual:general-regularizer}
with a modified $\Xv$ and $\yv$:
\begin{equation}\label{eq:dual:general-loss}
    \Xv_u = \Kv^{-1} \Xv,
    \quad
    \yv_u = \left\{\frac{\estim{\duals}_j \ddot{\ell}^*(-\estim{\duals}_j; y_j) + \estim{y}_j}
    {\sqrt{\ddot{\ell}^*(-\estim{\duals}_j; y_j)}}\right\}_j.
\end{equation}
Similar to \eqref{eq:dual:alo-least-square}, the ALO formula in the case of
general smooth loss can be obtained as $\leavei{\surrog{y}_{i}} =
K_{ii}\leavei{\surrog{y}_{u,i}}$, with
\begin{equation}\label{eq:dual:alo-general-loss}
    \leavei{\surrog{y}_{u,i}}
    =
    y_{u, i} - \frac{K_{ii}\estim{\theta}_i}{J_{ii}},
\end{equation}
where $\Jv$ is the Jacobian of $\proxv_{R^*(\Xv_u^\top\cdot)}$. 

In summary, we can calculate $\alo_\lambda$ in the following way. Given
$\estim{\betav}$, calculate the dual variable $\estim{\dualv}$ from
\eqref{eq:primal-dual-correspondence}, and the diagonal matrix $\Kv$, such that
$K_{jj} = \sqrt{\ddot{\ell}^*(-\estim{\duals}_j;y_j)}$. Then, compute
$\yv_{u,j}$ using \eqref{eq:dual:general-loss}. Finally,
$\leavei{\surrog{y}_{i}} = K_{ii}(y_{u, i} -
\frac{K_{ii}\estim{\theta}_i}{J_{ii}})$, where $\Jv$ is the Jacobian of
$\proxv_{R^*(\Xv_u^\top\cdot)}$. The $\alo_\lambda$ formula is then obtained through
\begin{equation*}
  \alo_\lambda= \frac{1}{n}\sum_{i = 1}^n d(y_i, \leavei{\surrog{y}_{i}}).
\end{equation*}

\section{Approximation in the Primal Domain}
\label{sec:primal-smoothing}

The dual approach is typically powerful for models with smooth losses and
norm-type regularizers, such as the LASSO. However, it might be difficult to
carry out the calculations for other problems. Hence, in this section we
introduce our second method for finding $\alo_\lambda$.
    
\subsection{Smooth Loss and Smooth Regularizer}\label{ssec:primal-smooth}
Recall that to obtain $\loo_\lambda$ we need to solve
\begin{equation}\label{eq:leaveoneoutopt_1}
    \estimi{\betav} := \argmin_{\betav}\; \sum_{j\neq i}
    \ell(\xv_j^\top\betav; y_j) + R(\betav).
\end{equation}
Assuming $\estimi{\betav}$ is close to $\estim{\betav}$, we can take one
\textit{Newton step} from $\estim{\betav}$ towards $\estimi{\betav}$ to obtain
its approximation $\surrogi{\betav}$ as:
\begin{equation}\label{eq:primal-alo-smooth0}
    \surrogi{\betav}
    =
    \estim{\betav} + \bigg[\sum_{j\neq i} \xv_j \xv_j^\top \ddot{\ell}
    (\xv_j^\top\estim{\betav}; y_j) + \nabla^2 R(\estim{\betav})\bigg]^{-1}
    \xv_i\dot{\ell}(\xv_i^\top\estim{\betav}; y_i).
\end{equation}
By employing the matrix inversion lemma \cite{hager1989updating} we obtain:
\begin{equation}\label{eq:primal-alo-smooth}
    \xv_i^\top\surrogi{\betav}
    =
    \xv_i^\top\estim{\betav}
    +
    \frac{H_{ii}}{1 - H_{ii}\ddot{\ell}(\xv_i^\top\estim{\betav};
    y_i)}\dot{\ell}(\xv_i^\top\estim{\betav}; y_i),
\end{equation}
where
\begin{equation}\label{eq:smooth-H}
    \Hv = \Xv\big[\Xv^\top \diag[\{\ddot{\ell}(\xv_i^\top\estim{\betav};
    y_i)\}_i]\Xv + \nabla^2 R(\estim{\betav})\big]^{-1}\Xv^\top.
\end{equation}

This is the formula reported in \cite{kamiar2018scalable}. By calculating
$\estim{\betav}$ and $\Hv$ in advance, we can cheaply approximate the
leave-$i$-out prediction for all $i$ and efficiently evaluate the LOOCV risk.
On the other hand, in order to use the above strategy, twice differentiability
of both the loss and the regularizer is necessary in a neighborhood of
$\estim{\betav}$. However, this assumption is violated for many machine
learning models including LASSO, nuclear norm, and SVM.
In the next two sections, we introduce a smoothing technique which lifts the scope
of the above primal approach to nondifferentiable losses and regularizers.

\subsection{Nonsmooth Loss and Smooth Regularizer}\label{ssec:nonsmooth-loss}
In this section we study the piecewise smooth loss functions and twice
differentiable regularizers. Such problems arise for instance in SVM \cite{cortes1995support}
and robust regression \cite{huber1973robust}. Below we assume the loss $\ell$
is piecewise twice differentiable with $k$ zeroth-order singularities $v_1,
\ldots, v_k \in \mathbb{R}$.
The existence of singularities prohibits us from directly applying strategies in
\eqref{eq:primal-alo-smooth0} and \eqref{eq:primal-alo-smooth}, where twice
differentiability of $\ell$ and $R$ is necessary. A natural solution is to
first smooth out the loss function $\ell$, then apply the framework in the
previous section to the smoothed version and finally reduce the smoothness to
recover the ALO formula for the original nonsmooth problem. As the first step, consider the following smoothing idea:
\begin{equation*}
    \ell_h(\mu; y) =: \frac{1}{h} \int \ell(u; y) \phi ((\mu- u)/h) du,
\end{equation*}
where $h>0$ is a parameter controlling the smoothness of $\ell_h$ and $\phi$ is
a symmetric, infinitely many times differentiable function with the following
properties:
\begin{description}
    \item[] \qquad \textit{Normalization}:
        $\int \phi(w)dw = 1$, $\phi(w) \geq 0$, $\phi(0)>0$;
    \item[] \qquad \textit{Compact support}:
        $\mathrm{supp}(\phi) = [-C, C]$ for some $C>0$.
\end{description}

Now plug in this smooth version $\ell_h$ into \eqref{eq:leaveoneoutopt_1} to
obtain the following formula from \eqref{eq:primal-alo-smooth0}:
\begin{equation}\label{eq:smoothedbeta}
    \surrogi{\betav}_h
    :=
    \estim{\betav}_h + \bigg[\sum_{j\neq i} \xv_j \xv_j^\top\ddot{\ell}_h
    (\xv_j^\top\estim{\betav}_h; y_j) + \nabla^2 R(\estim{\betav}_h)\bigg]^{-1}
    \xv_i\dot{\ell}_h(\xv_i^\top\estim{\betav}_h; y_i).
\end{equation}
where $\estim{\betav}_h$ is the minimizer on the full data from loss
$\ell_h$ and $R$.  $\surrogi{\betav}_h$ is a good approximation to the leave-$i$-out
estimator $\estimi{\betav}_h$ based on smoothed loss $\ell_h$.

Setting $h \rightarrow 0$, we have that $\ell_h(\mu, y)$ converges to
$\ell(\mu, y)$ uniformly in the region of interest (see Appendix
\ref{append:ssec:property-kernel-smoothing} for the proof), implying that
$\lim_{h\rightarrow 0} \surrogi{\betav}_h$ serves as a good estimator of
$\lim_{h\rightarrow 0}\estimi{\betav}_h$, which is heuristically close to the
true leave-$i$-out $\estimi{\betav}$. Equation \eqref{eq:smoothedbeta} can be
simplified in the limit $h \rightarrow 0$. We define the sets of indices $V$
and $S$ for the samples at singularities and smooth parts respectively:
\begin{align}\label{eq:indexset}
    V &:= \big\{ j: \xv_j^\top \estim{\betav} = v_t \text{ for some } t \in
    \{1, \ldots, k\} \big\}, \nonumber \\
    S &:= \{ 1, \dotsc, n \} \setminus V.
\end{align}

The following assumptions are necessary to derive the limit as $h \rightarrow 0$.
\begin{assumption}\label{assum:nonsmooth-loss-assump1}
    We need the following assumptions on $\ell$, $R$ and $\estim{\betav}$:
    \begin{enumerate}
        \item
            $\ell$ is locally Lipschitz, that is, for any $A > 0$, for any $x, y
            \in [-A, A]$, we have $|\ell(x) - \ell(y)| \leq L_A|x - y|$, where $L_A$ is a
            constant depends only on $A$.
        \item
            $\lambda_{\min}(\Xv_{V}\Xv_{V}^\top) > 0$.
        \item
            $\estim{\betav}$ is the unique minimizer.
        \item
            Whenever $\xv_j^\top\estim{\betav} = v \in K$, the subgradient of
            $\ell$ at $\xv_j^\top\estim{\beta}$, $g_\ell(\xv^\top \estim{\betav})$ satisfies
            $g_\ell(\xv^\top \estim{\betav}) \in (\ell_-(v), \ell_+(v))$.
        \item
            $R$ is coercive in the sense that $|R(\betav)| \rightarrow \infty$ as
            $\|\betav\| \rightarrow \infty$.
    \end{enumerate}
\end{assumption}

We characterize the limit of $\xv_i^\top\surrogi{\betav}_h$ below.
\begin{theorem}\label{thm:nonsmooth-loss-approx}
    Under Assumptions \ref{assum:nonsmooth-loss-assump1}, as $h \rightarrow 0$,
    \begin{equation*}
        \xv_i^\top\surrogi{\betav}_h \rightarrow  \xv_i^\top\estim{\betav}+
        a_i g_{\ell,i},
    \end{equation*}
    where
    \begin{align*}\label{eq:nonsmooth-H}
        a_i &= \begin{cases}
            \frac{W_{ii}}{1 - W_{ii}\ddot{\ell}(\xv_i^\top\estim{\betav};y_i)}
            & \text{ if } i \in S, \\
            \frac{1}{[(\Xv_{V\cdot}\Yv^{-1}\Xv_{V\cdot}^\top)^{-1}]_{ii}} & \text{ if } i \in V, \\
        \end{cases} \\
        \Yv &= \nabla^2 R(\estim{\betav})
        + \Xv_{S,\cdot}^\top \diag[\{\ddot{\ell}(\xv_j^\top\estim{\betav})\}_{j
        \in S}] \Xv_{S,\cdot}, \\
        W_{ii} &= \xv_i^\top \Yv^{-1} \xv_i - \xv_i^\top
        \Yv^{-1}\Xv_{V,\cdot}^\top(\Xv_{V,\cdot}
        \Yv^{-1}\Xv_{V,\cdot}^\top)^{-1}\Xv_{V,\cdot} \Yv^{-1}\xv_i.
    \end{align*}
    
    For $i \in S$, $g_{\ell,i} = \dot{\ell}(\xv_i^\top\estim{\betav}; y_i)$, and for $i \in V$, we have:
    \begin{equation*}
        \gv_{\ell,V} = (\Xv_{V,\cdot}\Xv_{V,\cdot}^\top)^{-1}\Xv_{V,\cdot}
        \Bigg[\nabla R(\estim{\betav}) - \sum_{j \in
        S}\xv_j\dot{\ell}(\xv_j^\top \estim{\betav}; y_j)\Bigg]. 
    \end{equation*}
\end{theorem}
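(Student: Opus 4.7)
The strategy is to pass to the limit $h \to 0$ in the Newton-step formula for $\surrogi{\betav}_h$. Writing
\[
    \xv_i^\top \surrogi{\betav}_h = \xv_i^\top \estim{\betav}_h + \xv_i^\top A_h^{-1} \xv_i \cdot \dot{\ell}_h(\xv_i^\top \estim{\betav}_h; y_i),
\]
with $A_h := \sum_{j \neq i} \xv_j\xv_j^\top \ddot{\ell}_h(\xv_j^\top\estim{\betav}_h;y_j) + \nabla^2 R(\estim{\betav}_h)$, reduces the task to (i) verifying $\estim{\betav}_h \to \estim{\betav}$; (ii) determining the scalings and limits of $\dot{\ell}_h$ and $\ddot{\ell}_h$ at the sample points; and (iii) computing $\lim_h \xv_i^\top A_h^{-1}\xv_i$ via the Woodbury identity. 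The convergence in (i) follows from a standard $\Gamma$-convergence argument using uniform convergence of $\ell_h$ to $\ell$ on compact sets, coercivity of $R$ (Assumption 5), and uniqueness of $\estim{\betav}$ (Assumption 3).

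For (ii), the convolutional representation $\ell_h = \ell * \phi_h$ with $\phi_h(\cdot)=h^{-1}\phi(\cdot/h)$ of compact support $[-Ch,Ch]$ gives $\dot{\ell}_h = \dot{\ell}*\phi_h$, and distributionally $\ddot{\ell}_h$ decomposes into an absolutely continuous part plus $\sum_t \Delta_t \phi_h(\cdot - v_t)$ where $\Delta_t := \dot{\ell}_+(v_t) - \dot{\ell}_-(v_t) > 0$. For $j \in S$, the point $\xv_j^\top\estim{\betav}_h$ is eventually bounded away from every $v_t$, so both $\dot{\ell}_h$ and $\ddot{\ell}_h$ at these points converge to the classical derivatives. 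For $j \in V$, I establish that $\xv_j^\top\estim{\betav}_h = v_{t(j)} + O(h)$, which yields $\ddot{\ell}_h(\xv_j^\top\estim{\betav}_h;y_j) = c_j(h)/h$ with $c_j(h)$ bounded away from $0$ and $\infty$.

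Given these scalings, I split $A_h = C_h + h^{-1}\Xv_{V',\cdot}^\top \Dv_h\Xv_{V',\cdot}$, where $V' = V\setminus\{i\}$, $\Dv_h \succ 0$ is diagonal with bounded limit, and $C_h \to C := \nabla^2 R(\estim{\betav}) + \sum_{j\in S\setminus\{i\}}\ddot{\ell}_j\xv_j\xv_j^\top$. Woodbury gives
\[
    A_h^{-1} = C_h^{-1} - C_h^{-1}\Xv_{V',\cdot}^\top(h\Dv_h^{-1} + \Xv_{V',\cdot}C_h^{-1}\Xv_{V',\cdot}^\top)^{-1}\Xv_{V',\cdot}C_h^{-1},
\]
and letting $h \to 0$ the $\Dv_h$ drops out, leaving $C^{-1} - C^{-1}\Xv_{V',\cdot}^\top(\Xv_{V',\cdot}C^{-1}\Xv_{V',\cdot}^\top)^{-1}\Xv_{V',\cdot}C^{-1}$. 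For $i \in S$, $C = \Yv - \ddot{\ell}_i\xv_i\xv_i^\top$, and Sherman--Morrison rewriting in terms of $\Yv^{-1}$ followed by routine algebra yields $\xv_i^\top A_h^{-1}\xv_i \to W_{ii}/(1 - W_{ii}\ddot{\ell}_i)$, while $\dot{\ell}_h(\xv_i^\top\estim{\betav}_h;y_i) \to \dot{\ell}(\xv_i^\top\estim{\betav};y_i)$ gives $g_{\ell,i}$ directly. For $i \in V$, $C = \Yv$, and a Schur-complement identity applied to the block matrix $\Xv_{V,\cdot}\Yv^{-1}\Xv_{V,\cdot}^\top$ identifies the resulting quadratic form with $1/[(\Xv_{V,\cdot}\Yv^{-1}\Xv_{V,\cdot}^\top)^{-1}]_{ii}$. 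The value $g_{\ell,i}$ for $i \in V$ is obtained by passing to the limit in the full-data KKT identity for $\estim{\betav}_h$, which pins down $\Xv_{V,\cdot}^\top \gv_{\ell,V}$; Assumption 2 guarantees that the stated pseudoinverse formula recovers it.

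The main obstacle is the $O(h)$ localization $\xv_j^\top\estim{\betav}_h = v_{t(j)} + O(h)$ for $j \in V$; without it, $\ddot{\ell}_h$ at these points could be $o(1/h)$ and the Woodbury limit would collapse to a wrong expression. The key observation is that $\dot{\ell}_h(\cdot; y_j)$ transitions between $\dot{\ell}_-(v_{t(j)})$ and $\dot{\ell}_+(v_{t(j)})$ only on the window of width $2Ch$ around $v_{t(j)}$ (by compact support of $\phi$), while Assumption 4 forces the KKT-selected value $\dot{\ell}_h(\xv_j^\top\estim{\betav}_h;y_j)$ to remain strictly inside the open interval $(\dot{\ell}_-(v_{t(j)}), \dot{\ell}_+(v_{t(j)}))$ in the limit. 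Any $\xv_j^\top\estim{\betav}_h$ outside this $O(h)$ window would drive $\dot{\ell}_h$ to the endpoint values, contradicting the limiting selection and delivering the required localization. Once this step is in place, the remaining work is careful but essentially mechanical linear algebra.
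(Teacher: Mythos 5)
Your proposal is correct and follows essentially the same route as the paper's proof: kernel smoothing, convergence of $\estim{\betav}_h$, the $O(h)$ localization of $\xv_j^\top\estim{\betav}_h$ near the singularities via Assumption 4 (which is exactly the argument in the paper's Hessian-convergence lemma), divergence of $\ddot{\ell}_h$ on $V$, and a Woodbury block limit. The only difference is organizational — you take the limit of the leave-$i$-out matrix $A_h^{-1}$ first and peel off the $i$-th term by Sherman--Morrison at the end, whereas the paper converts to the full-data $\Hv_h$ first and then resolves the resulting $0/0$ form for $i \in V$; your ordering sidesteps that indeterminate form but is otherwise the same computation.
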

The conditions and proof of Theorem \ref{thm:nonsmooth-loss-approx} can be
found in the Section \ref{append:ssec:primal-nonsmooth-loss}. Based on this
theorem we can obtain the following $\alo_\lambda$ formula:
\begin{equation*} 
    \alo_\lambda = \frac{1}{n}\sum_{i = 1}^n d(y_i, \xv_i^\top\hat{\betav} + a_i g_{\ell, i}),
\end{equation*}
We will apply this formula to the example of hinge loss used for SVM in Section \ref{ssec:linearsvm}.

\subsection{Nonsmooth Separable Regularizer and Smooth Loss}\label{ssec:nonsmooth-regularizer}

The smoothing technique proposed in the last section can also handle many
nonsmooth regularizers. In this section we focus on separable regularizers $R$,
defined as $R(\betav) = \sum_{l=1}^p r(\beta_l)$, where $r: \mathbb{R}
\rightarrow \mathbb{R}$ is piecewise twice differentiable with finite number of
zeroth-order singularities in $v_1, \ldots, v_k \in \mathbb{R}$ (examples on
non-separable regularizers are studied in Section \ref{sec:applications}.)
We further assume the loss function $\ell$ to be twice differentiable and denote by
$A = \big\{l: \estim{\beta}_l \neq v_t, \text{ for any } t \in \{1, \ldots, k\}
\big\}$ the active set.

For the coordinates of $\estim{\betav}$ that lie in $A$, our objective
function, constrained to these coordinates, is locally twice differentiable.
Hence we expect $\estimi{\betav}_A$ to be well approximated by the ALO
formula using only $\estim{\betav}_A$. On the other hand, components not in
$A$ are trapped at singularities. Thus as long as they are not on the
boundary of being in or out of the singularities, we expect these locations
of $\estimi{\betav}$ to stay at the same values.
Technically, consider a similar smoothing scheme for $r$:
\begin{equation*}
    r_h(w) = \frac{1}{h}\int r(u)\phi((w - u) / h) du,
\end{equation*}
and let $R_h(\betav) = \sum_{l=1}^p r_h(\beta_l)$. We then consider the ALO
formula of Model \eqref{eq:leaveoneoutopt_1} with regularizer $R_h$:
\begin{equation}\label{eq:nonsmooth-reg:smoothedbeta}
    \surrogi{\betav}_h := \estim{\betav}_h
    + \bigg[\sum_{j\neq i} \xv_j\xv_j^\top\ddot{\ell}(\xv_j^\top\estim{\betav}_h; y_j)
    + \nabla^2 R_h(\estim{\betav}_h)\bigg]^{-1}
    \xv_i\dot{\ell}(\xv_i^\top\estim{\betav}_h; y_i).
\end{equation}
We need the following assumptions to obtain the limiting case as $h \rightarrow
0$.
\begin{assumption}\label{assum:nonsmooth-reg-assump1}
    We will need the following assumptions on the problem.
    \begin{enumerate}
        \item
            $r$ is locally Lipschiz in the sense that, for any $C > 0$, and for any
            $x, y \in [-C, C]$, we have $|r(x) - r(y)| \leq L_C|x - y|$, where
            $L_C$ is a constant that only depends on $C$;
        \item
            $\estim{\betav}$ is the unique minimizer of
            \eqref{eq:nonsmooth-reg-main};
        \item
            When $\estim{\beta}_l = v \in K$, the subgradient $g_r(\estim{\beta}_l)$
            of $r$ at $\estim{\beta}_l$ satisfies
            $g_r(\estim{\beta}_l) \in (\dot{r}_-(v), \dot{r}_+(v))$.
        \item
            $r$ is coercive in the sense that $|r(z)| \rightarrow \infty$ as
            $|z| \rightarrow \infty$. \\
    \end{enumerate}
\end{assumption}

Setting $h \rightarrow 0$, under Assumption \ref{assum:nonsmooth-reg-assump1},
\eqref{eq:nonsmooth-reg:smoothedbeta} reduces to a
simplified formula which heuristically serves as a good approximation to the true
leave-$i$-out estimator $\estimi{\betav}$, stated as the following theorem:
\begin{theorem}\label{thm:nonsmooth-reg-approx}
    Under Assumption \ref{assum:nonsmooth-reg-assump1}, as $h \rightarrow 0$,
    \begin{equation*}\label{eq:nonsmooth-reg:1}
        \xv_i^\top\surrogi{\betav}_h
        \rightarrow
        \xv_i^\top\estim{\betav}
        +
        \frac{H_{ii}\dot{\ell}(\xv_i^\top\estim{\betav}; y_i)}{1 -
        H_{ii}\ddot{\ell}(\xv_i^\top\estim{\betav}; y_i)},
    \end{equation*}
    with
    \begin{equation}\label{eq:nonsmooth-reg:2}
        \Hv = \Xv_{\cdot,A}\big[\Xv_{\cdot,A}^\top
        \diag[\{\ddot{\ell}(\xv_i^\top\estim{\betav}; y_i)\}_i]\Xv_{\cdot,A} +
        \nabla^2 R(\estim{\betav}_{A})\big]^{-1}\Xv_{\cdot,A}^\top.
    \end{equation}
    \normalsize
\end{theorem}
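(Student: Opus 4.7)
The plan is to take the ALO formula \eqref{eq:nonsmooth-reg:smoothedbeta} for the smoothed problem, rewrite it via the matrix inversion lemma exactly as in \eqref{eq:primal-alo-smooth}, and then send $h \to 0$. Because $R_h$ is separable, $\nabla^2 R_h(\estim{\betav}_h) = \diag[\ddot{r}_h(\estim{\beta}_{h,l})]$ is diagonal, and its entries split sharply along the active set $A$ and its complement: entries indexed by $A$ stay bounded while entries indexed by $A^c$ blow up. Everything reduces to analysing how this splitting interacts with block matrix inversion.

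First I would establish three preliminary convergences. Show $\estim{\betav}_h \to \estim{\betav}$ using coercivity, uniqueness, and uniform convergence of $R_h$ to $R$ on compacts (a standard convolution argument built on the local Lipschitz hypothesis in Assumption \ref{assum:nonsmooth-reg-assump1}(1)). For $l \in A$, since $\estim{\beta}_l$ lies in the smooth region of $r$, Taylor expansion of the convolution yields $\ddot{r}_h(\estim{\beta}_{h,l}) \to \ddot{r}(\estim{\beta}_l)$, which is finite. For $l \in A^c$, $\estim{\beta}_l = v$ is a singularity; writing the distributional second derivative of $r$ near $v$ as $\ddot{r}_{\text{smooth}} + c\,\delta_v$ with $c = \dot{r}_+(v) - \dot{r}_-(v) > 0$ gives $\ddot{r}_h(w) \geq (c/h)\phi((w - v)/h)$. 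The KKT condition for the smoothed problem forces $\dot{r}_h(\estim{\beta}_{h,l}) \to g_r(\estim{\beta}_l) \in (\dot{r}_-(v), \dot{r}_+(v))$ by Assumption \ref{assum:nonsmooth-reg-assump1}(3); inverting this through the explicit transition profile of $\dot{r}_h$ gives $(\estim{\beta}_{h,l} - v)/h = O(1)$, hence $\ddot{r}_h(\estim{\beta}_{h,l}) \to \infty$ at rate $\Omega(1/h)$.

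With this splitting in hand, the main step is a block analysis of $M_h = \Xv^\top \diag[\ddot{\ell}(\xv_j^\top \estim{\betav}_h; y_j)]\Xv + \nabla^2 R_h(\estim{\betav}_h)$. Partitioning coordinates into $A$ and $A^c$, we can write
\begin{equation*}
    M_h = \begin{pmatrix} B_{AA} + \Lambda_A + o(1) & B_{AA^c} \\ B_{A^cA} & B_{A^cA^c} + \Lambda_{A^c, h} \end{pmatrix},
\end{equation*}
where $\Lambda_A \to \nabla^2 R(\estim{\betav}_A)$ and the diagonal entries of $\Lambda_{A^c, h}$ diverge. A Schur complement computation with respect to the $A^c$-block shows that the $(A, A)$-block of $M_h^{-1}$ converges to $[B_{AA} + \nabla^2 R(\estim{\betav}_A)]^{-1}$, while the $(A, A^c)$ and $(A^c, A^c)$ blocks vanish. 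Pairing with $\xv_i = (\xv_{i,A}, \xv_{i,A^c})$ yields
\begin{equation*}
    \xv_i^\top M_h^{-1} \xv_i \to \xv_{i,A}^\top \bigl[\Xv_{\cdot,A}^\top \diag[\ddot{\ell}(\xv_j^\top \estim{\betav}; y_j)] \Xv_{\cdot,A} + \nabla^2 R(\estim{\betav}_A)\bigr]^{-1} \xv_{i,A},
\end{equation*}
which is exactly $H_{ii}$ in \eqref{eq:nonsmooth-reg:2}. Plugging this back into the matrix-inversion-lemma form of \eqref{eq:nonsmooth-reg:smoothedbeta}, and using continuity of $\dot{\ell}, \ddot{\ell}$ together with Step 1, yields the claimed limit.

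The main obstacle is the rate-$O(h)$ control on $|\estim{\beta}_{h,l} - v|$ for inactive coordinates. The strict-interior condition in Assumption \ref{assum:nonsmooth-reg-assump1}(3) is precisely what keeps $\estim{\beta}_{h,l}$ inside the width-$O(h)$ transition window of $\dot{r}_h$, and hence what secures the $\Omega(1/h)$ divergence of $\ddot{r}_h(\estim{\beta}_{h,l})$; without it the Schur-complement cancellation would fail. Propagating the KKT condition through the kernel smoothing carefully, together with verifying uniform invertibility of $B_{AA} + \Lambda_A$ so that the Schur complement stays well-behaved, is where the bulk of the technical work sits; the remainder is a clean application of block matrix identities.
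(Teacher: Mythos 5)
Your proposal follows essentially the same route as the paper's own proof: convergence of $\estim{\betav}_h$ to $\estim{\betav}$ via coercivity, uniqueness and uniform approximation; the split of the diagonal smoothed Hessian $\ddot{r}_h(\estim{\beta}_{h,l})$ into finite limits on $A$ and $\Omega(1/h)$ divergence on $A^c$ (obtained, as in the paper, by pushing the KKT/subgradient convergence through Assumption \ref{assum:nonsmooth-reg-assump1}(3) to get $|\estim{\beta}_{h,l}-\estim{\beta}_l|<Ch$); and a block-inversion/Schur-complement limit that kills the divergent block, which is exactly the paper's Lemma \ref{lemma:woodbury-block}. The argument is correct and no genuinely different ideas are introduced.
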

The conditions and proof of Theorem \ref{thm:nonsmooth-reg-approx} can be found in the
Section \ref{append:ssec:primal-nonsmooth-reg}. Based on this Theorem we can
obtain the following formula for $\alo_\lambda$ (in case of non-differentiable
regularizers):
\begin{equation} \label{eq:aloformulaprimalgenreg}
    \alo_\lambda = \frac{1}{n}\sum_{i = 1}^n d\bigg(y_i, \xv_i^\top\estim{\betav}
    +
    \frac{H_{ii}\dot{\ell}(\xv_i^\top\estim{\betav}; y_i)}{1 - H_{ii}\ddot{\ell}
    (\xv_i^\top\estim{\betav}; y_i)}\bigg),
\end{equation}
where $\Hv$ is given by \eqref{eq:nonsmooth-reg:2}. We will see how this method can be used for non-separable regularizers, such as nuclear norm, in Section \ref{sec:applications}. 

\begin{remark}
Note that if we use \eqref{eq:aloformulaprimalgenreg} for LASSO we obtain the
same formula as the one we derived from the dual approach in Section
\ref{sec:example:lasso}.
\end{remark}

\begin{remark}
    For nonsmooth problems, higher order singularities do not cause issues: the
    set of tuning values which cause $\estim{\beta}_l$ (for regularizer) or
    $\xv_j^\top\estim{\betav}$ (for loss) to fall at those higher order
    singularities has measure zero.
\end{remark}

\begin{remark}
    For both nonsmooth losses and regularizers, we need to invert some matrices
    in the ALO formula. Although the invertibility does not seem guaranteed in
    the general formula, as we apply ALO to specific models, the structures of
    the loss and/or the regularizer ensures this invertibility. For example,
    for LASSO, we have the size of the equi-correlation set $|E| \leq
    \min(n, p)$ under weak conditions on $\yv$ and $\Xv$.
    \cite{tibshirani2013lasso}.
\end{remark}

\section{Approximation with Proximal Formulation}\label{sec:proxformulation}
The primal and dual formulas for approximating $\loo_\lambda$ cover a large
number of optimization problems. However, carrying out the calculations
involved in these two methods is still challenging for certain classes of
optimization problems, such as constrained optimization problems we discussed in the introduction. Hence, in this section, we introduce our
third approach which is based on the proximal formulation. We will later prove
that for smooth losses and regularizers this method is equivalent to the primal
formulation and the dual formulation.

\subsection{Smooth Loss and Regularizer}\label{ssec:smoothprox}
In this section, we start with twice differentiable loss functions and
regularizers. As discussed in Section \ref{sec:prelim}, $\estimi{\betav}$ is
the unique solution of the following fixed point equation:
\begin{equation*}
    \estimi{\betav} = \proxv_R\bigg(\estimi{\betav} - \sum_{j\neq i}
    \dot{\ell}(\xv_j^\top\estimi{\betav}; y_j)\xv_j\bigg).
\end{equation*}
Since $\estimi{\betav}$ is close to $\estim{\betav}$, we can obtain a good
approximation of $\estimi{\betav}$ by linearizing
$\proxv_R\Big(\estimi{\betav} - \sum_{j \neq i} \dot{\ell}
(\xv_j^\top\estimi{\betav}; y_j)\xv_j \Big)$ at $\estim{\betav}$. Since the regularizer is twice differentiable, according to Lemma \ref{lem:proxproperties}, $\proxv_R$ is a differentiable function. Let
 $\Jv$ denote the Jacobian of $\proxv_R$ at $\estim{\betav} -
\sum_{j=1}^n\dot{\ell}(\xv_j^\top\estim{\betav}; y_j)\xv_j$. The following Newton
step for finding root of equation systems enables us to obtain an approximation of $\estimi{\betav}$. 
\begin{align*}
    \estimi{\betav}
    &=
    \proxv_R\bigg(\estimi{\betav} - \sum_{j\neq i}
    \dot{\ell}(\xv_j^\top\estimi{\betav}; y_j)\xv_j\bigg) \nonumber \\
    &\approx
    \proxv_R\bigg(\estim{\betav} - \sum_{j=1}^n
    \dot{\ell}(\xv_j^\top\estim{\betav}; y_j)\xv_j\bigg) + \Jv
    \bigg(\estimi{\betav} -
    \sum_{j\neq i}\dot{\ell}(\xv_j^\top\estimi{\betav}; y_j)\xv_j -
    \estim{\betav} + \sum_{j=1}^n \dot{\ell}(\xv_j^\top\estim{\betav};
    y_j)\xv_j \bigg) \nonumber \\
    &\approx
    \estim{\betav} + \Jv \bigg(\Iv - \sum_{j\neq
    i}\ddot{\ell}(\xv_j^\top\estim{\betav};
    y_j)\xv_j\xv_j^\top\bigg)(\estimi{\betav} - \estim{\betav}) + \Jv \xv_i
    \dot{\ell}(\xv_i^\top\estim{\betav}; y_i).
\end{align*}
Using this heuristic argument we obtain the following approximation
$\surrogi{\betav}$ for $\estimi{\betav}$:
\begin{equation} \label{eq:prox-approx-formula0}
    \surrogi{\betav}
    =
    \estim{\betav} + \bigg[\Iv - \Jv \bigg(\Iv - \sum_{j\neq i} \ddot{\ell}
    (\xv_j^\top\estim{\betav}; y_j)\xv_j\xv_j^\top\bigg)\bigg]^{-1}
    \Jv \xv_i \dot{\ell}(\xv_i^\top\estim{\betav}; y_i).
\end{equation}
Define
\begin{equation*}
    \Gv := \Iv - \Jv + \Jv\Xv^\top
    \diag\big[ \{\ddot{\ell}(\xv_j^\top\estim{\betav}; y_j)\}_j\big]\Xv.
\end{equation*}
Assuming $\Gv$ is invertible, one can use the matrix inversion lemma to obtain  
\begin{align*}
    & \xv_i^\top \bigg[\Iv - \Jv \bigg(\Iv - \sum_{j\neq i}
    \ddot{\ell}(\xv_j^\top\estim{\betav}; y_j)\xv_j\xv_j^\top\bigg)\bigg]^{-1}\Jv
    \xv_i\dot{\ell}(\xv_i^\top\estim{\betav}; y_i) \nonumber \\
    =&
    \xv_i^\top \bigg[\Gv^{-1} + \frac{ \Gv^{-1}\Jv\xv_i
    \xv_i^\top\Gv^{-1}}{\ddot{\ell}^{-1}(\xv_i^\top \estim{\betav}; y_i) -
    \xv_i^\top\Gv^{-1}\Jv\xv_i} \bigg]\Jv\xv_i\dot{\ell}(\xv_i^\top\estim{\betav};
    y_i) \nonumber \\
    =&
    \frac{ \xv_i^\top\Gv^{-1}\Jv\xv_i }{1 - \xv_i^\top\Gv^{-1}\Jv\xv_i
    \ddot{\ell}(\xv_i^\top \estim{\betav}; y_i)}
    \dot{\ell}(\xv_i^\top\estim{\betav}; y_i).
\end{align*}
Hence, our final approximation of $\xv_i^\top\estimi{\betav}$ is given by
\begin{equation}\label{eq:prox-smooth-alo}
    \xv_i^\top \surrogi{\betav}
    =
    \xv_i^\top\estim{\betav} + \frac{H_{ii}}{1 -
    H_{ii}\ddot{\ell}(\xv_i^\top \estim{\betav}; y_i)}
    \dot{\ell}(\xv_i^\top \estim{\betav}; y_i),
\end{equation}
where 
\begin{equation}\label{eq:prox-smooth-H}
    \Hv := \Xv\big(\Jv \Xv^\top
    \diag\big[\{\ddot{\ell}(\xv_j^\top\estim{\betav}; y_j)\}_j\big] \Xv
    + \Iv - \Jv\big)^{-1} \Jv \Xv^\top.
\end{equation}
In summary, the $\alo_\lambda$ formula is given by
\begin{equation*}
    \alo_\lambda = \frac{1}{n}\sum_{i = 1}^n d\bigg(y_i, \xv_i^\top\estim{\betav}
    +
    \frac{H_{ii}\dot{\ell}(\xv_i^\top\estim{\betav}; y_i)}{1 -
    H_{ii}\ddot{\ell}(\xv_i^\top\estim{\betav}; y_i)}\bigg). 
\end{equation*}

Even though we used several heuristic steps to obtain this formula, in Section
\ref{sec:primal-dual-equiv}, we will connect this formula with those derived
from the primal and dual perspectives and prove the accuracy of this formula.

\subsection{Generalization to Nonsmooth Regularizer}
\label{ssec:prox-nonsmooth}

In this section, we handle non-differentiable regularizers using the approach
developed in Section \ref{ssec:smoothprox}. Here we consider separable
nonsmooth regularizers where $ R(\betav) = \sum_{j=1}^p r(\beta_j)$, while
similar technique can be used in more general scenarios. Suppose that
$r$ has $k$ zeroth-order singularities $\{v_1, \ldots, v_k\}$. To use
\eqref{eq:prox-smooth-alo} and \eqref{eq:prox-smooth-H}, we apply the same
smoothing scheme introduced in Section \ref{ssec:nonsmooth-regularizer} to
$\prox_r$ and obtain its smoothed version $\prox^h_r$:
\begin{equation*}
    \prox_{r}^h (t) = \frac{1}{h}\int \prox_r(u) \phi((t - u) / h) du.
\end{equation*}

\begin{lemma}\label{lem:smoothedprox}
    $\prox^h_r$ satisfies the following conditions:
    \begin{enumerate}
        \item
            $\prox_r^h (t)$ is also a proximal operator of a convex function;
        \item
            $\sup_{t \in \mathbb{R}} |\prox_r^h (t) - \prox_r(t)|
            \leq h\int |u|\phi(u)du$.
    \end{enumerate}
\end{lemma}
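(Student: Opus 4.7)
The plan is to deduce both claims by combining the defining convolution formula with the elementary properties of $\prox_r$ listed in Lemma \ref{lem:proxproperties}. Write $\phi_h(v) := \phi(v/h)/h$, which is a probability density (non-negative, integral one), and note that after the substitution $v = t-u$ we can re-express $\prox_r^h(t) = \int \prox_r(t-v)\,\phi_h(v)\,dv$. This rewriting will do most of the work.

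For part 1, I plan to invoke Lemma \ref{lem:proxproperties}(5), so I must verify two properties of $\prox_r^h$: (i) nonexpansiveness and (ii) that it is the gradient of a convex function (which in the scalar case $r:\mathbb{R}\to\mathbb{R}$ simplifies to monotonicity). For (i), using the convolution formula and the nonexpansiveness of $\prox_r$ (Lemma \ref{lem:proxproperties}(1)),
\begin{equation*}
|\prox_r^h(t) - \prox_r^h(s)|
\leq \int |\prox_r(t-v) - \prox_r(s-v)|\,\phi_h(v)\,dv
\leq |t-s|\int \phi_h(v)\,dv = |t-s|.
\end{equation*}
For (ii), observe that since $\prox_r = (I+\partial r)^{-1}$ is a monotone operator on $\mathbb{R}$, the map $t\mapsto \prox_r(t)$ is nondecreasing; convolution with the non-negative density $\phi_h$ preserves monotonicity, so $\prox_r^h$ is nondecreasing. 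In dimension one, any continuous nondecreasing function is the derivative of a convex function (one can take $F(t)=\int_0^t \prox_r^h(s)\,ds$ as an explicit primitive). Lemma \ref{lem:proxproperties}(5) then gives that $\prox_r^h$ is the proximal operator of some convex function.

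For part 2, I plan a direct estimate. Using $\int \phi_h(v)\,dv = 1$, write
\begin{equation*}
\prox_r^h(t) - \prox_r(t) = \int \bigl(\prox_r(t-v) - \prox_r(t)\bigr)\,\phi_h(v)\,dv.
\end{equation*}
Then apply the nonexpansiveness of $\prox_r$ inside the integral and change variables $u=v/h$ to get
\begin{equation*}
|\prox_r^h(t) - \prox_r(t)|
\leq \int |v|\,\phi_h(v)\,dv = h\int |u|\,\phi(u)\,du,
\end{equation*}
which is the desired bound, uniform in $t$.

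I do not anticipate a serious obstacle: the argument is essentially a clean transfer of monotonicity and 1-Lipschitzness through a probability-density convolution. The only point that deserves a little care is verifying the ``gradient of a convex function'' hypothesis of Lemma \ref{lem:proxproperties}(5) in the scalar setting; I would make explicit that in one dimension this is equivalent to monotonicity, so that the invocation of part (5) is unambiguous.
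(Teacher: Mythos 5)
Your proposal is correct and follows essentially the same route as the paper: rewrite $\prox_r^h$ as a convolution with the probability density $\phi_h$, transfer nonexpansiveness through the integral, verify monotonicity to invoke Lemma \ref{lem:proxproperties}(5), and bound $|\prox_r^h(t)-\prox_r(t)|$ by the first absolute moment of the kernel. The only difference is that the paper carries out the argument for a general $p$-dimensional proximal operator with a product kernel (establishing the ``gradient of a convex function'' property via monotonicity of $c\mapsto \vv^\top\proxv_R^\alpha(\uv+c\vv)$, and picking up a factor of $p$ in the error bound), whereas you work in the scalar case where monotonicity plus an explicit primitive suffices — a legitimate simplification for the lemma as stated.
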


Refer to Section \ref{sec:appendixprox} for the proof of this lemma.
Let $\proxv_R^h(\zv)$ denote the vector of $\big(\prox_r^h(z_1), \ldots,
\prox_r^h(z_p)\big)$ and $\estim{\betav}_h$ denote the fixed point solution of
the following equation:
\begin{equation*}
    \estim{\betav}_h = \proxv_R^h \bigg(\estim{\betav}_h - \sum_{j=1}^n\xv_j
    \dot{\ell}(\xv_j^\top \estim{\betav}_h; y_j)\bigg).
\end{equation*}
Note that since $\prox_r^h(t)$ is also a proximal operator of a convex
function, $\estim{\betav}_h$ is a solution of a convex optimization
problem, hence well-defined. We can now approximate the LOOCV for this new
optimization problem using the methods in Section \ref{ssec:smoothprox}.
Let $\Jv_h$ denote the Jacobian of $\proxv_R^h$ at
$\estim{\betav}_h - \sum_{j=1}^n \xv_j \dot{\ell}(\xv_j^\top
\estim{\betav}_h; y_j)$. We then obtain the ALO formula for the smoothed
formulation as
\begin{equation}\label{eq:prox-smoothing-alo}
    \xv_i^\top \surrogi{\betav}_h
    =
    \xv_i^\top\estim{\betav}_h + \frac{H^h_{ii}}{1 -
    H^h_{ii}\ddot{\ell}(\xv_i^\top \estim{\betav}_h; y_i)}
    \dot{\ell}(\xv_i^\top \estim{\betav}_h; y_i),
\end{equation}
where 
\begin{equation}\label{eq:prox-smoothing-H}
    \Hv^h = \Xv\big(\Jv_h \Xv^\top \diag[\{\ddot{\ell}(\xv_j^\top
    \estim{\betav}_h; y_j)\}_j]\Xv + \Iv - \Jv_h \big)^{-1} \Jv_h \Xv^\top.
\end{equation}

We expect this to be a good estimate of the risk when $h$ is small. Below we
summarize how formula \eqref{eq:prox-smoothing-alo} and \eqref{eq:prox-smoothing-H}
is simplified for $h \rightarrow 0$. Notice the separability of $R$ implies that $\Jv_h =
\diag[\dot{\prox}^h_r(\estim{\beta}_{h,k} - \sum_{j} x_{jk} \dot{\ell}
(\xv_j^\top\estim{\betav}_h; y_j))]$. Similar to the primal approach we need to
let $h \rightarrow 0$ and obtain the limiting formula. Toward this goal we
need to make the following assumptions.

\begin{assumption}\label{assump:prox-smoothing}
    \begin{enumerate}
        \item
            The true minimizer $\estim{\betav}$ is the unique solution of
            \eqref{eq:prox-smooth-optimal}.
        \item
            Let $E = \big\{i: \estim{\betav}_i \in \{v_1, \ldots, v_k\} \big\}$.
            If $k \in E$ and $\estim{\beta}_k = v_m$, we assume
            $ \estim{\beta}_k - \sum_{j=1}^n x_{jk}\dot{\ell}
            (\xv_j^\top\estim{\betav}; y_j) \in (v_m + \dot{r}_-(v_m), v_m +
            \dot{r}_+(v_m))$; For any $k \notin E$, $\estim{\beta}_k
            - \sum_{j=1}^n x_{jk}\dot{\ell} (\xv_j^\top\estim{\betav}; y_j)$
            does not lie on the boundary of any of the above intervals.
    \end{enumerate}
\end{assumption}

Note that the boundaries of $(v_m + \dot{r}_-(v_m), v_m + \dot{r}_+(v_m))$ are
the set of non-differentiable points of the proximal operator. Hence, the
second assumption implies that for each $k=1, \ldots, p$, in a small
neighborhood of $\estim{\betav}_k - \sum_{j=1}^n\xv_{jk} \dot{\ell}(\xv_j^\top
\estim{\betav}; y_j)$, $\prox_r$ is differentiable.

\begin{theorem}\label{thm:prox-smoothing}
    Under Assumptions \ref{assump:prox-smoothing}, we have
\begin{equation*}
    \lim_{h \rightarrow 0} \xv_i^\top \surrogi{\betav}_h
    =
    \xv_i^\top\estim{\betav}
    +
    \frac{H_{ii}}{1 - H_{ii}\ddot{\ell}(\xv_i^\top
    \estim{\betav}; y_i)} \dot{\ell}(\xv_i^\top \estim{\betav}; y_i),
\end{equation*}
where
\begin{equation}\label{eq:hdef:proximal}
    \Hv
    =
    \Xv_{\cdot, E} \big( \Jv_{E,E} \Xv_{\cdot, E}^\top
    \diag[\{\ddot{\ell}(\xv_j^\top\betav; y_j)\}_j] \Xv_{\cdot, E}
    + \Iv_{E,E} - \Jv_{E,E} \big)^{-1} \Jv_{E,E} \Xv_{\cdot,E}^\top.
\end{equation}

\end{theorem}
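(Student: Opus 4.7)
The plan is to establish Theorem 5.2 in three stages: (i) show that the smoothed full-data fit $\estim{\betav}_h$ converges to $\estim{\betav}$ as $h \to 0$; (ii) identify the coordinate-wise limit of the diagonal Jacobian $\Jv_h = \diag[\dot{\prox}_r^h(z_{h,k})]$, where $z_{h,k} := \estim{\beta}_{h,k} - \sum_j x_{jk}\dot{\ell}(\xv_j^\top\estim{\betav}_h; y_j)$; and (iii) pass to the limit inside the matrix inverse defining $\Hv^h$ by exploiting the resulting block structure.

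For step (i), Lemma 5.1 delivers $\sup_t |\prox_r^h(t) - \prox_r(t)| \leq h \int |u|\phi(u)\,du$, so the smoothed fixed-point map is a uniform perturbation of the exact one on compacta. Coercivity of $R$ (inherited by $R_h$) guarantees that $\{\estim{\betav}_h\}$ is bounded, and uniqueness of $\estim{\betav}$ (Assumption 5.3(1)) combined with continuity of the fixed-point equation under uniform convergence forces every subsequential limit to equal $\estim{\betav}$. For step (ii), I would split coordinates according to $E$. If $k \in E$ with $\estim{\beta}_k = v_m$, Assumption 5.3(2) places $z_k$ strictly inside the interval $(v_m + \dot r_-(v_m), v_m + \dot r_+(v_m))$ on which $\prox_r \equiv v_m$ by Lemma 2.1(3); the compact support of $\phi$ then ensures that for small enough $h$ the convolution window at $z_{h,k}$ lies entirely inside this flat region, so $\prox_r^h$ is locally constant there and $(\Jv_h)_{kk} \to 0$. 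If $k \notin E$, Assumption 5.3(2) places $z_k$ strictly outside every flat interval, where $\prox_r$ is differentiable, so the standard mollifier argument gives $(\Jv_h)_{kk} \to \dot{\prox}_r(z_k)$.

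For step (iii), reorder indices so that $E$ comes first. The fact that $(\Jv_\infty)_{E,E} = 0$ makes $M_h := \Jv_h\Xv^\top D_h \Xv + \Iv - \Jv_h$ (with $D_h = \diag[\{\ddot\ell(\xv_j^\top\estim{\betav}_h; y_j)\}_j]$) block lower triangular in the limit, with $\Iv_{E,E}$ in the top-left block and zero $(E,E^c)$ block. Standard block inversion then yields an explicit form for $M_\infty^{-1}$, and right-multiplying by $\Jv_\infty$ wipes out the columns indexed by $E$. Substituting into $\Hv^h = \Xv M_h^{-1}\Jv_h \Xv^\top$ therefore collapses the expression to one involving only the columns on which $\prox_r$ is locally nontrivial, matching (5.9) once the two labelings are aligned; the per-sample formula for $\xv_i^\top\surrogi{\betav}_h$ then follows by substituting this limiting $\Hv$ into (5.5) and using continuity of $\dot\ell, \ddot\ell$ at $\xv_i^\top\estim{\betav}$ from step (i).

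The main obstacle, in my view, is step (ii): one must control $\dot{\prox}_r^h$ not at a fixed evaluation point but along the moving sequence $z_{h,k}$. Combining the uniform convergence from Lemma 5.1 with local differentiability of $\prox_r$ off the flat intervals and the compact support of $\phi$ is what makes the argument go through, and the open-interval form of Assumption 5.3(2) is exactly what guarantees the convolution window at $z_{h,k}$ stays inside (or outside) the relevant flat region for all sufficiently small $h$. Step (iii) is then routine block-matrix bookkeeping, and step (i) is a standard fixed-point-stability argument.
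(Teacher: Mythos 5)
Your proposal is correct and follows essentially the same route as the paper's proof: convergence $\estim{\betav}_h \to \estim{\betav}$ via compactness, uniform approximation of the proximal operator, and uniqueness; coordinate-wise convergence of the diagonal Jacobian using Assumption \ref{assump:prox-smoothing}(2) to keep the evaluation point strictly inside (or outside) the flat intervals; and a block-matrix limit that kills the coordinates where the Jacobian vanishes. You actually spell out the final block-triangular inversion and the relabeling of $E$ more explicitly than the paper does, but the argument is the same.
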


The proof of this theorem can be found in Section
\ref{append:proof:thm:prox-smoothing}. Note that this theorem leads to the
following $\alo_\lambda$ formula:
\begin{equation*} 
    \alo_\lambda = \frac{1}{n}\sum_{i = 1}^n d\bigg(y_i, \xv_i^\top\estim{\betav}
        +
        \frac{H_{ii}\dot{\ell}(\xv_i^\top\estim{\betav}; y_i)}{1 -
        H_{ii}\ddot{\ell}(\xv_i^\top\estim{\betav}; y_i)}\bigg), 
\end{equation*}
where $\Hv$ is defined in \eqref{eq:hdef:proximal}.

\subsection{Generalization to Constrained Optimization Problems}
\label{ssec:constrainedopt}

The proximal approach developed in the last two sections enables us to study
more general problems of the form:
\begin{equation}\label{eq:constraintedoptimization}
    \min_{\betav} \; \sum_{j=1}^n \ell(\xv_j^\top \betav; y_j)
    + R(\betav),
    \quad
    \text{subject to } \betav \in \calC.
\end{equation}
where $\calC$ is a closed convex set. Simple examples of $\calC$ include positive orthant (when the elements of $\betav$ are known to be positive), or the cone of positive semi-definite matrices for covariance matrices. In this section, we consider the case where both the loss and the regularizer are twice differentiable. We can formulate this optimization problem as
\begin{equation*}
    \min_{\betav} \; \sum_{j=1}^n \ell(\xv_j^\top \betav; y_j)
    + R(\betav) + i_\calC(\betav), 
\end{equation*}
where $ i_\calC(\betav)$ denotes the convex indicator function of $\calC$.
According to the proximal formulation, the optimizer $\estim{\betav}$ of this problem satisfies
\begin{equation*}
    \estim{\betav} = \projv_{\calC}\bigg(\estim{\betav} - \sum_{j=1}^n \xv_j
    \dot{\ell}(\xv_j^\top \estim{\betav}; y_j) - \nabla R(\estim{\betav})\bigg)
\end{equation*}
where $\projv_{\calC}$ is the proximal operator of $ i_\calC(\betav)$ or equivalently the projection operator onto the set $\calC$. The leave-$i$-out problem optimizer also satisfies
\begin{equation*}
    \estimi{\betav} = \projv_{\calC}\bigg(\estimi{\betav}
    - \sum_{j\neq i} \xv_j \dot{\ell}(\xv_j^\top \estimi{\betav}; y_j)
    - \nabla R(\estimi{\betav})\bigg)
\end{equation*}

Note that $\projv_{\calC}$ is not necessarily a smooth function, unless
$\calC=\mathbb{R}^p$ or affine. However, since the projection is a Lipschitz function, it is differentiable almost everywhere \cite{heinonen2005lectures}. The following lemma helps us understand the singularity points of the projection operator for a general class of convex sets. 

\begin{lemma}[\cite{fitzpatrick1982differentiability}]
    Let $\partial \calC$ denote the boundary of the set $\calC$. If $\partial
    \calC$ is $C^k$,\footnote{
        $\partial\calC$ is $C^k$ means there is a locally 1-to-1 mapping $h$ from
        $\partial\calC$ to $\mathbb{R}^{m}$ for some $m$ such that $h$ is
        $k$-times differentiable.
    }then $\projv_{\calC}$ is at least $(k-1)$-times
    differentiable for any $\betav \in \mathbb{R}^p \backslash \partial \calC$.
\end{lemma}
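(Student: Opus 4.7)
The plan is to decompose $\mathbb{R}^p \setminus \partial \calC$ into the two open components, $\operatorname{int}(\calC)$ and the exterior $\calC^c$, and treat each separately. On $\operatorname{int}(\calC)$ the projection is the identity, so smoothness is immediate; thus the real content lies on the exterior, where the projection takes values on $\partial\calC$.

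For the exterior case, the first step is to rewrite the projection through the first-order optimality conditions. For $\betav \notin \calC$ the unique point $p = \projv_\calC(\betav)$ lies on $\partial\calC$ and satisfies $\betav - p = t\, n(p)$ for some $t>0$, where $n(p)$ denotes the outward unit normal at $p$. Convexity of $\calC$ guarantees uniqueness of this decomposition. Since $\partial \calC$ is $C^k$, the Gauss map $p \mapsto n(p)$ is $C^{k-1}$, so the forward map
\begin{equation*}
    \Phi:\partial\calC \times (0,\infty) \to \mathbb{R}^p \setminus \overline{\calC}, \qquad \Phi(p,t) := p + t\, n(p),
\end{equation*}
is of class $C^{k-1}$, and $\projv_\calC$ is recovered as $\pi_1 \circ \Phi^{-1}$, where $\pi_1$ is the projection onto the first factor.

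Next I would apply the inverse function theorem to $\Phi$ at a point $(p_0, t_0)$ with $t_0 > 0$. After choosing local $C^k$ coordinates on $\partial\calC$, the differential $D\Phi(p_0,t_0)$ splits into a normal block equal to $n(p_0)$ (from the $t$-derivative) and a tangential block $v \mapsto v + t_0\, Dn(p_0)[v]$ acting on $T_{p_0}\partial\calC$. Since $Dn(p_0)$ is the shape operator, its spectrum consists of the principal curvatures of $\partial\calC$ at $p_0$, which are nonnegative because $\calC$ is convex. Hence the tangential block has eigenvalues $1 + t_0 \kappa_i \geq 1$ and is invertible; combined with the independent normal direction, $D\Phi(p_0,t_0)$ is a linear isomorphism. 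The inverse function theorem then yields a $C^{k-1}$ local inverse, and $\projv_\calC = \pi_1 \circ \Phi^{-1}$ inherits this regularity at $\betav = \Phi(p_0, t_0)$.

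The main obstacle is the positive semidefiniteness of $Dn(p_0)$ on $T_{p_0}\partial\calC$, i.e., the geometric fact that the boundary of a convex body has nonnegative second fundamental form in the outward-normal direction. This can be extracted from the convexity characterization that every tangent hyperplane is a supporting hyperplane: taking a smooth curve $\gamma$ in $\partial\calC$ through $p_0$, the signed distance of $\gamma(s)$ from the supporting hyperplane at $p_0$ is nonnegative, and its second derivative at $s=0$ equals $\langle n(p_0), \gamma''(0)\rangle$, which by the Gauss–Weingarten relation is $-\langle Dn(p_0)[\gamma'(0)],\gamma'(0)\rangle$ up to sign, forcing the correct inequality. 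Once this curvature sign is in hand, the remaining ingredients (uniqueness of the projection, $C^{k-1}$ regularity of the Gauss map, and the inverse function theorem) are standard, and the conclusion follows on all of $\mathbb{R}^p\setminus \partial\calC$.
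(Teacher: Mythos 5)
The paper does not actually prove this lemma --- it is stated as an imported result, cited to Fitzpatrick and Phelps \cite{fitzpatrick1982differentiability}, so there is no internal proof to compare against. Your argument is the standard one behind that citation (the ``normal exponential map'' or tubular-neighborhood proof), and it is essentially sound: on $\mathrm{int}(\calC)$ the projection is the identity; on the exterior you invert $\Phi(p,t)=p+t\,n(p)$, whose differential is an isomorphism because the tangential block $\Iv+t_0\,Dn(p_0)$ has eigenvalues $1+t_0\kappa_i\geq 1$, the nonnegativity of the principal curvatures being exactly the supporting-hyperplane computation you sketch ($s\mapsto\langle\gamma(s)-p_0,n(p_0)\rangle$ is maximized at $s=0$, so its second derivative is $\leq 0$, which via $\frac{d}{ds}\langle\gamma'(s),n(\gamma(s))\rangle=0$ gives $\langle Dn(p_0)\gamma'(0),\gamma'(0)\rangle\geq 0$). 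Global injectivity of $\Phi$ from uniqueness of the projection then lets you identify the local IFT inverse with $\projv_\calC$ composed with the distance, giving $C^{k-1}$ regularity. Two small caveats you should make explicit: (i) the argument presupposes $\calC$ has nonempty interior so that $\partial\calC$ is a codimension-one hypersurface with a single outward unit normal (otherwise the normal cone at $p$ is not a ray and the decomposition $\betav-p=t\,n(p)$ fails); this is the intended setting of the cited reference but is not forced by the paper's footnote definition of a $C^k$ boundary. (ii) The inverse function theorem requires $\Phi\in C^1$, i.e.\ $k\geq 2$; for $k=1$ the conclusion is only continuity, which holds anyway by nonexpansiveness, so nothing is lost. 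With those noted, your proof is a legitimate self-contained substitute for the external citation.
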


This lemma implies if $\estim{\betav} - \sum_{j=1}^n \xv_j
\dot{\ell}(\xv_j^\top \estim{\betav}; y_j) - \nabla R(\estim{\betav}) \notin
\partial C$, then
\begin{equation}\label{eq:constrained-smooth-alo}
    \surrogi{\betav}
    =
    \projv_{\calC}\Bigg(\estim{\betav} + \frac{\Gv\xv_i}{1 -
    \xv_i^\top \Gv \xv_i\ddot{\ell}(\xv_i^\top \estim{\betav}; y_i)}
    \dot{\ell}(\xv_i^\top \estim{\betav}; y_i)\Bigg),
\end{equation}
where $\Gv = \big(\Jv \Xv^\top
\diag[\{\ddot{\ell}(\xv_j^\top\estim{\betav};y_j)\}_j] \Xv
+ \Iv - \Jv + \Jv \nabla^2 R(\estim{\betav})\big)^{-1} \Jv$ with $\Jv$ representing
the Jacobian of the projection. In Section \ref{sec:applications} we study specific problems and show how the Jacobian can be calculated.

\begin{remark}
Note that while the Jacobian of the projection maps every vector in
$\mathbb{R}^p$ to a vector in the tangent space of $\partial C$, the action of
the Jacobian on a vector is not equivalent to the projection onto the tangent
space of $\partial C$.
\end{remark}

\begin{remark}
    Let $\calC^\circ$ be the interior of $\calC$. If $\calC^\circ \neq
    \emptyset$ and $\estim{\betav} - \sum_{j=1}^n\xv_j \dot{\ell}
    (\xv_j^\top\estim{\betav}; y_j) - \nabla R(\estim{\betav}) \in \calC^\circ$,
    we have $\Jv = \Iv$.
\end{remark}

\begin{remark} \label{remark:three-approaches-strength}
    We note that the dual approach is typically powerful for models with smooth
    losses and norm-type regularizers, such as the SLOPE norm and the
    generalized LASSO. On the other hand, the primal approach is valuable for
    models with nonsmooth loss, such as SVM, or when the Hessian of the regularizer is
    feasible to calculate. Such regularizers often exhibit some type of
    separability or symmetry, such as LASSO and nuclear norm. Finally the
    proximal approach can handle the problems with constraints nicely. It can
    also deal with models involving nonsmooth regularizers, as long as the
    Jacobian of the corresponding proximal operator can be easily obtained.
\end{remark}

\section{Equivalence Between Primal, Dual and Proximal Methods}\label{sec:primal-dual-equiv}

So far we have introduced three frameworks to approximate $\loo_\lambda$. Although the
primal, dual and prixmal methods may be harder or easier to carry out depending
on the specific problem at hand, one may wonder if they always obtain the same
result. In this section, we show that if the loss function and regularizer are
twice differentiable, these frameworks lead to equivalent formulas. We first
show the equivalence of primal and dual in Section \ref{ssec:primdualequ}, and
then discuss the equivalence of primal and proximal in Section
\ref{ssec:primproxequ}. Finally, Section \ref{ssec:theory} uses these
equivalence results to show the accuracy of our formulas for the case of smooth
losses and regularizers.

\subsection{Primal and Dual Equivalence}\label{ssec:primdualequ}

As both the primal and dual methods are based on a first-order approximation strategy,
we will study them not as approximate solutions to the leave-$i$-out problem, but will
instead show that they are exact solutions to a surrogate leave-$i$-out problem. Indeed,
recall that the leave-$i$-out problem is given by \eqref{eq:leave-i-out}, which cannot
be solved in closed form. However, we note that the solution does exist in closed form
in the case where both $\ell$ and $R$ are quadratic functions.

We may thus consider the approximate leave-$i$-out problem, where both $\ell$ and $R$
in the leave-$i$-out problem \eqref{eq:leave-i-out} have been replaced by their
quadratic expansion at the full data solution:
\begin{equation} \label{eq:alo:surrog}
    \min_{\leavei{\betav}} \sum_{j \neq i} \surrog{\ell}(\xv_j^\top \leavei{\betav}; y_j)
    + \surrog{R}(\leavei{\betav}).
\end{equation}

When both $\ell$ and $R$ are twice differentiable at the full data solution, $\surrog{\ell}$
and $\surrog{R}$ can be taken to simply be their respective second order Taylor expansions at $\estim{\betav}$.
The way we obtain $\leavei{\surrog{\betav}}$ in \eqref{eq:primal-alo-smooth0}
indicates that the primal formula in \eqref{eq:primal-alo-smooth} and
\eqref{eq:smooth-H} are the exact leave-$i$-out solution of the 
surrogate primal problem \eqref{eq:alo:surrog}.
On the other hand, we may also wish to consider the surrogate dual problem, by replacing
$\ell^*$ and $R^*$ by their quadratic expansion at full data dual solution $\estim{\dualv}$
in the dual problem \eqref{eq:methods:dual}. One may possibly worry that the surrogate dual
problem is then different from the dual of the surrogate primal problem \eqref{eq:alo:surrog}. This
does not happen, and we have the following theorem.
\begin{theorem}\label{thm:primal-dual-equivalence}
   Let $\ell$ and $R$ be twice differentiable convex functions. Let
   $\surrog{\ell}$ and $\surrog{R}$ denote the quadratic surrogates of the
   loss and regularizer
    at the full data solution $\estim{\betav}$, and let $\surrog{\ell}_D^*$ and $\surrog{R}_D^*$
    denote the quadratic surrogates of the conjugate loss and regularizer at the dual full data
    solution $\estim{\dualv}$. We have that the following problems are equivalent (have the same minimizer):
    \begin{gather}
        \label{eq:equivalence:primal-surrogate}
        \min_{\dualv} \sum_{j = 1}^n \surrog{\ell}^*(-\duals_j; y_j) +
        \surrog{R}^*(\Xv^\top \dualv), \\
        \label{eq:equivalence:dual-surrogate}
        \min_{\dualv} \sum_{j = 1}^n \surrog{\ell}^*_D(-\duals_j; y_j) +
        \surrog{R}^*_D(\Xv^\top \dualv).
    \end{gather}
\end{theorem}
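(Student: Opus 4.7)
The plan is to prove the stronger statement that the two objectives in \eqref{eq:equivalence:primal-surrogate} and \eqref{eq:equivalence:dual-surrogate} coincide up to an additive constant, which immediately yields that their minimizers agree. The whole argument rests on a single conjugacy identity: if $f$ is twice-differentiable and strictly convex near a point $x_0$, and $y_0 := \nabla f(x_0)$ (so that $x_0 = \nabla f^*(y_0)$ by Fenchel duality), then the Fenchel conjugate of the quadratic surrogate of $f$ at $x_0$ equals the quadratic surrogate of $f^*$ at $y_0$.

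To establish this identity I would compute $\surrog{f}^*(y) = \sup_x\{\langle x,y\rangle - \surrog{f}(x)\}$ by setting the gradient to zero, obtaining the maximizer $x - x_0 = [\nabla^2 f(x_0)]^{-1}(y - y_0)$. Substituting back and simplifying via the Fenchel--Young equality $f(x_0) + f^*(y_0) = \langle x_0, y_0\rangle$ together with the classical relations $\nabla f^*(y_0) = x_0$ and $\nabla^2 f^*(y_0) = [\nabla^2 f(x_0)]^{-1}$, the expression collapses precisely to $f^*(y_0) + \langle x_0, y - y_0\rangle + \tfrac{1}{2}(y - y_0)^\top [\nabla^2 f(x_0)]^{-1} (y - y_0)$, which is the second-order Taylor expansion of $f^*$ at $y_0$.

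With this identity in hand, the theorem follows by applying it twice, with the expansion points supplied by the primal--dual correspondence \eqref{eq:primal-dual-correspondence}. Applying it to each $\ell(\cdot; y_j)$ at $x_0 = \xv_j^\top\estim{\betav}$: the correspondence gives $-\estim{\duals}_j = \dot{\ell}(\xv_j^\top\estim{\betav}; y_j)$, so the conjugate $\surrog{\ell}^*(\cdot; y_j)$ of the primal surrogate coincides with $\surrog{\ell}^*_D(\cdot; y_j)$, the quadratic surrogate of $\ell^*(\cdot; y_j)$ at $-\estim{\duals}_j$. Applying it to $R$ at $x_0 = \estim{\betav}$: since $\Xv^\top\estim{\dualv} = \nabla R(\estim{\betav})$, the conjugate $\surrog{R}^*$ coincides with $\surrog{R}^*_D$, the quadratic surrogate of $R^*$ at $\Xv^\top\estim{\dualv}$. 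Summing the loss terms and the regularizer then shows the two objectives differ only by an additive constant, so the minimizers coincide.

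The main obstacle is the strict convexity requirement: the conjugate computation needs the Hessians $\ddot{\ell}(\xv_j^\top\estim{\betav}; y_j)$ and $\nabla^2 R(\estim{\betav})$ to be invertible for the supremum to be attained and the quadratic surrogate's conjugate to be finite-valued. In degenerate cases the surrogate becomes merely affine, its conjugate is an indicator, and the same degeneracy appears on the dual side; these cases can be handled by an $\varepsilon$-regularization together with a continuity argument for Fenchel conjugation, but form the only technically delicate point in an otherwise direct calculation.
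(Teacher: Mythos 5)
Your proposal is correct and follows essentially the same route as the paper: compute the Fenchel conjugate of the quadratic Taylor surrogate, invoke the Fenchel--Young equality together with the identity $\nabla^2 f^*(\nabla f(\xv)) = [\nabla^2 f(\xv)]^{-1}$ (the paper's Lemma \ref{lemma:hessianprimaldual}), and use the primal--dual correspondence to match the expansion points. The only difference is presentational — you package the computation as a single reusable conjugacy identity applied to each term, whereas the paper carries out the same calculation inline for $\ell$ and $R$ — and your remark about degenerate (non-invertible) Hessians is a point of care the paper's proof silently assumes away.
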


Additionally, we note that the dual method described in Section
\ref{sec:approximatedual} solves the surrogate dual problem
\eqref{eq:equivalence:dual-surrogate}.
\begin{theorem}\label{thm:primal-dual-equivalence-2}
    Let $\Xv_u$, $\yv_u$ be as in \eqref{eq:dual:general-loss},
    and let $\leavei{\surrog{y}}_{u,i}$ be the transformed ALO obtained in
    \eqref{eq:dual:alo-general-loss}. Let $\tilde{\yv}_a$ be the same as $\yv_u$
    except $\tilde{y}_{a,i} = \leavei{\surrog{y}}_{u,i}$. Then
    $\surrog{\yv}_a$ satisfies
    \begin{equation*}
        [\proxv_{\tilde{g}}(\tilde{\yv}_a)]_i = 0,
    \end{equation*}
    where $\tilde{g}(\uv) = \tilde{R}^*(\Xv_u^\top \uv)$ and $\tilde{R}$
    denotes the quadratic surrogate of the regularizer.

    In particular, $\leavei{\surrog{y}}_i=K_{ii}\leavei{\surrog{y}}_{u,i}$ is
    the exact leave-$i$-out predicted value for the surrogate problem described
    in Theorem \ref{thm:primal-dual-equivalence}.
\end{theorem}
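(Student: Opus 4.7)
The plan is to leverage Theorem \ref{thm:primal-dual-equivalence} together with the observation that $\tilde{g}(\uv) := \tilde{R}^*(\Xv_u^\top \uv)$ is a quadratic function, so that $\proxv_{\tilde{g}}$ is an affine map with a constant Jacobian; this turns the Newton-style linearization behind \eqref{eq:dual:alo-general-loss} into an exact identity. Quadraticity holds because $\tilde{R}$ is quadratic, the Fenchel conjugate of a quadratic is quadratic (with Hessian $[\nabla^2 R(\estim{\betav})]^{-1}$), and composition with the linear map $\Xv_u^\top$ preserves quadraticity. The constant Jacobian of $\proxv_{\tilde{g}}$ coincides with the $\Jv$ appearing in \eqref{eq:dual:alo-general-loss} because $R^*$ and $\tilde{R}^*$ agree to second order at $\Xv^\top \estim{\dualv}$ by conjugacy. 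I would also record that the full-data primal-dual correspondence combined with the change of variables $\uv = \Kv\dualv$ from Section \ref{ssec:dual:general-case} gives $\proxv_{\tilde{g}}(\yv_u) = \Kv\estim{\dualv}$, and in particular $[\proxv_{\tilde{g}}(\yv_u)]_i = K_{ii}\estim{\duals}_i$.

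The first claim then follows from one line of affine algebra. Since $\tilde{\yv}_a - \yv_u = (\tilde{y}_{a,i} - y_{u,i})\ev_i$ and $\proxv_{\tilde{g}}$ is affine,
\[
[\proxv_{\tilde{g}}(\tilde{\yv}_a)]_i = K_{ii}\estim{\duals}_i + J_{ii}(\tilde{y}_{a,i} - y_{u,i}) = 0,
\]
using $\tilde{y}_{a,i} - y_{u,i} = -K_{ii}\estim{\duals}_i / J_{ii}$ from the definition of $\leavei{\surrog{y}}_{u,i}$ in \eqref{eq:dual:alo-general-loss}.

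For the second claim, I would first verify that in $\uv$-coordinates the surrogate primal reduces to $\tfrac{1}{2}\|\Xv_u\betav - \yv_u\|_2^2 + \tilde{R}(\betav)$, by completing the square in $\surrog{\ell}$ using the primal-dual identities $\ddot{\ell}^*(-\estim{\duals}_j;y_j) = 1/\ddot{\ell}(\estim{y}_j;y_j)$ and $\dot{\ell}^*(-\estim{\duals}_j;y_j) = \estim{y}_j$. The leave-$i$-out surrogate primal and the augmented surrogate primal (obtained by replacing $\yv_u$ with $\tilde{\yv}_a$) have first-order conditions that coincide precisely when the $i$-th residual $\xv_{u,i}^\top \tilde{\betav}_a - \tilde{y}_{a,i}$ at the augmented optimum vanishes; but by the primal-dual correspondence for the $\uv$-coordinate problem this residual equals $-[\proxv_{\tilde{g}}(\tilde{\yv}_a)]_i$, which is zero by the first claim. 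Hence $\tilde{\betav}_a = \leavei{\surrog{\betav}}$, and the same residual equation yields $\xv_i^\top \leavei{\surrog{\betav}} = K_{ii}\,\xv_{u,i}^\top \leavei{\surrog{\betav}} = K_{ii}\tilde{y}_{a,i} = \leavei{\surrog{y}}_i$. The main obstacle is the bookkeeping across the three coordinate systems $(\betav, \dualv, \uv)$: identifying the Jacobian in \eqref{eq:dual:alo-general-loss} with that of the affine map $\proxv_{\tilde{g}}$, and translating a vanishing $i$-th proximal coordinate into the coincidence of the augmented and leave-$i$-out primal first-order conditions. Once this dictionary is set up, Theorem \ref{thm:primal-dual-equivalence} together with the affineness of $\proxv_{\tilde{g}}$ makes the rest routine linear algebra.
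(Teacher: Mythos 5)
Your proposal is correct and follows essentially the same route as the paper's proof: exploit the affineness of $\proxv_{\tilde g}$, match its value at the full-data point (both equal the dual optimum) and its Jacobian (via the inverse-Hessian conjugacy relation of Lemma \ref{lemma:hessianprimaldual}), and conclude that the linearized equation defining the ALO value is exact for the surrogate problem. Your handling of the second claim is slightly more explicit than the paper's, and you work in the $\uv$-coordinates throughout instead of reducing without loss of generality to the $\ell_2$ loss, but these are only cosmetic differences.
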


We refer the reader to Section \ref{append:sec:primal-dual-equiv} for the
proofs. These two theorems imply that for twice differentiable losses and
regularizers, the frameworks we laid out in Sections \ref{sec:approximatedual}
and \ref{sec:primal-smoothing} lead to exactly the same ALO formulas.
This equivalence theorem reflects the deep connections between
the primal and dual optimization problem. The central
property used by the proof is captured in the following lemma:
\begin{lemma}
    Let $f$ be a proper closed convex function, such that both $f$ and $f^*$ are twice differentiable.
    Then, we have for any $\xv$ in the domain of $f$:
    \begin{equation*}
        \nabla^2 f^*(\nabla f(\xv)) = [\nabla^2 f(\xv)]^{-1}.
    \end{equation*}
\end{lemma}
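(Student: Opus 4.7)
The plan is to reduce the Hessian identity to the inverse function theorem applied to the gradient map, using the fact that under the stated hypotheses $\nabla f^*$ is the inverse of $\nabla f$ as a function. The core ingredient is convex duality: for a proper closed convex function $f$, Fenchel--Young holds with equality precisely on the graph of $\partial f$, and one has the characterization $y \in \partial f(\xv) \Longleftrightarrow \xv \in \partial f^*(y)$. Since $f$ is differentiable, $\partial f(\xv) = \{\nabla f(\xv)\}$ is a singleton, and since $f^*$ is differentiable at $\nabla f(\xv)$, $\partial f^*(\nabla f(\xv)) = \{\nabla f^*(\nabla f(\xv))\}$ is also a singleton. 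Combining these two facts yields the pointwise identity
\begin{equation*}
    \nabla f^*(\nabla f(\xv)) = \xv
\end{equation*}
for every $\xv$ in the (interior of the) domain of $f$.

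Next I would differentiate this identity in $\xv$. Since both $f$ and $f^*$ are assumed twice differentiable, $\nabla f$ and $\nabla f^*$ are continuously differentiable, so the chain rule applies. Differentiating the left-hand side gives $\nabla^2 f^*(\nabla f(\xv))\,\nabla^2 f(\xv)$, while the right-hand side yields $\Iv$. Thus
\begin{equation*}
    \nabla^2 f^*(\nabla f(\xv))\,\nabla^2 f(\xv) = \Iv,
\end{equation*}
which shows simultaneously that $\nabla^2 f(\xv)$ is invertible and that its inverse coincides with $\nabla^2 f^*(\nabla f(\xv))$, establishing the claim.

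The main obstacle is justifying the identity $\nabla f^* \circ \nabla f = \mathrm{id}$ cleanly: one must ensure that $\nabla f(\xv)$ actually lies in the domain where $f^*$ is differentiable, so that the subgradient equivalence from convex duality produces a genuine functional inverse and not just a set-valued one. Under the stated hypothesis that $f^*$ is twice differentiable, this is automatic on the relevant domain, and the rest of the argument is a direct application of the chain rule; no delicate estimates are needed.
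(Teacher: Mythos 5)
Your proposal is correct and follows essentially the same route as the paper's own proof: both derive the pointwise identity $\nabla f^*(\nabla f(\xv)) = \xv$ from the subgradient inversion rule for conjugate pairs (Rockafellar, Theorem 23.5) and then differentiate via the chain rule to obtain $\nabla^2 f^*(\nabla f(\xv))\,\nabla^2 f(\xv) = \Iv$. Your added remark about ensuring $\nabla f(\xv)$ lies where $f^*$ is differentiable is a reasonable point of care that the paper glosses over, but the argument is the same.
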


By combining this lemma with the primal dual correspondence \eqref{eq:primal-dual-correspondence},
we obtain a relation between the curvature of the primal and dual problems at the optimal value,
ensuring that the approximation is consistent with the dual structure.

\subsection{Primal and Proximal Equivalence}\label{ssec:primproxequ}
As discussed in the last section the primal approximation
\begin{equation}\label{eq:loocva_primal}
    \surrogi{\betav}
    =
    \estim{\betav}
    +
    \bigg[\sum_{j\neq i} \xv_j \xv_j^\top\ddot{\ell}
    (\xv_j^\top\estim{\betav}; y_j) + \nabla^2 R(\estim{\betav})\bigg]^{-1}
    \xv_i\dot{\ell}(\xv_i^\top\estim{\betav}; y_i),
\end{equation}
is the exact leave-one-out estimate for the surrogate problem $\min_{\betav}
\sum_{j \neq i} \surrog{\ell}(\xv_j^\top \betav; y_j) + \surrog{R}(\betav)$. We
start by applying the proximal method discussed in Section
\ref{ssec:smoothprox} to this surrogate problem. Since $\surrog{R}(\betav)$ is
a quadratic function, its proximal operator is a linear function in $\betav$
and is given by
\begin{equation}\label{eq:surrogate}
    \proxv_{\surrog{R}} (\betav)
    = \big[\Iv+ \nabla^2  R(\estim{\betav})\big]^{-1}
    (\nabla^2 R(\estim{\betav})\estim{\betav} - \nabla R(\estim{\betav}))
    + \big[\Iv + \nabla^2 R(\estim{\betav})\big]^{-1} \betav.
\end{equation}

Hence, we can calculate the Jacobian $\tilde{\Jv}$ of ${\proxv}_{\surrog{R}}$
and plug it in \eqref{eq:prox-approx-formula0} to obtain the following approximation
of $\estimi{\betav}$:
\begin{equation} \label{eq:prox-approx-surrog}
    \surrogi{\betav}_P
    =
    \estim{\betav} + \bigg[\Iv - \tilde{\Jv} \bigg(\Iv - \sum_{j \neq i} \ddot{\ell}
    (\xv_j^\top\estim{\betav}; y_j)\xv_j\xv_j^\top\bigg)\bigg]^{-1} \tilde{\Jv}
    \xv_i \dot{\ell}(\xv_i^\top\estim{\betav}; y_i),
\end{equation}
where $\tilde{\Jv} = \big[\Iv+ \nabla^2 R(\estim{\betav})\big]^{-1}$. Even though this
formula looks different from  \eqref{eq:loocva_primal}, we can see that
since $\Iv - \tilde{\Jv} = \big[\Iv+ \nabla^2 R(\estim{\betav})\big]^{-1}\nabla^2
R(\estim{\betav}) = \tilde{\Jv} \nabla^2 R(\estim{\betav})$. Note that
$\tilde{\Jv}$ is
invertible, we have 
\begin{equation}\label{eq:proxtosurrogate}
    \bigg[\Iv - \tilde{\Jv} \bigg(\Iv - \sum_{j \neq i} \ddot{\ell}
    (\xv_j^\top\estim{\betav}; y_j) \xv_j\xv_j^\top \bigg)\bigg]^{-1}\tilde{\Jv}
    = \bigg[ \nabla^2 R(\estim{\betav}) + \sum_{j\neq i} \ddot{\ell}
    (\xv_j^\top \estim{\betav}; y_j)\xv_j\xv_j^\top\bigg]^{-1}.
\end{equation}

Hence, the proximal approach when applied to the surrogate problem, returns the
same formula as the primal approach. In our next step, we would like to show
that the formulas we obtain by applying the proximal approach to the original
and surrogate problems return the same formulas. Note that when the proximal
approach is applied to these two problem, the formulas look exactly the same,
and they only differ in the Jacobians of the proximal operator. Note that the
proximal operator of $R$ and $\tilde{R}$ are different and hence the Jacobians
can be different. However, a nice property of proximal operators leads to the
following lemma:
\begin{lemma}\label{prox:equivalence}
    Suppose that $R$ is twice differentiable. Let $\Jv$ and $\surrog{\Jv}$
    denote the Jacobian of the proximal operators of $R$ and $\tilde{R}$ in
    \eqref{eq:prox-approx-formula0} and \eqref{eq:prox-approx-surrog} respectively.
    Then,
    \begin{equation*}
        \Jv = \surrog{\Jv}.
    \end{equation*}
    i.e., $\Jv$ at $\estim{\betav} - \sum_{j=1}^n \dot{\ell}
    (\xv_j^\top\estim{\betav}; y_j)\xv_j$ coincides with $\tilde{\Jv}$.
\end{lemma}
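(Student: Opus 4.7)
The plan is to compute both Jacobians explicitly and observe that they coincide at the relevant point. The key input is property 2 of Lemma \ref{lem:proxproperties}, namely $\proxv_h = (\Iv + \partial h)^{-1}$, which for a twice differentiable $h$ says that $\proxv_h(\zv) = \uv$ iff $\uv + \nabla h(\uv) = \zv$. Implicit differentiation of this relation gives that the Jacobian of $\proxv_h$ at $\zv$ is $(\Iv + \nabla^2 h(\uv))^{-1}$, evaluated at $\uv = \proxv_h(\zv)$.

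First, I would apply this to the original regularizer $R$ at the evaluation point $\zv_0 := \estim{\betav} - \sum_{j=1}^n \dot{\ell}(\xv_j^\top\estim{\betav}; y_j)\xv_j$. The fixed-point characterization \eqref{eq:prox-smooth-optimal} says precisely that $\proxv_R(\zv_0) = \estim{\betav}$, so the implicit-function calculation yields
\begin{equation*}
    \Jv \;=\; \bigl(\Iv + \nabla^2 R(\estim{\betav})\bigr)^{-1}.
\end{equation*}

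Next, I would treat the quadratic surrogate $\tilde{R}(\betav) = R(\estim{\betav}) + \nabla R(\estim{\betav})^\top(\betav - \estim{\betav}) + \tfrac{1}{2}(\betav - \estim{\betav})^\top \nabla^2 R(\estim{\betav})(\betav - \estim{\betav})$. Its proximal operator was already computed in closed form in \eqref{eq:surrogate}, and reading off the linear coefficient immediately gives $\tilde{\Jv} = \bigl(\Iv + \nabla^2 R(\estim{\betav})\bigr)^{-1}$, independently of where the Jacobian is evaluated (since $\proxv_{\tilde{R}}$ is affine). Comparing the two expressions establishes $\Jv = \tilde{\Jv}$.

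There is no real obstacle here; the only subtlety is making sure we evaluate the Jacobian of $\proxv_R$ at the correct point and invoke the fixed-point identity \eqref{eq:prox-smooth-optimal} so that $\nabla^2 R$ is evaluated at $\estim{\betav}$, which is exactly the expansion point used to define $\tilde{R}$. This matching is what makes the two Jacobians agree, and it is really a restatement of the fact that $\proxv_R$ and $\proxv_{\tilde{R}}$ share the same first-order behavior at $\zv_0$ whenever $\tilde{R}$ is the quadratic Taylor expansion of $R$ at $\proxv_R(\zv_0)$.
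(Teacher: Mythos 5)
Your proposal is correct and follows essentially the same route as the paper: both compute $\Jv(\uv) = [\Iv + \nabla^2 R(\proxv_R(\uv))]^{-1}$ by implicit differentiation of the optimality condition $\proxv_R(\uv) - \uv + \nabla R(\proxv_R(\uv)) = 0$, evaluate at $\uv = \estim{\betav} - \sum_{j}\dot{\ell}(\xv_j^\top\estim{\betav}; y_j)\xv_j$ where $\proxv_R(\uv) = \estim{\betav}$, and read off $\tilde{\Jv}$ from the affine form of $\proxv_{\tilde R}$ in \eqref{eq:surrogate}. No gaps.
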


The proof of this lemma is presented in Section
\ref{append:sec:primal-dual-equiv}. Combining Lemma \ref{prox:equivalence} with
\eqref{eq:proxtosurrogate} proves the following equivalence theorem:
\begin{theorem}
    Let both $\ell$ and $r$ be twice differentiable. Furthermore, let
    $\surrogi{\betav}$ and $\surrogi{\betav}_P$ denote the approximations
    obtained from the primal and proximal approach. Then we have
    \begin{equation*}
        \surrogi{\betav} = \surrogi{\betav}_P. 
    \end{equation*}
\end{theorem}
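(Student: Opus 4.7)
The plan is to chain together the two identifications already set up in the surrounding text. First, I would write out the proximal approximation $\surrogi{\betav}_P$ from \eqref{eq:prox-approx-formula0} applied to the original problem, with $\Jv$ denoting the Jacobian of $\proxv_R$ at $\estim{\betav} - \sum_j \dot{\ell}(\xv_j^\top\estim{\betav}; y_j)\xv_j$. Next, I would apply the exact same proximal recipe to the quadratic surrogate problem; this gives the formula \eqref{eq:prox-approx-surrog} with Jacobian $\surrog{\Jv} = [\Iv + \nabla^2 R(\estim{\betav})]^{-1}$, coming from the explicit proximal operator \eqref{eq:surrogate} of the quadratic $\surrog{R}$.

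Second, I would invoke Lemma \ref{prox:equivalence}, which tells us $\Jv = \surrog{\Jv}$. This is the pivotal step: although $R$ and $\surrog{R}$ differ globally as functions, their proximal operators share the same first-order behavior at the relevant point, so swapping $\surrog{\Jv}$ for $\Jv$ in \eqref{eq:prox-approx-surrog} produces exactly the expression for $\surrogi{\betav}_P$ obtained from the original problem. In other words, the proximal approach applied to the original problem and the proximal approach applied to the quadratic surrogate yield identical approximations.

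Third, I would close the loop using the algebraic identity \eqref{eq:proxtosurrogate}, which rewrites
\begin{equation*}
    \bigl[\Iv - \surrog{\Jv}\bigl(\Iv - \sum_{j \neq i} \ddot{\ell}(\xv_j^\top\estim{\betav}; y_j)\xv_j\xv_j^\top\bigr)\bigr]^{-1}\surrog{\Jv}
    =
    \bigl[\nabla^2 R(\estim{\betav}) + \sum_{j \neq i} \ddot{\ell}(\xv_j^\top\estim{\betav}; y_j)\xv_j\xv_j^\top\bigr]^{-1}.
\end{equation*}
The right-hand side is precisely the matrix that appears in the primal formula \eqref{eq:primal-alo-smooth0} for $\surrogi{\betav}$, so multiplying through by $\xv_i \dot{\ell}(\xv_i^\top \estim{\betav}; y_i)$ and adding $\estim{\betav}$ gives $\surrogi{\betav} = \surrogi{\betav}_P$.

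The only nontrivial ingredient is Lemma \ref{prox:equivalence}, which is already stated (and proved elsewhere) in the paper; everything else is bookkeeping using \eqref{eq:surrogate} and \eqref{eq:proxtosurrogate}. I do not expect any genuine obstacle in the argument itself, provided the invertibility of $\surrog{\Jv}$ and of the resulting matrix sum is read off from twice differentiability of $R$ and convexity (which ensures $\nabla^2 R(\estim{\betav}) \succeq 0$ and $\sum_j \ddot{\ell}\, \xv_j\xv_j^\top \succeq 0$ with positive-definiteness guaranteed by the implicit assumptions underlying the primal construction in Section \ref{ssec:primal-smooth}).
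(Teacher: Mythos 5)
Your proposal is correct and follows essentially the same route as the paper: it applies the proximal recipe to both the original problem and its quadratic surrogate, identifies the two via Lemma \ref{prox:equivalence}, and then reduces to the primal formula through the algebraic identity \eqref{eq:proxtosurrogate}. The paper's proof is exactly this combination, so there is nothing to add.
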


\subsection{Discussion on the Accuracy of the ALO formulas}\label{ssec:theory}
The results we derived in Sections \ref{ssec:primdualequ} and
\ref{ssec:primproxequ}, combined with Theorem 3 of \cite{kamiar2018scalable},
offer an upper bound on the error of the primal, dual, and proximal
$\alo_\lambda$ formulas. Specifically, under some regularity conditions on the
second order derivatives of the loss and the regularizer,
\cite{kamiar2018scalable} proved the following holds with high probability:
\begin{equation*}
    \max_{i}\big|\xv_i^\top \estimi{\betav} - \xv_i^\top \surrogi{\betav} \big|
    \leq \frac{C_0(p)}{\sqrt{p}},
\end{equation*}
where $\surrogi{\betav}$ denotes the primal approximation in Section
\ref{ssec:primal-smooth} and $C_0(p)$ is expected to be of a logarithmic order in $p$.
We want to remind the reader that in \cite{kamiar2018scalable}, $n$ and $p$ are assumed to
be at the same order. That is why $n$ does not appear in the upper bound.
Now if we combine this upper bound with the equivalence theorems in the last
sections, we can prove the following result. When the loss and regularizer are twice
differentiable with a few regularity conditions on their second order derivatives
(please check Section 3 of \cite{kamiar2018scalable}), the formulas we
obtained from the dual and proximal approaches in Sections
\ref{ssec:dual:general-case} and \ref{ssec:smoothprox} are also accurate.

\section{Inclusion of Intercept}\label{sec:intercept}
In all the previous discussions, we assumed that the regression coefficient corresponding to the intercept term is penalized similar to the other regression coefficients.
However, often researchers prefer not to regularize the intercept term. For some of the model formulations, such as the penalized linear models with square loss, one may get rid of the intercept by centering each variable. However in many
other cases, there is no simple way to absorb the intercept term without
altering the meaning of the model. In this section, we discuss the ALO formula for models
involving intercepts. The goal of this section is to describe how the formulas should be modified when the intercept term is not regularized.

\subsection{Smooth Models}
\label{ssec:intercept-smooth-nonsmooth-loss}
Denote the intercept by $\beta_0$. Also, let $\betav$ denote the vector of all the regression coefficients except for $\beta_0$.  For the smooth models, we can naturally
treat $\bm{1}$ as a variable with coefficient $\beta_0$ and obtain the $\Hv$
matrix with the following form:
\begin{align}\label{eq:intercept-H}
    \Hv
    =&
    [\bm{1}, \Xv]\bigg(
    \begin{bmatrix}\bm{1}^\top \\ \Xv^\top\end{bmatrix}
    \diag[\{\ddot{\ell}(\hat{\beta}_0 + \xv_j^\top\hat{\betav}; y_j)\}_j]
    [\bm{1}, \Xv] +
    \begin{bmatrix} 0 & \\ & \nabla^2 R(\estim{\betav})\end{bmatrix}\bigg)^{-1}
    \begin{bmatrix} \bm{1}^\top \\ \Xv^\top \end{bmatrix} \nonumber \\
    =&
    [\bm{1}, \Xv]
    \begin{bmatrix}
        \sum_j \ddot{\ell}(\hat{\beta}_0 + \xv_j^\top\hat{\betav}; y_j)
        & 
        \sum_j \ddot{\ell}(\hat{\beta}_0 + \xv_j^\top\hat{\betav}; y_j) \xv_j^\top \\
        \sum_j \ddot{\ell}(\hat{\beta}_0 + \xv_j^\top\hat{\betav}; y_j) \xv_j
        &
        \Xv^\top \diag[\{\ddot{\ell}(\hat{\beta}_0 + \xv_j^\top\hat{\betav};
        y_j)\}_j] \Xv + \nabla^2 R(\estim{\betav})
    \end{bmatrix}^{-1}
    \begin{bmatrix} \bm{1}^\top \\ \Xv^\top \end{bmatrix}
\end{align}
We can then plug \eqref{eq:intercept-H} into \eqref{eq:primal-alo-smooth} to
obtain the ALO formula for prediction on the leave-$i$-out sample.

\subsection{Models with Nonsmooth Losses}

In this section, we study the models we discussed in Section \ref{ssec:nonsmooth-loss}, i.e., the regularizer is smooth, while the loss function has a finite number of zero-order singularities. For such models, we need to adapt the results in Theorem
\ref{thm:nonsmooth-loss-approx} to get the ALO formula, when the intercept term is not penalized.  

\begin{theorem} \label{thm:nonsmooth-loss-intercept}
    Following the notations and results of Theorem
    \ref{thm:nonsmooth-loss-approx}, we need the following modifications to
    obtain the ALO formula when the intercept term is not penalized:
    \begin{align*}\label{eq:nonsmooth-H-intercept}
        a_i &= \begin{cases}
            \frac{W_{ii}}{1 - W_{ii}\ddot{\ell}(\xv_i^\top\estim{\betav};y_i)}
            & \text{ if } i \in S, \\
            \frac{1}{\Uv_{ii}} & \text{ if } i \in V, \\
        \end{cases}
    \end{align*}
    where
    \begin{align*}
        \Yv =& \nabla^2 R(\estim{\betav})
        + \Xv_{S,\cdot}^\top \diag[\{\ddot{\ell}(\estim{\beta}_0 + \xv_j^\top\estim{\betav})\}_{j
        \in S}] \Xv_{S,\cdot}, \\
        \Uv =& \big[\Xv_{V,\cdot}\Yv^{-1}\Xv_{V,\cdot}^\top\big]^{-1}
        -
        \frac{\big[\Xv_{V,\cdot}\Yv^{-1}\Xv_{V,\cdot}^\top\big]^{-1}
        \big(\bm{1} - \Xv_{V,\cdot} \Yv^{-1} \bv\big)
        \big(\bm{1} - \Xv_{V,\cdot} \Yv^{-1} \bv\big)^\top
        \big[\Xv_{V,\cdot}\Yv^{-1}\Xv_{V,\cdot}^\top\big]^{-1}}
        {a - \bv^\top\Yv^{-1}\bv
        + \big(\bm{1} - \Xv_{V,\cdot} \Yv^{-1} \bv\big)^\top
        \big[\Xv_{V,\cdot}\Yv^{-1}\Xv_{V,\cdot}^\top\big]^{-1}
        \big(\bm{1} - \Xv_{V,\cdot} \Yv^{-1} \bv\big)}, \\
        \Wv =& \Xv_{S,\cdot}\Yv^{-1}\Xv_{S,\cdot}^\top -
        \Xv_{S,\cdot}\Yv^{-1}\Xv_{V,\cdot}^\top
        \big[\Xv_{V,\cdot}\Yv^{-1}\Xv_{V,\cdot}^\top \big]^{-1}
        \Xv_{V,\cdot}\Yv^{-1}\Xv_{S,\cdot}^\top \nonumber \\
        &+
        \frac{\dv \dv^\top}
        {a - \bv^\top\Yv^{-1}\bv + \big(\bm{1} - \Xv_{V,\cdot} \Yv^{-1} \bv\big)^\top
        \big[\Xv_{V,\cdot}\Yv^{-1}\Xv_{V,\cdot}^\top\big]^{-1}
        \big(\bm{1} - \Xv_{V,\cdot} \Yv^{-1} \bv\big)}.
    \end{align*}
    where $a = \sum_{j\in S} \ddot{\ell}(\hat{\beta}_0 +
    \xv_j^\top\hat{\betav}; y_j)$, $\bv = \sum_{j\in S} \ddot{\ell}
    (\hat{\beta}_0 + \xv_j^\top\hat{\betav}; y_j) \xv_j$,
    $\dv = \Xv_{S,\cdot}\Yv^{-1}\Xv_{V,\cdot}^\top \big[\Xv_{V,\cdot}
    \Yv^{-1} \Xv_{V,\cdot}^\top \big]^{-1} (\bm{1} - \Xv_{V,\cdot}\Yv^{-1}\bv) -
    (\bm{1} - \Xv_{S,\cdot}\Yv^{-1}\bv)$.
\end{theorem}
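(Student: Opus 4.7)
The plan is to reduce the intercept case to a direct application of Theorem \ref{thm:nonsmooth-loss-approx} on an augmented problem, and then perform one block-matrix inversion plus one Sherman--Morrison update to recover the stated formulas. First I would treat the pair $(\beta_0, \betav)$ as a single $(p+1)$-dimensional vector, with augmented design $[\bm{1}, \Xv]$ and augmented regularizer whose Hessian at the full-data optimum is the block-diagonal matrix $\mathrm{diag}(0, \nabla^2 R(\estim{\betav}))$, the zero in the upper-left capturing that the intercept is unpenalized. Under this reformulation, the smoothing argument of Theorem \ref{thm:nonsmooth-loss-approx} applies verbatim (assuming the rows $\{(1,\xv_j^\top)\}_{j\in V}$ remain linearly independent), and it yields the same structural formulas $a_i = \tilde{W}_{ii}/(1 - \tilde{W}_{ii}\ddot{\ell}(\xv_i^\top\estim{\betav};y_i))$ for $i\in S$ and $a_i = 1/\tilde{U}_{ii}$ for $i\in V$, but now with $\tilde{\Wv}$ and $\tilde{\Uv}$ built from $[\bm{1}, \Xv_{V,\cdot}]$, $[\bm{1}, \Xv_{S,\cdot}]$ and the augmented matrix
\begin{equation*}
\tilde{\Yv} \;=\; \begin{bmatrix} a & \bv^\top \\ \bv & \Yv \end{bmatrix},
\end{equation*}
where $a$, $\bv$, $\Yv$ are as in the statement.

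The rest of the proof is algebraic bookkeeping that eliminates the extra row and column. By the Schur-complement formula with $s = a - \bv^\top \Yv^{-1} \bv$,
\begin{equation*}
\tilde{\Yv}^{-1} \;=\; \begin{bmatrix} s^{-1} & -s^{-1}\bv^\top \Yv^{-1} \\ -s^{-1}\Yv^{-1}\bv & \Yv^{-1} + s^{-1}\Yv^{-1}\bv\bv^\top\Yv^{-1} \end{bmatrix}.
\end{equation*}
Substituting into $[\bm{1}, \Xv_{V,\cdot}]\,\tilde{\Yv}^{-1}\,[\bm{1}, \Xv_{V,\cdot}]^\top$ and collecting terms produces a clean rank-one perturbation
\begin{equation*}
[\bm{1}, \Xv_{V,\cdot}]\,\tilde{\Yv}^{-1}\,[\bm{1}, \Xv_{V,\cdot}]^\top
\;=\; \Xv_{V,\cdot} \Yv^{-1} \Xv_{V,\cdot}^\top
\;+\; s^{-1}\,(\bm{1} - \Xv_{V,\cdot}\Yv^{-1}\bv)(\bm{1} - \Xv_{V,\cdot}\Yv^{-1}\bv)^\top,
\end{equation*}
whose inverse via the Sherman--Morrison identity is precisely the stated expression for $\Uv$, with denominator $s + (\bm{1} - \Xv_{V,\cdot}\Yv^{-1}\bv)^\top [\Xv_{V,\cdot}\Yv^{-1}\Xv_{V,\cdot}^\top]^{-1}(\bm{1} - \Xv_{V,\cdot}\Yv^{-1}\bv)$.

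For the $i\in S$ case I would expand each of $[\bm{1}, \Xv_{S,\cdot}]\tilde{\Yv}^{-1}[\bm{1}, \Xv_{S,\cdot}]^\top$, $[\bm{1}, \Xv_{S,\cdot}]\tilde{\Yv}^{-1}[\bm{1}, \Xv_{V,\cdot}]^\top$ and $\Uv$ above, and substitute them into
\begin{equation*}
\tilde{\Wv} \;=\; [\bm{1}, \Xv_{S,\cdot}]\tilde{\Yv}^{-1}[\bm{1}, \Xv_{S,\cdot}]^\top
\;-\; [\bm{1}, \Xv_{S,\cdot}]\tilde{\Yv}^{-1}[\bm{1}, \Xv_{V,\cdot}]^\top\,\Uv\,[\bm{1}, \Xv_{V,\cdot}]\tilde{\Yv}^{-1}[\bm{1}, \Xv_{S,\cdot}]^\top.
\end{equation*}
The main obstacle is verifying that the four rank-one pieces produced by the $s^{-1}$-corrections of the two outer products and by the Sherman--Morrison correction of $\Uv$ collapse cleanly into the single rank-one term $\dv\dv^\top$ with $\dv$ of the stated form, and that its denominator coincides with that of $\Uv$. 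This is pure bookkeeping, with the key cancellations falling out when one groups terms according to their linear structure in $\bm{1} - \Xv_{V,\cdot}\Yv^{-1}\bv$ and $\bm{1} - \Xv_{S,\cdot}\Yv^{-1}\bv$; none of the individual steps is conceptually hard, but doing the simplification by hand is the tedious part of the argument.
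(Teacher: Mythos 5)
Your proposal follows essentially the same route as the paper's own proof: augment the design with the constant column $\bm{1}$, put a zero block in the regularizer Hessian for the unpenalized intercept, reuse the structure of Theorem \ref{thm:nonsmooth-loss-approx} with the block matrix $\tilde{\Yv}=\begin{bmatrix} a & \bv^\top \\ \bv & \Yv\end{bmatrix}$, and then eliminate the extra row and column by a Schur-complement inversion followed by Sherman--Morrison; your intermediate identity $[\bm{1},\Xv_{V,\cdot}]\tilde{\Yv}^{-1}[\bm{1},\Xv_{V,\cdot}]^\top=\Xv_{V,\cdot}\Yv^{-1}\Xv_{V,\cdot}^\top+s^{-1}(\bm{1}-\Xv_{V,\cdot}\Yv^{-1}\bv)(\bm{1}-\Xv_{V,\cdot}\Yv^{-1}\bv)^\top$ is exactly the paper's intermediate step, and the resulting $\Uv$ and $\Wv$ match.

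There is one step that fails as written. You invert $\tilde{\Yv}$ at $h=0$ via the Schur complement $s=a-\bv^\top\Yv^{-1}\bv$, but in the motivating nonsmooth-loss example (hinge loss/SVM) one has $\ddot{\ell}\equiv 0$ on $S$, hence $a=0$, $\bv=\bm{0}$, $s=0$, and $\tilde{\Yv}$ is singular --- the paper explicitly flags this and therefore carries the smoothing parameter $h$ through the entire block-inversion computation, working with $a_h$, $\bv_h$, $\Yv_h$, and only passes to the limit $h\to 0$ at the very end. The point is that the individual pieces you propose to combine (each carrying a factor $s^{-1}$) diverge as $h\to 0$, while the combinations $\Uv$ and $\Wv$ converge because Sherman--Morrison moves $s$ into the denominator $s+(\bm{1}-\Xv_{V,\cdot}\Yv^{-1}\bv)^\top[\Xv_{V,\cdot}\Yv^{-1}\Xv_{V,\cdot}^\top]^{-1}(\bm{1}-\Xv_{V,\cdot}\Yv^{-1}\bv)$, which stays bounded away from zero. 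So either do the bookkeeping symbolically for $s\neq 0$, verify all $s^{-1}$ terms cancel, and invoke continuity, or (as the paper does) keep the subscript $h$ throughout and take limits only of the final expressions. With that repair the argument goes through and is the paper's argument.
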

The derivation is slightly complicated. Hence, we refer the reader to Section
\ref{ssec:proof-of-nonsmooth-loss-intercept} for the proof.

\subsection{Models with Nonsmooth Regularizers} \label{ssec:intercept-nonsmooth-reg}
In this section, we consider the cases where the loss function is twice differentiable everywhere, while the regularizer is not smooth. 
To simplify the discussion, we present a slightly simplified variation of
\eqref{eq:intercept-H} based on the Woodbury matrix inversion formula.
Define $a = \sum_j \ddot{\ell}(\hat{\beta}_0 + \xv_j^\top\hat{\betav}; y_j)$,
$\bv = \sum_j \ddot{\ell}(\hat{\beta}_0 + \xv_j^\top\hat{\betav}; y_j) \xv_j$
and
$\Av = \Xv^\top \diag[\{\ddot{\ell}(\hat{\beta}_0 + \xv_j^\top\hat{\betav};
y_j)\}_j] \Xv
+ \nabla^2 R(\estim{\betav})$. The matrix $\Hv$ in \eqref{eq:intercept-H} can be simplified to
\begin{align*}
    \Hv
    =&
    [\bm{1}, \Xv]
    \begin{bmatrix}
        a & \bv^\top \\
        \bv
        &
        \Av
    \end{bmatrix}^{-1}
    \begin{bmatrix} \bm{1}^\top \\ \Xv^\top \end{bmatrix}
    =
    [\bm{1}, \Xv]
    \begin{bmatrix}
        \frac{1}{a - \bv^\top \Av^{-1} \bv} & - \frac{\bv^\top \Av^{-1}}{a - \bv^\top \Av^{-1} \bv} \\
        - \frac{\Av^{-1}\bv}{a - \bv^\top \Av^{-1} \bv}
        &
        \Av^{-1} + \frac{\Av^{-1}\bv\bv^\top\Av^{-1}}{a - \bv^\top \Av^{-1} \bv}
    \end{bmatrix}
    \begin{bmatrix} \bm{1}^\top \\ \Xv^\top \end{bmatrix} \nonumber \\
    =&
    \Xv\Av^{-1}\Xv^\top + \frac{1}{a - \bv^\top\Av^{-1}\bv}
    \big(\bm{1} - \Xv \Av^{-1} \bv\big)
    \big(\bm{1} - \Xv \Av^{-1} \bv\big)^\top.
\end{align*}

When we have a smooth loss and nonsmooth regularizer (separable or non-separable),
if we adopt some smoothing strategy and let the smoothing parameter go to 0,
it is straightforward to see that $\Xv \Av^{-1} \Xv^\top$ still converges to the
``hat'' matrix presented in the intercept-free models. Assume
$\Xv \Av^{-1} \Xv^\top \rightarrow \Hv_0$, we note that $\bv = \Xv \ddot{\ellv}$
with $\ddot{\ellv} = [\ddot{\ell}(\hat{\beta}_0 + \xv_1^\top\hat{\betav}; y_1),
\ldots, \ddot{\ell}(\hat{\beta}_0 + \xv_n^\top\hat{\betav}; y_n)]^\top$ and then have
\begin{equation}\label{eq:intercept-H-nonsmooth-reg}
    \Hv
    =
    \Hv_0 + \frac{1}{a - \ddot{\ellv}^\top\Hv_0\ddot{\ellv}}
    (\bm{1} - \Hv_0 \ddot{\ellv})(\bm{1} - \Hv_0 \ddot{\ellv})^\top.
\end{equation}

Again we can plug \eqref{eq:intercept-H-nonsmooth-reg} into
\eqref{eq:primal-alo-smooth} to obtain the ALO prediction.

\subsection{Models with Constraints} \label{ssec:intercept-constrained}
In this section, we address the intercept issue for models with
constraints. These are the models we described in details in Section \ref{ssec:constrainedopt}. Here we
assume no constraint on $\beta_0$. Hence, the constraint set on all the regression coefficients becomes $\calC_1 = \mathbb{R} \times \calC$, where $\calC$ is the set of constraints that we apply to all the regression coefficients except for the intercept. It is straightforward to see that the Jacobian $\Jv_1$ of
$\projv_{\calC_1}((\beta_0, \betav))$ takes the form
\begin{equation*}
    \Jv_1 = \begin{bmatrix}
        1 & \\
          & \Jv
    \end{bmatrix},
\end{equation*}
where $\Jv$ is the Jacobian of $\projv_{\calC}(\betav)$. Now we can simplify
the matrix $\Gv$ in \eqref{eq:constrained-smooth-alo}. Treating the
intercept as the coefficient for constant variable $1$, we have
\begin{align*}
    \Gv_1
    =&
    \bigg(
    \begin{bmatrix}
        1 & \\  & \Jv
    \end{bmatrix} 
    \begin{bmatrix}\bm{1}^\top \\ \Xv^\top\end{bmatrix}
    \diag[\{\ddot{\ell}(\hat{\beta}_0 + \xv_j^\top\hat{\betav}; y_j)\}_j]
    [\bm{1}, \Xv] +
    \begin{bmatrix}
        1 & \\  & \Iv
    \end{bmatrix} -
    \begin{bmatrix}
        1 & \\  & \Jv
    \end{bmatrix} +
    \begin{bmatrix}
        1 & \\  & \Jv
    \end{bmatrix}
    \begin{bmatrix}
        0 & \\  & \nabla^2 R(\estim{\betav})
    \end{bmatrix} \bigg)^{-1}
    \begin{bmatrix}
        1 & \\  & \Jv
    \end{bmatrix} \nonumber \\
    =&
    \begin{bmatrix}
        \sum_j \ddot{\ell}(\hat{\beta}_0 + \xv_j^\top\hat{\betav}; y_j)
        & 
        \sum_j \ddot{\ell}(\hat{\beta}_0 + \xv_j^\top\hat{\betav}; y_j) \xv_j^\top \\
        \Jv \sum_j \ddot{\ell}(\hat{\beta}_0 + \xv_j^\top\hat{\betav}; y_j) \xv_j
        &
        \Jv \Xv^\top \diag[\{\ddot{\ell}(\hat{\beta}_0 + \xv_j^\top\hat{\betav};
        y_j)\}_j] \Xv + \Iv - \Jv + \Jv \nabla^2 R(\estim{\betav})
    \end{bmatrix}^{-1}
    \begin{bmatrix}
        1 & \\  & \Jv
    \end{bmatrix}.
\end{align*}

Similar to the previous arguments, we simplify the above formula using Woodbury
matrix inversion formula.
Again let $a = \sum_j \ddot{\ell}(\hat{\beta}_0 + \xv_j^\top\hat{\betav}; y_j)$,
$\bv = \sum_j \ddot{\ell}(\hat{\beta}_0 + \xv_j^\top\hat{\betav}; y_j) \xv_j$
and
$\Av = \Xv^\top \diag[\{\ddot{\ell}(\hat{\beta}_0 + \xv_j^\top\hat{\betav};
y_j)\}_j] \Xv + \nabla^2 R(\estim{\betav})$. In addition, set $\Gv = (\Jv\Av +
\Iv - \Jv)^{-1}\Jv$, we can rewrite $\Gv_1$ as
\begin{align} \label{eq:intercept-G-constraints}
    \Gv_1
    =&
    \begin{bmatrix}
        a & \bv^\top \\
        \Jv \bv & \Jv \Av + \Iv - \Jv
    \end{bmatrix}^{-1}
    \begin{bmatrix} 1 & \\ & \Jv \end{bmatrix} \nonumber \\
    =&
    \begin{bmatrix}
        \frac{1}{a - \bv^\top (\Jv\Av + \Iv - \Jv)^{-1}\Jv\bv} & - \frac{\bv^\top (\Jv\Av + \Iv - \Jv)^{-1}}{a - \bv^\top (\Jv\Av + \Iv - \Jv)^{-1}\Jv\bv} \\
        - \frac{(\Jv\Av + \Iv - \Jv)^{-1}\Jv\bv}{a - \bv^\top (\Jv\Av + \Iv - \Jv)^{-1}\Jv\bv}
        &
        (\Jv\Av + \Iv - \Jv)^{-1} + \frac{(\Jv\Av + \Iv -
        \Jv)^{-1}\Jv\bv\bv^\top(\Jv\Av + \Iv - \Jv)^{-1}}{a - \bv^\top (\Jv\Av + \Iv - \Jv)^{-1}\Jv\bv}
    \end{bmatrix}
    \begin{bmatrix}
        1 & \\  & \Jv
    \end{bmatrix} \nonumber \\
    =&
    \begin{bmatrix}
        0 & \\
          & \Gv
    \end{bmatrix}
    +
    \frac{1}{a - \bv^\top \Gv \bv}
    \begin{bmatrix}
        1 & - \bv^\top \Gv \\
        - \Gv\bv
        &
        \Gv\bv\bv^\top\Gv
    \end{bmatrix}.
\end{align}

We can plug \eqref{eq:intercept-G-constraints} into
\eqref{eq:constrained-smooth-alo} and change $\Xv$ to $[1, \Xv]$, $\betav$ to
$\begin{bmatrix} \beta_0 \\ \betav \end{bmatrix}$ to get the ALO formula.
Specifically the following two quantities will be used.
\begin{align*}
    \Gv_1 \begin{bmatrix} \bm{1}^\top \\ \Xv^\top \end{bmatrix}
    =&
    \begin{bmatrix}
        0 \\ \Gv \Xv^\top
    \end{bmatrix}
    +
    \frac{1}{a - \bv^\top \Gv \bv}
    \begin{bmatrix}
        \bm{1}^\top - \bv^\top \Gv\Xv^\top \\
        - \Gv\bv \bm{1}^\top + \Gv\bv\bv^\top\Gv\Xv^\top
    \end{bmatrix}, \nonumber \\
    [\bm{1}, \Xv] \Gv_1 \begin{bmatrix} \bm{1}^\top \\ \Xv^\top \end{bmatrix}
    =&
    \Xv \Gv \Xv^\top
    +
    \frac{1}{a - \bv^\top \Gv \bv} \big(\bm{1} - \Xv\Gv\bv\big) \big(\bm{1} -
    \Xv\Gv\bv\big)^\top.
\end{align*}

\section{Applications}\label{sec:applications}
In this section, we apply the three approaches introduced in Section
\ref{sec:approximatedual}, \ref{sec:primal-smoothing},
\ref{sec:proxformulation} to eight specific models and obtain their ALO
formula.

\subsection{Generalized LASSO}

The generalized LASSO \cite{tibshirani2011genlasso} is a generalization of the LASSO
problem which captures many applications, such as the fused LASSO \cite{tibshirani2005fused},
$\ell_1$ trend filtering \cite{kim2009trend} and wavelet smoothing in a
unified framework. The generalized LASSO problem corresponds to the following
penalized regression problem:
\begin{equation}\label{eq:genlasso:statement}
    \min_{\betav} \frac{1}{2}\sum_{j = 1}^n (y_j - \xv_j^\top \betav)^2 + \lambda \norm{\Dv \betav}_1,
\end{equation}
where the regularizer is parameterized by a fixed matrix $\Dv \in \RR^{m \times p}$
which captures the desired structure in the data. We note that the
regularizer is a semi-norm, and hence we can formulate the dual problem as a
projection. In fact, a dual formulation of \eqref{eq:genlasso:statement} can
be obtained as (see Appendix \ref{append:sec:generalized-lasso-dual}):
\begin{equation} \label{eq:dualformulagenlasso}
    \min_{\dualv, \uv} \frac{1}{2} \norm{\dualv - \yv}_2^2,
    \quad
    \text{subject to: } \norm{\uv}_\infty \leq \lambda \text{ and }  \Xv^\top \dualv = \Dv^\top \uv. 
\end{equation}

The dual optimal solution satisfies $\estim{\dualv} = \projv_{\Delta_X}(\yv)$,
where $\Delta_X$ is the polytope given by
\begin{equation*}
    \Delta_X = \{ \dualv \in \RR^n : \exists \uv, \norm{u}_\infty \leq \lambda
    \text{ and } \Xv^\top \dualv = \Dv^\top u \}.
\end{equation*}

The projection onto the polytope $C = \{ \Dv^\top \uv: \norm{\uv}_\infty \leq
\lambda \}$ is given in \cite{tibshirani2011genlasso} as locally being the projection
onto the affine space orthogonal to the nullspace of $\Dv_{\cdot, -E}$, where
$E = \{i : \abs{\estim{u}_i} = \lambda \}$ and $-E = \{ 1, \dotsc, p \} \setminus E$.
Since $\Delta_X = [\Xv^\top]^{-1} C$ is the inverse image of $C$ under the
linear map given by $\Xv^\top$, the projection onto $\Delta_X$ is given locally
by the projection onto the affine space normal to the space spanned by the
columns of $[\Xv^\top]^+ \mathrm{null} \, \Dv_{\cdot, -E}$, provided $\Xv$ has
full column rank. Here, $[\Xv^\top]^+$ denotes the Moore-Penrose pseudoinverse
of $\Xv^\top$. Finally, to obtain a spanning set of this space, we may
consider $\Av = \Xv \Bv$, where $\Bv$ is a set of vectors spanning the nullspace
of $\Dv_{\cdot,-E}$. This allows us to compute $\Hv = \Av \Av^+$, the
projection onto the normal space required to compute the ALO.

In summary, the $\alo$ formula can be obtained in the following way. We solve
the primal ( eq. \eqref{eq:genlasso:statement}) and dual (eq.
\eqref{eq:dualformulagenlasso}) problems to obtain $\estim{\betav}$ and
$\estim{\uv}$ respectively. Then we calculate $E = \{i : \abs{\estim{u}_i} =
\lambda \}$ and construct the matrix $\Bv$ whose columns span the null space of
$ \Dv_{\cdot, -E}$. Finally, we can compute $\Hv = \Av \Av^+$ with $\Av = \Xv
\Bv$ and obtain that $\alo_\lambda = \frac{1}{n}\sum_{i = 1}^n d(y_i,
\surrog{y}_i)$, where $\surrog{y}_i = \xv_i^\top \estim{\betav} +
\frac{\Hv_{ii}}{1 - \Hv_{ii}}(\xv_i^\top \estim{\betav} - y_i)$.

\subsection{Nuclear Norm}
Consider the following problem
\begin{equation}\label{eq:nuclear-norm-main}
    \estim{\Bv}:
    =
    \argmin_{\Bv} \frac{1}{2}\sum_{j=1}^n \big(y_j - \langle \Xv_j,
    \Bv\rangle \big)^2 + \lambda \|\Bv\|_*,
\end{equation}
with $\Bv, \Xv_j \in \mathbb{R}^{p_1 \times p_2}$. $\langle \Xv, \Bv\rangle =
\mathrm{trace}(\Xv^\top\Bv)$ denotes the inner product. We use $\|\cdot\|_*$
for nuclear norm, which is defined as the sum of the singular values of a
matrix. This problem is used in many applications, such as the matrix sensing and matrix completion. 

The nuclear norm is a unitarily invariant function of the matrix
\cite{lewis1995convex}. Such functions are only indirectly related to the
components of the matrix, making the calculation of $\alo$
difficult even when they are smooth, and exacerbating the difficulties when they
are non-smooth, such as in the case of the nuclear norm. We are nonetheless able to leverage the specific structure of such functions
to obtain the following theorem. Let $R$ be a smooth unitarily invariant matrix function, with:
\begin{equation*}
    R(\Bv) = \sum_{j = 1}^{\min(p_1, p_2)} r(\sigma_j),
\end{equation*}
where $\sigma_j$ denotes the $j$\tsup{th} singular value of $\Bv$. Consider the
following matrix penalized regression problem:
\begin{equation*}
    \estim{\Bv} = \argmin_{\Bv} \sum_{j=1}^n \ell\big(\langle \Xv_j,
    \Bv\rangle; y_j \big) + \lambda R(\Bv).
\end{equation*}

Without loss of generality, below we assume $p_1 \geq p_2$. Let
$\estim{\Bv}=\estim{\Uv}\diag[\estim{\sigmav}]\estim{\Vv}^\top$ be the
singular value decomposition (SVD) of the full data estimator $\estim{\Bv}$,
where $\estim{\Uv} \in \mathbb{R}^{p_1 \times p_1}$, $\estim{\Vv} \in
\mathbb{R}^{p_2 \times p_2}$. Let $\estim{\uv}_k$, $\estim{\vv}_l$ be the
$k$\tsup{th} and $l$\tsup{th} column of $\estim{\Uv}$ and $\estim{\Vv}$
respectively. $\diag[\estim{\sigmav}]$ in this section is a $p_1 \times
p_2$ matrix with $\estim{\sigma}_j$ on the diagonal of its upper square
sub-matrix and 0 elsewhere. If we assume all the $\estim{\sigma}_j$'s are
nonzero, then we have the following ALO formula:
\begin{equation*}\label{eq:matrix-smooth-alo}
    \langle \Xv_i, \leavei{\surrog{\Bv}} \rangle
    =
    \langle \Xv_i, \estim{\Bv} \rangle
    +
    \frac{H_{ii}\dot{\ell}(\langle \Xv_i, \estim{\Bv} \rangle;y_i)}{1 -
    H_{ii}\ddot{\ell}(\langle \Xv_i, \estim{\Bv} \rangle;y_i)},
\end{equation*}
where
\begin{equation*}
    \Hv = \cb{X}[\cb{X}^\top\diag[\{\ddot{\ell}(\langle \Xv_j, \Bv\rangle;
    y_j)\}_j] \cb{X} + \lambda \cb{G}]^{-1}\cb{X}^\top.
\end{equation*}

Here $\cb{X}$ is a $n \times p_1 p_2$ matrix and $\cb{G}$ is a symmetric square $p_1 p_2
\times p_1 p_2$ matrix given by:
\begin{equation}\label{eq:matrix-smooth-alo-definition}
    \begin{aligned}
        \cb{X}_{j,kl} &= \estim{\uv}_k^\top \Xv_j \estim{\vv}_l, \\
        \cb{G}_{kl, st} &=
        \begin{cases}
            \ddot{r}(\estim{\sigma}_t) & s = t = k = l, \\
            \frac{\estim{\sigma}_s \dot{r}(\estim{\sigma}_s) - \estim{\sigma}_t\dot{r}(\estim{\sigma}_t)}
            {\estim{\sigma}_s^2 - \estim{\sigma}_t^2} & s \neq t, s \leq p_2, (k, l) = (s, t), \\
            - \frac{\estim{\sigma}_s \dot{r}(\estim{\sigma}_t) - \estim{\sigma}_t\dot{r}(\estim{\sigma}_s)}
            {\estim{\sigma}_s^2 - \estim{\sigma}_t^2} & s \neq t, s \leq p_2, (k, l) = (t, s), \\
            \frac{\dot{r}(\estim{\sigma}_t)}{\estim{\sigma}_t} & s \neq t, s > p_2, (k,l)=(s,t), \\
            0 & \text{otherwise.}
        \end{cases}
    \end{aligned}
\end{equation}

Note that the rows of $\cb{X}$ and the indices of $\cb{G}$ are vectorized
in a consistent way. The proof can be found in Section \ref{append:ssec:smooth-unitary}.
A nice property of this result is that the effect on singular values
decouples from the original matrix, enabling us to apply the smoothing
strategy in Section \ref{ssec:nonsmooth-regularizer} to function $r(\sigma)$
when it is nonsmooth. This leads to the following theorem for nuclear norm. For
more details on the derivation, please refer to Section
\ref{append:ssec:nuclear-proof}.
\begin{theorem}\label{thm:matrix-nonsmooth-alo}
    Consider the nuclear-norm penalized matrix regression problem
    \eqref{eq:nuclear-norm-main}, and let
    $\estim{\Bv}=\estim{\Uv}\diag[\estim{\sigmav}]\estim{\Vv}^\top$ be the
    SVD of the full data estimator $\estim{\Bv}$,
    with $\estim{\Uv} \in \mathbb{R}^{p_1 \times p_1}$, $\estim{\Vv} \in
    \mathbb{R}^{p_2 \times p_2}$. Let $m=\rank(\estim{\Bv})$ be the number of
    nonzero $\estim{\sigma}_j$'s for $\estim{\Bv}$. Let
    $\leavei{\surrog{\Bv}}_h$ denote the approximate of $\estimi{\Bv}$
    obtained from the smoothed problem. Then, as $h \rightarrow 0$
    \begin{equation*}\label{eq:matrix-nonsmooth-alo}
        \langle \Xv_i, \leavei{\surrog{\Bv}}_h \rangle
        \rightarrow
        \langle \Xv_i, \estim{\Bv} \rangle
        +
        \frac{H_{ii}}{1 - H_{ii}}(\langle \Xv_i, \estim{\Bv} \rangle - y_i),
    \end{equation*}
    where
    \begin{equation*}
        \Hv = \cb{X}_{\cdot,E}[\cb{X}_{\cdot,E}^\top\cb{X}_{\cdot,E} +
        \lambda \cb{G}]^{-1}\cb{X}_{\cdot,E}^\top,
    \end{equation*}
    with $\cb{X}$ as defined in \eqref{eq:matrix-smooth-alo-definition}
    and $\cb{G} \in \mathbb{R}^{(mp_1 + mp_2 - m^2) \times (mp_1 + mp_2 - m^2)}$ given by:
    \begin{equation}\label{eq:nuclear-hessian-G}
        \cb{G}_{kl, st}
        =
        \begin{cases}
            0 & s = t = k = l \leq m, \\
            \frac{1}{\estim{\sigma}_s + \estim{\sigma}_t} & 1 \leq s \neq t \leq m,(k,l)=(s,t), \\
            \frac{1}{\estim{\sigma}_s} & 1 \leq s \leq m < t \leq p_2, (k, l) = (s, t), \\
            \frac{1}{\estim{\sigma}_t} & 1 \leq t \leq m < s \leq p_1, (k, l) = (s, t), \\
            -\frac{1}{\estim{\sigma}_s + \estim{\sigma}_t} & 1 \leq s \neq t \leq m, (k,l)=(t,s), \\
            -\frac{g_r[\estim{\sigma}_t]}{\estim{\sigma}_s} & 1 \leq s \leq m < t \leq p_2,
            (k, l) = (t, s), \\
            -\frac{g_r[\estim{\sigma}_s]}{\estim{\sigma}_t} & 1 \leq t \leq m < s \leq p_2,
            (k, l) = (t, s), \\
            0 & \text{otherwise.}
        \end{cases}
    \end{equation}
    where for $t > m$, $\estim{\sigma}_t=0$ and
    $g_r[\estim{\sigma}_t]$ is the corresponding subgradient at this singular
    value, which can be obtained through the SVD of
    $\frac{1}{\lambda}\sum_{j=1}^n (y_j - \langle \Xv_j,
    \estim{\Bv}\rangle)\Xv_j$.
    The set $E$ is then defined as:
    \begin{equation*}
        E = \{(k, l): k \leq m \text{ or } l \leq m\}.
    \end{equation*}
    Note that the indices of $\cb{G}$ and the index set $E$ are consistent.
\end{theorem}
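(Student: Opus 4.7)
The strategy is to obtain the formula by applying the smoothing scheme from Section \ref{ssec:nonsmooth-regularizer} to the non-differentiable scalar function $r(\sigma) = |\sigma|$ that underlies the nuclear norm, invoking the smooth unitarily-invariant ALO formula proved earlier in this section, and then tracking the limit $h \to 0$ of the resulting Gram-type matrix and of its inverse.

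First I would set $r_h(\sigma) := \tfrac{1}{h}\int r(u)\phi((\sigma - u)/h)\,du$ and let $R_h(\Bv) = \sum_j r_h(\sigma_j(\Bv))$, which is a smooth convex unitarily-invariant regularizer. Applying the smooth matrix-regression ALO result of this section with $\ddot{\ell} \equiv 1$ and $\dot{\ell}(\hat{y}_j;y_j) = \hat{y}_j - y_j$, the smoothed ALO takes the form $\langle \Xv_i, \estim{\Bv}_h\rangle + H^h_{ii}(\langle \Xv_i,\estim{\Bv}_h\rangle - y_i)/(1 - H^h_{ii})$, with $\Hv^h = \cb{X}_h[\cb{X}_h^\top \cb{X}_h + \lambda \cb{G}_h]^{-1} \cb{X}_h^\top$, where $\cb{X}_h$ and $\cb{G}_h$ are constructed as in \eqref{eq:matrix-smooth-alo-definition} using the SVD of $\estim{\Bv}_h$ and with $\dot r, \ddot r$ replaced by $\dot r_h, \ddot r_h$.

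Next I would take the limit $h\to 0$ entrywise. For $1\le s,t\le m$ the singular values are strictly positive, so $\dot r_h(\estim{\sigma}_t)\to 1$, $\ddot r_h(\estim{\sigma}_t)\to 0$, and the divided differences $(\estim{\sigma}_s\dot r_h(\estim{\sigma}_s)-\estim{\sigma}_t\dot r_h(\estim{\sigma}_t))/(\estim{\sigma}_s^2-\estim{\sigma}_t^2)$ and $-(\estim{\sigma}_s\dot r_h(\estim{\sigma}_t)-\estim{\sigma}_t\dot r_h(\estim{\sigma}_s))/(\estim{\sigma}_s^2-\estim{\sigma}_t^2)$ reduce algebraically to $\pm 1/(\estim{\sigma}_s+\estim{\sigma}_t)$, matching \eqref{eq:nuclear-hessian-G}. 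For pairs where exactly one index exceeds $m$, the same simplification yields $1/\estim{\sigma}_s$ (or $1/\estim{\sigma}_t$) for the direct coupling, while the transposed coupling involves $\dot r_h$ evaluated at zero; this limit is forced to be the subgradient $g_r[\estim{\sigma}_t]$ by the KKT condition $\tfrac{1}{\lambda}\sum_j (y_j-\langle \Xv_j,\estim{\Bv}\rangle)\Xv_j \in \partial \|\estim{\Bv}\|_*$, and reading this subgradient in the basis $(\estim{\Uv},\estim{\Vv})$ gives the $g_r[\estim{\sigma}_t]$ values as stated. Continuity of $\estim{\Bv}_h$ in $h$ and standard perturbation results for SVD handle the convergence $\cb{X}_h\to\cb{X}$.

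The main obstacle is the entries of $\cb{G}_h$ on the block with $s,t>m$: the diagonal entries $s=t=k=l>m$ equal $\ddot{r}_h(0)\to+\infty$, and one must argue that this block contributes nothing to $\Hv^h$ in the limit. These indices correspond to perturbations of $\estim{\Bv}$ strictly within its zero-singular-value space, exactly the inactive-set phenomenon from Section \ref{ssec:nonsmooth-regularizer}. The clean way to handle this is to partition the index set into $E$ and its complement $E^c$, and apply the Schur-complement identity to $[\cb{X}_h^\top \cb{X}_h + \lambda \cb{G}_h]^{-1}$. Because the $E^c\times E^c$ block of $\lambda\cb{G}_h$ is diagonally dominant of order $\ddot r_h(0)$ (with controlled off-diagonal corrections of order $1/\estim{\sigma}$ computed above), its inverse vanishes like $1/\ddot r_h(0)$, and the same rate governs the off-diagonal $E\times E^c$ block of the inverse. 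In the limit the two blocks decouple, leaving $\Hv^h \to \cb{X}_{\cdot,E}[\cb{X}_{\cdot,E}^\top\cb{X}_{\cdot,E} + \lambda \cb{G}]^{-1}\cb{X}_{\cdot,E}^\top$ with $\cb{G}$ given by \eqref{eq:nuclear-hessian-G}, completing the derivation. A remaining technical point, which I expect to be routine, is to verify uniform convergence $r_h\to r$ and $\dot r_h\to \dot r$ away from the origin and argue $\estim{\Bv}_h\to \estim{\Bv}$ using coercivity of the nuclear norm and strict convexity of the data-fit term.
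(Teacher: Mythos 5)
Your overall route is the same as the paper's: smooth the singular-value function, invoke the smooth unitarily-invariant ALO formula with its Hessian $\cb{G}_h$, pass to the limit entrywise, and then kill the block of indices outside $E$ so that Lemma \ref{lemma:woodbury-block} yields the restricted $\Hv$. (The paper uses the explicit smoothing $r_\epsilon(x)=\sqrt{x^2+\epsilon^2}$ rather than kernel smoothing, but it notes itself that this is only for convenience.) Your limits on the $E\times E$ block, including the identification of the off-diagonal limit with the subgradient $g_r[\estim{\sigma}_t]$ via the KKT condition, match the paper's computation.

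The genuine gap is in your treatment of the $E^c\times E^c$ block. You argue it is ``diagonally dominant of order $\ddot r_h(0)$ with controlled off-diagonal corrections of order $1/\estim{\sigma}$,'' but for a pair $(s,t)$ with \emph{both} $s,t>m$ the relevant singular values are those of the smoothed solution, $\estim{\sigma}_{h,s},\estim{\sigma}_{h,t}=O(h)$, so the coupling entries $\pm\bigl(\estim{\sigma}_s\dot r_h(\estim{\sigma}_t)-\estim{\sigma}_t\dot r_h(\estim{\sigma}_s)\bigr)/(\estim{\sigma}_s^2-\estim{\sigma}_t^2)$ diverge at the same $1/h$ rate as the diagonal entries $\ddot r_h(0)$; they are not lower-order corrections, and diagonal dominance does not by itself show the inverse vanishes. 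The paper handles exactly this point by inverting each $2\times 2$ coupling sub-block in closed form and showing, via the substitution $u=\estim{\sigma}^2/(\estim{\sigma}^2+\epsilon^2)$ and a mean-value/Taylor argument, that every entry of the inverse is $O(\epsilon)$ or $O(\estim{\sigma}_{\epsilon,s}\estim{\sigma}_{\epsilon,t}/\epsilon)$ and hence tends to zero. That computation in turn relies on $\estim{\sigma}_{\epsilon,j}/\epsilon$ remaining bounded for the zero singular values, which follows only from the strict subgradient assumption $g_r[\estim{\sigma}_j]<1$ (established through the gradient convergence $\nabla R_\epsilon(\estim{\Bv}_\epsilon)\to g_{\|\cdot\|_*}(\estim{\Bv})$ and the Weyl/Mirsky singular-value perturbation bounds). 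Your sketch omits both this assumption and the explicit block inversion, and as written the argument for discarding the $E^c$ block would not go through.
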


\subsection{Linear SVM}\label{ssec:linearsvm}
The linear SVM optimization can be written as
\begin{equation*}\label{eq:svm-main}
    \argmin_{\betav}\sum_{j = 1}^n \big(1 - y_j\xv_j^\top\betav\big)_+ +
    \frac{\lambda}{2}\|\betav\|_2^2,
\end{equation*}
with $y_j\in\{-1, 1\}$ and $(\cdot)_+=\max\{\cdot, 0\}$. Note that this is a special instance of the problem
we studied in Section \ref{ssec:nonsmooth-loss}. Here,
$\ell(u; y_j) = (1- y_j u)_+$ has only one zeroth-order singularity at $y_j$.
Let $V=\{j: \xv_j^\top\estim{\betav} = y_j\}$ and $S=[1,\ldots, n]\backslash
V$. Using Theorem \ref{thm:nonsmooth-loss-approx} and simplifying the expressions,
we obtain the following ALO formula for SVM:
\begin{equation*}
    \xv_i^\top\leavei{\surrog{\betav}} = \xv_i^\top\estim{\betav} +  a_i g_{\ell,i},
\end{equation*}
where 
\begin{equation*}\label{eq:svm-H}
    a_i =
    \left\{
        \begin{array}{ll}
            \frac{1}{\lambda}\xv_i^\top (\Iv_p -
            \Xv_{V,\cdot}^\top(\Xv_{V,\cdot}\Xv_{V,\cdot}^\top)^{-1}\Xv_{V,\cdot})\xv_i
            & i \in S, \\
            \big(\lambda[(\Xv_{V,\cdot}\Xv_{V,\cdot}^\top)^{-1}]_{ii}\big)^{-1} & i \in V, \\
        \end{array}
    \right.
\end{equation*}
and for $i \in S$, $g_{\ell,i} = -y_i$ if $y_i\xv_i^\top\estim{\betav} < 1$,
$g_{\ell,i} = 0$ if $y_i\xv_i^\top\estim{\betav} > 1$, and for $i \in V$
\begin{equation*}\label{eq:svm-subg}
    \gv_{\ell,V} =
    (\Xv_{V,\cdot}\Xv_{V,\cdot}^\top)^{-1}\Xv_{V,\cdot}\Bigg[\lambda\estim{\betav} +
    \sum_{j: y_j\xv_j^\top\estim{\betav} < 1}y_j \xv_j\Bigg].
\end{equation*}

\subsection{Polyhedron Constraints} \label{ssec:application-polyhedron}
Consider the constrained optimization problem
\eqref{eq:constraintedoptimization} in which the constraint set $\calC$ is a polyhedron. For a point $\betav \notin \calC$, let $\Gammav$ be the matrix whose columns form an orthonormal basis for the face of $\calC$ that includes $\projv_{\calC}(\betav)$. Let $\Gammav_1$ denote the orthogonal complement of
$\Gammav$. Assume the columns of $\Gammav_1$ are also orthonormal. Then for any point $\vv \in \mathbb{R}^p$, there is a unique decomposition $\vv =
\Gammav \alphav + \Gammav_1 \alphav_1$. It is not hard to see that for small
$t$,
\begin{equation*}
    \projv_{\calC}(\betav + t \vv) = \projv_{\calC}(\betav) +
    t\Gammav\alphav + o(t).
\end{equation*}
Noting that $\alphav = \Gammav^\top\vv$, we obtain the following expression for
the Jacobian of $\projv_{\calC}$:
\begin{equation*}
    \Jv=\Gammav\Gammav^\top
\end{equation*}
Define $\Vv = \Xv^\top \diag[\ddot{\ell}(\xv_j^\top\estim{\betav}; y_j)] \Xv
+ \nabla^2 R(\estim{\betav})$. Now we can simplify the forms of $\Gv$ in
\eqref{eq:constrained-smooth-alo} as
\begin{align*}
    \Gv
    =&
    (\Iv - \Gammav\Gammav^\top + \Gammav\Gammav^\top \Vv)^{-1} \Gammav\Gammav^\top
    =
    \big( \Iv - \Gammav\Gammav^\top (\Iv - \Vv) \big)^{-1} \Gammav\Gammav^\top \nonumber \\
    =&
    \big[\Iv + \Gammav \big(\Iv - \Gammav^\top(\Iv - \Vv)\Gammav\big)^{-1}
    \Gammav^\top (\Iv - \Vv) \big] \Gammav \Gammav^\top \nonumber \\
    =&
    \big[\Iv + \Gammav \big( \Gammav^\top \Vv \Gammav \big)^{-1}
    \Gammav^\top (\Iv - \Vv) \big] \Gammav \Gammav^\top \nonumber \\
    =&
    \Gammav \Gammav^\top + \Gammav \big( \Gammav^\top \Vv \Gammav \big)^{-1}
    \Gammav^\top - \Gammav \big( \Gammav^\top \Vv \Gammav \big)^{-1}
    \Gammav^\top \Vv \Gammav \Gammav^\top \nonumber \\
    =&
    \Gammav \big( \Gammav^\top \Vv \Gammav \big)^{-1} \Gammav^\top
\end{align*}

That is to say, for this class of constraints, we have the $\alo$ formula
\eqref{eq:constrained-smooth-alo} holds with $\Gv = \Gammav \big[ \Gammav^\top
\big( \Xv^\top \diag[\ddot{\ell}(\xv_j^\top\estim{\betav}; y_j)] \Xv +
\nabla^2 R(\estim{\betav}) \big) \Gammav \big]^{-1} \Gammav^\top$.
Notice that the choice of $\Gammav$ does not affect $\Gv$ since different
orthonormal bases differ from each other by an orthogonal matrix.

\subsection{Positive Semidefinite Cone Constraints}
\label{sssec:constraint-psd-cone}

In this section, we discuss the matrix optimization problem under the
constraints of positive semidefinite cone. Such problems exist in for instance
covariance matrix estimation. We denote the set of symmetric matrices and the set of positive
semidefinite matrices in $\mathbb{R}^{p \times p}$ by $\calS^p$ and
$\calS_+^p$ respectively. We then consider the following formulation:
\begin{equation*}
    \min_{\Bv}\; \sum_{j=1}^n \ell\big(\langle \Xv_j, \Bv \rangle; y_j\big) +
    R(\Bv), \quad \text{subject to }\; \Bv \in \calS_+^p,
\end{equation*}
where $\Xv_j \in \mathbb{R}^{p \times p}$.
For $\Bv \in \mathbb{R}^{p \times p}$, consider the eigen-decomposition of
$\frac{1}{2}(\Bv + \Bv^\top) = \Qv \diag[\{d_j\}_j] \Qv^\top$,
then the projection of $\Bv$ onto $\calS_+^p$ under Frobenious norm is
\begin{equation*}
    \projv_{\calS_+^p}(\Bv) = \Qv \diag[\{(d_j)_+\}_j] \Qv^\top.
\end{equation*}
See for instance \cite{hiriart2012fresh} for the derivation.

Following the framework described in Section \ref{ssec:constrainedopt}, we need to characterize the
Jacobian of $\projv_{\calS_+^p}(\Bv)$. The nonexpansiveness of the projection
operator implies that it is differentiable almost everywhere. Let
$\vecop(\cdot)$ be a vectorization operator that transforms a  matrix in
$\mathbb{R}^{p \times p}$ into a vector in $\mathbb{R}^{p^2}$. Let $\lambda_1,
\ldots, \lambda_p$ be the eigenvalues and $\qv_1, \ldots, \qv_p$ be the
eigenvectos of matrix $\projv_{\calS^p} (\Bv) = \frac{1}{2}(\Bv + \Bv^\top)$.
Construct a matrix $\bcalQ \in
\mathbb{R}^{p^2 \times \frac{1}{2}p(p+1)}$ in the following way: the first $p$ columns of
$\bcalQ$ are given by $\vecop\big(\qv_i \qv_i^\top\big)$ for $i=1, \ldots, p$.
The next $p(p-1)/2$ columns take the form $\vecop \big(\frac{1}{\sqrt{2}} \qv_i
\qv_j^\top+ \frac{1}{\sqrt 2} \qv_j \qv_i^\top \big)$ for $1 \leq i < j \leq
p$. The Jacobian of the projection is given by
\begin{equation} \label{eq:proj-psd-jacob}
    \Jv = \Jv_1 \Jv_2,
\end{equation}
where 
\begin{equation*}
    \Jv_1 = \bcalQ
    \begin{bmatrix}
        \Av_1 & 0 \\ 0 & \Av_2
    \end{bmatrix}
    \bcalQ^\top,
    \quad
    \Jv_2 =
    \begin{bmatrix}
        \Iv_p & 0 \\
        0 & \Av_4
    \end{bmatrix}.
\end{equation*}

Here $\Av_1 \in \calS^p$ is a diagonal matrix with $A_{1, ii} = 1$ if
$\lambda_i > 0$ and $0$ if $\lambda_i < 0$. $\Av_2 \in \calS^{\frac{1}{2}
p(p-1)}$ is also diagonal specified by the following rules: if $A_{2,ii}$ is
multiplied by the column $\vecop(\qv_t \qv_s^\top)$ in $\bcalQ$, then $A_{2,ii}
= \frac{(\lambda_t)_+ - (\lambda_s)_+}{\lambda_t - \lambda_s}$. $\Jv_2$ is the
Jacobian of $\projv_{\calS^p}(\Bv)$. It is not hard to see that
$\Av_4 \in \mathbb{R}^{p(p-1) \times p(p-1)}$ with $A_{4, st, st} = A_{4,
st, ts} = \frac{1}{2}$ for $1 \leq s \neq t \leq p$. This result is proved in Section
\ref{ssec:JacobianPSDcalc} of Appendix. By plugging this Jacobian in \eqref{eq:constrained-smooth-alo} we obtain the $\alo$ formula.

\subsection{$\ell_\infty$ minimization} \label{ssec:application-l_inf}
In this section, we consider the $\ell_\infty$ penalized regression problem, given by:
\begin{equation*}
    \hat{\betav} = \argmin_{\betav} \frac{1}{2} \sum_{j=1}^n (y_i -
    \xv_i^\top\betav)^2 + \lambda \norm{\betav}_\infty,
\end{equation*}
for some $\lambda > 0$. This penalty is of interest for recovering integer (or binary) solutions
of linear equations \cite{mangasarian2011}. We will use the dual method to
obtain an approximation. The dual norm of $\norm{\cdot}_\infty$ is given by
$\norm{\cdot}_1$, thus we have that the dual optimizer $\estim{\dualv} = \projv_{\Delta_X}(\yv)$,
where the polytope $\Delta_X$ is given by:
\begin{equation*}
    \Delta_X = \{ \dualv: \norm{\Xv^\top \dualv}_1 \leq \lambda \}.
\end{equation*}

To determine the face of $\Delta_X$ containing $\estim{\dualv}$, let $E =  \{
i: \Xv_i^\top \estim{\dualv} = 0 \}$. Additionally, for $i \notin E$, let $s_i
\in \{1, -1\}$ be the sign of $\Xv_i^\top\estim{\dualv}$. The face containing
$\estim{\dualv}$ is then specified by the set of affine equations:
\begin{equation*}
    \Xv_{\cdot, E}^\top\dualv = 0,
    \quad
    \sum_{i \notin E} s_i\Xv_i^\top\dualv = \lambda.
\end{equation*}
This indicates the following matrix $\Wv$ whose columns span the normal
space of the face:
\begin{equation*}
    \Wv =
    \begin{bmatrix}
        \Xv_{\cdot, E}, \sum_{j \notin E} s_j \Xv_j
    \end{bmatrix} \in \mathbb{R}^{n \times (|E|+1)}.
\end{equation*}

Hence, the Jacobian of the projection operator is $\Iv - \Wv(\Wv^\top
\Wv)^{-1}\Wv$. Let $\Hv = \Wv(\Wv^\top \Wv)^{-1}\Wv$. According to
\eqref{eq:aloformuladualgenralreg} we obtain:
\begin{equation*}
    \alo_\lambda = \frac{1}{n}\sum_{i = 1}^n d(y_i, \surrog{y}_i),  
\end{equation*}
where $\surrog{y}_i = y_i + \frac{1}{1 - H_{ii}}(\xv_i^\top\estim{\betav} -
y_i)$.

\subsection{Group Lasso} \label{ssec:application-group-lasso}
The group Lasso \cite{yuan2006group} is a method that performs model selection
and estimation in the presence of grouped variables. More formally, let $I_1,
\ldots, I_k$ be a partition of $\{1, \ldots, p\}$, representing the groups
of variables. The group lasso penalty is then given by:
\begin{equation}\label{eq:groupLASSOreg}
    R(\betav) = \sum_{j = 1}^k \lambda_j \|\betav_{I_j}\|_2,
\end{equation}
It is straightforward to confirm that $\proxv_{\|\cdot\|_2}(\uv; \tau)
= \Big(1 - \frac{\tau}{\|\uv\|_2}\Big)_+ \uv$. Now consider the following
problem:
\begin{equation*}
    \hat{\betav} = \argmin_{\betav}\;\sum_{j=1}^n \ell(\xv_j^\top\betav; y_j) + R(\betav),
\end{equation*}
where $\ell$ is twice differentiable and $R$ is given by
\eqref{eq:groupLASSOreg}. We can then use the proximal formulation in Section
\ref{sec:proxformulation} to obtain an $\alo$ formula. It is straightforward to
see that if
\begin{equation*}
    \bigg\| \estim{\betav}_{I_l} - \sum_{j=1}^n \dot{\ell}
    (\xv_j^\top\estim{\betav}; y_j)\xv_{j, I_l} \bigg\|_2 \neq \lambda_l,
    \quad
    \forall l=1, \ldots, k,
\end{equation*}
then $\proxv_R$ is differentiable at $\estim{\betav} - \sum_{j=1}^n \dot{\ell}
(\xv_j^\top\estim{\betav}; y_j)\xv_j$. Hence, the $\alo$ estimate is given by
 \begin{equation*}
    \xv_i^\top \surrogi{\betav}
    =
    \xv_i^\top\estim{\betav} + \frac{H_{ii}}{1 -
    H_{ii}\ddot{\ell}(\xv_i^\top \estim{\betav}; y_i)}
    \dot{\ell}(\xv_i^\top \estim{\betav}; y_i),
\end{equation*}
with 
\begin{equation*}
    \Hv = \Xv\big(\Jv \Xv^\top \diag[\{\ddot{\ell}(\xv_j^\top\estim{\betav};
    y_j)\}_j] \Xv + \Iv - \Jv \big)^{-1} \Jv \Xv^\top.
\end{equation*}
The Jacobian matrix $\Jv$ is a block diagonal matrix of the form
\begin{equation*}
    \Jv =
    \begin{bmatrix}  
        \Jv_1& \bm{0}& \cdots & \bm{0} \\
        \bm{0}& \Jv_2 & \ldots & \bm{0}\\
        \vdots & \vdots & \ddots & \vdots \\
        \bm{0}& \bm{0} & \cdots & \Jv_k
    \end{bmatrix}. 
\end{equation*}

If $\big\|\estim{\betav}_{I_l} - \sum_{j=1}^n \dot{\ell}
(\xv_j^\top\estim{\betav}; y_j)\xv_{j, I_l}\big\|_2 < \lambda_l$, then $\Jv_l =
\bm{0}$. Otherwise it is given by 
\begin{equation*}
    \Jv_l =
    \bigg( 1 - \frac{\lambda_l}{\|\uv\|_2}\bigg) \Iv
    + \frac{\lambda_l}{\|\uv\|_2^3} \uv\uv^\top.
\end{equation*}
where $\uv = \estim{\betav}_{I_l} - \sum_{j=1}^n \dot{\ell}
(\xv_j^\top\estim{\betav}; y_j) \xv_{j, I_l}$.

This formula can be simplified further. Let $E=\bigcup_{\Jv_l \neq \bm{0}}
I_l$. Then we can simplify the expression of $\Hv$ using the matrix inverse
formula as follows:
\begin{align*}
    \Hv =&
    \Xv_{\cdot, E} \big[\Jv_{E, E} \Xv_{\cdot, E}^\top \diag[\{\ddot{\ell}
    (\xv_j^\top\estim{\betav}; y_j)\}_j] \Xv_{\cdot, E}
    + \Iv_{E, E} - \Jv_{E, E}\big]^{-1} \Jv_{E, E} \Xv_{\cdot, E}^\top
    \nonumber \\
    =&
    \Xv_{\cdot, E} \big[\Xv_{\cdot, E}^\top \diag[\{\ddot{\ell}
    (\xv_j^\top\estim{\betav}; y_j)\}_j] \Xv_{\cdot, E}
    + \Jv_{E, E}^{-1} - \Iv_{E, E} \big]^{-1} \Xv_{\cdot, E}^\top
\end{align*}
We note that $\Jv_{E, E}^{-1} - \Iv_{E, E}$ is also a block diagonal matrix
with each block being of the form $\Jv_l^{-1} - \Iv$. Since $\estim{\betav}_{I_l} -
\sum_{j=1}^n \dot{\ell} (\xv_j^\top\estim{\betav}; y_j) \xv_{j, I_l} =
\Big(1 + \frac{\lambda_l}{\|\estim{\betav}_{I_l}\|_2}\Big)
\estim{\betav}_{I_l}$, we have $\Jv_l =
\frac{\|\estim{\betav}_{I_l}\|_2}{\|\estim{\betav}_{I_l}\|_2 + \lambda_l}
\Big( \Iv + \frac{\lambda_l \estim{\betav}_{I_l} \estim{\betav}_{I_l}^\top}
{\|\estim{\betav}_{I_l}\|_2^3} \Big)$. This finally leads to
\begin{equation*}
    \Jv_l^{-1} - \Iv
    =
    \frac{\lambda_l}{\|\estim{\betav}_{I_l}\|_2}
    \bigg( \Iv - \frac{\estim{\betav}_{I_l}\estim{\betav}_{I_l}^\top}
    {\|\estim{\betav}_{I_l}\|_2^2}\bigg).
\end{equation*}

\subsection{SLOPE} \label{ssec:application-slope}
The SLOPE (sorted $\ell_1$ penalized estimation) technique is proposed in
\cite{bogdan2015slope}. It combines the intuition from high-dimensional estimation
and multiple testing to consider the sorted $\ell_1$ penalty, which is denoted by
$\|\cdot\|_S$ and defined as:
\begin{equation*}
    \|\betav\|_S = \sum_{i = 1}^p \lambda_i |\beta|_{(i)},
\end{equation*}
where $\lambda_1 \geq \lambda_2 \geq \ldots \geq \lambda_p \geq 0$ is a chosen
sequence, $|\beta|_{(i)}$ denotes the $i^{\rm th}$ largest element in absolute
value of $\betav$. Note that the sorted $\ell_1$ penalty is indeed a norm
\cite{bogdan2015slope}.

We will use the dual approach in Section \ref{sec:approximatedual} to obtain an
$\alo$ estimate. Let us consider the $\ell_2$ loss function. As the first step, we
need to characterize the dual norm $\|\cdot\|_{S*}$ of $\|\cdot\|_S$. According
to \cite{bogdan2015slope}, we have that
\begin{equation*}
    \| \betav \|_{S*}
    = \max_{1 \leq j \leq p} \frac{\sum_{l=1}^j \abs{\beta}_{(l)}}
    {\sum_{l=1}^j \lambda_l}.
\end{equation*}

The dual optimizer then satisfies $\estim{\dualv} = \projv_{\Delta_X}(\yv)$,
where $\Delta_X$ is the polytope $\Delta_X = \big\{ \dualv: \| \Xv^\top
\dualv\|_{S*} \leq 1 \big\}$. In order to obtain the Jacobian of the
projection, we should identify the face of $\Delta_X$ containing
$\estim{\dualv}$. Define
\begin{equation*}
    E =
    \bigg\{ j: \frac{\sum_{l=1}^j \abs{\Xv_{k_l}^\top \estim{\dualv}}}
    {\sum_{l=1}^j \lambda_l} = 1 \bigg\},
\end{equation*}
where $\{ k_1, \ldots, k_p\}$ is a permutation of $\{1, \ldots, p\}$ such that
$|\Xv_{k_1}^\top \estim{\dualv}| \geq \ldots \geq |\Xv_{k_p}^\top \estim{\dualv}|$.
Let $s_i \in \{1, -1\}$ be the sign of $\Xv_{k_i}^\top \estim{\dualv}$, then
the face of $\Delta_X$ containing $\hat{\thetav}$ is determined by a set of linear equations:
\begin{equation*}
    \sum_{i=1}^j s_i \Xv_{k_i}^\top \estim{\dualv} = \sum_{i=1}^j \lambda_i,
    \quad
    \text{for } j \in E.
\end{equation*}

This suggests the following construction of the matrix $\Wv \in \mathbb{R}^{n
\times |E|}$ whose columns expand the normal space of the face containing
$\estim{\dualv}$. Let $\Zv = \big[ \Xv_{k_1}, \ldots, \Xv_{k_p} \big]$, i.e.,
a matrix composed of the permuted columns of $\Xv$. Set $\Wv = \Zv \Av$ where
each column of $\Av$ corresponds to exactly one $j \in E$. For $j_0 \in E$, its
corresponding column of $\Av$ can be specified as (by abusing the notation $\Av_{j_0}$)
\begin{equation*}
    A_{t, j_0} =
    \begin{cases}
        s_{t} & \text{if } t \leq j_0, \\
        0 & \text{otherwise.}
    \end{cases}
\end{equation*}
Finally, we put $\Hv = \Wv (\Wv^\top \Wv)^{-1} \Wv^\top$ and obtain the leave-$i$-out
predicted value as $\tilde{y}_i = y_i + \frac{y_i - \hat{y}_i}{1 - H_{ii}}$.

\section{Numerical Experiments} \label{sec:experiments}

We illustrate the performance of ALO through three experiments. The first one
(Section \ref{ssec:experiment-alo-accuracy}) compares the ALO risk estimate
with that of LOOCV. The second one (Section \ref{ssec:experiment-alo-timing})
discusses the computational complexity of ALO, LOOCV and 5-fold CV. Our last
experiment (Section \ref{ssec:real-world-data}) evaluates the performance of
ALO on real-world datasets.

\subsection{Evaluating the Accuracy of ALO on Simulated Data}
\label{ssec:experiment-alo-accuracy}

In this section, we run ALO and LOOCV for different models under different
settings to compare the accuracy of ALO as an approximation of LOOCV.
Since all the models we considered contain a tuning parameter $\lambda$, the
accuracy is examined against different values of $\lambda$.

In the first part (Figure \ref{fig:risk-alo-loo1}), we run ALO and LOOCV for
seven models studied in Section \ref{sec:applications} under iid Gaussian
design and without including the intercept. Their risk estimates are compared
under the settings $ n> p$ and $n < p$ respectively.
The details of the simulations are explained in Section
\ref{sssec:experiment-alo-accuracy-no-intercept}. In general, we observe that
the estimates given by ALO are close to LOOCV, although the performance may
deteriorate for very small values of $\lambda$, as is clear in the fused-LASSO
($n<p$) and $\ell_\infty$ norm ($n < p$) examples. These values of $\lambda$
correspond to ``dense'' solutions, and are not close to the optimal choice. Hence,
such inaccuracies do not harm the parameter tuning algorithm.

For the second part (Figure \ref{fig:risk-alo-loo2}), we consider the risk
estimates for LASSO from ALO and LOOCV under settings with model
mis-specification, heavy-tail noise and correlated design. As is clear from
Figure \ref{fig:risk-alo-loo2}, for all three cases, ALO approximates LOOCV
well. Note that we choose $n < p$ for these three settings, and again for very small
value of $\lambda$, the ALO risk estimates skew upward slightly compared to
LOOCV risk estimates. The details of the simulations are given in Section
\ref{ssec:simdetailscorr}.

The third part (Figure \ref{fig:risk-alo-loo-intercept}) justifies the ALO
formula on models involving intercepts, as
presented in Section \ref{sec:intercept}. We include three examples: LASSO, SVM and Ridge
regression with positive quadrant constraint, which correspond to the nonsmooth
regularizer, nonsmooth loss and constrained problem respectively. Our adaption
proposed in Section \ref{sec:intercept} works well on these three models. The
details of the simulation are provided in Section \ref{sssec:experiments-iid-intercept}.

\subsubsection{IID Gaussian design without Intercept}
\label{sssec:experiment-alo-accuracy-no-intercept}

In this section, we summarize the details of the simulations whose results are presented in Figure \ref{fig:risk-alo-loo1}.
\paragraph{Support Vector Machine}
For all SVM simulations the data is generated according to a Gaussian logistic
model: the design matrix $\Xv$ is generated as a matrix of i.i.d.
$\mathcal{N}(0, 1)$; the true parameter $\betav$ is i.i.d. $\mathcal{N}(0, 9)$,
and each response $y_i$ is generated as an independent Bernoulli with
probability $p_i$ given by the following logistic model:
\begin{equation*}
    \log \frac{p_i}{1 - p_i} = \xv_i^\top \betav.
\end{equation*}

The $n > p$ scenario is generated with $n = 300$ and $p = 80$, and the $n < p$
scenario is generated with $n = 300$ and $p = 600$. We consider a sequence of 
$40$ different values of $\lambda$ ranging between $e^{4}\sim e^{12}$, with their logarithm equally
spaced between $[4, 12]$. The model is fitted using the \texttt{sklearn.svm.linearSVC} function in Python
package \texttt{scikit-learn} \cite{scikit-learn}, which is implemented by the
\texttt{LibSVM} package \cite{chang2011libsvm}. For using the \texttt{sklearn.svm.linearSVC}, we set
\texttt{tolerance=$10^{-6}$} and
\texttt{max\_iter=10000}. We identify an observation as a support vector
if $|1 - y_i \xv_i^\top \estim{\betav}| < 10^{-5}$.

\paragraph{Fused LASSO}
We use the fused LASSO \cite{tibshirani2005fused} as a special case of genralized LASSO.
For the fused LASSO experiment, each component of the design matrix $\Xv$ is generated
from i.i.d. $\mathcal{N}(0, 0.05)$. We generated the true parameter $\betav$ through the following process: given a number $k<p$, we generate a
sparse vector $\betav_0$ with a random sample of $k$ of its components i.i.d.
from $\mathcal{N}(0, 1)$. Then we construct a new vector $\betav_1$ as the
cumulative sum of $\betav_0$: $\beta_{1, i} = \sum_{j=1}^i \beta_{0, j}$;
Finally we normalize $\betav_1$ such that it has standard deviation 1. Note
that $\betav_1$ is a piecewise constant vector.
The response $\yv$ is generated as $\yv = \Xv\betav + \bm{\epsilon}$, where
$\bm{\epsilon}$ denotes i.i.d. random gaussian noise from $\mathcal{N}(0, 0.25)$.
For our simulation, we use $k=20$ (so piecewise constant with 20 pieces). The
$n > p$ scenario is generated with $n = 200$ and $p = 100$, whereas the $n < p$
scenario is generated with $n = 200$ and $p = 400$.

The model is fitted through a direct translation of the generalized LASSO model
into the package \texttt{CVX} \cite{cvx}. We use the default tolerance and
maximal iteration. We identify the location $i$ such that $\estim{\beta}_{i+1}
= \estim{\beta}_{i}$ by checking if $|\estim{\beta}_{i + 1} - \estim{\beta}_i|
< 10^{-8}$. For $n > p$, we consider a sequence of 40 tuning parameters from
$10^{-2} \sim 10^2$; For $n < p$, we consider a sequence of 30 tuning
parameters from $10^{-1} \sim 10$. Both are equally spaced on the log-scale.

\paragraph{Nuclear Norm Minimization}

For the nuclear norm simulations the data is generated according to the Gaussian
low-rank model; each observation matrix $\Xv_j$ is generated as an i.i.d. $\mathcal{N}(0,1)$ matrix. The true
parameter matrix $\Bv$ is generated as a low rank matrix, by setting $k=1$ in
the following formula
\begin{equation*}
    \Bv = \sum_{l = 1}^k \zv_l \wv_l^\top,
\end{equation*}
where $\zv, \wv$ are independent of each other. $\zv \sim \mathcal{N}(0,
\Iv_{p_1})$, $\wv \sim \mathcal{N}(0, \Iv_{p_2})$. Hence, the rank of
$\Bv$ in our experiments is equal to $1$.
The response $\yv$ is generated as $y_j = \langle \Xv_j, \Bv \rangle + \epsilon_j$,
where $\epsilon_j$ is i.i.d. $\mathcal{N}(0, 0.25)$.

The $n > p$ scenario is generated with $n = 600$, and $\Bv \in \RR^{20 \times 20}$ (i.e.
$p = 400$). The $n < p$ scenario is generated with $n = 200$, and $\Bv \in
\RR^{20 \times 20}$ again. For both settings, we consider a
sequence of 30 tuning parameters from $5\times 10^{-1}\sim 5\times 10$, equally
spaced on the log-scale.

The model is fitted using an implementation of a proximal gradient algorithm as
described in \cite{Lan2011}, implemented using the Matlab package
\texttt{TFOCS} \cite{tfocs}. The threshold we use to identify singular values
with value 0 is $10^{-3} \times \lambda_{\max}(\estim{\Bv})$, where
$\lambda_{\max}$ is the maximal singular value of $\estim{\Bv}$.

\paragraph{Group LASSO}
For the group LASSO experiment, each component of the design matrix $\Xv \in
\mathbb{R}^{n \times p}$ is generated from i.i.d. $\mathcal{N}(0,
\frac{1}{n})$. We generated the true parameter $\betav$ through the following
process: given a number $k<p$, we randomly select $k$ components and generate
their values from \texttt{Uniform[-3, 3]}. The rest of them are set to be 0.

The response $\yv$ is generated as $\yv = \Xv\betav + \bm{\epsilon}$, where
$\bm{\epsilon}$ denotes i.i.d. random gaussian noise from $\mathcal{N}(0, 0.64)$.
For our simulation, we use $k=50$. The $n > p$ scenario is generated with $n =
300$ and $p = 150$, whereas the $n < p$ scenario is generated with $n = 300$
and $p = 600$. We use 15 equally spaced groups for both settings.

We implemented a proximal gradient descent algorithm to fit the model.
We identify those groups with their norms small than $10^{-6}$.
For both $n > p$ and $n < p$, we consider a sequence of 20 tuning parameters from
$10^{-2} \sim 10^2$, equally spaced on log-scale.

\paragraph{$\ell_\infty$ norm}
For the $\ell_\infty$-norm experiment, we generated the data using $\yv = \Xv \betav + \epsilon$. For $\Xv$ we have
$X_{ij} \overset{iid}{\sim} \frac{1}{\sqrt{n}} \mathcal{N}(0, 1)$; For $\betav$
we randomly pick $p - k$ out of $p$ components from \texttt{Uniform[-3, 3]},
then the remaining $k$ components are with equal probability chosen from $\{-3, 3\}$. Finally, the noise $\epsilon_j
\overset{iid}{\sim} 0.8 \mathcal{N}(0, 1)$. We use $n=900$, $k=225$ and $p=450,
1800$. We describe the method we used for solving this optimization problem in Section \ref{sec:discussion}. 

\paragraph{Ridge regression with positive quadrant constraint}
To examine the accuracy of the ALO formula on models with polyhedron constraint, we consider the
following optimization problem:
\begin{equation}\label{eq:ridgewithpos}
    \estim{\betav} = \argmin_{\betav} \frac{1}{2} \| \yv - \Xv \betav \|_2^2 + \lambda
    \|\betav\|_2^2,
    \quad
    \text{subject to } \betav_j \geq 0, \text{ for } 1 \leq j \leq n.
\end{equation}
The data generating process is based on $\yv = \Xv\betav_0 + \epsilonv$, where
$\Xv$ has iid elements from $\frac{1}{\sqrt{n}} \mathcal{N}(0, 1)$, $\betav$
has iid components from \texttt{Uniform[-1, 3]}. $\epsilonv$ also has iid
elements from $\mathcal{N}(0, 4)$. $n$ is set to $300$. Two values of $p$ are also considered: $p=600$ and $p=150$. 

To solve the optimization problem \eqref{eq:ridgewithpos}, we use the projected gradient descent. Then we follow the discussion of Section \ref{ssec:application-polyhedron}; We find $E=\{k: \estim{\beta}_k > 0\}$. A natural choice for the orthonormal basis
of the tangent space on the first quadrant at $\estim{\betav}$ is specified by
$\{\ev_j: j \in E\}$. Here $\ev_j$ is the canonnical basis for Euclidean space.
Then we can use the result in Section \ref{ssec:application-polyhedron} to
obtain the ALO formula.

\paragraph{Positive semidefinite cone constraint}
For the positive semidefinite cone constraint, we consider the following
optimization problem:
\begin{equation*}
    \min_{\Bv} \frac{1}{2} \sum_{j=1}^n (y_j - \langle \Xv_j, \Bv \rangle)^2 +
    \lambda \|\Bv\|_F^2,
    \quad
    \text{subject to } \Bv \in \calS_+^p.
\end{equation*}
The data generation process is based on $\yv_j = \langle \Xv_j, \Bv_0 \rangle +
\epsilon_j$ for $1 \leq j \leq n$ where $\Xv_j \in \mathbb{R}^{p \times p}$ has iid
elements from $\frac{1}{\sqrt{n}}\mathcal{N}(0, 1)$. $\Bv_0 = \Cv^\top\Cv +
\diag[\dv]$ with $\Cv \in \mathbb{R}^{p \times p}$ having elements from iid $\mathcal{N}(0, 1)$ and $\dv$
having elements from $p\mathcal{N}(0, 1)$. $\epsilon_j \sim \mathcal{N}(0,
49)$. Finally we use $n=300$ and $p=10, 20$.

To solve the optimization problem, a projected gradient descent algorithm is implemented to solve the problem. For the ALO formula we directly use \eqref{eq:constrained-smooth-alo} with
$\Jv$ specified as in \eqref{eq:proj-psd-jacob}.
\begin{figure}[!htbp]
    \begin{center}
        \setlength\tabcolsep{2pt}
        \renewcommand{\arraystretch}{0.3}
        \begin{tabular}{r|rrrr}
            \hline
            \hline
            & \multicolumn{1}{c}{\small svm}
            & \multicolumn{1}{c}{\small fused lasso}
            & \multicolumn{1}{c}{\small nuclear norm}
            & \multicolumn{1}{c}{\small group lasso} \\
            \hline
            \rotatebox{90}{\small \hspace{1.6cm} $n > p$}
            & \includegraphics[scale=0.34]{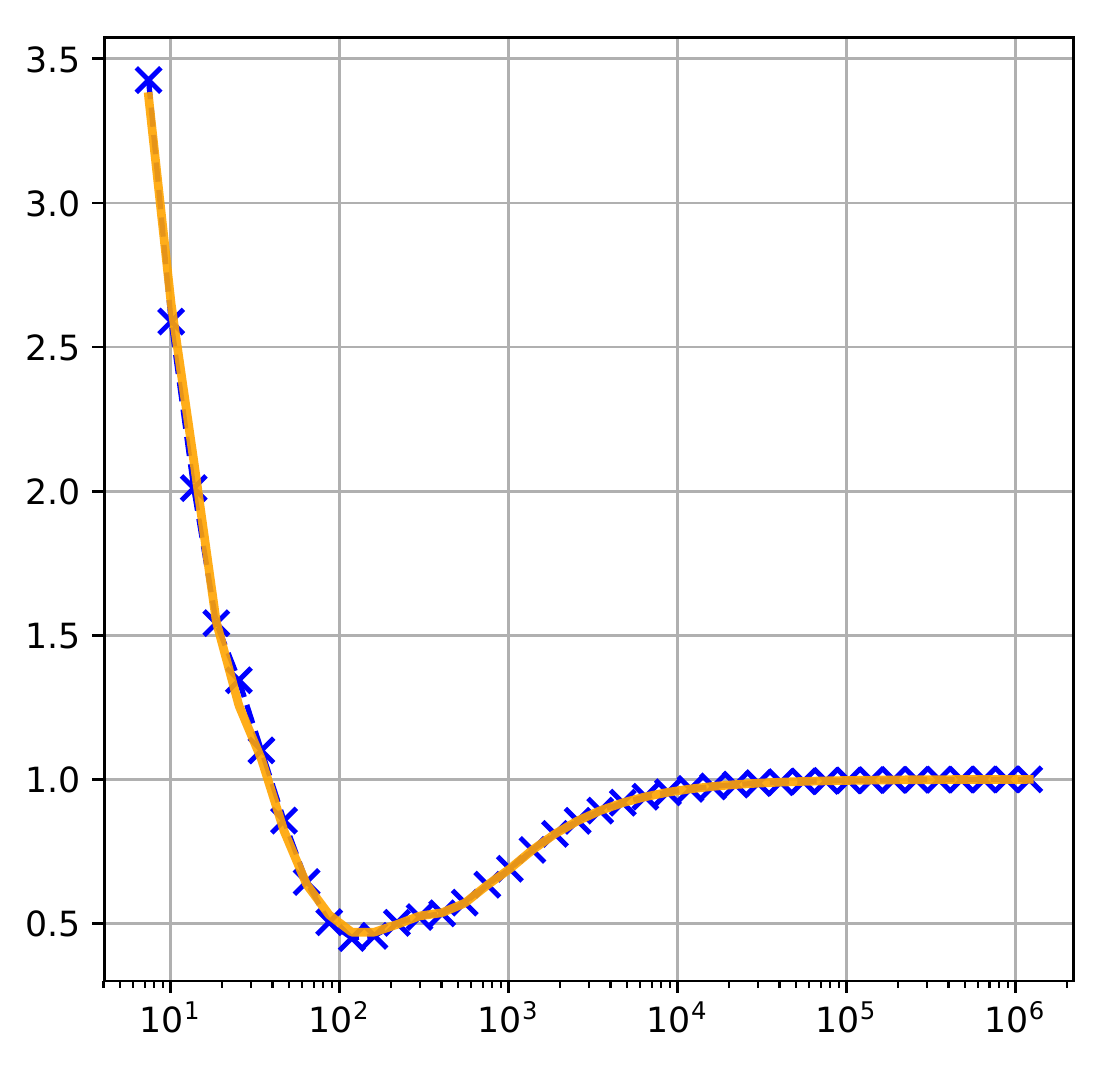}
            & \includegraphics[scale=0.34]{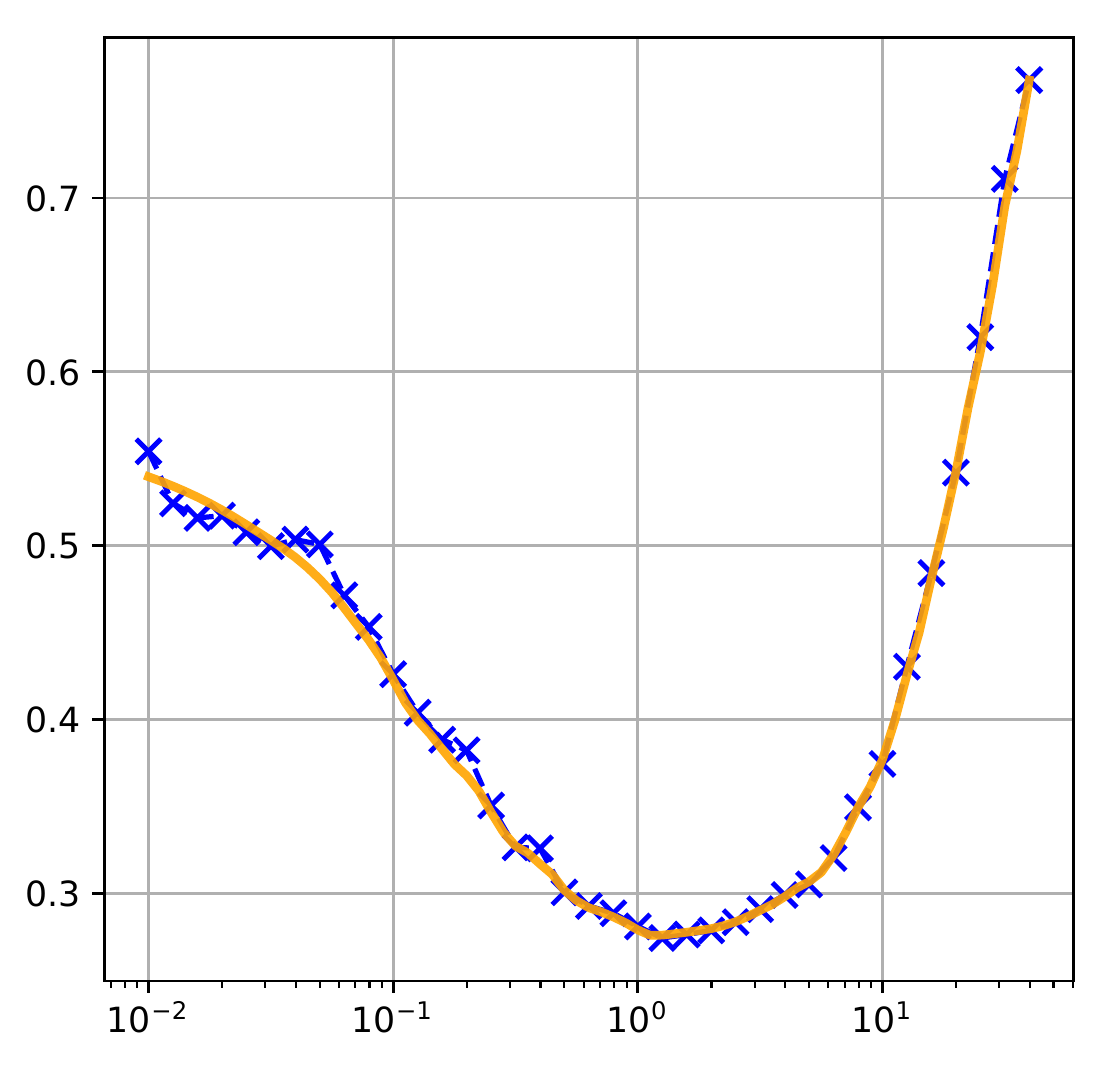}
            & \includegraphics[scale=0.34]{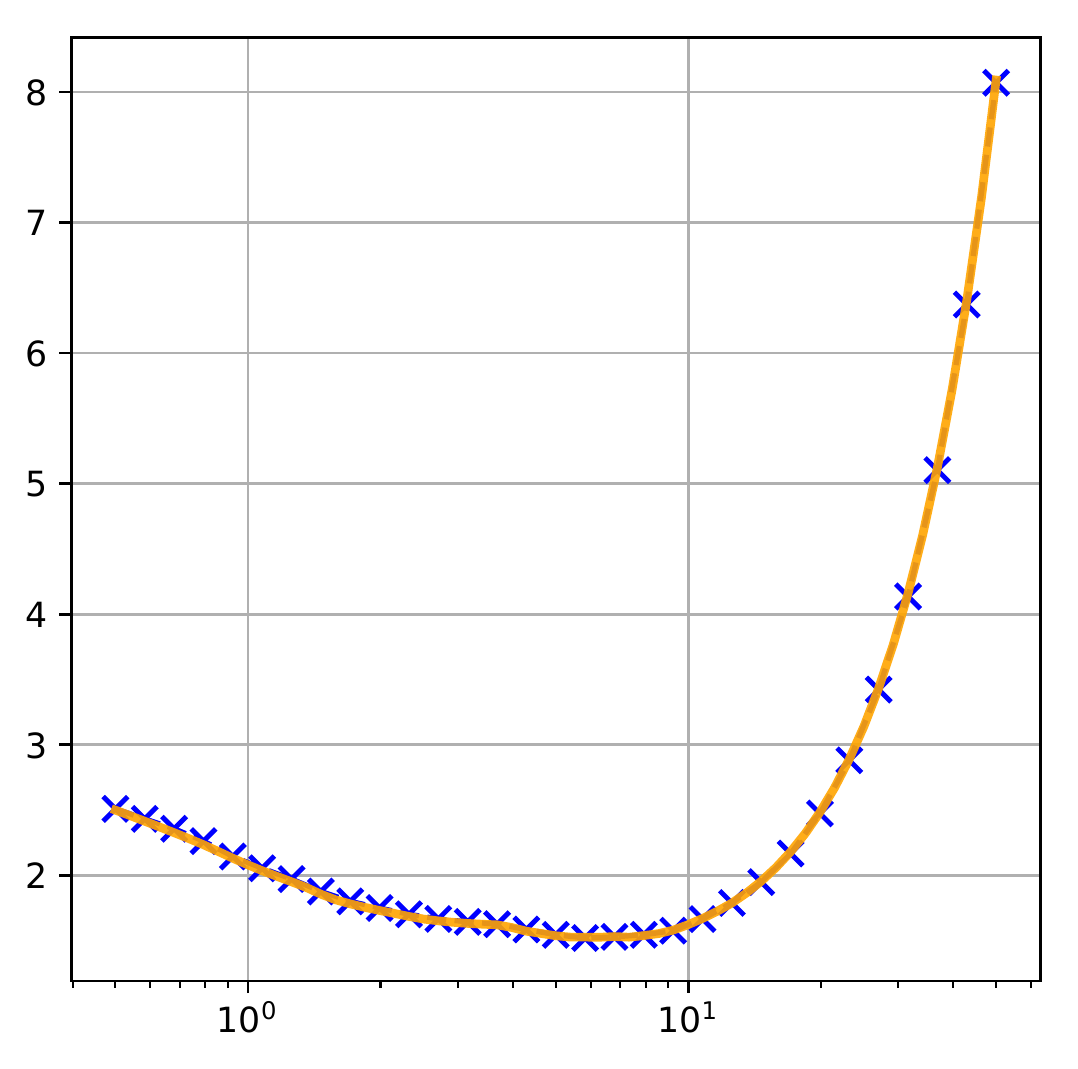}
            & \includegraphics[scale=0.34]{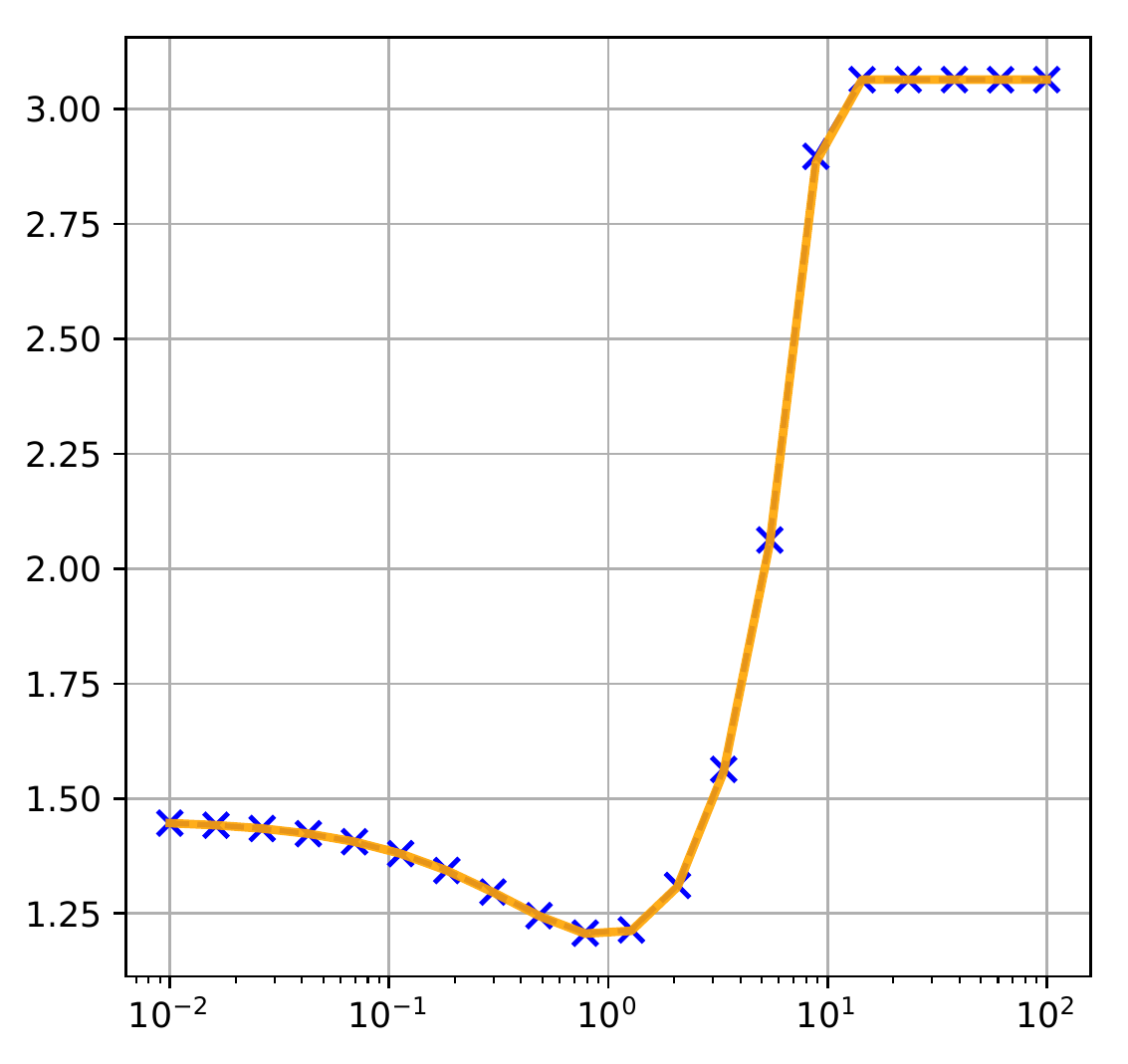} \\
            & \multicolumn{1}{c}{{\fontsize{7}{6} $n=300, p=80$}}
            & \multicolumn{1}{c}{{\fontsize{7}{6} $n=200, p=100$}}
            & \multicolumn{1}{c}{{\fontsize{7}{6} $n=600, p_1=p_2=20$}}
            & \multicolumn{1}{c}{{\fontsize{7}{6} $n=300, p=150$}} \\
            \rotatebox{90}{\small \hspace{1.6cm} $n < p$}
            & \includegraphics[scale=0.34]{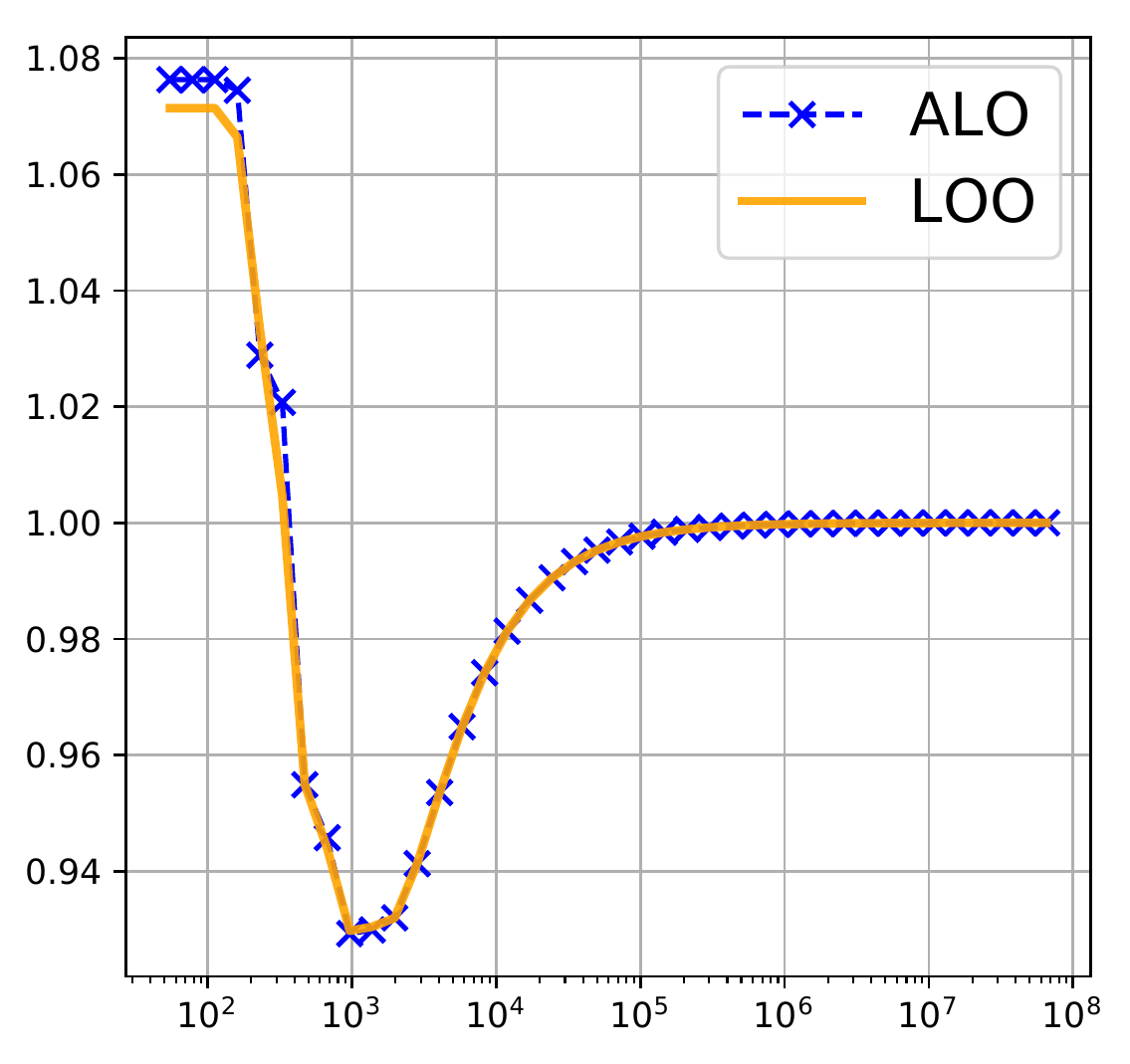}
            & \includegraphics[scale=0.34]{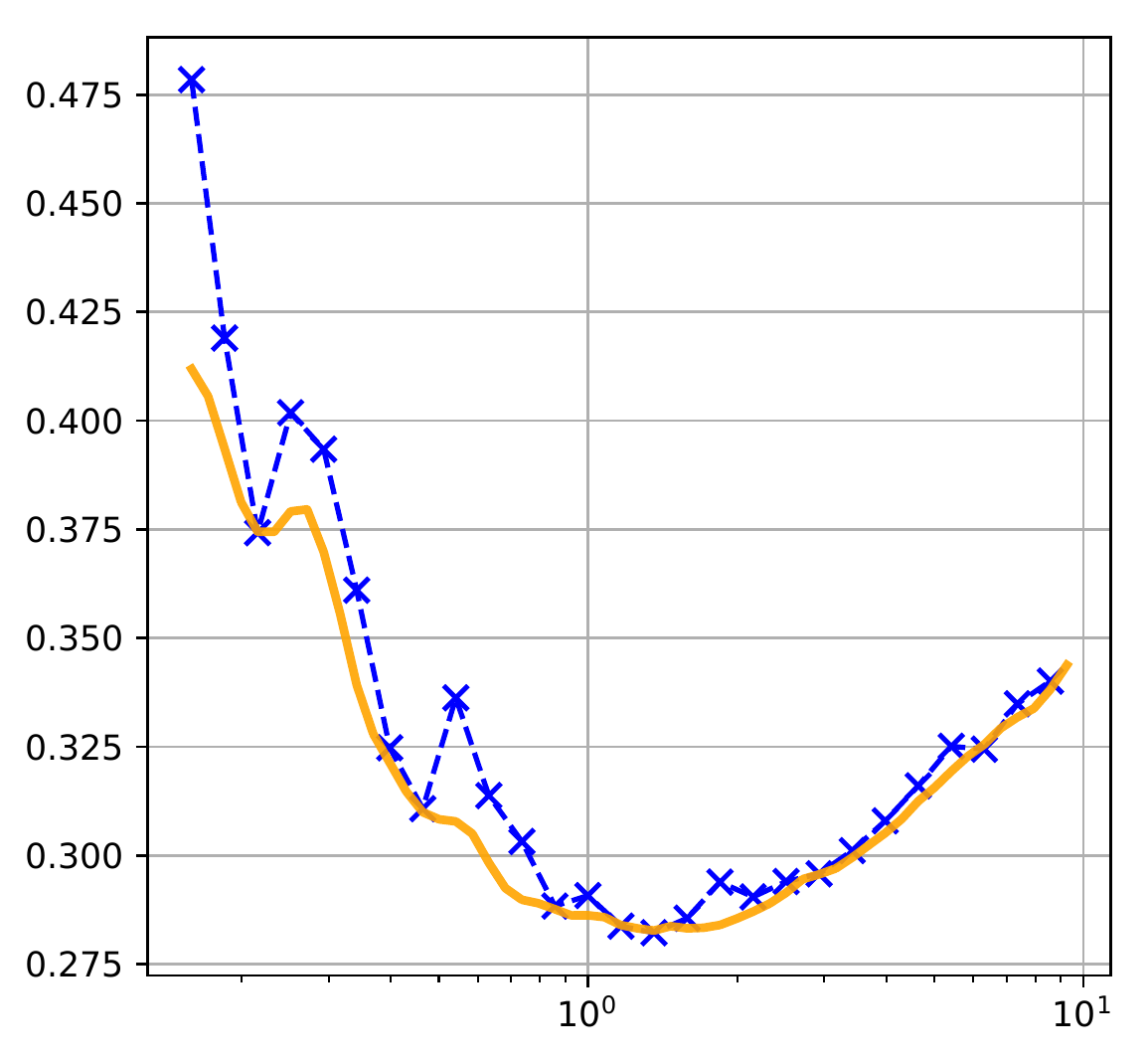}
            & \includegraphics[scale=0.34]{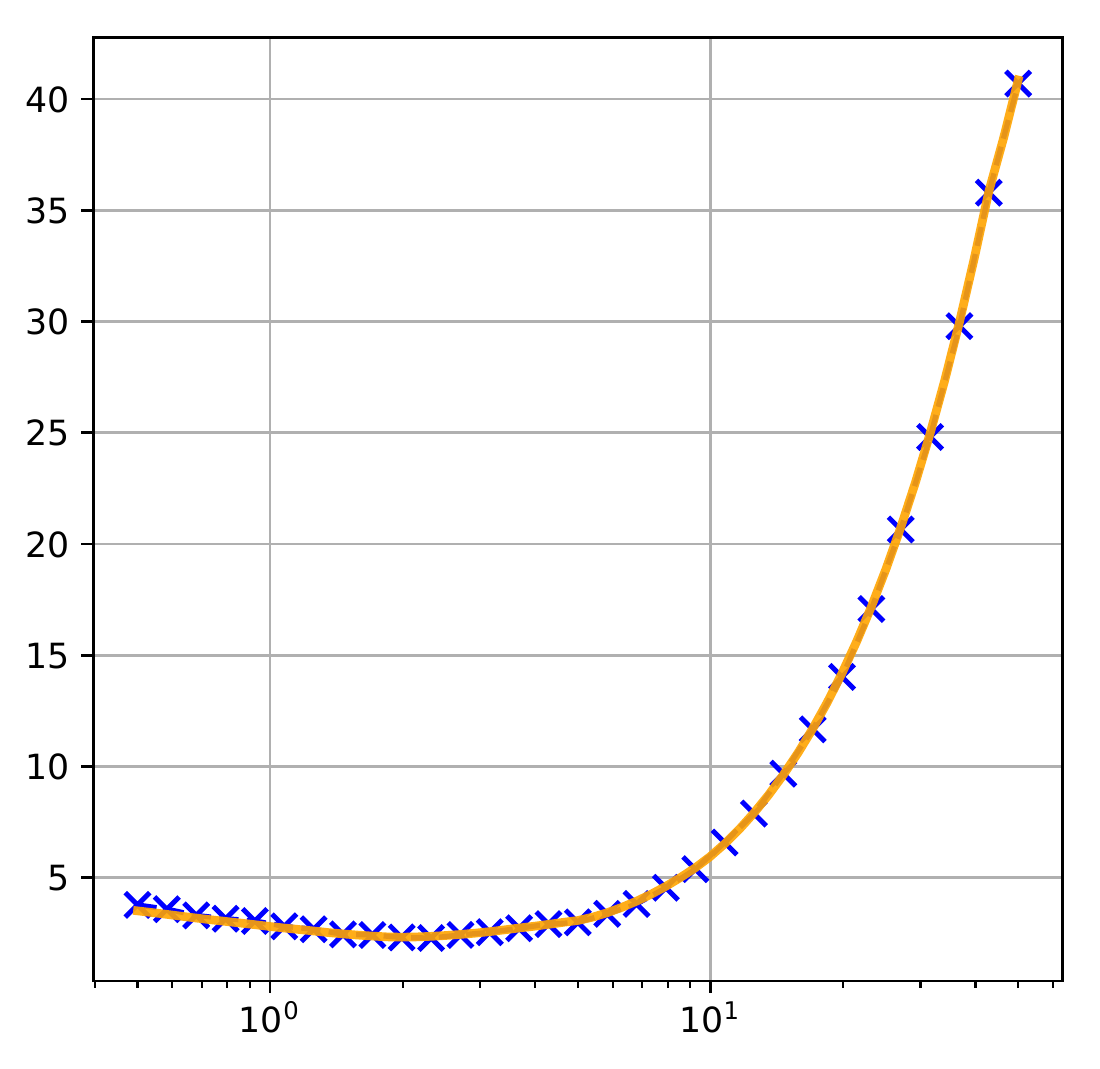}
            & \includegraphics[scale=0.34]{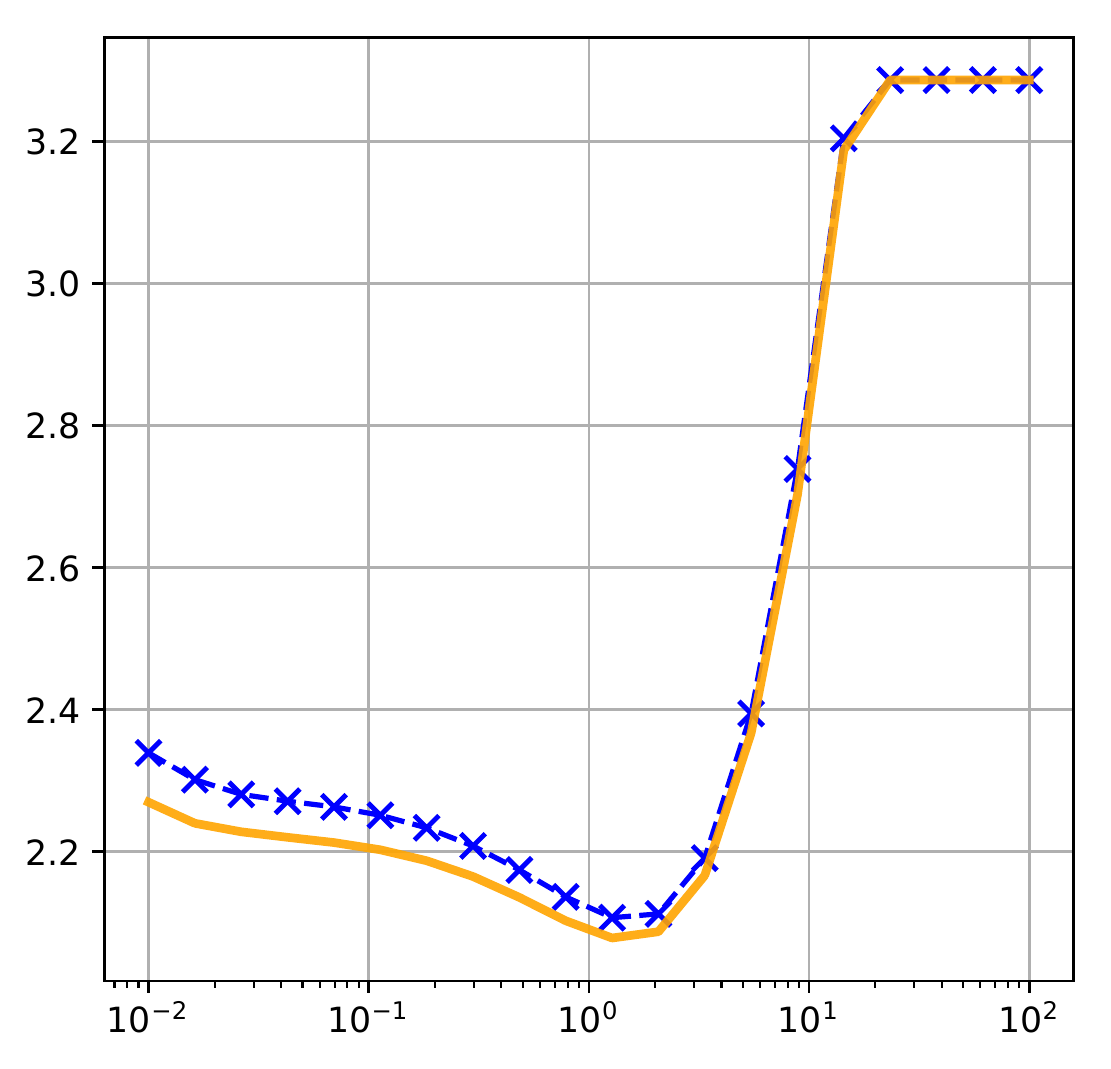} \\
            & \multicolumn{1}{c}{{\fontsize{7}{6} $n=300, p=600$}}
            & \multicolumn{1}{c}{{\fontsize{7}{6} $n=200, p=400$}}
            & \multicolumn{1}{c}{{\fontsize{7}{6} $n=200, p_1=p_2=20$}}
            & \multicolumn{1}{c}{{\fontsize{7}{6} $n=300, p=600$}} \\
            &&&& \\
            \hline
            \hline
            & \multicolumn{1}{c}{\small $\ell_\infty$ norm}
            & \multicolumn{1}{c}{\small positive ridge}
            & \multicolumn{1}{c}{\small psd ridge} \\
            \hline
            \rotatebox{90}{\small \hspace{1.6cm} $n > p$}
            & \includegraphics[scale=0.34]{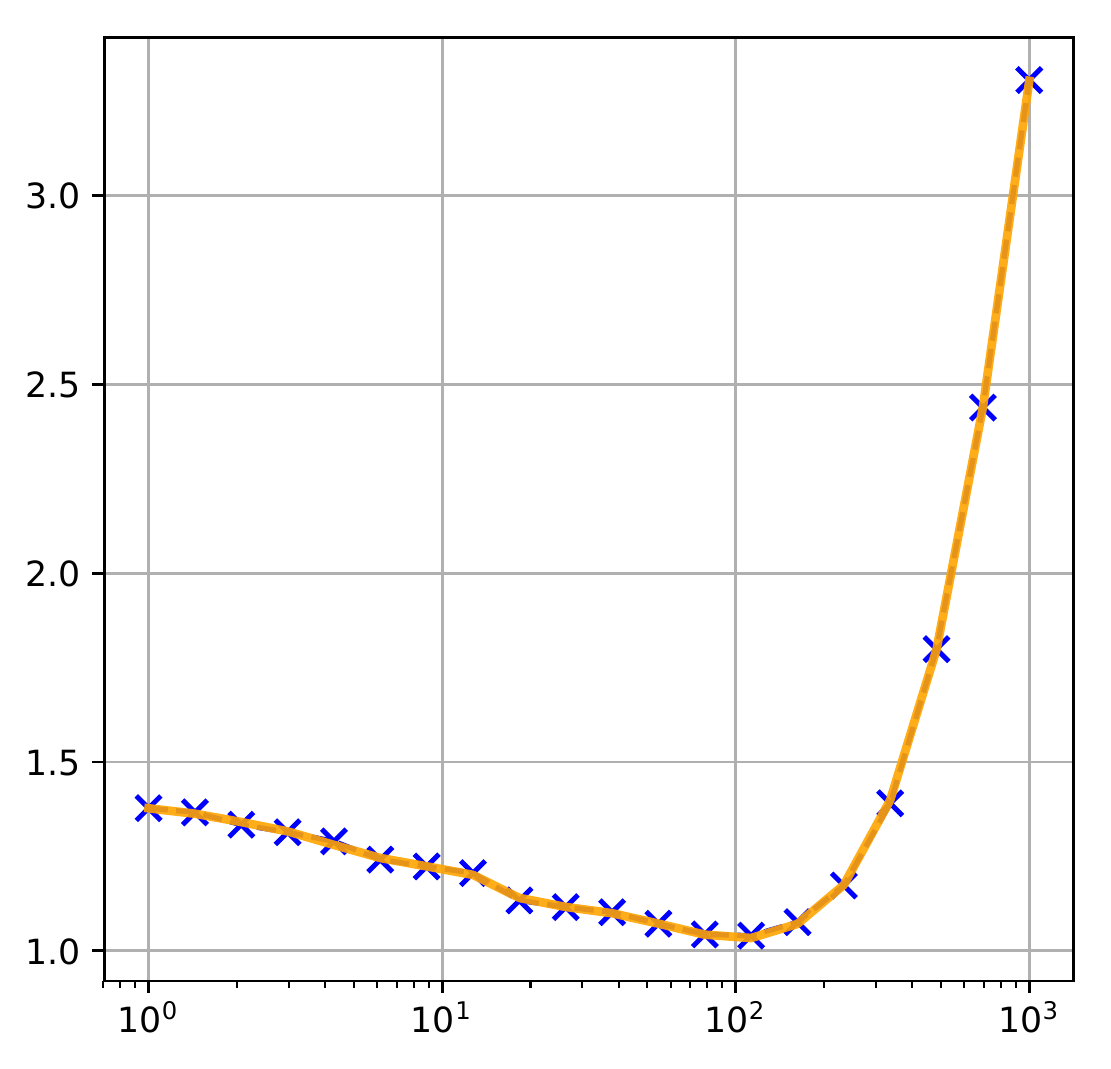}
            & \includegraphics[scale=0.34]{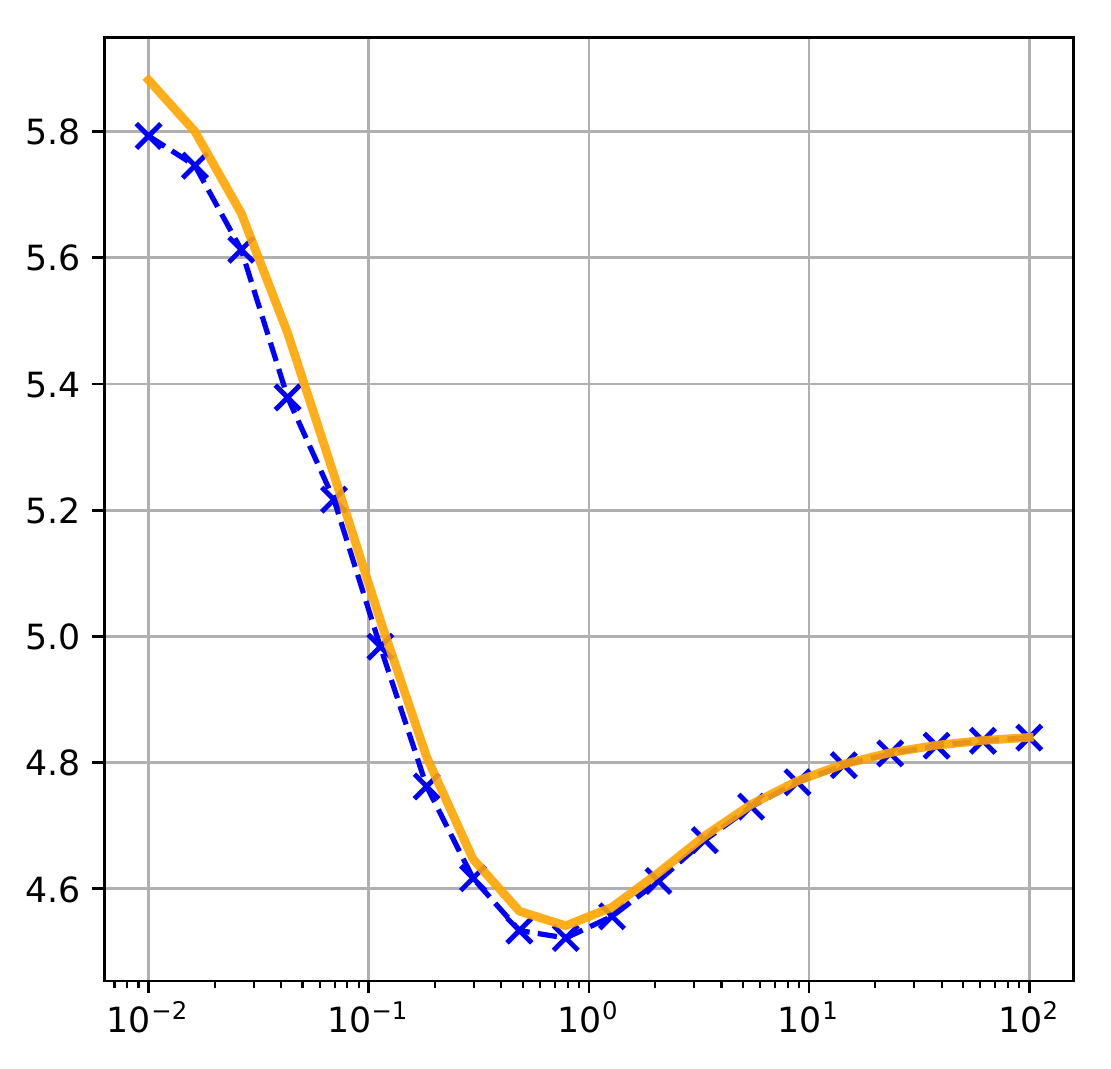}
            & \includegraphics[scale=0.34]{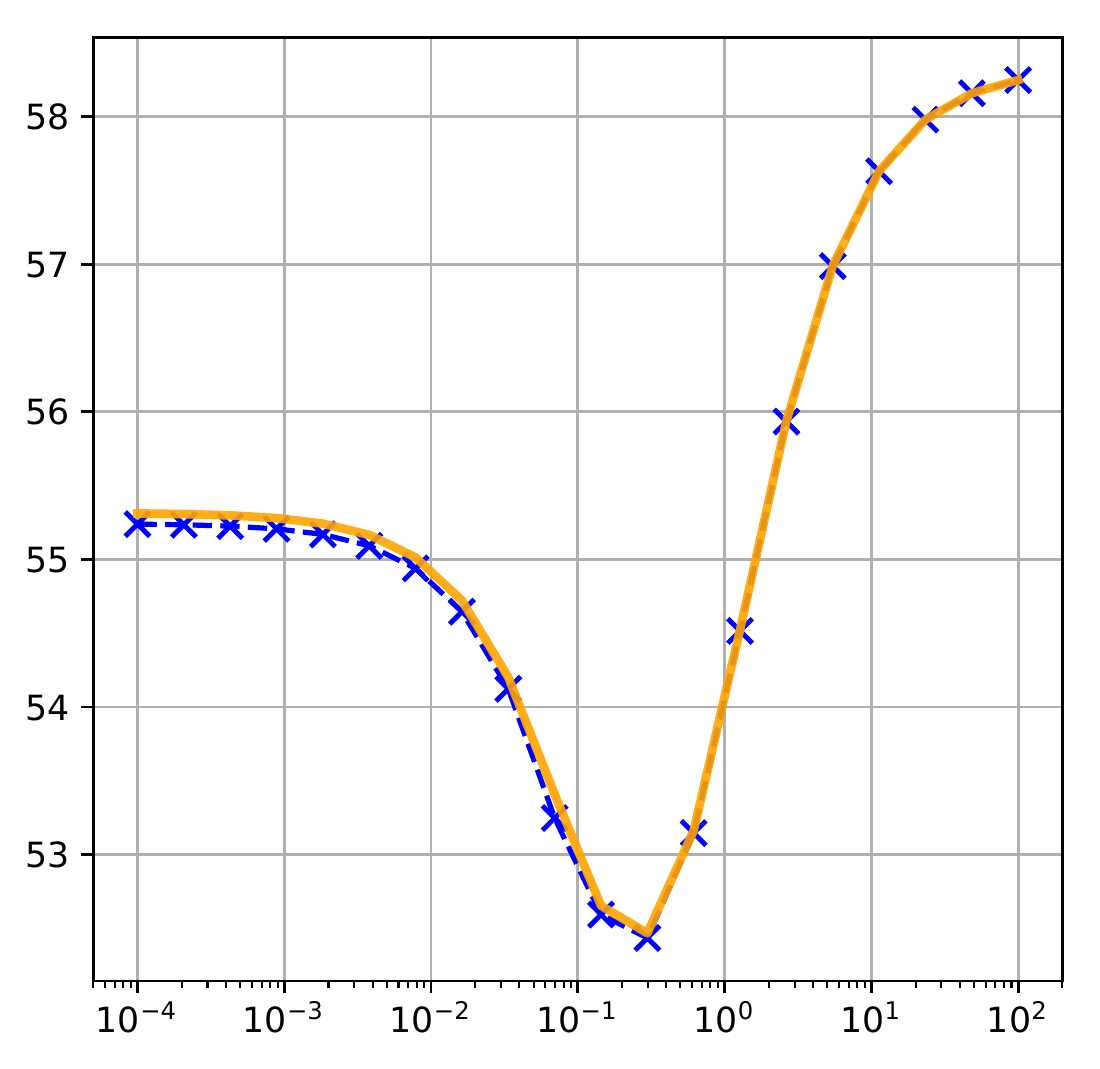}
            & \\
            & \multicolumn{1}{c}{{\fontsize{7}{6} $n=900, p=450$}}
            & \multicolumn{1}{c}{{\fontsize{7}{6} $n=300, p=150$}}
            & \multicolumn{1}{c}{{\fontsize{7}{6} $n=300, p_1=p_2=10$}}
            & \\
            \rotatebox{90}{\small \hspace{1.6cm} $n < p$}
            & \includegraphics[scale=0.34]{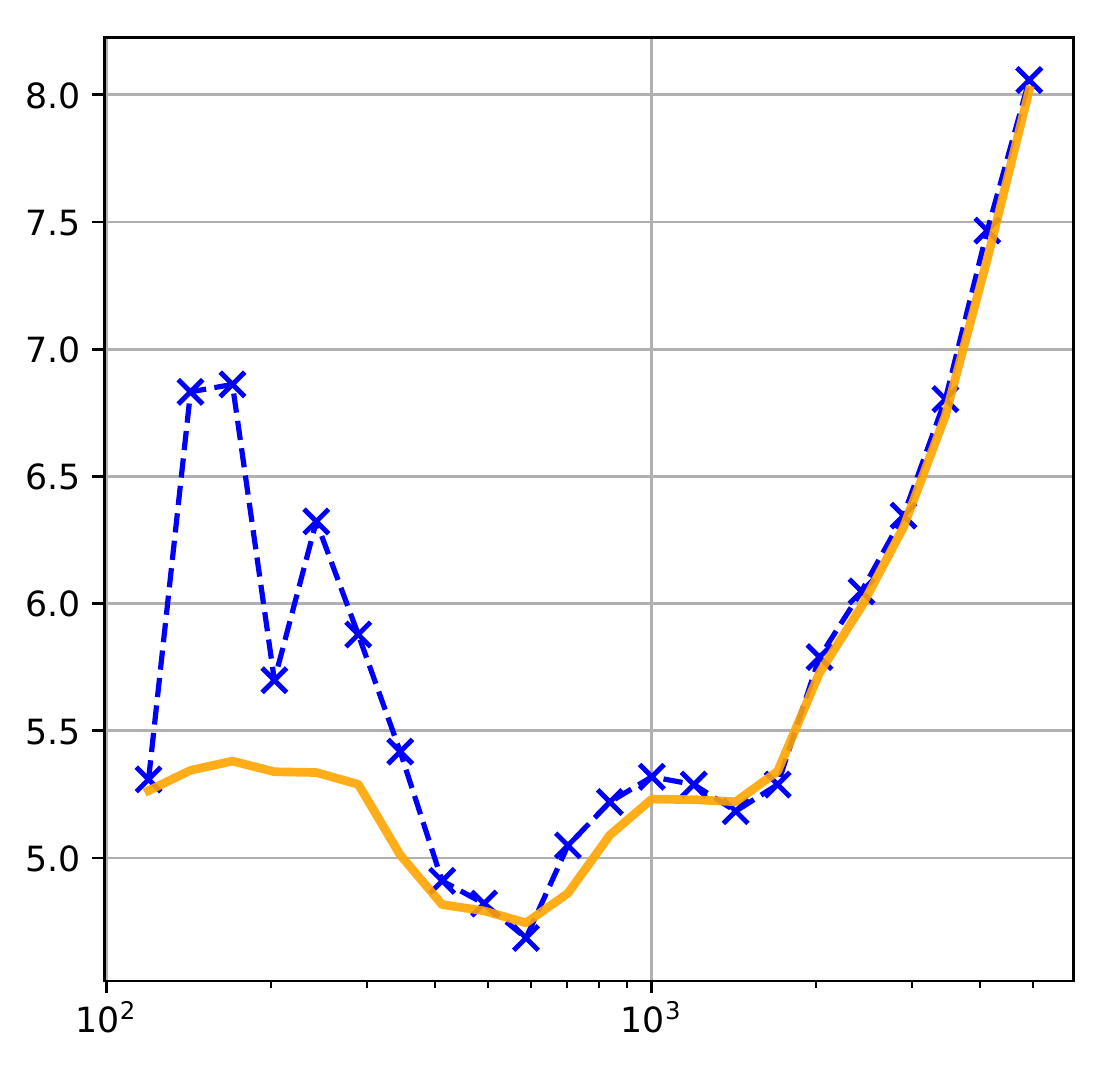}
            & \includegraphics[scale=0.34]{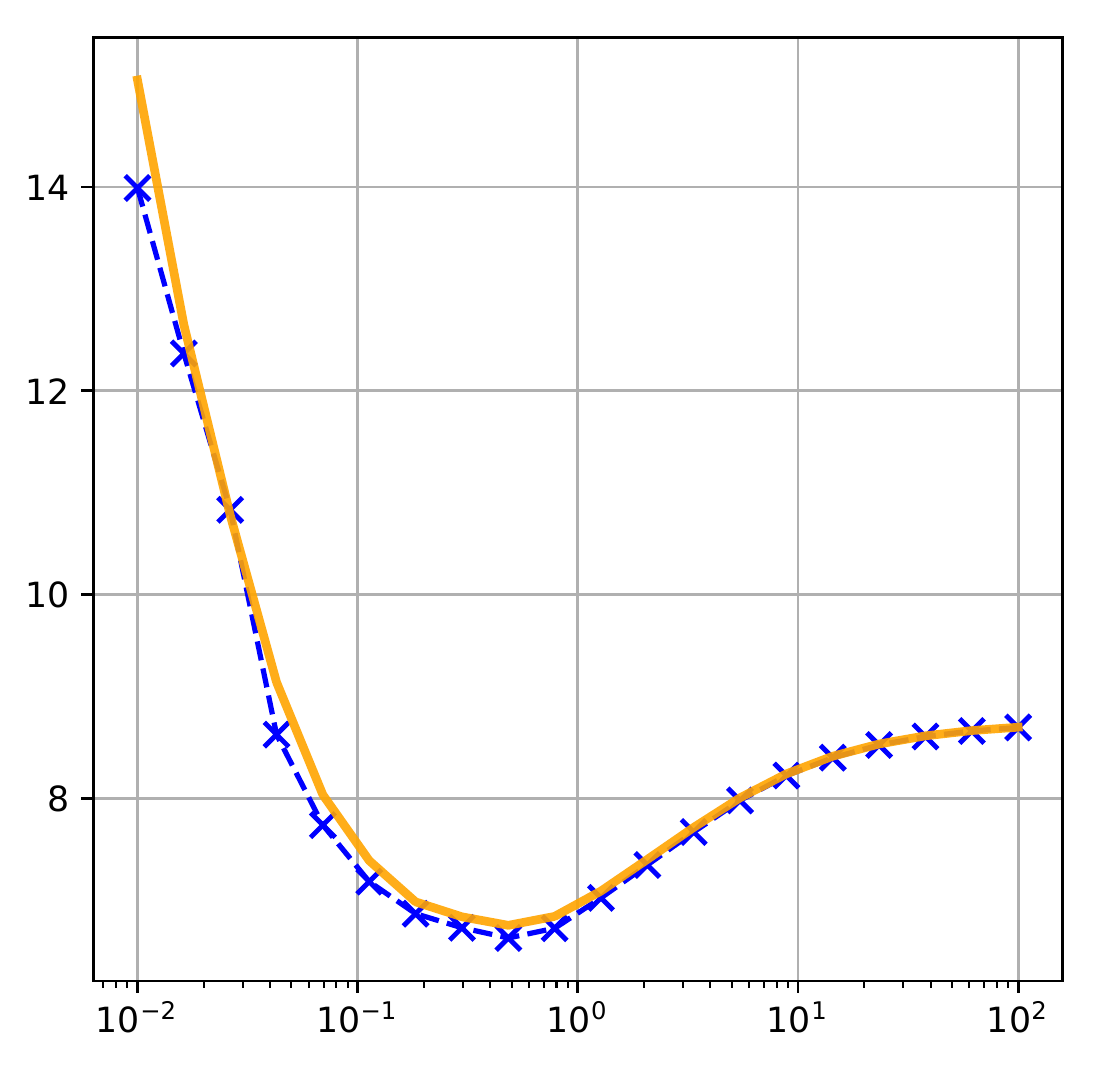}
            & \includegraphics[scale=0.34]{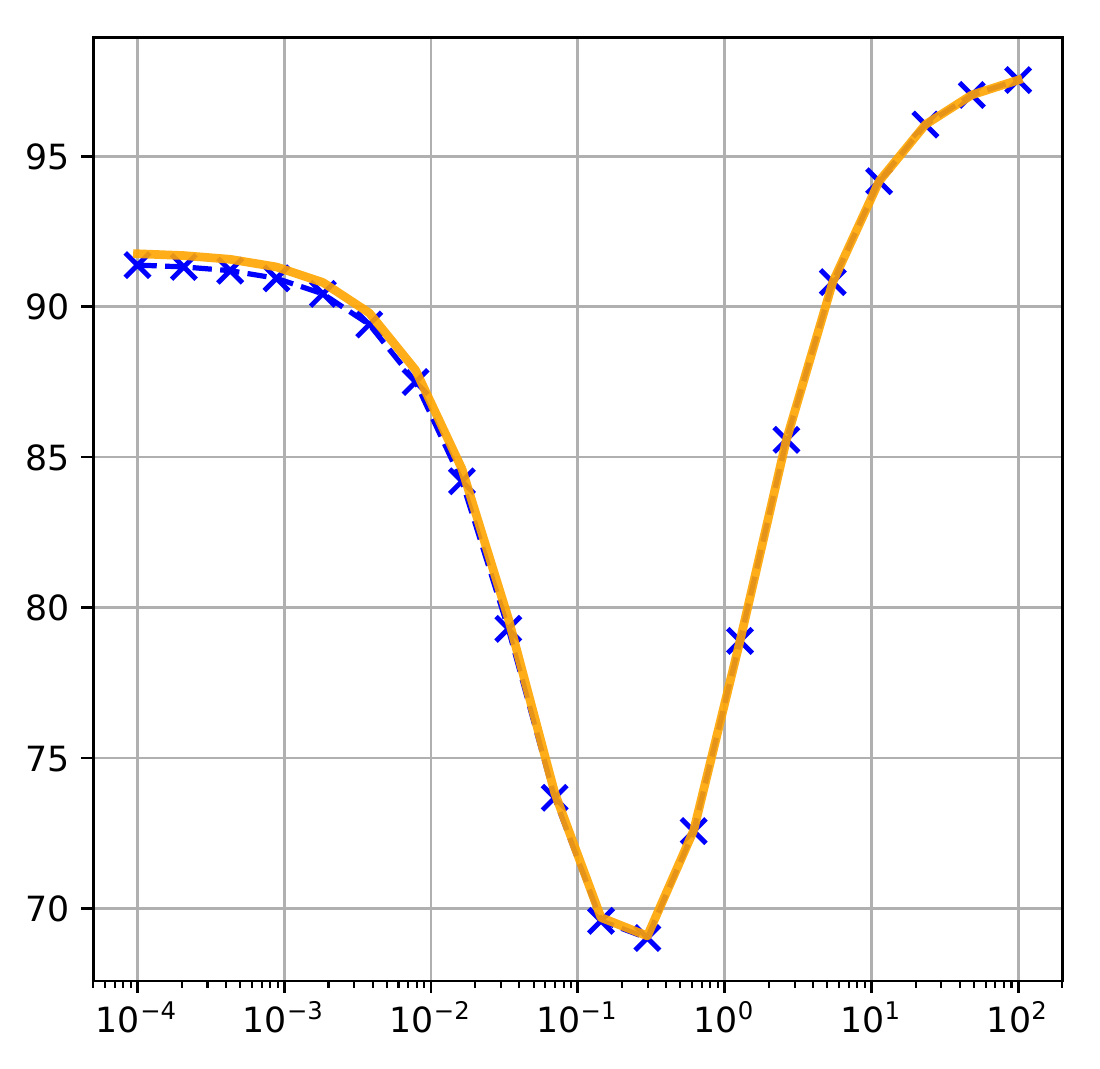}
            & \\
            & \multicolumn{1}{c}{{\fontsize{7}{6} $n=900, p=1800$}}
            & \multicolumn{1}{c}{{\fontsize{7}{6} $n=300, p=600$}}
            & \multicolumn{1}{c}{{\fontsize{7}{6} $n=300, p_1=p_2=20$}}
            & 
        \end{tabular}
        
        \caption{Risk estimates from ALO versus LOOCV. The $x$-axis is the
        tuning parameter value on $\log$-scale, the $y$-axis is the risk
        estimate. The comparison is based on SVM, fused LASSO, nuclear norm,
        group LASSO, $\ell_\infty$ norm, ridge regression on positive quadrant
        and positive semidefinite cone constrained matrix sensing. Different
        settings for the number of observations $n$ and the number of features
        $p$ are considered. For nuclear norm and positive semidefinite matrix
        cone constraints, $p_1, p_2$ are dimensions of a matrix.}\label{fig:risk-alo-loo1}
    \end{center}
\end{figure}

\subsubsection{Twisting the Model} \label{ssec:simdetailscorr}
In this section, we summarize the details of the simulations that are reported in Figure \ref{fig:risk-alo-loo2}. 
In our simulations, we use the setting where $n = 300$, $p = 600$, and the true model is sparse with $k =
60$ non-zeros. These non-zeros are i.i.d. $\mathcal{N}(0, 1)$.  

In the misspecification example, the elements of $\Xv$ are i.i.d. $\mathcal{N}(0, 1 / k)$. $\yv$ is generated according to
the following non-linear model:
\begin{equation*}
    y_j = f(\xv_j^\top \betav + \epsilon_j),
\end{equation*}
where $\bm{\epsilon} \sim \mathcal{N} (\mathbf{0}, 0.25 \Iv_{n})$, and  the function $f$ is given by:
\begin{equation*}
    f(x) = \begin{cases}
        \sqrt{x} & \text{ if } x \geq 0, \\
        -\sqrt{-x} & \text{ otherwise.}
    \end{cases}
\end{equation*}

In the heavy-tailed noise example, the elements of $\Xv$ are i.i.d. $\mathcal{N}(0, 1 / k)$. $\yv$ is generated according to
\begin{equation*}
\yv = \Xv\betav + \bm{\epsilon},
\end{equation*}
where the ``heavy-tailed'' noise $\epsilon_j$ is generated
according to a Student-$t$ distribution with three degrees of freedom, and
rescaled such that its variance is $\sigma^2=0.25$.

In the correlated design example, $\yv$ is generated according to
\begin{equation*}
\yv = \Xv\betav + \bm{\epsilon},
\end{equation*}
where $\bm{\epsilon} \sim \mathcal{N} (\mathbf{0}, 0.25 \mathbf{I})$, and the
``correlated design'' $\Xv$ is generated with each row $\xv_j$ being
sampled independently according to a multivariate normal distribution $\xv_j
\sim \mathcal{N}(0, \Cv / k)$, where $\Cv$ is the Toeplitz matrix, given by:
\begin{equation*}
    \Cv = \begin{pmatrix}
        \rho & \rho^2 & \ldots & \rho^p \\
        \rho^2 & \rho & \ldots & \rho^{p - 1} \\
        \vdots & \ldots & \ddots & \vdots \\
        \rho^p & \rho^{p - 1} & \ldots & \rho
    \end{pmatrix}.
\end{equation*}
$\rho$ is set to $0.8$ in our experiments. For all settings, we consider a
sequence of 25 tuning parameters from $3.16\times 10^{-3} \sim 3.16\times
10^{-2}$, equally spaced under log-scale.

All models were solved using the \texttt{glmnet}  package in
Matlab \cite{qian2013glmnet}. We identify the zero locations of $\estim{\betav}$ by checking
$|\beta_j| > 10^{-8}$.

\begin{figure}[!htbp]
    \begin{center}
        \setlength\tabcolsep{2pt}
        \renewcommand{\arraystretch}{0.3}
        \begin{tabular}{r|rrr}
            & \multicolumn{1}{c}{\small misspecification}
            & \multicolumn{1}{c}{\small heavy-tailed noise}
            & \multicolumn{1}{c}{\small correlated design} \\
            \hline
            \rotatebox{90}{\small \hspace{1.2cm} lasso risk} &
            \includegraphics[scale=0.34]{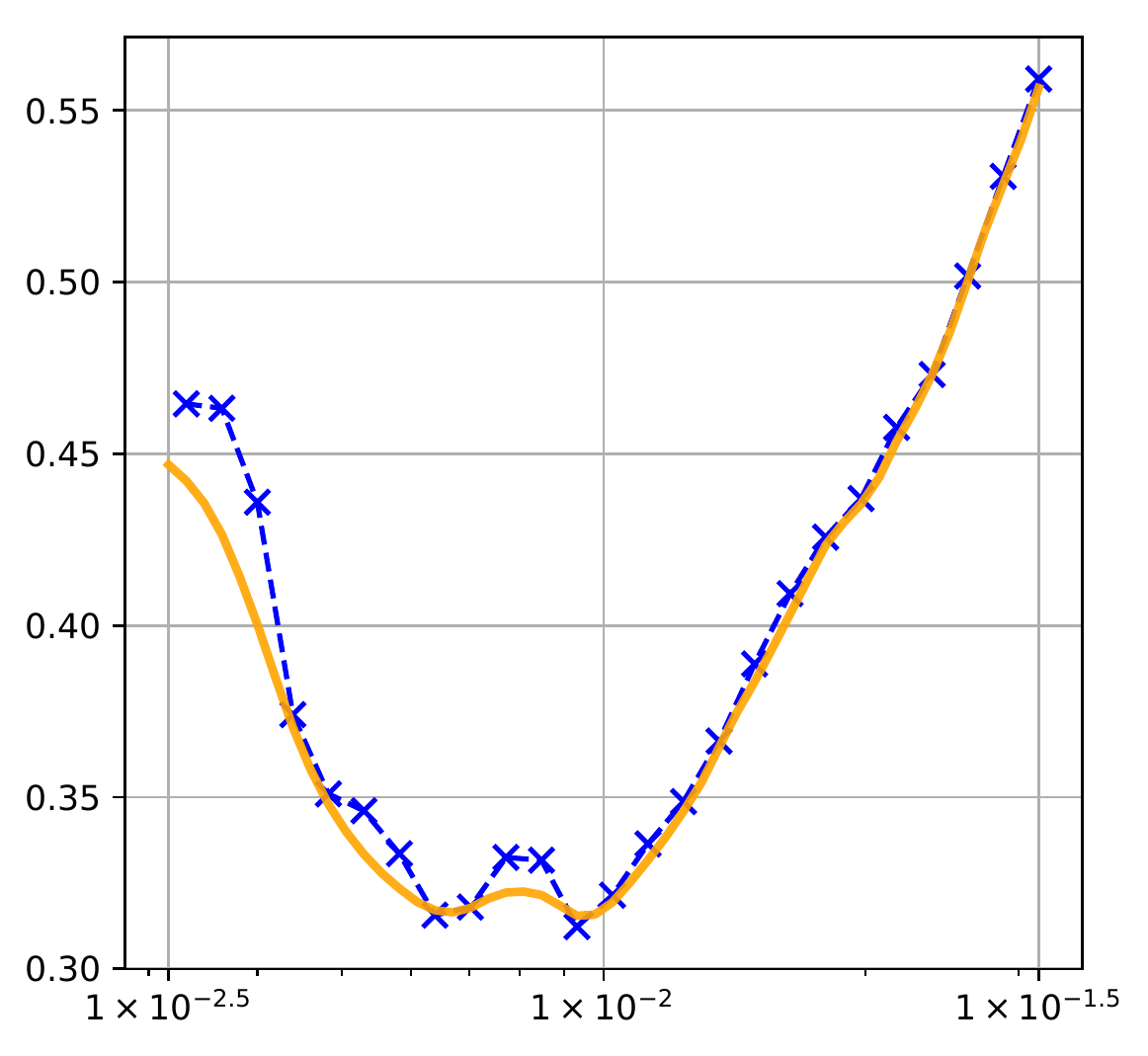} & 
            \includegraphics[scale=0.34]{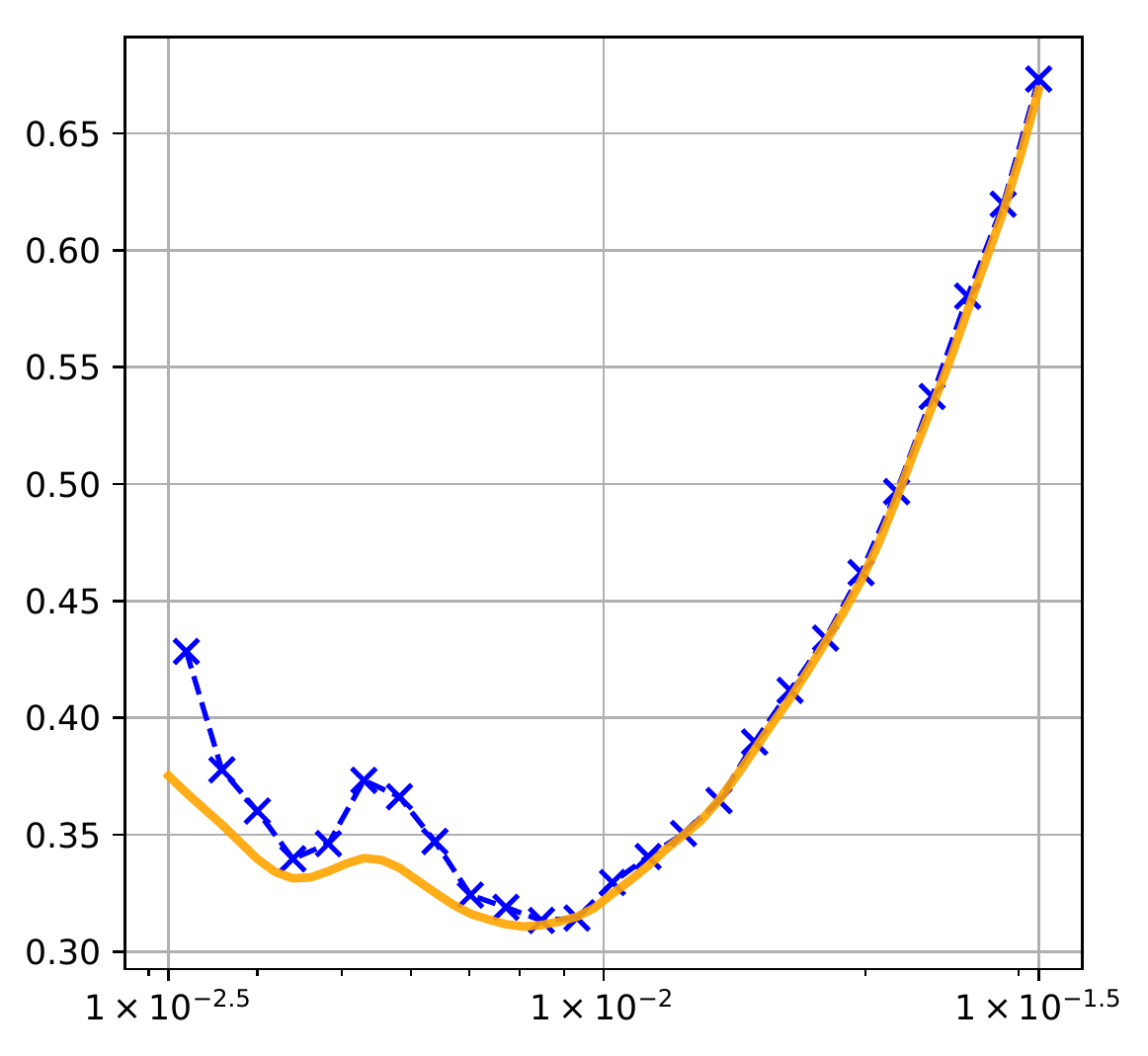} & 
            \includegraphics[scale=0.34]{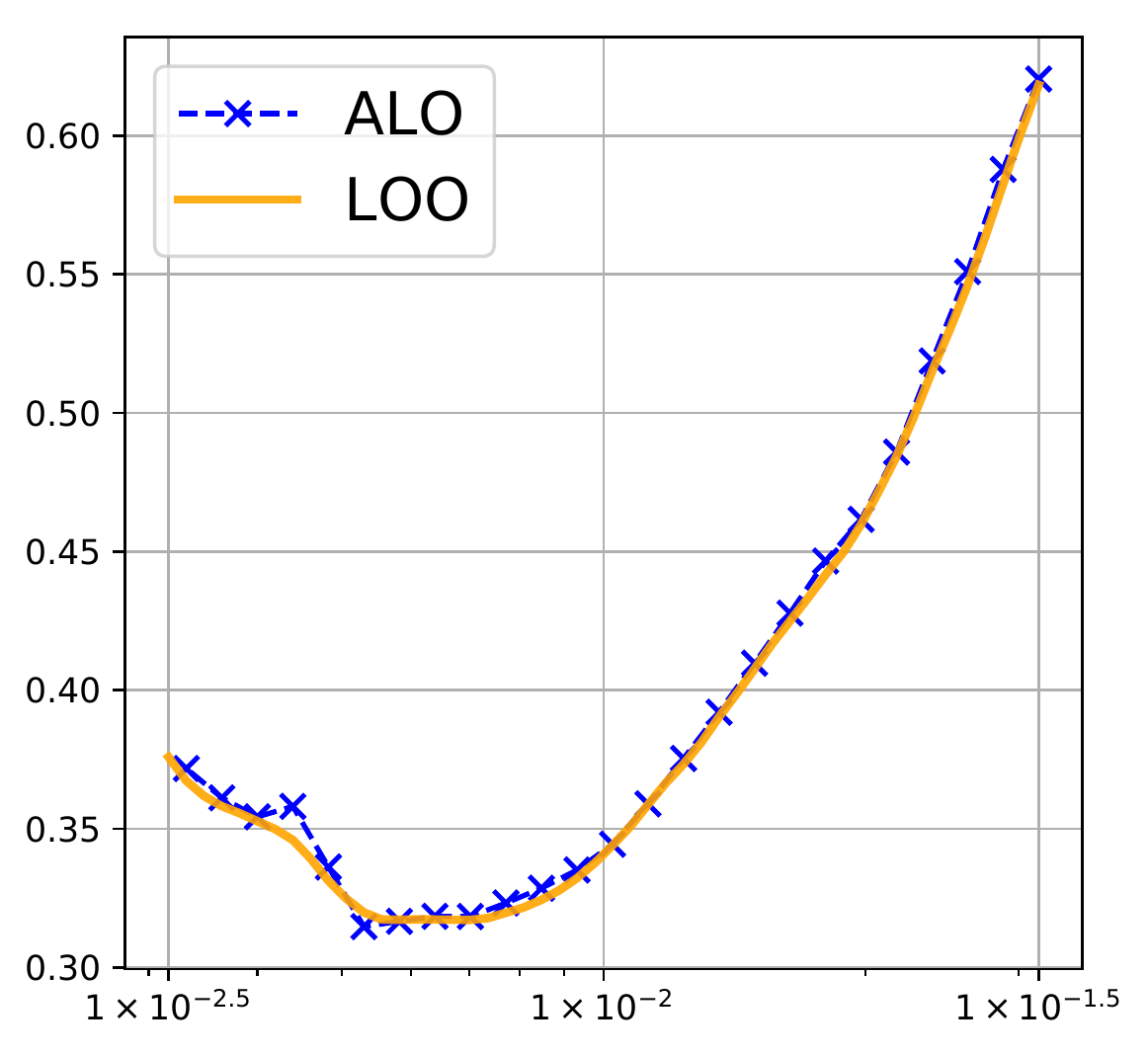}
    \end{tabular}
        \caption{Risk estimates from ALO versus LOOCV. The $(x, y)$-axes has the same meaning as Figure
            \ref{fig:risk-alo-loo1}. We consider the risk estimates of LASSO
            under model mis-specification, heavy-tailed noise and correlated design
            scenarios. We use $n=300$, $p=600$ and $k=30$ for all three where $k$ is
            the number of nonzeros in the true $\betav$.} \label{fig:risk-alo-loo2}
    \end{center}
\end{figure}

\subsubsection{IID Guassian Design with Intercept}
\label{sssec:experiments-iid-intercept}
In this section, we explain the details of the simulations whose results are presented in Figure \ref{fig:risk-alo-loo-intercept}. The details of the three models are listed below.

\paragraph{LASSO}
We generate the model using $\yv = \Xv\betav + \epsilonv$. For $\Xv$ we have $X_{j,k}
\overset{iid}{\sim} \frac{1}{\sqrt{n}}\mathcal{N}(0, 1)$; For $\betav$, we
randomly pick $k$ locations and sample them from \texttt{Uniform[-3, 3]}, with
the rest set to 0; $\epsilon_j \overset{iid}{\sim} 0.8 \mathcal{N}(0, 1)$.
Finally we use $n=400$, $p=200, 800$ and $k=100$.

\paragraph{SVM}
The data is generated based on the logistic regression model
$y_j \sim \texttt{Bernoulli} (p_j)$ with $\log\frac{p_j}{1 - p_j} =
\xv_j^\top\betav_0 + \epsilon_j$. Again $X_{j,k} \overset{iid}{\sim}
\frac{1}{\sqrt{n}}\mathcal{N}(0, 1)$, $\beta_j \overset{iid}{\sim}
\texttt{Uniform[-3, 3]}$ and $\epsilon_j \overset{iid}{\sim} 0.5 \mathcal{N}(0,
1)$. We choose $n=300$ and $p=150, 600$.

\paragraph{Ridge regression on postive quadrant}
Similar to the LASSO case, we generate the model using $\yv = \Xv\betav + \epsilonv$.
$Xv$ is generated in the same way. For $\betav$, we have $\beta_j
\overset{iid}{\sim} \texttt{Uniform[-1, 3]}$; $\epsilon_j \overset{iid}{\sim}
0.5 \mathcal{N}(0, 1)$. Finally we use $n=300$ and $p=150, 600$.

\begin{figure}[!htbp]
    \begin{center}
        \setlength\tabcolsep{2pt}
        \renewcommand{\arraystretch}{0.3}
        \begin{tabular}{r|rrr}
            & \multicolumn{1}{c}{\small lasso}
            & \multicolumn{1}{c}{\small svm}
            & \multicolumn{1}{c}{\small positive ridge} \\
            \hline
            \rotatebox{90}{\small \hspace{1.6cm} $n > p$}
            & \includegraphics[scale=0.34]{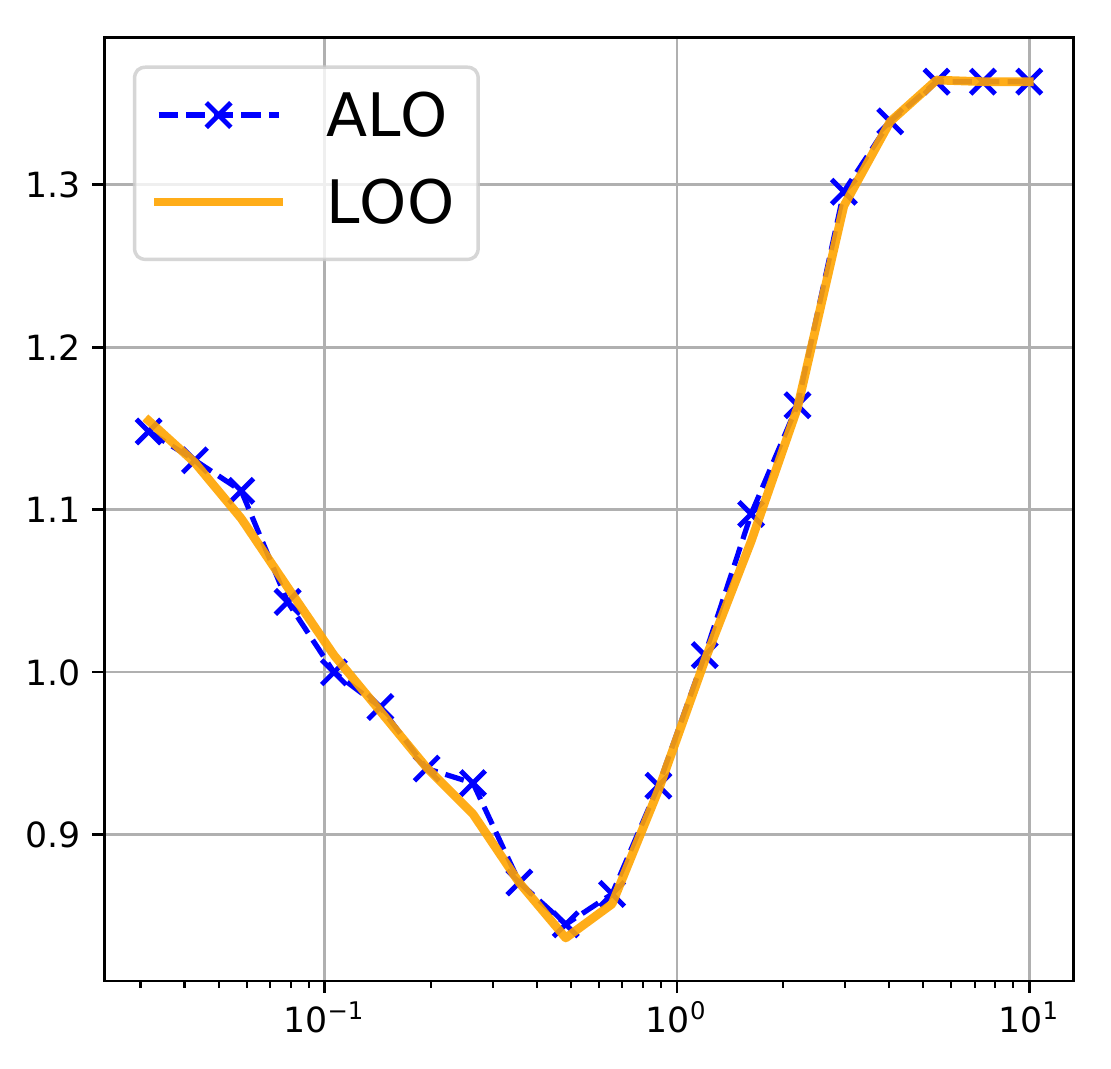}
            & \includegraphics[scale=0.34]{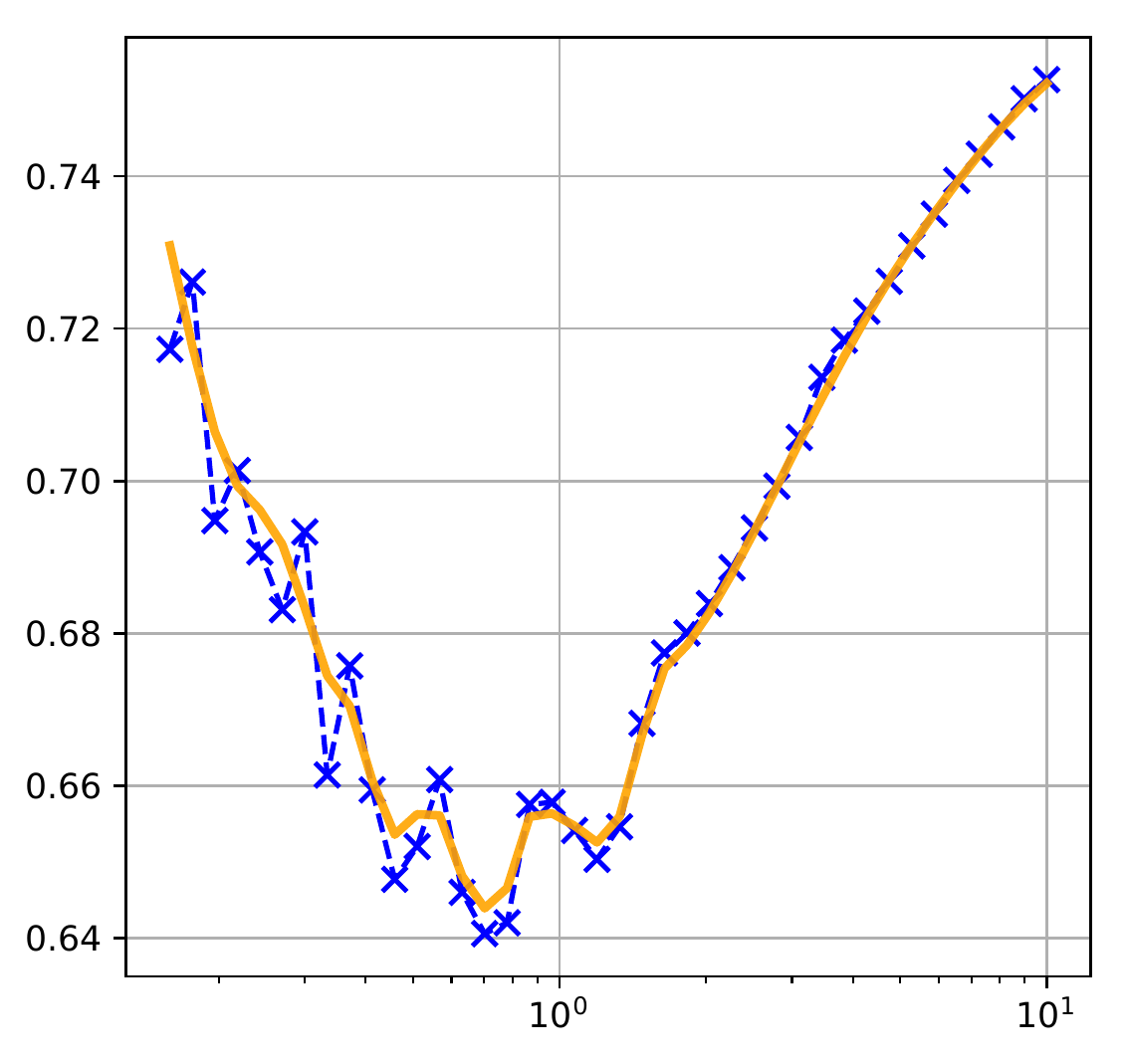}
            & \includegraphics[scale=0.34]{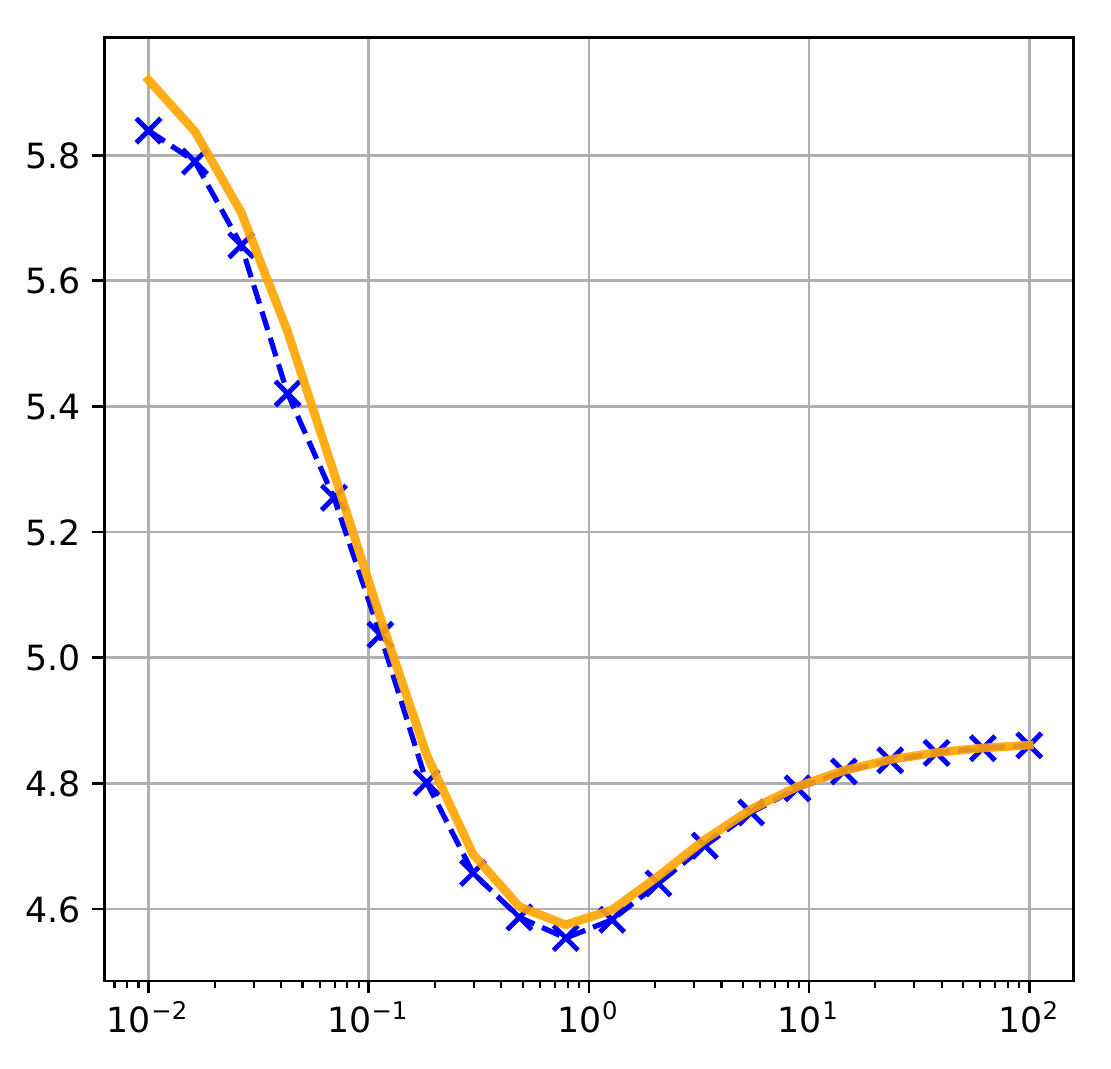} \\
            & \multicolumn{1}{c}{{\fontsize{7}{6} $n=400, p=200$}}
            & \multicolumn{1}{c}{{\fontsize{7}{6} $n=300, p=150$}}
            & \multicolumn{1}{c}{{\fontsize{7}{6} $n=300, p=150$}} \\
            \rotatebox{90}{\small \hspace{1.6cm} $n < p$}
            & \includegraphics[scale=0.34]{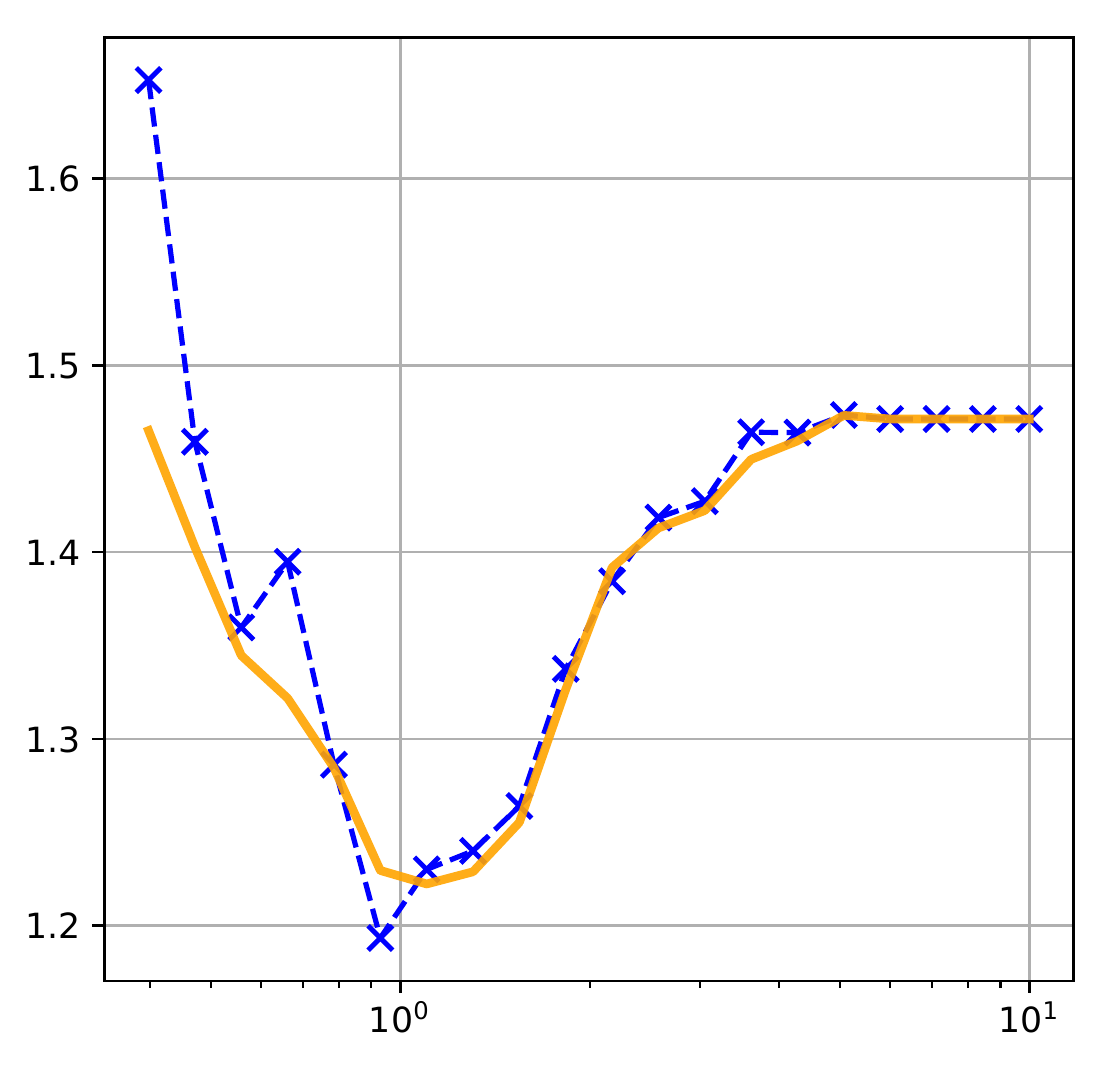}
            & \includegraphics[scale=0.34]{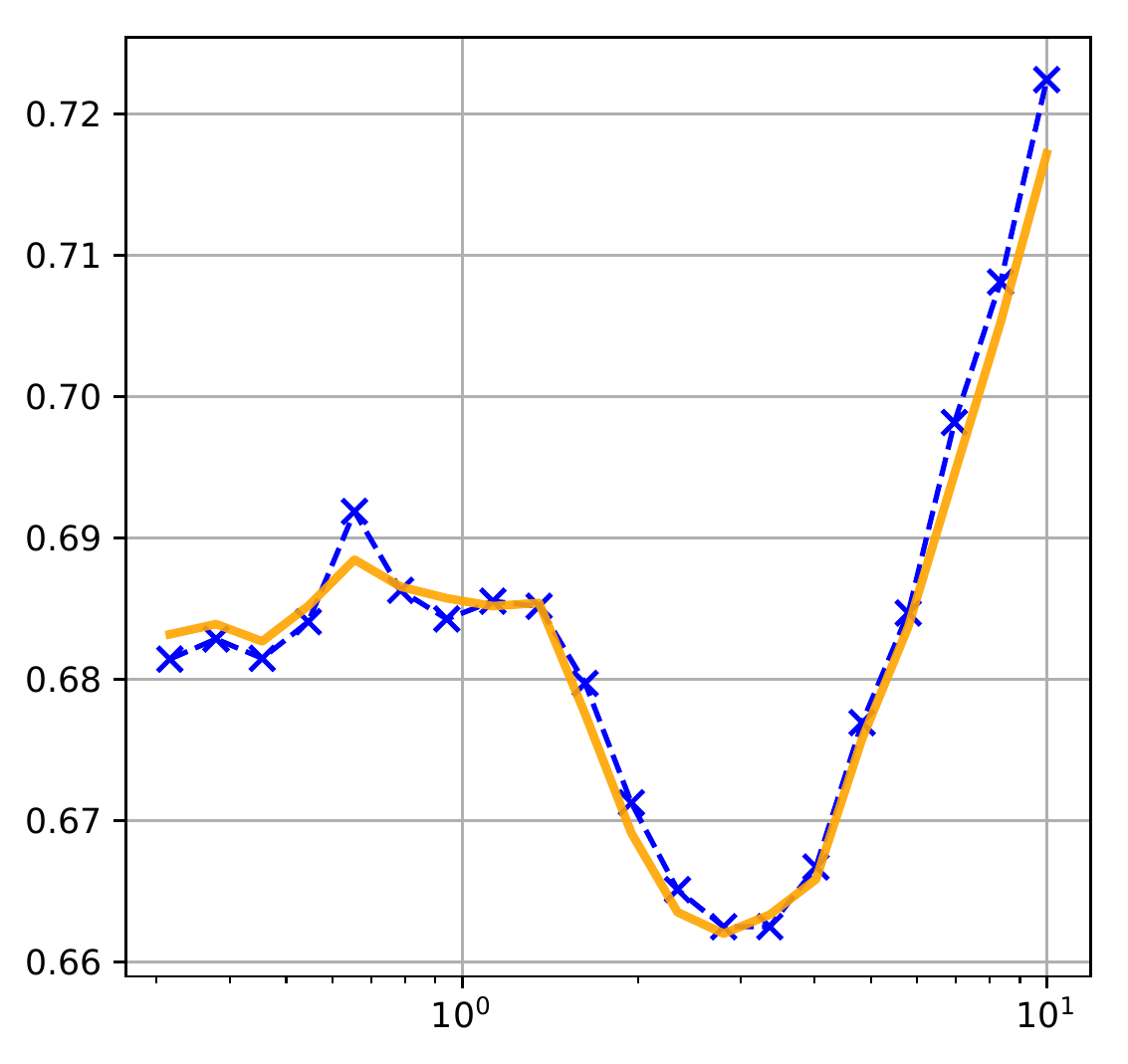}
            & \includegraphics[scale=0.34]{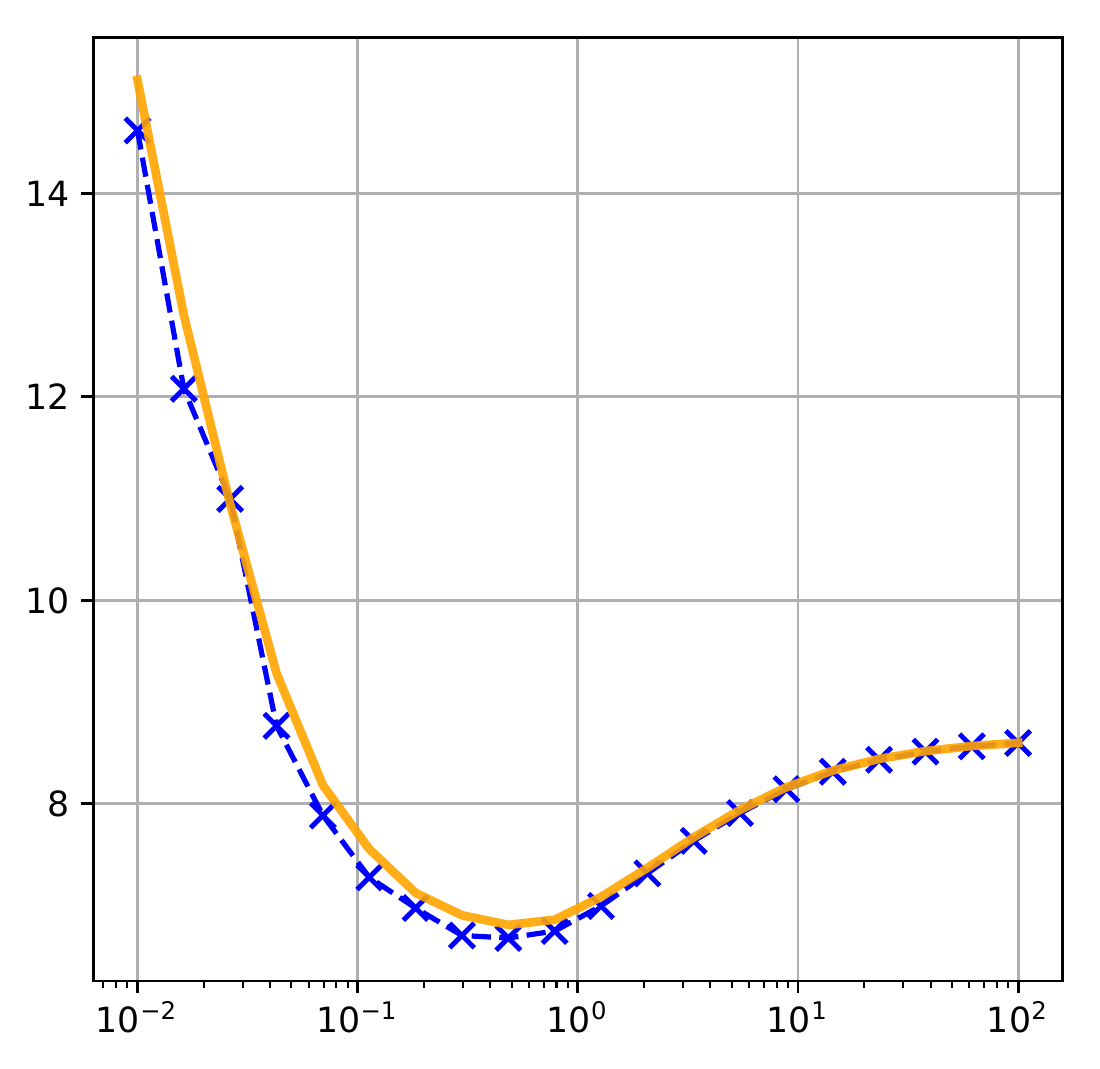} \\
            & \multicolumn{1}{c}{{\fontsize{7}{6} $n=400, p=800$}}
            & \multicolumn{1}{c}{{\fontsize{7}{6} $n=300, p=600$}}
            & \multicolumn{1}{c}{{\fontsize{7}{6} $n=300, p=600$}}
        \end{tabular}
        \caption{Risk estimates from ALO versus LOOCV on models involving
            intercepts. The $(x, y)$-axes are interpreted in the same way as
            Figure \ref{fig:risk-alo-loo1}. The comparison is based on LASSO, SVM and Ridge
            regression with positive quadrant constraint, corresponding to
            nonsmooth regularizer, nonsmooth loss and contrained problem
            respectively.}\label{fig:risk-alo-loo-intercept}
    \end{center}
\end{figure}

\subsection{Timing comparison between ALO and LOOCV}
\label{ssec:experiment-alo-timing}
Our next experiment compares the computational complexity of ALO with that of
LOOCV. In Table \ref{tab:timing}, we provide the timing of LASSO for different
values of $n$ and $p$. The time required by ALO, which involves a single fit
and a matrix inversion (in the
construction of $\Hv$ matrix), is in all experiments no more than twice that of a single fit. As expected, averaged time for LOOCV is close to $n$ times the time required
for a single fit. 

\subsubsection{Details of the Simulation}
For comparing the timing of ALO with that of LOOCV, we consider the LASSO
problem with correlated design similar to the one we introduced in Section
\ref{ssec:simdetailscorr}. Specifically, each row of the design matrix has a
Toeplitz covariance matrix with $\rho = 0.8$. The true coefficient vector
$\betav$ has $\frac{\min(n, p)}{2}$ nonzero components, with each nonzero
component of $\betav$ being selected independently from $\pm 1$ with
probability $0.5$. The noise $\epsilon \sim \mathcal{N}(0, 0.5\Iv_n)$. For each
pair of $(n, p)$, we choose a sequence of 50 tuning parameters ranging from
$\lambda_0$ to $10^{-2.5}\lambda_0$, where $\lambda_0 = \|\Xv^\top
\yv\|_{\infty}$. Note that for this choice of $\lambda$ all the regression
coefficients are equal to zero.

The timing of one single fit on the full dataset, the ALO risk estimates and
the LOOCV risk estimates are reported in Table \ref{tab:timing}. To obtain the timing of a single fit we run the corresponding function
of glmnet along the entire tuning parameter path and record the total time
consumed. This process is then repeated for 10 random seeds to obtain the
average timing. Every time an estimate is obtained we use our formula to obtain
ALO. Hence, the time reported for ALO in Table \ref{tab:timing} is again
obtained from an average of $10$ Monte Carlo samples. To obtain the computation
time of LOOCV, we only use $5$ random seeds.

\begin{table}[!htbp]
    \begin{center}
        \caption{Timing (in \textit{sec}) of one single fit, ALO and LOOCV. In
        the upper and lower tables, we fix $n=800$ and $p=800$ respectively.}
        \label{tab:timing}
        \begin{tabular}{l|llll}
            $p$ & 200 & 400 & 800 & 1600 \\
            \hline
            single fit & $0.035 \pm 0.001$ & $0.13 \pm 0.003$ & $0.56
            \pm 0.02$ & $0.60 \pm 0.01$ \\
            ALO & $0.060 \pm 0.001$ & $0.21 \pm 0.003$ & $0.77 \pm
            0.02$ & $0.89 \pm 0.01$ \\
            LOOCV & $27.52 \pm 0.03$ & $107.4 \pm
            0.5$ & $437.9 \pm 2.9$ & $479 \pm 2$ \\
            \hline
        \end{tabular}
        \begin{tabular}{l|llll}
            $n$ & 200 & 400 & 800 & 1600 \\
            \hline
            single fit & $0.055 \pm 0.002$ & $0.19 \pm 0.006$ & $0.56
            \pm 0.02$ & $0.76 \pm 0.02$ \\
            ALO & $0.065 \pm 0.001$ & $0.24 \pm 0.001$ & $0.77 \pm
            0.02$ & $1.20 \pm 0.01$ \\
            LOOCV & $11.44 \pm 0.049$ & $74.7 \pm
            0.5$ & $437.9 \pm 2.9$ & $1249 \pm 3$
        \end{tabular}
    \end{center}
\end{table}

\subsection{Evaluating the Accuracy of ALO on Real-World Data}
\label{ssec:real-world-data}
In this section, we apply our ALO methods to three real-world datasets: Gisette
digit recognition \cite{guyon2005result}, the tumor colon tissues gene
expression \cite{alon1999broad} and the South Africa heart disease data
\cite{rossouw1983coronary, EOSL:chapter4}. All
the three datasets have binary response, so we consider classification
algorithms. The information of the three datasets is listed in Table
\ref{tab:real-data-meta-info} below. The column of number of effective features
records the number of features after data preprocessing, including removing
duplicates and missing columns.
\begin{table}[!htbp]
    \begin{center}
        \caption{Information of the three datasets.}
        \label{tab:real-data-meta-info}
        \begin{tabular}{l|p{1.6cm}p{1.6cm}p{1.6cm}l}
            dataset & \# samples & \# features & \# effective features & model used \\
            \hline
            gisette & 6000 & 5000 & 4955 & SVM \\
            tumor colon & 62 & 2000 & 1909 & logistic + LASSO \\
            heart disease & 462 & 9 & 9 & logistic + LASSO
        \end{tabular}
    \end{center}
\end{table}

For gisette, since $n=6000$ is too large for LOOCV, we randomly subsample 1000
observations and apply linear SVM on it. For the tumor colon tissues and South
Africa heart disease dataset, we apply logistic regression with LASSO penalty.
The results are shown in Figure \ref{fig:real-data}. The accuracy of ALO is
verified on gisette and the heart disease dataset. However, the behavior of ALO
is more complicated for the tumor colon tissues dataset. First ALO gives very
close estimates to LOOCV for relatively large tuning values, but deviates from
LOOCV risk estimates and bends upward after $\lambda$ decreases to a certain
value. Second, we note that the optimal tuning is still correctly captured by
ALO.
\begin{figure}[!htbp]
    \begin{center}
        \setlength\tabcolsep{2pt}
        \renewcommand{\arraystretch}{0.3}
        \begin{tabular}{r|rrr}
            & \multicolumn{1}{c}{\small gisette}
            & \multicolumn{1}{c}{\small heart disease}
            & \multicolumn{1}{c}{\small colon tumor} \\
            \hline
            \rotatebox{90}{\small \hspace{1.1cm} lasso risk} &
            \includegraphics[scale=0.4]{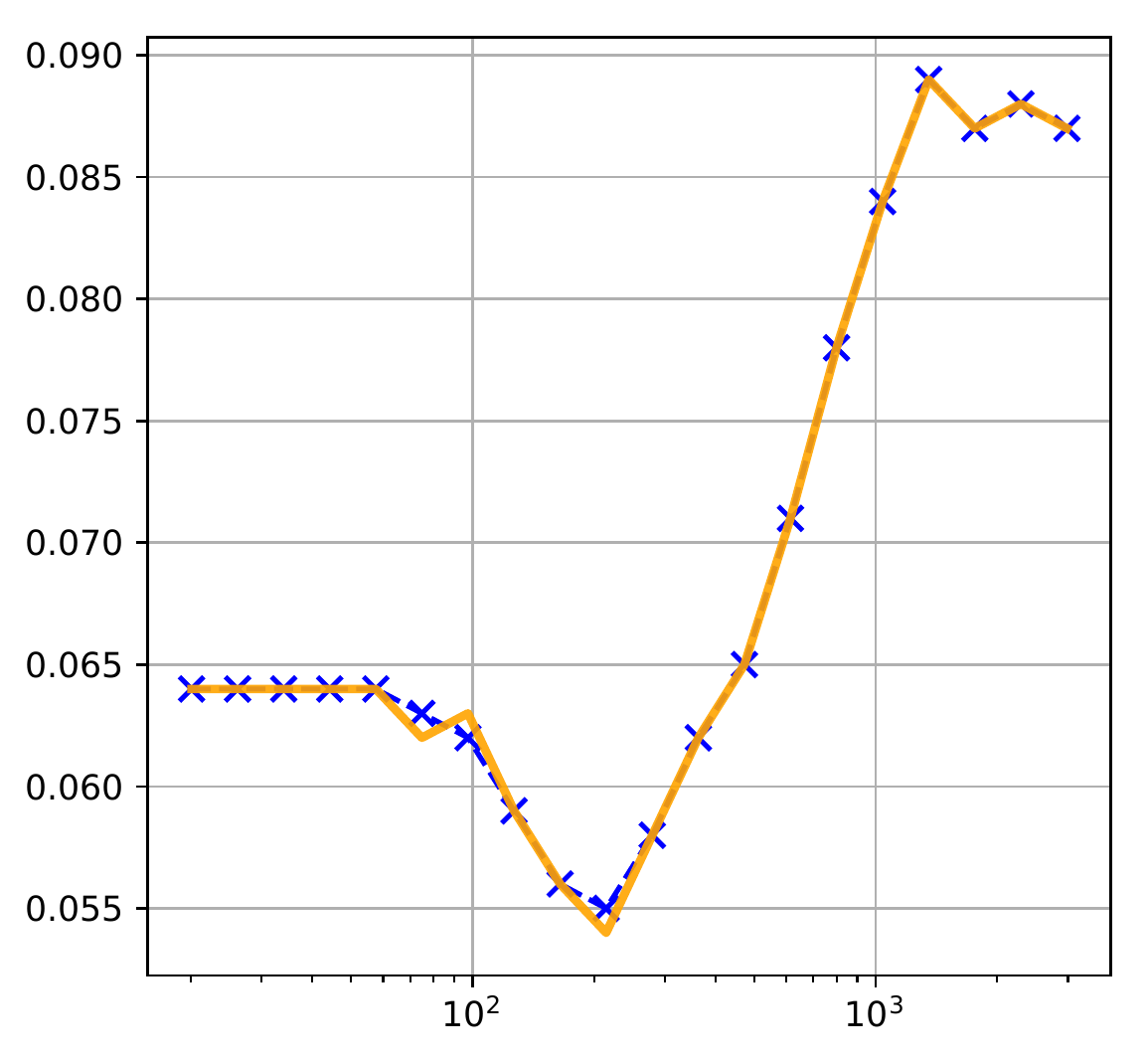} & 
            \includegraphics[scale=0.4]{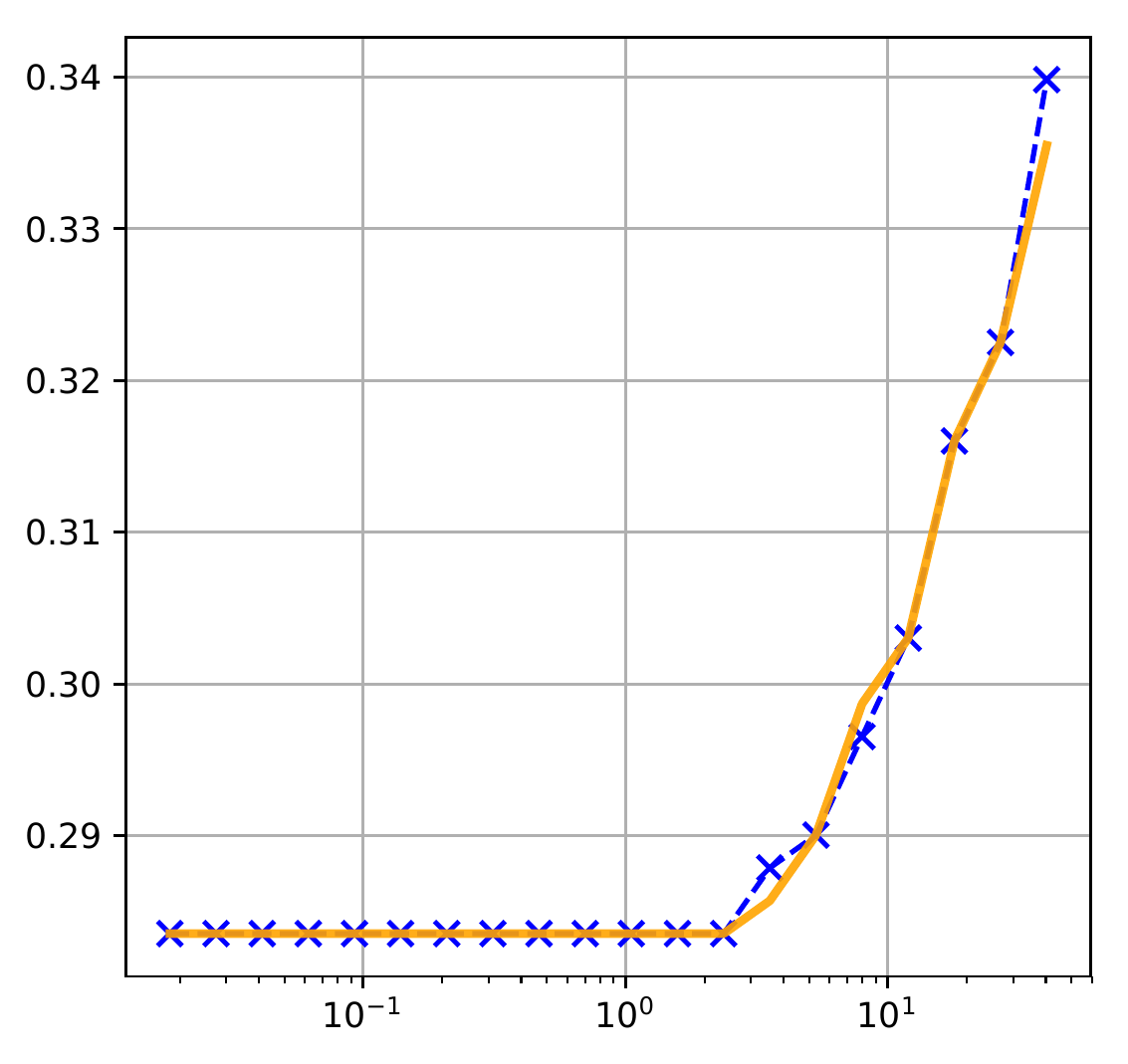} & 
            \includegraphics[scale=0.4]{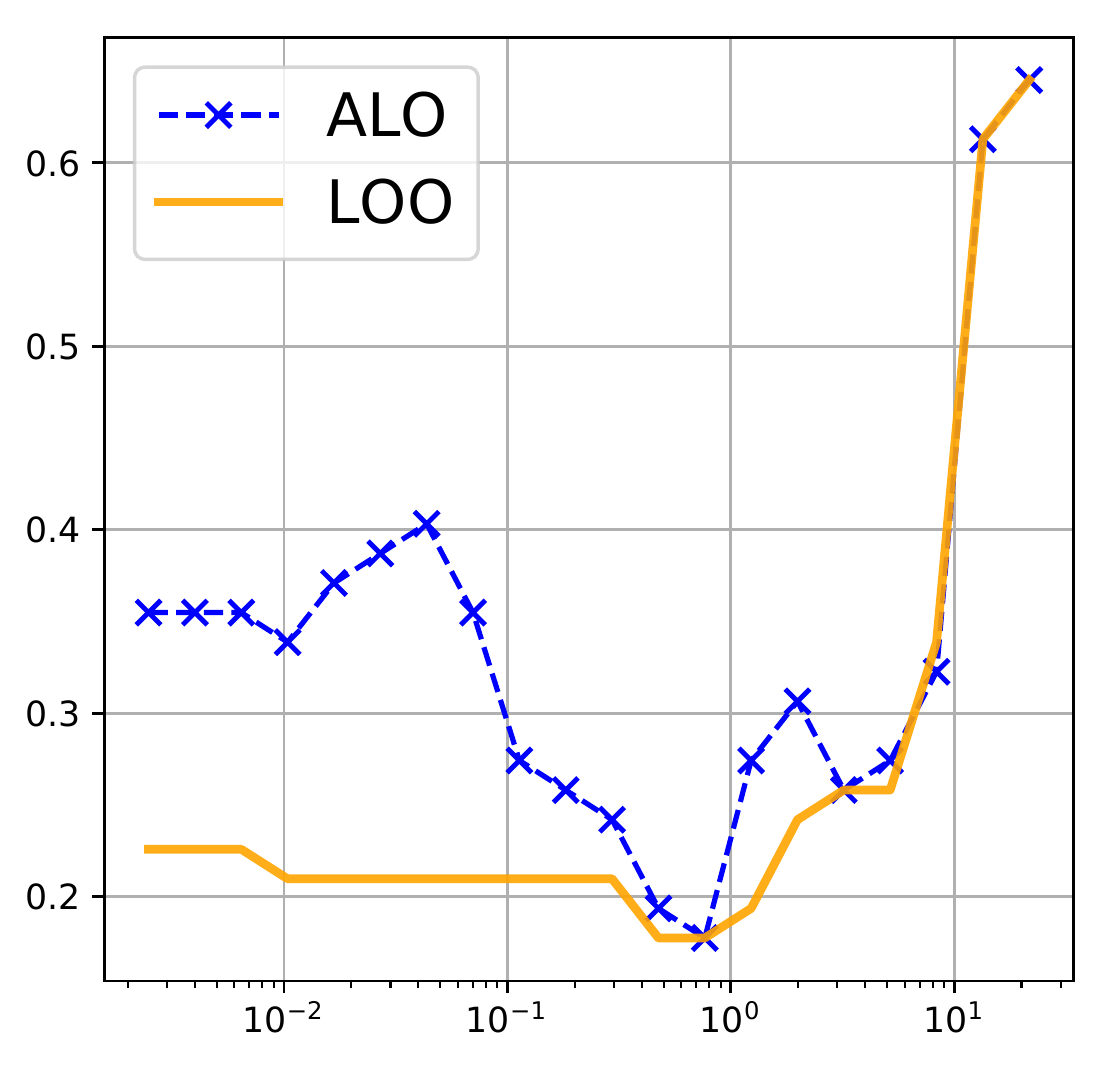} \\
        \end{tabular}
        \caption{Risk estimates of from ALO versus LOOCV for the three
            datasets: gisette, South Africa coronary heart disease and colon
            tumor gene expression. The $x$-axis is the tuning parameter value
            $\lambda$ on $\log$-scale, the $y$-axis is the risk estimates under
            0-1 loss.} \label{fig:real-data}
    \end{center}
\end{figure}

There are a few factors which may affect the performance of ALO. First, as
implied by the theoretical guarantee on smooth models, the closeness between
ALO and LOOCV is a high-dimensional phenomenon, which takes place for
relatively large $n$ and $p$. From our simulation in Section \ref{sec:experiments} and the real-data
examples in this section, we can see that when $\frac{n}{p}$ is not much
smaller than 1 (compared to the $\frac{n}{p}$-ratio in the colon tissue
dataset), a few hundreds of observation and features are enough to guarantee
the accuracy of ALO risk estimates. Also note that the deviation of ALO estimates
tends to happen when the tuning $\lambda$ becomes smaller than a certain value,
typically in the case of $n < p$. For most nonsmooth regularizers, small tuning values induce dense solutions. In most
high dimensional datasets, these dense solutions are often not favorable. Furthermore, from our experiments, this deviation mostly happens after correctly
capturing the optimal tuning values. We should again emphasize that the deviations decrease as $n$ and $p$ grow. 

\section{Discussion} \label{sec:discussion}

\paragraph{Determining the active set}
For most of the nonsmooth models we need to identify certain
set of indices (we call it active set in the rest of this section). They either determine the direction along which the objective
function changes smoothly (such as the set $V, S$ in \eqref{eq:indexset} and
the set $A$ in \eqref{eq:nonsmooth-reg:2}), or characterize the face on the
dual norm ball where the optimum locates (such as the set $E$ in Section
\ref{ssec:application-l_inf}, \ref{ssec:application-group-lasso},
\ref{ssec:application-slope}). 

The identification of the active set can potentially depend on the algorithms used to
optimize the objective function. For example, if we use the coordinate
descent or proximal gradient descent algorithm to solve LASSO, then sparsity is
automatically imposed. In this case, one may just pick the nonzero locations
directly. However for some other models (as we will see in the following
example for $\ell_\infty$ norm penalty), the active set depends on the optimzer
in an indirect way and cannot be explicitly identified straightforwardly. A
generic solution is to set a threshold value to extract the active set. However
we observe that this threshold may slowly vary for different values of tuning
parameter. Ideally, one would like to employ algorithms, such as the proximal
gradient descent in the case of LASSO, that can return the active set and do
not leave the decision of the threshold to the user.

Below we introduce an idea which avoids this thresholding step by employing a
proper optimization algorithm to solve the dual problem and construct the
active set explicitly. We use the $\ell_\infty$-minimization problem discussed
in Section \ref{ssec:application-l_inf} as an example. Similar idea may be used
for some other problems too. As we discussed in Section
\ref{ssec:application-l_inf}, we need to identify the set of indices $E=\{j:
\Xv_j^\top \estim{\uv} = 0\}$, where $\estim{\uv}$ is the dual optimizer
\begin{equation*}
    \estim{\uv} = \argmin_{\uv} \| \yv - \uv\|_2^2,
    \quad
    \text{subject to } \| \Xv^\top \uv\|_1 \leq \lambda.
\end{equation*}

According to the primal dual correspondence $\yv - \Xv\estim{\betav} =
\estim{\uv}$. After obtaining the primal optimizer $\estim{\betav}$, we may
check the value of $\Xv_j^\top(\yv - \Xv\estim{\betav})$ for each $1\leq j \leq
p$ and select the ones that are exactly equal to $0$. However, due to the
non-exactness of the solution we do not expect to observe any exact $0$.
Nevertheless one may directly solve the dual problem in an appropriate way so
that exact zeros can be obtained. Let $\zv = \Xv^\top\uv$, the dual problem can
be translated to
\begin{equation*}
    \estim{\uv} = \argmin_{\uv} \| \yv - \uv\|_2^2,
    \quad
    \text{subject to } \| \zv \|_1 \leq \lambda
    \text{ and } \Xv^\top \uv = \zv.
\end{equation*}

Note that the optimum $\estim{\zv} = \Xv^\top\estim{\uv} = \Xv^\top(\yv -
\Xv\estim{\betav})$. Thus we may identify the set $E$ directly from
$\estim{\zv}$. To make this possible, we need to adopt an optimization
algorithm which exploits the $\ell_1$ constraints on $\zv$ so that exact zeros can
be obtained. A natural choice is the ADMM algorithm \cite{boyd2011distributed},
which iterates in the following way
\begin{align*}
    \uv^{t+1} =& \big( \Iv + \rho \Xv\Xv^\top \big)^{-1} \big(\yv + \rho\Xv\zv^t -
    \Xv\muv^t\big) \\
    \zv^{t+1} =& \projv_{\{\zv:\|\zv\|_1 \leq \lambda\}} \Big( \Xv^\top
    \uv^{t+1} + \frac{\muv^t}{\rho} \Big) \\
    \muv^{t+1} =& \muv^t + \rho \big( \Xv^\top\uv^{t+1} - \zv^{t+1}\big)
\end{align*}
where $\rho > 0$ is a stepsize parameter manually picked. $\muv^t$ is the
Lagrange multiplier.

The projection update on $\zv^{t+1}$ automatically imposes sparsity. Once
the algorithm converges with certain precision, the set of indices can be picked
easily by identifying the zero locations in $\estim{\zv}$. We would like to emphasize that this trick occurs at the optimization stage,
and does not change our ALO algorithm itself. Also it requires the
availablility of fast algorithms of projection to certain convex set
($\ell_1$-norm ball in this example).

\paragraph{ALO risk estimation for small tuning}
From the simulations in Section \ref{sec:experiments}, we observe that when $n
< p$, as the value of the tuning parameter $\lambda$ goes below a certain
threshold, for some of the models including fused LASSO and $\ell_\infty$ norm
minimization, ALO risk estimates skews upward against the LOOCV risk estimates.

Recall we need to construct a $\Hv$ matrix in the ALO formula and for all these
models, $\Hv = \Wv(\Wv^\top \Wv)^{-1} \Wv^\top$ for some matrix $\Wv \in
\mathbb{R}^{n \times k}$ where $k$ is determined by the face on the dual norm
ball at which the $\Xv^\top \estim{\uv}$ locates. It is obvious that $k \leq
p$. Thus when $n > p$, $\Wv$ has full column-rank and $H_{ii}$ are bounded away
from 1. But in the case of $n < p$, as one decreases the value of $\lambda$,
denser and denser solutions are produced. When $k$ gets close to $n$,
$H_{ii}$ will be closer and closer to 1, which in turn leads to large values of
ALO estimates. However, we should emphasize on two points: (i) in all these
cases, the optimal tunings are above the bad regions and are accurately
captured by ALO. (ii) As the problem size increases this issue alleviates.
Nevertheless, an interesting direction for future research is to find new
modifications for ALO that are capable of approximating LOOCV more accurately
even when $\lambda$ is small and $n<p$ are not very large.

\paragraph{Summary}
The low bias of the leave-one-out cross validation (LOOCV) makes it one of the
most appealing risk estimation techniques in high-dimensional settings, where
the number of predictors is comparable with the number of observations.
However, the high computational complexity of this method poses a major
obstacle in most real-world applications. In this paper, we proposed three
different methods for approximating LOOCV. These approaches are based on
primal, dual and the proximal formulation of learning problems. Different
approaches show their adavantages in different problems. Our
approximations inherit desirable properties of LOOCV, while dramatically reduce
its computational complexity.

We proved the equivalence of these methods when the loss function and the
regularizer are twice differentiable. This equivalence enabled us to prove the
accuracy of our approximation for large high-dimensional datasets. We also
showed how our approximation schemes can be used for non-differentiable losses
and regularizers. We use our approaches to obtain a risk estimate for several
popular non-differentiable learning problems. Our empirical results prove the
excellent performance of our approximation techniques.

\section{Proofs of our main results} \label{sec:proof}
%-----------------------------%----------------------------%
%-----------------------------%----------------------------%

\subsection{Proofs of Theorems \ref{thm:primal-dual-equivalence},
\ref{thm:primal-dual-equivalence-2} and Lemma \ref{prox:equivalence}}
\label{append:sec:primal-dual-equiv}

In this section, we prove the equivalence between the primal and dual methods in the case
where the loss and regularizer are twice differentiable.
Let $\ell$, $\ell^*$, $R$ and $R^*$ be twice differentiable. The following lemma
plays a key role in our analysis:
\begin{lemma}\label{lemma:hessianprimaldual}
    Let $f$ be a proper closed convex function, such that both $f$ and $f^*$ are twice differentiable.
    Then, we have for any $\xv$ in the domain of $f$ and any $\uv$ in the domain of $f^*$:
    \begin{align*}
        \nabla^2 f^*(\nabla f(\xv)) =& [\nabla^2 f(\xv)]^{-1}, \nonumber \\
        \nabla^2 f(\nabla f^*(\uv)) =& [\nabla^2 f^*(\uv)]^{-1}. 
    \end{align*}
\end{lemma}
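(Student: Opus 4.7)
The plan is to derive the Hessian identity from the classical inverse relationship between the gradients of a convex function and its Fenchel conjugate, then differentiate once more. Recall that for a proper closed convex function $f$, the Fenchel--Young equality states that $f(x) + f^*(u) = \langle x, u \rangle$ if and only if $u \in \partial f(x)$, equivalently $x \in \partial f^*(u)$. Under our hypothesis that both $f$ and $f^*$ are (everywhere) differentiable, the subdifferentials are singletons, and this equivalence reduces to the statement that $\nabla f$ and $\nabla f^*$ are mutual inverses as maps between the respective domains: $\nabla f^*(\nabla f(x)) = x$ for every $x$ in the domain of $f$, and symmetrically $\nabla f(\nabla f^*(u)) = u$ for every $u$ in the domain of $f^*$.

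From here, I would differentiate the identity $\nabla f^* \circ \nabla f = \mathrm{id}$ at a point $x$, using the chain rule. This gives
\begin{equation*}
    \nabla^2 f^*(\nabla f(x)) \cdot \nabla^2 f(x) = I.
\end{equation*}
Since both Hessians are symmetric, and the product equals the identity, each factor is invertible and is the inverse of the other, yielding the first claimed identity
\[
    \nabla^2 f^*(\nabla f(x)) = [\nabla^2 f(x)]^{-1}.
\]
The second identity follows by the symmetric argument, differentiating $\nabla f \circ \nabla f^* = \mathrm{id}$ at a point $u$ in the domain of $f^*$; alternatively, one may appeal to the biconjugation relation $f^{**} = f$ (valid since $f$ is proper closed convex) and re-apply the first identity with the roles of $f$ and $f^*$ swapped.

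The only subtlety worth flagging is the justification of the inverse relation $\nabla f^* = (\nabla f)^{-1}$: this requires $f$ to be proper, closed, and convex so that biconjugation holds, together with the differentiability hypotheses so that the subgradient inclusions become equalities. Once this is in hand, the proof is a one-line application of the chain rule, and the invertibility of $\nabla^2 f(x)$ is a consequence of the computation rather than an additional hypothesis. No separate invocation of the inverse function theorem is needed, since differentiability of $\nabla f^*$ at $\nabla f(x)$ is given by assumption.
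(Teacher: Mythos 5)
Your proposal is correct and follows essentially the same route as the paper's proof: both derive the gradient inversion identity $\nabla f^*(\nabla f(\xv)) = \xv$ from the subgradient correspondence for a proper closed convex function (the paper cites Rockafellar's Theorem 23.5, you invoke the equivalent Fenchel--Young characterization), then differentiate once more via the chain rule to obtain $[\nabla^2 f^*(\nabla f(\xv))][\nabla^2 f(\xv)] = \Iv$, with the second identity following by applying the argument to $f^*$. Your remark that invertibility of the Hessian falls out of the computation rather than being a separate hypothesis is a fair and accurate observation.
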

\begin{proof}
    This lemma is a known result in convex optimization. However, since the
    proof is short and for the sake of completeness we include the proof here.
    For $f$ a proper closed convex function, we have by Theorem 23.5 of
    \cite{rockafellar1970convex} that for all $\xv, \xv^*$:
    \begin{equation*}
        \xv^* \in \partial f(\xv) \Rightarrow \xv \in \partial f^*(\xv^*).
    \end{equation*}
    
    In particular, if $f$ and $f^*$ are differentiable, we obtain:
    \begin{equation*}
        \xv = \nabla f^*(\nabla f(\xv)).
    \end{equation*}
    
    Taking derivative in $\xv$ once more, we obtain that:
    \begin{equation*}
        \Iv = [\nabla^2 f^*( \nabla f(\xv))] [\nabla^2 f(\xv)],
    \end{equation*}
    
    which immediately gives:
    \begin{equation*}
        \nabla^2 f^*(\nabla f(\xv)) = [\nabla^2 f(\xv)]^{-1}.
    \end{equation*}

    The proof of the second part is immediate by applying the existing
    result to $f^*$.
\end{proof}

\begin{proof}[Proof of Theorem \ref{thm:primal-dual-equivalence}]
As discussed in Section \ref{ssec:primdualequ}, we
construct quadratic surrogates by Taylor expansion. Hence, we have the following expressions for $\surrog{\ell}$ and $\surrog{R}$:
    \begin{align*}
        \surrog{\ell}(z_j; y_j) =& \frac{1}{2} \ddot{\ell}(\xv_j^\top \estim{\betav}; y_j) (z_j - \xv_j^\top \estim{\betav})^2
        + \dot{\ell}(\xv_j^\top \estim{\betav}; y_j) (z_j - \xv_j^\top \estim{\betav}) + c, \\
        \surrog{R}(\betav) =& \frac{1}{2} (\betav - \estim{\betav})^\top [\nabla^2 R(\estim{\betav})] (\betav - \estim{\betav})
        + [\nabla R(\estim{\betav})]^\top (\betav - \estim{\betav}) + d,
    \end{align*}
    where $c, d \in \RR$ are constants that do not affect the location of the
    optimizer. We now compute the convex conjugate of $\surrog{\ell}$ and
    $\surrog{R}$, and we obtain that:
    \begin{align}
        \surrog{\ell}^*(w_j; y_j) =& \frac{1}{2} \frac{1}{\ddot{\ell}(\xv_j^\top
        \estim{\betav}; y_j)} (w_j - \dot{\ell}(\xv_j^\top \estim{\betav}; y_j))^2
        + (\xv_j^\top \estim{\betav}) (w_j - \dot{\ell}(\xv_j^\top
        \estim{\betav}; y_j)) + c', \label{eq:equiv:dual-quad-primal-loss}\\
        \surrog{R}^*(\muv) =& \frac{1}{2} (\muv - \nabla R(\estim{\betav}))^\top
        [\nabla^2 R(\estim{\betav})]^{-1} (\muv - \nabla R(\estim{\betav}))
        + \estim{\betav}^\top (\muv - \nabla R(\estim{\betav})) +
        d',\label{eq:equiv:dual-quad-primal-reg}
    \end{align}
    where again $c', d' \in \RR$ are constants. Now, we wish to relate \eqref{eq:equiv:dual-quad-primal-loss} and
    \eqref{eq:equiv:dual-quad-primal-reg} to $\surrog{\ell}_D^*$ and
    $\surrog{R}_D^*$. By substituting the primal-dual correspondence described in
    \eqref{eq:primal-dual-correspondence}, for components of
    \eqref{eq:equiv:dual-quad-primal-loss} and
    \eqref{eq:equiv:dual-quad-primal-reg}, we obtain that:
    \begin{align}
        \surrog{\ell}^*(w_j; y_j)
        &= \frac{1}{2} \frac{1}{\ddot{\ell}(\dot{\ell}^*(-\estim{\duals}_j; y_j); y_j)} (w_j + \estim{\duals}_j)^2
            + \dot{\ell}^*(-\estim{\duals}_j; y_j) (w_j + \estim{\duals}_j) +
            c', \label{eq:equivalence:primal-substituted-l} \\
        \surrog{R}^*(\muv)
        &= \frac{1}{2}(\muv - \Xv^\top \estim{\dualv})^\top[\nabla^2 R(\nabla
    R^*(\Xv^\top \estim{\dualv}))]^{-1} (\muv - \Xv^\top \estim{\dualv})
    \nonumber \\
        &\quad + [\nabla R^* (\Xv^\top \estim{\dualv})]^\top (\muv - \Xv^\top \estim{\dualv}) + d'.
    \label{eq:equivalence:primal-substituted-R}
    \end{align}
    
    To conclude, we note that according to Lemma \ref{lemma:hessianprimaldual} we have
    \begin{equation}\label{eq:equivalence:hessian-correspondence}
    \begin{gathered}
        \ddot{\ell}(\dot{\ell}^*(-\estim{\duals}_j; y_j); y_j) =
        (\ddot{\ell}^*(-\estim{\duals}_j; y_j))^{-1}, \\
        \nabla^2 R(\nabla R^*(\Xv^\top \estim{\dualv})) = [\nabla^2 R^*(\Xv^\top \estim{\dualv})]^{-1}.
    \end{gathered}
    \end{equation}
    
    Substitute \eqref{eq:equivalence:hessian-correspondence} in
    \eqref{eq:equivalence:primal-substituted-l} and
    \eqref{eq:equivalence:primal-substituted-R} we obtain the dual of the
    quadratic surrogate equals
    \begin{align}\label{eq:quadratic-dual}
        \frac{1}{2}\sum_j \surrog{\ell}^*(-\theta_j; y_j) +
        \surrog{R}^*(\Xv^\top\theta)
        &=
        \frac{1}{2} \sum_j \ddot{\ell}^*(-\estim{\duals}_j; y_j) \Big(-\theta_j +
        \estim{\duals}_j + \frac{\dot{\ell}^*(-\estim{\theta}_j;
        y_j)}{\ddot{\ell}^*(-\estim{\theta}_j; y_j)}\Big)^2 \nonumber \\
        &\quad +
        \frac{1}{2}(\Xv^\top\thetav - \Xv^\top \estim{\dualv}) \nabla^2
        R^*(\Xv^\top \estim{\dualv}) (\Xv^\top\thetav - \Xv^\top
        \estim{\dualv}) \nonumber \\
        &\quad + [\nabla R^*(\Xv^\top \estim{\dualv})]^\top
        (\Xv^\top\thetav - \Xv^\top \estim{\dualv}) + c'.
    \end{align}
    
    Note that the formula given in \eqref{eq:quadratic-dual} exactly
    corresponds to the second-order Taylor expansion of
    \eqref{eq:dual-general-optimal}.
\end{proof}

Now, we would like to prove Theorem \ref{thm:primal-dual-equivalence-2}. 
\begin{proof}[Proof of Theorem \ref{thm:primal-dual-equivalence-2}]
    We noted in Section \ref{ssec:dual:general-case} that our dual method as
    described explicitly approximates the loss by its quadratic expansion
    at the optimal value. We may thus assume without loss of generality that the loss is given by
    $\ell(\mu; y) = (\mu - y)^2 / 2$. In this case, as stated in Section \ref{ssec:dual:general-case}, we have
    that
    \begin{equation*}
        \estim{\dualv} = \proxv_{g}(\yv),
    \end{equation*}
    where we have defined $g(\uv) = R^*(\Xv^\top \uv)$. In addition, we note that the augmented observation
    vector $\yv_{a}$ must have its $i$\tsup{th} observation lie on the leave-$i$-out regression line by definition,
    and in particular we have that:
    \begin{equation*}
        [\proxv_{g}(\yv_{a})]_i = 0.
    \end{equation*}

    This motivated us to solve for $\leavei{\surrog{y}_{i}}$ by linearly
    expanding $\proxv_g$ and considering the intersection of its $i$\tsup{th}
    coordinate with 0. Specifically, the desired $\leavei{\surrog{y}_i}$ is
    obtained from the solution of the following linear equation in $z$:
    \begin{equation}\label{eq:dual-optim-linear}
        [\proxv_{g}(\yv) + \Jv_{\proxv_g}(\yv)\ev_i(z - y_i)]_i = 0.
    \end{equation}
    where $\Jv_{\proxv_g}(\yv)$ denotes the Jacobian matrix of $\proxv_g$ at
    $\yv$. We show that if $R^*$ is replaced with its quadratic surrogate $\surrog{R}^*$ as defined in
    Theorem \ref{thm:primal-dual-equivalence}, then:
    \begin{equation*}
        [\proxv_{\surrog{g}}(\surrog{\yv}_a)]_i = 0,
    \end{equation*}
    where $\surrog{g}(\uv) = \surrog{R}^*(\Xv^\top\uv)$, and $\surrog{\yv}_a$ denotes the vector $\yv$,
    except with its $i$\tsup{th} coordinate replaced by the ALO value $\leavei{\surrog{\yv}}_i$.
    Let us note that as $\surrog{g}$ is quadratic, its proximal map $\proxv_{\surrog{g}}$ is linear,
    and the equation may thus be solved directly by a single Newton's step. As
    a linear map is characterized by its intercept and slope, compared with
    \eqref{eq:dual-optim-linear}, it remains to show that:
    \begin{align}
        \proxv_{g}(\yv) &= \proxv_{\surrog{g}}(\yv),
        \label{eq:equiv:check-prox-intercept} \\
        \Jv_{\proxv_{g}}(\yv) &= \Jv_{\proxv_{\surrog{g}}}(\yv).
        \label{eq:equiv:check-prox-coef}
    \end{align}
    
    We note that \eqref{eq:equiv:check-prox-intercept} is immediate from the definition of $\tilde{g}$, as
    both the left and right hand sides are equal to the dual optimal
    $\hat{\dualv}$. In order to show \eqref{eq:equiv:check-prox-coef}, since $\surrog{g}$ is quadratic, we may compute its proximal map
    exactly. From the previous section, we have that:
    \begin{equation*}
        \surrog{g}(\dualv)
        =
        \frac{1}{2}(\thetav - \estim{\dualv})^\top\Xv[\nabla^2 R(\nabla
        R^*(\Xv^\top \estim{\dualv}))]^{-1}
        \Xv^\top(\thetav - \estim{\dualv}) + [\nabla R^*
        (\Xv^\top\estim{\dualv})]^\top\Xv^\top(\thetav - \estim{\dualv}),
    \end{equation*}
    
    We minimize $\frac{1}{2}\|\yv - \thetav\|_2^2 + \surrog{g}(\thetav)$ in
    $\thetav$ and get
    \begin{equation*}
        \proxv_{\surrog{g}}(\yv)
        = (\Iv + \Xv [\nabla^2 R(\nabla R^*(\Xv^\top\estim{\thetav}))]^{-1} \Xv^\top)^{-1}(\yv -
        \Xv\nabla R^*(\Xv^\top\estim{\thetav})),
    \end{equation*}
    
    Note that the primal dual correspondence implies $\estim{\betav} = \nabla
    R^*(\Xv^\top\estim{\thetav})$. In particular we may compute the Jacobian of
    $\proxv_{\surrog{g}}$ at $\yv$ as $(\Iv + \Xv [\nabla^2
    R(\estim{\betav})]^{-1} \Xv^\top)^{-1}$. On the other hand, according to part (ii) of Lemma \ref{lem:proxproperties} we know that the proximal operator $\proxv_g$ is exactly the resolvent of
    the subgradient $\partial g$, i.e.,
    \begin{equation*}
        \proxv_g = (I + \partial g)^{-1},
    \end{equation*}
    and in particular we have
    \begin{equation*}
        \proxv_g(\yv) + \nabla g(\proxv_g(\yv)) = \yv.
    \end{equation*}
   Taking derivative again with respect to $\yv$ and applying the chain rule, we obtain
    \begin{equation*}
        \Jv_{\proxv_g}(\yv)(\Iv + \nabla^2 g(\proxv_g(\yv))) = \Iv,
    \end{equation*}
    and hence 
    \begin{equation*}
        \Jv_{\proxv_g}(\yv) = (\Iv + \nabla^2 g(\proxv_g(\yv))^{-1}.
    \end{equation*}
      Now, note that we have $\proxv_g(\yv) = \estim{\dualv}$, and that:
    \begin{equation*}
        \nabla^2 g(\estim{\dualv}) = \Xv [\nabla^2 R^*(\Xv^\top\estim{\dualv})] \Xv^\top.
    \end{equation*}
      We are thus done by Lemma \ref{lemma:hessianprimaldual}.
\end{proof}

\begin{proof}[Proof of Lemma \ref{prox:equivalence}]
As is clear from \eqref{eq:surrogate}, for $\tilde{\Jv}$ we have 
\begin{equation*}
    \surrog{\Jv}
    = \big[\Iv+ \nabla^2 R(\estim{\betav})\big]^{-1}.
\end{equation*}

Now let us look at $\Jv$. Using the definition $\proxv_R(\uv) =
\argmin_{\zv\in\mathbb{R}^p} \frac{1}{2} \|\uv -\zv\|_2^2 + R(\zv)$, we have
the following holds
\begin{equation*}
    \proxv_R(\uv)- \uv + \nabla R(\proxv_R(\uv))= \bm{0}.
\end{equation*}

Taking derivatives on both sides of the above equation, we obtain $\Jv(\uv) -
\Iv + \nabla^2 R\big(\proxv_R(\uv)\big) \Jv(\uv) = \bm{0}$. This leads to
\begin{equation}\label{eq:proxjac_calc}
    \Jv(\uv) = \big[\Iv + \nabla^2 R\big(\proxv_R(\uv)\big)\big]^{-1}.
\end{equation}

Note that the Jacobian should be calculated at $\uv = \estim{\betav} -
\sum_{j=1}^n \dot{\ell}(\xv_j^\top\estim{\betav}; y_j)\xv_j$, which implies
that $\proxv_R(\uv) =  \estim{\betav}$. Plugging this in
\eqref{eq:proxjac_calc} we obtain that  $\Jv = \surrog{\Jv}$.
\end{proof}

%---------------------------------%-----------------------------------------%

\subsection{Proof of Primal Approximation Approach}\label{sec:primal-approx}
In this section, we prove the results of our primal approach on nonsmooth models presented
in Section \ref{sec:primal-smoothing}. Since we use a kernel smoothing
strategy, we start with some useful preliminary results on kernel smoothing. We then
discuss nonsmooth regularizer and  nonsmooth loss respectively.

\subsubsection{Properties of Kernel Smoothing}
\label{append:ssec:property-kernel-smoothing}

Consider the following smoothing strategy for a convex
function $f: \mathbb{R} \rightarrow \RR$:
\begin{equation}\label{eq:smoothingffunc}
    f_h(z) = \frac{1}{h}\int f(u)\phi((z - u)/h) du,
\end{equation}
where $\phi$ satisfies the conditions clarified in Section
\ref{ssec:nonsmooth-loss}. Let $K := \{ v_1, \ldots, v_k \}$ denote the set of
zeroth-order singularities of the
function $f$. Denote by $\dot{f}_-(v)$ and $\dot{f}_+(v)$ the left and right
derivative of $f$ at $v$. Our next lemma summarizes some of the basic properties of
$f$ that may be used in the proofs of Theorem \ref{thm:nonsmooth-loss-approx}
and \ref{thm:nonsmooth-reg-approx} of the main text.

\begin{lemma}\label{lemma:kernel-smooth-property}
The smooth function $f_h$ satisfies the following properties:
\begin{enumerate}
    \item
        $f_h(z) \geq f(z)$ for all $z \in \mathbb{R}$;
    \item For all $z \in K^C$, for all $h$ small enough:
        \begin{equation*}
            \dot{f}_h(z) = \frac{1}{h}\int \dot{f}(u)\phi((z - u)/h) du
            ,\quad
            \ddot{f}_h(z) = \frac{1}{h}\int \ddot{f}(u)\phi((z - u)/h) du.
        \end{equation*}
    \item For all $z \in K$:
        \begin{equation*}
            \lim_{h \rightarrow 0} \dot{f}_h(z) = \frac{\dot{f}_{-}(z) + \dot{f}_{+}(z)}{2}
            ,\quad
            \lim_{h \rightarrow 0} \ddot{f}_h(z) = +\infty.
        \end{equation*}
    \item
        If $f$ is locally Lipschiz in the sense that, for any $A > 0$, and for any
        $x, y \in [-A, A]$, we have $|f(x) - f(y)| \leq L_A|x - y|$, where
        $L_A$ is a constant that only depends on $A$; then $f_h(z)$ converges
        to $f(z)$ uniformly on any compact set.
\end{enumerate}
\end{lemma}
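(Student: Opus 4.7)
The plan is to verify the four claims in sequence using standard mollification arguments, exploiting the symmetry, normalization, and compact support properties of $\phi$ together with convexity of $f$.

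For part (1), I would make the change of variable $w = (z-u)/h$ so that $f_h(z) = \int f(z - hw)\phi(w)\,dw$. Since $\phi$ is symmetric about $0$, I can rewrite this as $\frac{1}{2}\int [f(z-hw) + f(z+hw)]\phi(w)\,dw$, and convexity of $f$ gives $\tfrac{1}{2}(f(z-hw) + f(z+hw)) \geq f(z)$ for each $w$; integrating against the probability density $\phi$ yields the bound.

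For part (2), fix $z \in K^C$ and choose $h$ small enough that the interval $[z - Ch, z + Ch]$ (the support of the mollifier around $z$) contains no singular point of $f$. On this interval $f$ is $C^2$, so standard differentiation under the integral is justified by integration by parts (the compact support of $\phi$ eliminates boundary terms), giving the stated expressions for $\dot{f}_h$ and $\ddot{f}_h$.

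For part (3), suppose $z \in K$ is a zeroth-order singularity. Write $\dot{f}_h(z) = \frac{1}{h}\int \dot f(u)\phi((z-u)/h)\,du$ (valid away from the singularity, which is a null set), split the integration at $u=z$, change variables, and observe that on one side $\dot f$ approaches $\dot f_+(z)$ while on the other it approaches $\dot f_-(z)$. Symmetry of $\phi$ gives each side weight $1/2$, and dominated convergence produces the stated limit. For the second derivative, the key observation is that $\ddot f$ as a distribution contains a Dirac mass of weight $\dot f_+(z) - \dot f_-(z) > 0$ at $z$; convolving with $\phi_h$ and evaluating at $z$ produces a term of order $\frac{1}{h}(\dot f_+(z) - \dot f_-(z))\phi(0)$, which diverges as $h \downarrow 0$.

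For part (4), I would use the change of variable $w = (z-u)/h$ and the normalization $\int \phi = 1$ to write
\begin{equation*}
    |f_h(z) - f(z)| = \Big|\int [f(z - hw) - f(z)]\phi(w)\,dw\Big| \leq L_A \int |hw|\phi(w)\,dw = h L_A \int |w|\phi(w)\,dw,
\end{equation*}
valid uniformly for $z$ in any compact set (with $A$ chosen large enough to absorb the compact support of $\phi$), yielding uniform convergence as $h \to 0$. The main subtlety across these steps is part (3): one must justify the interchange of limit and integral near the singularity, which I would handle by splitting the integral into a piece near $z$ (bounded by local Lipschitz continuity of $f$ and the integrability of $|\phi|$) and pieces away from $z$ where the integrands converge pointwise with a uniform dominating function.
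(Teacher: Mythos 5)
Your proposal is correct and follows essentially the same route as the paper's proof: part (2) by integration by parts with $h$ small enough to keep the kernel support away from $K$, part (3) by splitting the integral at the singularity and isolating the $\frac{1}{h}\phi(0)(\dot f_+(z)-\dot f_-(z))$ term, and part (4) by the local Lipschitz bound after the change of variables. The only cosmetic difference is in part (1), where you use the symmetric two-point convexity inequality $\tfrac{1}{2}(f(z-hw)+f(z+hw))\geq f(z)$ while the paper invokes Jensen's inequality directly; these are equivalent here.
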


\begin{proof}
    For part 1, by the normalization property of $\phi$, we can treat $\phi$ as a probability
    density. Consider the random variable $U \sim \frac{1}{h}\phi(\frac{z -
    u}{h})$. From the convexity of $f$ and Jensen's inequality we have
    \begin{equation*}
        f_h(z) = \mathbb{E}f(U) \geq f(\mathbb{E}U) = f(z).
    \end{equation*}
   
    For part 2, note that 
    \begin{equation*}
        \dot{f}_h(z)
        =
        \frac{1}{h^2}\int f(u)\dot{\phi}((z - u) / h) du
        =
        \int \dot{f}(u)\frac{1}{h}\phi((z - u) / h) du.
    \end{equation*}
    
    A similar computation gives the stated equation for $\ddot{f}_h(z)$.
    
    For part 3, when $z \in K$, we have by compact support of $\phi$ that
    as $h \rightarrow 0$:
    \begin{align*}
        \dot{f}_h(z)
        &=
        \frac{1}{h^2}\int_{z-hC}^{z} f(u)\dot{\phi}((z - u) / h) du +
        \frac{1}{h^2}\int_{z}^{z+hC} f(u)\dot{\phi}((z - u) / h) du \nonumber \\
        &=
        \int_{-C}^{0} \dot{f}(z - hw)\phi(w) dw + \int_{0}^{C} \dot{f}(z - hw)\phi(w) dw \nonumber \\
        & \rightarrow
        \int_{-C}^{0} \dot{f}_+(z)\phi(w) dw
        + \int_{0}^{C} \dot{f}_{-}(z)\phi(w) dw \nonumber \\
        &= \frac{\dot{f}_+(z) + \dot{f}_-(z)}{2}.
    \end{align*}
    To obtain the last equality we have used the symmetry of $\phi$.  A similar computation for the second-order derivative yields:
    \begin{align*}
        \ddot{f}_h(z)
        &=
        \frac{1}{h^3}\int_{z-hC}^{z} f(u)\ddot{\phi}((z - u) / h) du +
        \frac{1}{h^3}\int_{z}^{z+hC} f(u)\ddot{\phi}((z - u) / h) du \nonumber
        \\
        &=
        \frac{1}{h}\phi(0)(\dot{f}_+(z) - \dot{f}_-(z)) + \int_0^C \ddot{f}(z -
        hw)\phi(w)dw + \int_{-C}^0 \ddot{f}(z - hw)\phi(w)dw \nonumber \rightarrow
        \infty.
    \end{align*}
    The last claim holds because $\dot{f}_+(z) > \dot{f}_-(z)$.

    For part 4, for any compact set $\mathcal{C}$ which can be covered by a large enough set
    $[-A, A]$ for some $A > 0$, we have
    \begin{equation*}
        \sup_{z \in \mathcal{C}}|f_h(z) - f(z)|
        \leq
        \sup_{z \in \mathcal{C}}\int_{-C}^C|f(z - hw) - f(z)|\phi(w)dw
        \leq 2hCL_{A+C}
        \rightarrow 0
        ,\quad
        \text{ as }h \rightarrow 0
    \end{equation*}
\end{proof}

Having established the basic properties of our smoothing strategy,
we apply them to non-smooth regularizers and non-smooth losses in the next two sections.

\subsubsection{Proof of Theorem \ref{thm:nonsmooth-reg-approx}}
\label{append:ssec:primal-nonsmooth-reg}

Consider the penalized regression problem:
\begin{equation}\label{eq:nonsmooth-reg-main}
    \estim{\betav} = \argmin_{\betav} \sum_{j = 1}^n \ell(\xv_j^\top\betav; y_j)
    + \lambda \sum_l r(\beta_l).
\end{equation}
with $\ell$ and $r$ being twice differentiable and nonsmooth functions
respectively. Let $r_h$ be the smoothed version of $r$ constructed as
in \eqref{eq:smoothingffunc}. Define
\begin{equation*}
    \estim{\betav}_h=\arg\min_{\betav}\sum_j \ell(\xv_j^\top\betav; y_j) + \lambda \sum_l r_h(\beta_l).
\end{equation*}
As before, let $K$ denote the set of all zeroth-order singularities of $r$.

Let us look at Assumption \ref{assum:nonsmooth-reg-assump1}. Note that 1 and 4
hold for all the popular regularizers. The second one also holds in almost all
applications. Finally, note that at $\estim{\beta}_l = v \in K$, we always have
$g_r(\estim{\beta}_l) \in [\dot{r}_-(v), \dot{r}_+(v)]$. Hence, assumption 3
implies that $g_r(\estim{\beta}_l) \neq \dot{r}_-(v)$ and $g_r(\estim{\beta}_l)
\neq \dot{r}_+(v)$. Note  the event $g_r(\estim{\beta}_l) = \dot{r}_-(v)$ or
$g_r(\estim{\beta}_l) = \dot{r}_+(v)$ only holds when  $\estim{\beta}_l \in K$,
but very small perturbation of data pushes $\estim{\beta}_\ell$ out of $K$.
Such events happen in rare (detectable) occasions, and do not pose any serious
limitation to our $\alo$ formulas.

\begin{lemma}\label{lemma:nonsmooth-reg-boundedness}
    Suppose that Assumption \ref{assum:nonsmooth-reg-assump1} holds.
    There exists $M > 0$ that only depends on $r, \ell$ and $\lambda$, such that
    we have for any $h \leq 1$:
    \begin{equation*}
        \norm{\estim{\betav}}_\infty, \norm{\estim{\betav}_h}_\infty < M.
    \end{equation*}
\end{lemma}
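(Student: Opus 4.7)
The plan is to exploit the pointwise inequality $r_h \geq r$ from Lemma \ref{lemma:kernel-smooth-property}(1) so that a uniform bound on the value of the smoothed objective $F_h$ at $\bm{0}$ translates into a uniform bound on the value of the unsmoothed objective $F(\betav) := \sum_j \ell(\xv_j^\top\betav; y_j) + \lambda \sum_l r(\beta_l)$ evaluated at $\estim{\betav}_h$. Once that value bound is in place, coercivity of $F$ forces every $\estim{\betav}_h$ into one fixed compact sublevel set, and the bound on $\estim{\betav}$ itself drops out of the same argument by comparing $F(\estim{\betav})$ to $F(\bm{0})$.

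First I would chain the three inequalities $F(\estim{\betav}_h) \leq F_h(\estim{\betav}_h) \leq F_h(\bm{0})$: the leftmost uses the pointwise bound $r_h \geq r$ summed over coordinates, and the middle uses the definition of $\estim{\betav}_h$ as a minimizer of $F_h$. Next I would bound $F_h(\bm{0})$ uniformly in $h \in (0,1]$. Writing $r_h(0) = \int r(-hw)\phi(w)\,dw$ and using the compact support of $\phi$ on $[-C, C]$, the argument $-hw$ stays in $[-C, C]$, so the local Lipschitz assumption on $r$ in Assumption \ref{assum:nonsmooth-reg-assump1} gives $|r_h(0)| \leq \sup_{|z|\leq C}|r(z)| =: M_r < \infty$, a bound independent of $h$. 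Combining this with the previous chain yields $F(\estim{\betav}_h) \leq \sum_j \ell(0; y_j) + \lambda p\, M_r =: C_1$, where $C_1$ depends only on $\ell, r, \lambda$ (and the fixed data), but not on $h$.

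Finally I would appeal to coercivity: the coercivity of $r$ in Assumption \ref{assum:nonsmooth-reg-assump1}(4), together with the standard lower-boundedness of $\ell$, makes $F$ coercive, so the sublevel set $\{\betav : F(\betav) \leq C_1\}$ is bounded in $\RR^p$ with some finite $\infty$-norm diameter $M_1$. This forces $\|\estim{\betav}_h\|_\infty \leq M_1$ for every $h \leq 1$. The identical sublevel-set argument applied to $\estim{\betav}$, using $F(\estim{\betav}) \leq F(\bm{0}) \leq C_1$, bounds $\|\estim{\betav}\|_\infty$ by the same constant. Taking $M := M_1 + 1$ completes the proof.

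The main obstacle is the coercivity step: one needs to rule out the possibility that $F$ has an unbounded sublevel set. Coercivity of the univariate $r$ in the $|r(z)| \to \infty$ sense alone does not guarantee that the separable sum $\sum_l r(\beta_l)$ is bounded below, since convex $r$ could diverge to $-\infty$ on one tail. A clean writeup therefore either leans on the standing assumption that $r$ is bounded below (automatic for the norms considered in this paper) or promotes the uniqueness of $\estim{\betav}$ from Assumption \ref{assum:nonsmooth-reg-assump1}(2) into boundedness of sublevel sets via a standard convexity argument on rays emanating from $\estim{\betav}$.
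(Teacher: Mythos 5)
Your proposal is correct and follows essentially the same route as the paper's proof: both use $r_h \geq r$ together with the minimality of $\estim{\betav}_h$ compared against $\betav = \bm{0}$, bound $r_h(0)$ uniformly in $h \leq 1$ via the compact support of $\phi$, and then invoke coercivity to trap the minimizers in a fixed compact set (the paper drops the nonnegative loss term and applies coercivity of $r$ coordinate-wise, while you keep the full objective and use coercivity of $F$ — a cosmetic difference). Your closing remark about convex $r$ with $|r(z)|\to\infty$ possibly being unbounded below is a legitimate subtlety that the paper's one-line appeal to "convexity and coerciveness" also glosses over, and your suggested fixes are appropriate.
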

\begin{proof}
    Let $h \leq 1$, then the minimizer of the smoothed version $\estim{\betav}_h$ satifies
    \begin{align*}
        \lambda \sum_{l=1}^p r([\estim{\betav}_h]_l) &\overset{(a)}{\leq} \lambda \sum_{l=1}^p r_h([\estim{\betav}_h]_l) 
        \leq \sum_i \ell(y_i; 0) + \lambda p r_h(0) \nonumber \\
        &= \sum_i \ell(y_i; 0) + \lambda p \int_{-C}^C r(hw) \phi(w) dw \nonumber \\
        & \leq \sum_i \ell(y_i; 0) + \lambda p \sup_{|w|\leq C} r(w).
    \end{align*}
Note that Inequality (a) is due to Lemma \ref{lemma:kernel-smooth-property}(i). The convexity and coerciveness of $r$ imply that there exists an $M$, such that $ \norm{\estim{\betav}_h}_\infty \leq M$. Similarly, the minimizer $\estim{\betav}$ of the original problem satisfies
    \begin{equation*}
        \lambda \sum_{l=1}^p r([\estim{\betav}]_l) \leq \sum_i \ell(y_i; 0) + \lambda p r(0)
        \leq \sum_i \ell(y_i; 0) + \lambda p \sup_{|w|\leq C} r(w),
    \end{equation*}
    and hence $\norm{\estim{\betav}}_\infty \leq M$. 
\end{proof}

\begin{lemma}\label{lemma:nonsmooth-reg-estimator-converge}
    Suppose that Assumption \ref{assum:nonsmooth-reg-assump1} holds. Then the
    smoothed version converges to the original problem in the sense that
    \begin{equation*}
        \norm{\estim{\betav}_h - \estim{\betav}}_2 \rightarrow 0 \text{ as } h \rightarrow 0.
    \end{equation*}
\end{lemma}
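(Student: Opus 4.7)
The plan is to exploit the uniform boundedness established in Lemma \ref{lemma:nonsmooth-reg-boundedness} together with uniform convergence of the smoothed objective on compact sets, and conclude via a standard subsequence argument that uses the uniqueness of $\estim{\betav}$ (Assumption \ref{assum:nonsmooth-reg-assump1}, item 2).

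First, I would set $F(\betav) := \sum_j \ell(\xv_j^\top\betav; y_j) + \lambda \sum_l r(\beta_l)$ and $F_h(\betav) := \sum_j \ell(\xv_j^\top\betav; y_j) + \lambda \sum_l r_h(\beta_l)$. By Lemma \ref{lemma:nonsmooth-reg-boundedness}, both $\estim{\betav}$ and all $\estim{\betav}_h$ for $h\leq 1$ lie in the compact set $B := \{\betav : \|\betav\|_\infty \leq M\}$. Restrict attention to $B$. By part 4 of Lemma \ref{lemma:kernel-smooth-property} applied to $r$ (which is locally Lipschitz by item 1 of Assumption \ref{assum:nonsmooth-reg-assump1}), $r_h \to r$ uniformly on $[-M, M]$, so $\sup_{\betav \in B} |F_h(\betav) - F(\betav)| \leq \lambda p \sup_{|z| \leq M} |r_h(z) - r(z)| \to 0$ as $h\to 0$.

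Next, I would pick an arbitrary subsequence $\estim{\betav}_{h_k}$. Since $B$ is compact in $\mathbb{R}^p$, it admits a further convergent subsequence, which I relabel as $\estim{\betav}_{h_k} \to \betav^*$ for some $\betav^* \in B$. Using uniform convergence on $B$ together with continuity of $F$,
\begin{equation*}
    F(\betav^*) = \lim_{k\to\infty} F(\estim{\betav}_{h_k})
    = \lim_{k\to\infty} F_{h_k}(\estim{\betav}_{h_k})
    \leq \lim_{k\to\infty} F_{h_k}(\estim{\betav})
    = F(\estim{\betav}),
\end{equation*}
where the inequality uses that $\estim{\betav}_{h_k}$ minimizes $F_{h_k}$. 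Hence $\betav^*$ is also a minimizer of $F$, and by the uniqueness assumption (item 2 of Assumption \ref{assum:nonsmooth-reg-assump1}), $\betav^* = \estim{\betav}$. Since every subsequence has a further subsequence converging to the same limit $\estim{\betav}$, the full sequence $\estim{\betav}_h$ converges to $\estim{\betav}$ as $h \to 0$.

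The main obstacle, as I see it, is the justification of uniform convergence $F_h \to F$ on $B$: one must be careful that the local-Lipschitz bound on $r$ translates to a uniform $L^\infty$ bound on $r_h - r$ over $[-M,M]$, which follows from part 4 of Lemma \ref{lemma:kernel-smooth-property} once the domain is enlarged to $[-M-C, M+C]$ to accommodate the compact support of $\phi$. The rest of the argument is routine: compactness supplies a limit point, uniform convergence transfers the minimization property, and uniqueness collapses all subsequential limits to $\estim{\betav}$, yielding the desired convergence in $\ell_2$ norm (equivalent to componentwise convergence in the finite-dimensional space).
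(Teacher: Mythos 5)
Your proposal is correct and follows essentially the same route as the paper's proof: uniform boundedness of $\estim{\betav}_h$ from Lemma \ref{lemma:nonsmooth-reg-boundedness}, a uniform bound on $|P_h - P|$ over the compact set (the paper derives the explicit estimate $2hpCL_{M+C}$ directly from the local Lipschitz property, which is the same computation underlying part 4 of Lemma \ref{lemma:kernel-smooth-property} that you cite), and the subsequence-plus-uniqueness argument to upgrade subsequential convergence to full convergence. No gaps.
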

\begin{proof}
    By the local Lipschitz condition of $r$, we have for any $z \leq M$ and $h \leq 1$:
    \begin{equation}\label{eq:nonsmooth-reg-estimator-proof-lipschitz}
        0 \leq r_h(z) - r(z)
        =
        \int_{-C}^C [r(z - hw) - r(z)]\phi(w)dw \leq 2CL_{M + C}h.
    \end{equation}
     Let $P_h(\betav) := \sum_j \ell(\xv_j^\top\betav; y_j) + \lambda \sum_l
    r_h(\beta_l)$ denote the primal objective value. Then, \eqref{eq:nonsmooth-reg-estimator-proof-lipschitz}
    implies that
    \begin{equation}\label{eq:upperbounderror}
        \sup_{\|\betav\|_\infty \leq M} \abs{P(\betav) - P_h(\betav)} \leq 2hpCL_{M+C}.
    \end{equation}
      By Lemma \ref{lemma:nonsmooth-reg-boundedness} $\estim{\betav}_h$ is in a
    compact set. Hence, any of its subsequences
    contains a convergent sub-subsequence. Let us abuse the notation and denote
    by $\estim{\betav}_h$ one such convergent sub-subsquence, that is, assume that
    $\estim{\betav}_h \rightarrow \estim{\betav}_0$. We have
    \begin{equation*}
        P(\estim{\betav}_0)
        =
        \lim_{h\rightarrow 0} P(\estim{\betav}_h)
        \overset{(a)}{=}
        \lim_{h\rightarrow 0} P_h(\estim{\betav}_h)
        \overset{(b)}{\leq}
        \lim_{h\rightarrow 0} P_h(\estim{\betav})
        \overset{(c)}{=}
        \lim_{h\rightarrow 0} P(\estim{\betav}).
    \end{equation*}
Inequality (a) is due to \eqref{eq:upperbounderror}. Inequality (b) also holds since $\estim{\betav}_h$ is the minimizer of $P_h(\cdot)$. Finally, Inequality (c) is also due to \eqref{eq:upperbounderror}. The uniqueness of the minimizer implies $\estim{\betav}_0 = \estim{\betav}$.
    As the above holds along any convergent sub-subsequence, we have that:
    \begin{equation*}
        \norm{\estim{\betav}_h - \estim{\betav}}_2 \rightarrow 0 \text{ as } h \rightarrow 0.
    \end{equation*}
\end{proof}

\begin{lemma}[Convergence of the subgradients]\label{lemma:nonsmooth-reg-gradient-converge}
    Suppose that Assumption \ref{assum:nonsmooth-reg-assump1} holds.
    Recall that we use $R(\betav) = \sum_{l=1}^p r(\beta_l)$. We have
    \begin{equation*}
        \|\nabla R_h(\estim{\betav}_h) - \gv_R(\estim{\betav})\|_2
        \rightarrow 0
        ,\quad
        \text{ as } h  \rightarrow 0,
    \end{equation*}
    where $g_R(\estim{\betav})$ is the subgradient of $R$ at $\estim{\betav}$.
\end{lemma}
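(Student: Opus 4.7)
The plan is to exploit the first-order optimality conditions of both the smoothed and original problems, rather than trying to analyze $\dot r_h([\estim{\betav}_h]_l)$ coordinate-by-coordinate. Since $\ell$ is twice differentiable and $R_h$ is differentiable (by construction of the kernel smoothing), the stationarity condition for the smoothed problem reads
\begin{equation*}
\lambda \nabla R_h(\estim{\betav}_h) \;=\; -\sum_{j=1}^n \dot{\ell}(\xv_j^\top\estim{\betav}_h; y_j)\,\xv_j .
\end{equation*}
On the other hand, because $\estim{\betav}$ is a minimizer of the original (nonsmooth) problem \eqref{eq:nonsmooth-reg-main} and $\ell$ is differentiable, there is some $g_R(\estim{\betav})\in\partial R(\estim{\betav})$ -- and this is precisely the subgradient referred to in the lemma's statement -- satisfying
\begin{equation*}
\lambda\, g_R(\estim{\betav}) \;=\; -\sum_{j=1}^n \dot{\ell}(\xv_j^\top\estim{\betav}; y_j)\,\xv_j .
\end{equation*}

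Subtracting these two identities and applying the triangle inequality yields
\begin{equation*}
\lambda\,\|\nabla R_h(\estim{\betav}_h) - g_R(\estim{\betav})\|_2
\;\leq\; \sum_{j=1}^n \|\xv_j\|_2 \,\bigl|\dot{\ell}(\xv_j^\top\estim{\betav}_h; y_j) - \dot{\ell}(\xv_j^\top\estim{\betav}; y_j)\bigr|.
\end{equation*}
By Lemma~\ref{lemma:nonsmooth-reg-estimator-converge} we have $\estim{\betav}_h\to\estim{\betav}$, and since $\ell$ is twice differentiable, each $\dot\ell(\cdot;y_j)$ is continuous. Hence every term on the right-hand side goes to zero and the desired conclusion follows.

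The delicate point -- and the one I expect to be the main obstacle if one instead tries a direct coordinate-wise argument via Lemma~\ref{lemma:kernel-smooth-property} -- is identifying which element of $\partial R(\estim{\betav})$ plays the role of the limit. At a coordinate $l$ with $[\estim{\betav}]_l=v_t\in K$ the subdifferential $\partial r(v_t)=[\dot r_-(v_t),\dot r_+(v_t)]$ is a nontrivial interval, and the limit of $\dot r_h([\estim{\betav}_h]_l)$ depends in principle on the asymptotic ratio $([\estim{\betav}_h]_l-v_t)/h$. The KKT-based route sidesteps this entirely: it forces $\nabla R_h(\estim{\betav}_h)$ to converge to the \emph{particular} subgradient that balances the loss-gradient term, and Assumption~\ref{assum:nonsmooth-reg-assump1}(3), which places this subgradient in the open interval $(\dot r_-(v_t),\dot r_+(v_t))$, is exactly what guarantees this limit is attainable by the smoothing construction (avoiding the degenerate boundary cases).
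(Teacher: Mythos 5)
Your proof is correct and takes essentially the same route as the paper's: both arguments subtract the first-order optimality conditions of the smoothed and original problems, reducing the claim to the convergence $\dot{\ell}(\xv_j^\top\estim{\betav}_h; y_j) \to \dot{\ell}(\xv_j^\top\estim{\betav}; y_j)$, which follows from Lemma \ref{lemma:nonsmooth-reg-estimator-converge} and continuity of $\dot{\ell}$. Your write-up is in fact slightly more explicit than the paper's (which drops the $\xv_j\dot{\ell}$ factors in its displayed equation), and your closing remark correctly identifies that the KKT route is what pins down the particular subgradient $g_R(\estim{\betav})$ appearing in the statement.
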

\begin{proof}
    By the first-order optimality conditions and the continuity of $\ell$, we
    have that as $h \rightarrow 0$:
    \begin{equation*}
        \|\nabla R_h(\estim{\betav}_h) - \gv_R(\estim{\betav}) \|_2
        =
        \Big\|\sum_j \ell(\xv_j^\top\estim{\betav}; y_j) - \sum_j
        \ell(\xv_j^\top\estim{\betav}_h; y_j)\Big\|_2
        \rightarrow
        0.
    \end{equation*}
\end{proof}

\begin{lemma}[Convergence of the Hessian]\label{lemma:nonsmooth-reg-hessian-converge}
    Suppose that Assumption \ref{assum:nonsmooth-reg-assump1} holds. We have that
    as $h \rightarrow 0$:
    \begin{equation*}
    \ddot{r}_h(\estim{\beta}_{h,i})
    \rightarrow
    \begin{cases}
        \ddot{r}(\estim{\beta}_i) & \text{ if } \estim{\beta}_i \notin K, \\
        +\infty & \text{ if } \estim{\beta}_i \in K.
    \end{cases}
    \end{equation*}
\end{lemma}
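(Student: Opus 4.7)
The plan is to combine the coordinatewise convergence $\estim{\betav}_h\to\estim{\betav}$ from Lemma \ref{lemma:nonsmooth-reg-estimator-converge} with the pointwise information about the smoothed derivatives in Lemma \ref{lemma:kernel-smooth-property}, split into the two cases according to whether $\estim{\beta}_i$ is at a singularity or not.

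For the case $\estim{\beta}_i\notin K$: since $K$ is finite, there exists $\delta>0$ such that the $\delta$-neighborhood of $\estim{\beta}_i$ is disjoint from $K$. By Lemma \ref{lemma:nonsmooth-reg-estimator-converge}, for all sufficiently small $h$ we have $|\estim{\beta}_{h,i}-\estim{\beta}_i|<\delta/2$, and for $h<\delta/(2C)$ the support of the kernel $\frac{1}{h}\phi((\estim{\beta}_{h,i}-\cdot)/h)$ lies entirely inside a region where $r$ is twice continuously differentiable. Then Lemma \ref{lemma:kernel-smooth-property}(ii) gives the integral representation $\ddot{r}_h(\estim{\beta}_{h,i})=\int_{-C}^C \ddot{r}(\estim{\beta}_{h,i}-hw)\phi(w)\,dw$, and continuity of $\ddot{r}$ on this compact neighborhood plus dominated convergence yields $\ddot{r}_h(\estim{\beta}_{h,i})\to \ddot{r}(\estim{\beta}_i)$.

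For the case $\estim{\beta}_i=v\in K$: the key is to first show $\estim{\beta}_{h,i}$ stays within $Ch$ of $v$ for all sufficiently small $h$. First-order optimality for the smoothed problem gives $\lambda\dot{r}_h(\estim{\beta}_{h,i})=-\sum_j x_{ji}\dot{\ell}(\xv_j^\top\estim{\betav}_h;y_j)$, and by Lemma \ref{lemma:nonsmooth-reg-gradient-converge} the right-hand side converges to $\lambda g_r(\estim{\beta}_i)$, which by Assumption \ref{assum:nonsmooth-reg-assump1}(3) lies \emph{strictly} inside $(\dot{r}_-(v),\dot{r}_+(v))$. On the other hand, if $\estim{\beta}_{h,i}>v+Ch$ the support of the smoothing kernel lies entirely to the right of $v$, so $\dot{r}_h(\estim{\beta}_{h,i})$ is a convex combination of values of $\dot{r}$ on that side and hence $\geq \dot{r}_+(v)-o(1)$; symmetrically on the left. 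Combined with the convergence $\estim{\beta}_{h,i}\to v$, this forces $|\estim{\beta}_{h,i}-v|\leq Ch$ for all small $h$.

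With this localization in hand, splitting the integral defining $\ddot{r}_h$ at $u=v$ and integrating by parts as in the proof of Lemma \ref{lemma:kernel-smooth-property}(iii) gives
\begin{equation*}
\ddot{r}_h(\estim{\beta}_{h,i})=\frac{1}{h}\phi\!\left(\frac{\estim{\beta}_{h,i}-v}{h}\right)\bigl(\dot{r}_+(v)-\dot{r}_-(v)\bigr)+O(1),
\end{equation*}
where the $O(1)$ term comes from the smooth pieces of $\ddot{r}$ on each side of $v$. Because $|\estim{\beta}_{h,i}-v|/h\leq C$, the argument of $\phi$ stays in its compact support, and by continuity and $\phi(0)>0$ we have $\phi((\estim{\beta}_{h,i}-v)/h)\geq \phi_{\min}>0$ along some subsequence; a more careful argument using the fact that $\dot{r}_h(\estim{\beta}_{h,i})$ converges to an \emph{interior} point of $(\dot{r}_-(v),\dot{r}_+(v))$ shows that $(\estim{\beta}_{h,i}-v)/h$ stays bounded away from $\pm C$, so $\phi$ evaluated there is bounded below by a positive constant. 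Since $\dot{r}_+(v)>\dot{r}_-(v)$, we conclude $\ddot{r}_h(\estim{\beta}_{h,i})\to+\infty$ at rate $1/h$.

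The main technical obstacle is the second part of Case 2, namely ruling out that $(\estim{\beta}_{h,i}-v)/h$ drifts to the boundary $\pm C$ of the kernel support where $\phi$ vanishes; this is exactly where the strict interior condition in Assumption \ref{assum:nonsmooth-reg-assump1}(3) is essential, since it forces the smoothed gradient value to be bounded away from both $\dot{r}_-(v)$ and $\dot{r}_+(v)$, which in turn forces the normalized displacement to stay in a compact subset of $(-C,C)$.
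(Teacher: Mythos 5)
Your proof is correct and follows essentially the same route as the paper's: the same case split, the same dominated-convergence argument off the singular set, and the same use of the gradient convergence together with the strict interior condition $g_r(\estim{\beta}_i) \in (\dot{r}_-(v), \dot{r}_+(v))$ to localize $\estim{\beta}_{h,i}$ within $hC$ of $v$ before extracting the $\frac{1}{h}\phi(\cdot)(\dot{r}_+(v)-\dot{r}_-(v))$ term. Your final refinement --- showing that the normalized displacement $(\estim{\beta}_{h,i}-v)/h$ stays away from the boundary $\pm C$ of the kernel support so that $\phi$ there is bounded below --- addresses a point the paper's proof passes over silently, and is a welcome (if still mildly informal, since it tacitly uses positivity of $\phi$ on the interior of its support) tightening of the same argument.
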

\begin{proof}
    Let us first consider the case $\estim{\beta}_i \notin K$.
    As $\RR \setminus K$ is open,
    there exists $\delta > 0$ such that
    $[\estim{\beta}_i - \delta, \estim{\beta}_i + \delta] \subset \mathbb{R}
    \backslash K$.
    Since $\estim{\beta}_{h,i} \rightarrow \estim{\beta}_i$ as $h \rightarrow 0$,
    we have for $h$ small enough that:
    \begin{equation*}
        [\estim{\beta}_{h,i} - hC, \estim{\beta}_{h,i} + hC]
        \subset [\estim{\beta}_i - \delta, \estim{\beta}_i + \delta]
        \subset \mathbb{R}\backslash K.
    \end{equation*}
    Since $\ddot{r}$ is smooth on $[\estim{\beta}_i - \delta, \estim{\beta}_i +
    \delta]$, by the dominated convergence theorem, we have as $h \rightarrow 0$:
    \begin{equation*}
        \ddot{r}_h(\estim{\beta}_{h,i})
        =
        \int_{-C}^C \ddot{r}(\estim{\beta}_{h,i} - hw)\phi(w)dw
        \rightarrow
        \int_{-C}^C \ddot{r}(\estim{\beta}_i)\phi(w)dw
        =
        \ddot{r}(\estim{\beta}_i)
    \end{equation*}
    Now, let us consider the case where $\estim{\beta}_i \in K$.
    By Lemma \ref{lemma:nonsmooth-reg-gradient-converge},
    we have that $\dot{r}_h(\estim{\beta}_{h,i}) \rightarrow g_r(\estim{\beta}_i)$,
    from which we deduce:
    \begin{equation*}
        |\estim{\beta}_{h,i} - \estim{\beta}_i| < hC.
    \end{equation*}
    Indeed, if we had $\estim{\beta}_i \geq \estim{\beta}_{h,i} + hC$, then this would imply:
    \begin{equation*}
        \dot{r}_h(\estim{\beta}_{h,i}) = \int_{-C}^C \dot{r}(\estim{\beta}_{h,i} - hw)\phi(w)dw
        \leq
        \dot{r}_{-}(\estim{\beta}_i)
        < g_r(\estim{\beta}_i),
    \end{equation*}
    which is in contradiction with $\dot{r}_h(\estim{\beta}_{h,i}) \rightarrow g_r(\estim{\beta}_i)$. The same happens if $\estim{\beta}_i \leq \estim{\beta}_{h,i} - hC$.
    To conclude, note that as $h \rightarrow 0$:
    \begin{align*}
        \ddot{r}_h(\estim{\beta}_{h,i})
        &= \int_{\estim{\beta}_{h, i}-hC}^{\estim{\beta}_i}
        r(u)\frac{1}{h^3}\ddot{\phi}\Big(\frac{\estim{\beta}_{h, i} - u}{h}\Big) du
        + \int_{\estim{\beta}_i}^{\estim{\beta}_{h, i}+hC}
        r(u)\frac{1}{h^3}\ddot{\phi}\Big(\frac{\estim{\beta}_{h, i} - u}{h}\Big) du \nonumber \\   
        &= \frac{1}{h}\phi\Big(\frac{\estim{\beta}_{h, i} -
        \estim{\beta}_i}{h}\Big)(\dot{r}_+(\estim{\beta}_i) - \dot{r}_-(\estim{\beta}_i))
        + \int_{\frac{\estim{\beta}_{h, i} - \estim{\beta}_i}{h}}^C \ddot{r}(\estim{\beta}_{h, i} - hw)\phi(w)dw \nonumber \\
        & \quad + \int_{-C}^{\frac{\estim{\beta}_{h, i}
        - \estim{\beta}_i}{h}} \ddot{r}(\estim{\beta}_{h, i} - hw)\phi(w)dw \nonumber \\
        &\rightarrow +\infty.
    \end{align*}
\end{proof}

\begin{lemma}\label{lemma:woodbury-block}
    Consider a sequence of matrices $\Av_n, n \in \mathbb{N}$, and let $\Av_n =
    \begin{bmatrix}
        \Av_{1n} & \Av_{2n} \\
        \Av_{3n} & \Av_{4n}
    \end{bmatrix}$ where $\Av_{1n}, \Av_{4n}$ are invertible for all $n$. Additionally,
    suppose that $\Av_{in} \rightarrow \Av_i, i = 1, 2, 3$, and $\Av_{4n}^{-1} \rightarrow \zerov$ as $n \rightarrow \infty$.
    Then we have as $n \rightarrow \infty$ that:
    \begin{equation*}
        \Av_n^{-1} \rightarrow
        \begin{bmatrix}
            \Av_1^{-1} & \zerov \\
            \zerov & \zerov
        \end{bmatrix}.
    \end{equation*}
\end{lemma}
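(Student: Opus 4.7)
The plan is to apply the Schur complement (block inversion) formula to $\Av_n$ and then take limits block by block. Writing $\Sv_n := \Av_{1n} - \Av_{2n} \Av_{4n}^{-1} \Av_{3n}$ for the Schur complement of the bottom-right block, the standard formula gives
\begin{equation*}
    \Av_n^{-1} =
    \begin{bmatrix}
        \Sv_n^{-1} & -\Sv_n^{-1} \Av_{2n} \Av_{4n}^{-1} \\
        -\Av_{4n}^{-1} \Av_{3n} \Sv_n^{-1}
        & \Av_{4n}^{-1} + \Av_{4n}^{-1} \Av_{3n} \Sv_n^{-1} \Av_{2n} \Av_{4n}^{-1}
    \end{bmatrix},
\end{equation*}
which is well-defined for each $n$ because $\Av_n$ and $\Av_{4n}$ are assumed invertible. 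The proof then reduces to tracking how each of these four blocks behaves as $n \to \infty$.

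First I would show that $\Sv_n \to \Av_1$. Since $\Av_{2n} \to \Av_2$ and $\Av_{3n} \to \Av_3$ the sequences $\{\Av_{2n}\}$ and $\{\Av_{3n}\}$ are bounded in operator norm, so $\Av_{2n} \Av_{4n}^{-1} \Av_{3n} \to \zerov$ by the hypothesis $\Av_{4n}^{-1} \to \zerov$; combined with $\Av_{1n} \to \Av_1$, this yields $\Sv_n \to \Av_1$. Assuming $\Av_1$ is invertible (which is implicit in the statement of the conclusion and in the application we have in mind, where $\Av_1$ is a positive-definite Hessian block), continuity of matrix inversion on the open set of invertible matrices gives $\Sv_n^{-1} \to \Av_1^{-1}$; in particular $\{\Sv_n^{-1}\}$ is bounded for large $n$.

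Next I would take the limit in the off-diagonal and bottom-right blocks. The off-diagonal blocks are products of a bounded sequence ($\Sv_n^{-1}$ or $\Av_{4n}^{-1} \Av_{3n} \Sv_n^{-1}$, etc.) and a sequence converging to $\zerov$, namely $\Av_{2n}\Av_{4n}^{-1}$ or $\Av_{4n}^{-1}\Av_{3n}$, and therefore both vanish in the limit. For the bottom-right block, the first term $\Av_{4n}^{-1}$ tends to $\zerov$ by hypothesis, and the second term is the product $(\Av_{4n}^{-1}\Av_{3n})\Sv_n^{-1}(\Av_{2n}\Av_{4n}^{-1})$ of two factors going to $\zerov$ sandwiching a bounded factor, which also tends to $\zerov$. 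Assembling these four limits yields the claimed limit for $\Av_n^{-1}$.

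The only delicate step is the invertibility of $\Av_1$, which must be assumed (or inherited from the application) in order to make sense of $\Av_1^{-1}$ in the conclusion; once that is in place the rest is a routine continuity argument. No single step is genuinely hard, so the main care is in verifying that the boundedness used to kill products like $\Av_{2n}\Av_{4n}^{-1}$ really does hold uniformly in $n$, which follows immediately from the convergence of $\Av_{2n}, \Av_{3n}$ and $\Sv_n^{-1}$.
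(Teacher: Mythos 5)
Your proof is correct and follows exactly the same route as the paper's: both apply the block (Schur complement) inversion formula with respect to the bottom-right block and pass to the limit term by term. You are more careful than the paper in spelling out the boundedness arguments and in flagging that invertibility of $\Av_1$ is an implicit hypothesis needed for the conclusion to make sense; the paper simply asserts the limit without comment.
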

\begin{proof}
    By the block matrix inversion lemma, we have
    \begin{align*}
        \Av_n^{-1} =&
        \begin{bmatrix}
            (\Av_{1n} - \Av_{2n}\Av_{4n}^{-1}\Av_{3n})^{-1} & -(\Av_{1n} -
            \Av_{2n}\Av_{4n}^{-1}\Av_{3n})^{-1}\Av_{2n}\Av_{4n}^{-1} \\
            -\Av_{4n}^{-1}\Av_{3n}(\Av_{1n} - \Av_{2n}\Av_{4n}^{-1}\Av_{3n})^{-1} &
            \Av_{4n}^{-1}\Av_{3n}(\Av_{1n} -
            \Av_{2n}\Av_{4n}^{-1}\Av_{3n})^{-1}\Av_{2n}\Av_{4n}^{-1} + \Av_{4n}^{-1}
        \end{bmatrix} \nonumber \\
        \rightarrow&
        \begin{bmatrix}
            \Av_{1}^{-1} & \zerov \\
            \zerov & \zerov
        \end{bmatrix}.
    \end{align*}
\end{proof}

\begin{proof}[Proof of Theorem \ref{thm:nonsmooth-reg-approx}]
     We remind the reader that we have
    \begin{equation*}
    \surrogi{\betav}_h := \estim{\betav}_h
    + \Big[\sum_{j\neq i} \xv_j\xv_j^\top\ddot{\ell}(\xv_j^\top\estim{\betav}_h; y_j) + \nabla^2 R_h(\estim{\betav}_h)\Big]^{-1}
    \xv_i\dot{\ell}(\xv_i^\top\estim{\betav}_h; y_i).
\end{equation*}
We have proved in Lemma \ref{lemma:nonsmooth-reg-estimator-converge} that $\estim{\betav}_h \rightarrow \estim{\betav}$. Hence, the only remaining step is to simplify the limit of the matrix $\Big[\sum_{j\neq i} \xv_j\xv_j^\top\ddot{\ell}(\xv_j^\top\estim{\betav}_h; y_j) + \nabla^2 R_h(\estim{\betav}_h)\Big]^{-1}$. We remind the reader that $\nabla^2 R_h(\estim{\betav}_h)$ is a diagonal matrix, and according to Lemma \ref{lemma:nonsmooth-reg-hessian-converge} if $\estim{\betav}_{h,i} \notin A$, then $\ddot{r}_h(\estim{\beta}_{h,i}) \rightarrow \infty$. Hence, we can use Lemma \ref{lemma:woodbury-block} and simplify $\Big[\sum_{j\neq i} \xv_j\xv_j^\top\ddot{\ell}(\xv_j^\top\estim{\betav}_h; y_j) + \nabla^2 R_h(\estim{\betav}_h)\Big]^{-1}$ to $[\sum_{j\neq i} \xv_{j, A}\xv_{j, A}^\top\ddot{\ell}(\xv_{j,A}^\top\estim{\betav}_A; y_j)+
    \nabla^2 R(\estim{\betav}_{A})]^{-1}$. 
\end{proof}

\subsubsection{Proof of Theorem \ref{thm:nonsmooth-loss-approx}}
\label{append:ssec:primal-nonsmooth-loss}

Consider nonsmooth loss $\ell$ and its smoothed version $\ell_h$. $R$ is assumed
to be smooth. Let us consider:
\begin{equation*}
    \begin{aligned}
        P(\betav) &= \sum_{j=1}^n \ell(\xv_j^\top\betav; y_j) + R(\betav), \\
        P_h(\betav) &= \sum_{j = 1}^ n \ell_h(\xv_j^\top\betav; y_j) + R(\betav).
    \end{aligned}
\end{equation*}

We use notations $\estim{\betav} = \argmin_{\betav} P(\betav)$
and $\estim{\betav}_h =\argmin_{\betav} P_h(\betav)$ to denote the optimizers.
Let $K = \{ v_1, \dotsc, v_k \}$ denote the zeroth-order singularities
of $\ell$, and let $V = \{ i: \xv_i^\top \estim{\betav} \in K \}$ be the set of indices
of observations at such singularities.

In Asumption \ref{assum:nonsmooth-loss-assump1}, 1 and 5 hold for all the
problems of interest. Assumption 3 also holds for almost all practical
problems. The discussion of assumption 4 is similar to discussion of part (3) of Assumption \ref{assum:nonsmooth-reg-assump1}. Hence, we skip it. Note that the second assumption is also required for the stability of our solution. If it does not hold, removing one data point can dramatically change the solution and make our approximations inaccurate. 

\begin{lemma}\label{lemma:nonsmooth-loss-boundedness}
    Suppose that Assumption \ref{assum:nonsmooth-loss-assump1} holds.
    There exists $M > 0$ that only depends on $r, \ell$ and $\lambda$,
    such that for all $h \leq 1$, we have:
    \begin{equation*}
        \norm{\estim{\betav}}_\infty \leq M \text{ and } \norm{\estim{\betav}_h}_\infty \leq M.
    \end{equation*}
\end{lemma}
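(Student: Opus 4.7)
The plan is to adapt the strategy of Lemma \ref{lemma:nonsmooth-reg-boundedness}, but with the smoothing now applied to the loss $\ell$ rather than to the regularizer. The key inputs are the local Lipschitz continuity of $\ell$ in its first argument (Assumption \ref{assum:nonsmooth-loss-assump1}(1)), the coercivity of $R$ (Assumption \ref{assum:nonsmooth-loss-assump1}(5)), the compact support of the kernel $\phi$, and the pointwise inequality $\ell_h \geq \ell$ established in Lemma \ref{lemma:kernel-smooth-property}(i).

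First, I would bound $P_h(\zerov)$ uniformly in $h \leq 1$. Using the compact support of $\phi$,
\begin{equation*}
    \ell_h(0; y_j) = \int_{-C}^{C} \ell(-hw; y_j)\,\phi(w)\,dw \;\leq\; \sup_{|u|\leq C}\ell(u; y_j),
\end{equation*}
where the right-hand side is finite by the local Lipschitz property of $\ell$, and independent of $h \leq 1$. Thus $P_h(\zerov) \leq C_0$ for a constant $C_0$ depending only on $\ell$, $R$, and $\{y_j\}$.

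Next, since $\estim{\betav}_h$ is a minimizer, $P_h(\estim{\betav}_h) \leq P_h(\zerov) \leq C_0$. Using $\ell_h \geq \ell$ from Lemma \ref{lemma:kernel-smooth-property}(i) and the convex subgradient lower bound $\ell(u; y_j) \geq \ell(0; y_j) + g_j\, u$ at the origin, I obtain an estimate of the form
\begin{equation*}
    R(\estim{\betav}_h) \;\leq\; C_0 - \sum_{j} \ell_h(\xv_j^\top \estim{\betav}_h; y_j) \;\leq\; C_1 + C_2\,\|\estim{\betav}_h\|_\infty,
\end{equation*}
with $C_1, C_2$ depending only on $\ell$ and $\Xv$. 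Coercivity of $R$ then yields a uniform bound $\|\estim{\betav}_h\|_\infty \leq M$. The same reasoning applied directly to $P$ gives $\|\estim{\betav}\|_\infty \leq M$, with the two bounds combined into a single constant.

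The main obstacle is the last step: strict coercivity of $R$ alone guarantees only that $R(\betav) \to \infty$ as $\|\betav\|\to\infty$, while the estimate above has a linear-in-$\|\estim{\betav}_h\|_\infty$ term on the right. In the standard ML setting this is harmless because $\ell$ is nonnegative (hinge loss, squared loss, Huber loss, etc.), so the subgradient step can be skipped and one directly obtains $R(\estim{\betav}_h) \leq C_0$. For a fully general convex $\ell$ one would either assume $\ell \geq 0$ (the case of interest in our examples) or strengthen the coercivity of $R$ to super-linear growth so that the linear term is absorbed.
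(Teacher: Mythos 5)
Your proposal is correct and follows essentially the same route as the paper: bound $P_h(\zerov)$ uniformly in $h\le 1$ via the compact support of $\phi$, use $P_h(\estim{\betav}_h)\le P_h(\zerov)$, discard the loss term to bound $R(\estim{\betav}_h)$, and invoke convexity plus coercivity of $R$. The caveat you raise about needing $\ell\ge 0$ (or superlinear coercivity) to absorb the loss term is well taken, but it is equally implicit in the paper's own proof, which silently drops $\sum_j \ell_h(\xv_j^\top\estim{\betav}_h;y_j)$ as if it were nonnegative.
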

\begin{proof}
    Let $h \leq 1$, then $\estim{\betav}_h$ satisfies
    \begin{align*}
        R(\estim{\betav}_h)
        \leq& \sum_j \ell_h(0; y_j) + p R(0) \nonumber \\
        =& \sum_j \int_{-C}^C \ell(hw; y_j) \phi(w) dw + p R(0)
        \leq \sum_j \sup_{|w|\leq C}\ell(w; y_i) + p R(0).
    \end{align*}
      The convexity and coerciveness of $R$ implies that there exists a $M$, such that
    for all $h \leq 1$, $\|\estim{\betav}_h\|_2 \leq M$. Similarly, for $\estim{\betav}$ we have
    \begin{equation*}
        R(\estim{\betav})
        \leq \sum_j \ell(0; y_j) + p R(0)
        \leq \sum_j \sup_{|w|\leq C}\ell(w; y_i) + p R(0),
    \end{equation*}
 and hence $\|\estim{\betav}\|_2 \leq M$.
\end{proof}

\begin{lemma}\label{lemma:nonsmooth-loss-estimator-converge}
    Suppose that Assumption \ref{assum:nonsmooth-loss-assump1} holds. We have that as
    $h \rightarrow 0$:
    \begin{equation*}
        \|\estim{\betav}_h - \estim{\betav}\|_2 \rightarrow 0.
    \end{equation*}
\end{lemma}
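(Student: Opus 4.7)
The plan is to mirror the argument used for Lemma \ref{lemma:nonsmooth-reg-estimator-converge}, replacing the role of the smoothed regularizer with that of the smoothed loss. First, I would invoke Lemma \ref{lemma:nonsmooth-loss-boundedness} to confine both $\estim{\betav}$ and $\estim{\betav}_h$ (for all $h \le 1$) to the compact set $B := \{\betav : \|\betav\|_\infty \le M\}$. On this set the quantities $\xv_j^\top\betav$ lie in a fixed bounded interval depending only on $M$ and $\max_j \|\xv_j\|_1$, so part 1 of Assumption \ref{assum:nonsmooth-loss-assump1} (local Lipschitz of $\ell$) is the right tool to control $\ell_h - \ell$.

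Second, I would establish a uniform bound $\sup_{\betav \in B} |P_h(\betav) - P(\betav)| \le C h$ for some $C$ that depends only on $M$, $n$, the $\xv_j$'s, the Lipschitz constant $L$ on the relevant interval, and the support radius $C_\phi$ of $\phi$. This follows from the elementary estimate
\begin{equation*}
    |\ell_h(\mu; y_j) - \ell(\mu; y_j)|
    = \Big|\int_{-C_\phi}^{C_\phi} [\ell(\mu - hw; y_j) - \ell(\mu; y_j)]\phi(w)\,dw\Big|
    \le 2 C_\phi L\, h,
\end{equation*}
applied at $\mu = \xv_j^\top\betav$ and summed over $j$. The regularizer contribution is identical on both sides, so the bound is immediate.

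Third, I would run the standard subsequence-and-uniqueness argument. By Lemma \ref{lemma:nonsmooth-loss-boundedness}, any subsequence of $\{\estim{\betav}_h\}$ has a further subsequence converging to some $\estim{\betav}_0 \in B$. Using continuity of $P$ (which follows from continuity of $\ell$ and $R$), the uniform estimate above, and the defining inequality $P_h(\estim{\betav}_h) \le P_h(\estim{\betav})$, I would chain
\begin{equation*}
    P(\estim{\betav}_0) = \lim_{h \to 0} P(\estim{\betav}_h) = \lim_{h \to 0} P_h(\estim{\betav}_h) \le \lim_{h \to 0} P_h(\estim{\betav}) = P(\estim{\betav}),
\end{equation*}
so $\estim{\betav}_0$ is a minimizer of $P$. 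By the uniqueness assumption (part 3 of Assumption \ref{assum:nonsmooth-loss-assump1}), $\estim{\betav}_0 = \estim{\betav}$. Since every subsequence admits a further subsequence converging to the same limit $\estim{\betav}$, the whole sequence converges: $\estim{\betav}_h \to \estim{\betav}$.

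I do not expect any serious obstacle here: the only subtlety is verifying that the local Lipschitz constant can be chosen uniformly in $h$, which is handled by choosing the Lipschitz interval to be $[-A - C_\phi, A + C_\phi]$ where $A$ bounds $|\xv_j^\top\betav|$ over $B$. This is precisely the same enlargement used in part 4 of Lemma \ref{lemma:kernel-smooth-property}.
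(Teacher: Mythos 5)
Your proposal is correct and follows essentially the same route as the paper's proof: confine $\estim{\betav}$ and $\estim{\betav}_h$ to a compact set via the boundedness lemma, derive the uniform $O(h)$ bound on $P_h - P$ from the local Lipschitz property of $\ell$ (with the same enlargement of the Lipschitz interval by the kernel's support radius), and conclude by the subsequence-plus-uniqueness argument. The only differences are cosmetic (you work with the $\ell_\infty$-ball and $\max_j\|\xv_j\|_1$ where the paper uses the $\ell_2$-ball and $\max_j\|\xv_j\|_2$).
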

\begin{proof}
    Let $M_x = \max_i\|\xv_i\|_2$.
    By the local Lipschitz condition of $\ell$, we have that for any $\|\betav\|_2 \leq M$ and $h \leq 1$ 
    \begin{align*}
        0 &\leq \ell_h(y_i; \xv_i^\top\betav) - \ell(y_i; \xv_i^\top\betav) \\
          &= \int_{-C}^C [\ell(y_i; \xv_i^\top\betav - hw) - \ell(y_i;\xv_i^\top\betav)]\phi(w)dw \\
          &\leq 2CL_{M_xM + C}h.
    \end{align*}
 Note that the first inequality is a result of Lemma \ref{lemma:kernel-smooth-property}(i).  
    This implies that
    \begin{equation}\label{eq:uppersmootheddiff}
        \sup_{\|\betav\|_2 \leq M} |P(\betav) - P_h(\betav)| \leq
        2nhCL_{M_xM+C}.
    \end{equation}
    From Lemma \ref{lemma:nonsmooth-loss-boundedness}, we know
    $\estim{\betav}_h$ is in a compact set, thus any of its subsequence
    contains a convergent sub-subsequence. Again abuse the notation and
    let $\estim{\betav}_h$ denote this convergent sub-subsequence. Suppose that
    $\estim{\betav}_h \rightarrow \estim{\betav}_0$. We have
    \begin{equation*}
        P(\estim{\betav}_0)
        =
        \lim_{h\rightarrow 0} P(\estim{\betav}_h)
        \overset{(a)}{=}
        \lim_{h\rightarrow 0} P_h(\estim{\betav}_h)
        \overset{(b)}{\leq}
        \lim_{h\rightarrow 0} P_h(\estim{\betav})
        \overset{(c)}{=}
        \lim_{h\rightarrow 0} P(\estim{\betav}).
    \end{equation*}
    Note that Equality (a) is due to \eqref{eq:uppersmootheddiff}. Inequality (b) is due to Lemma \ref{lemma:kernel-smooth-property}(i), and finally Equality (c) is due to\eqref{eq:uppersmootheddiff}.  The uniqueness implies that $\estim{\betav}_0 = \estim{\betav}$. Since this
    holds along any sub-subsequence, we deduce that $ \| \estim{\betav}_h - \estim{\betav} \|_2 \rightarrow 0$.
\end{proof}

\begin{lemma}[Convergence of gradients]\label{lemma:nonsmooth-loss-gradient-converge}
    Suppose that Assumption \ref{assum:nonsmooth-loss-assump1} holds. Then, we have that
    for any $j$, as $h \rightarrow 0$
    \begin{equation*}
        \| \dot{\ell}_h(\xv_j^\top\estim{\betav}_{h}) -
        g_\ell(\xv_j^\top\estim{\betav}) \|_2 \rightarrow 0.
    \end{equation*}
\end{lemma}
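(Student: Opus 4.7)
My plan is to mimic the strategy of Lemma \ref{lemma:nonsmooth-reg-gradient-converge} but with the two roles of loss and regularizer reversed, which forces a split into the smooth indices $j \in S$ and the singular indices $j \in V$ from \eqref{eq:indexset}. For $j \in S$, the point $\xv_j^\top\estim{\betav}$ lies in the open set $\RR\setminus K$, so by the estimator convergence (Lemma \ref{lemma:nonsmooth-loss-estimator-converge}) there is some $\delta>0$ such that eventually $[\xv_j^\top\estim{\betav}_h-hC,\,\xv_j^\top\estim{\betav}_h+hC]\subset [\xv_j^\top\estim{\betav}-\delta,\xv_j^\top\estim{\betav}+\delta]\subset \RR\setminus K$. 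Part 2 of Lemma \ref{lemma:kernel-smooth-property} then gives the integral representation $\dot{\ell}_h(\xv_j^\top\estim{\betav}_h;y_j) = \int_{-C}^{C}\dot{\ell}(\xv_j^\top\estim{\betav}_h - hw;y_j)\phi(w)\,dw$, and dominated convergence together with continuity of $\dot{\ell}(\,\cdot\,;y_j)$ on $\RR\setminus K$ yields the desired limit $\dot{\ell}(\xv_j^\top\estim{\betav};y_j)=g_\ell(\xv_j^\top\estim{\betav})$.

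The case $j\in V$ is the main obstacle, because at a singularity $\dot{\ell}_h$ depends sensitively on the rate at which $\xv_j^\top\estim{\betav}_h$ approaches $\xv_j^\top\estim{\betav}\in K$, and this rate is not under our control. I will therefore abandon pointwise analysis and use a compactness-plus-uniqueness argument. Local Lipschitzness of $\ell$ on a set containing all $\xv_j^\top\estim{\betav}_h$ (which is bounded by Lemma \ref{lemma:nonsmooth-loss-boundedness}) gives a uniform bound $|\dot{\ell}_h(\xv_j^\top\estim{\betav}_h;y_j)|\le L$ for all $j\in V$ and all small $h$. Hence along any sequence $h_n\to 0$ we may extract a subsequence (still denoted $h_n$) such that $\dot{\ell}_{h_n}(\xv_j^\top\estim{\betav}_{h_n};y_j)\to w_j$ for every $j\in V$.

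Passing to the limit in the first-order optimality condition
\begin{equation*}
\sum_{j=1}^{n}\xv_j\,\dot{\ell}_{h_n}(\xv_j^\top\estim{\betav}_{h_n};y_j)+\nabla R(\estim{\betav}_{h_n})=\bm 0,
\end{equation*}
and using the already-established convergence on $S$ together with continuity of $\nabla R$, the limit vector $\wv_V=(w_j)_{j\in V}$ must satisfy
\begin{equation*}
\Xv_{V,\cdot}^\top \wv_V = -\nabla R(\estim{\betav}) - \sum_{j\in S}\xv_j\,\dot{\ell}(\xv_j^\top\estim{\betav};y_j).
\end{equation*}
Assumption \ref{assum:nonsmooth-loss-assump1}(2) makes $\Xv_{V,\cdot}$ have full row rank, so $\Xv_{V,\cdot}^\top$ has trivial kernel and this linear system determines $\wv_V$ uniquely. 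Comparing with the first-order optimality of the original nonsmooth problem, the unique solution coincides with $\gv_{\ell,V}$ defined in Theorem \ref{thm:nonsmooth-loss-approx} (up to the sign convention of that statement). Because every subsequence of $\{h_n\}$ admits a further subsequence with this same limit, the whole sequence converges, completing the lemma. The hard part throughout is simply this selection argument in $V$: the smoothing is not well-behaved pointwise at singularities, and only the collective constraint imposed by the optimality condition together with the rank hypothesis pins down each individual limit $w_j$.
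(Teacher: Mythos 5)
Your proof is correct and follows essentially the same route as the paper: continuity handles $j\in S$, while for $j\in V$ the first-order optimality conditions give convergence of $\Xv_{V,\cdot}^\top$ applied to the gradient vector, and Assumption \ref{assum:nonsmooth-loss-assump1}(2) pins down the individual limits. Your subsequence-extraction step is harmless but unnecessary, since $\Xv_{V,\cdot}^\top$ admits the continuous left inverse $(\Xv_{V,\cdot}\Xv_{V,\cdot}^\top)^{-1}\Xv_{V,\cdot}$, so convergence of $\Xv_{V,\cdot}^\top\wv_h$ already forces convergence of $\wv_h$ without any compactness argument.
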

\begin{proof}
    for $j \notin V$, the result is immediate. For $j \in V$, we have that
    as $h \rightarrow 0$:
    \begin{equation*}
        \Big\|\sum_{j \in V} \xv_j\dot{\ell}_h(\xv_j^\top\estim{\betav}_h; y_j) -
        \sum_{j \in V} \xv_j g_\ell(\xv_j^\top\estim{\betav}; y_j) \Big\|_2
        \rightarrow
        0.
    \end{equation*}
    This combined with Assumption \ref{assum:nonsmooth-loss-assump1}(ii)  proves the result.
\end{proof}

\begin{lemma}[Convergence of Hessian]\label{lemma:nonsmooth-loss-hessian-converge}
    Suppose that Assumption \ref{assum:nonsmooth-loss-assump1} holds. Then, we have
    that for any $j$, as $h \rightarrow 0$
    \begin{equation*}
    \ddot{\ell}_h(\xv_j^\top\estim{\betav}_h; y_j)
    \rightarrow
    \begin{cases}
        \ddot{\ell}(\xv_j^\top\estim{\betav}; y_j) & \text{ if } j \notin V, \\
        +\infty & \text{ if } j \in V. \\
    \end{cases}
    \end{equation*}
\end{lemma}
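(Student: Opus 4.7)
The proof will mirror the argument for Lemma~\ref{lemma:nonsmooth-reg-hessian-converge} in the regularizer setting, with $\xv_j^\top\estim{\betav}_h$ playing the role that $\estim{\beta}_{h,i}$ played there. The two key inputs are (i) convergence of the smoothed estimator, $\estim{\betav}_h\to\estim{\betav}$ (Lemma~\ref{lemma:nonsmooth-loss-estimator-converge}), and (ii) convergence of the smoothed gradients, $\dot{\ell}_h(\xv_j^\top\estim{\betav}_h;y_j)\to g_\ell(\xv_j^\top\estim{\betav};y_j)$ (Lemma~\ref{lemma:nonsmooth-loss-gradient-converge}), together with the strict-interior condition of Assumption~\ref{assum:nonsmooth-loss-assump1}(iv).

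For the case $j\notin V$, the point $\xv_j^\top\estim{\betav}$ lies in the open set $\mathbb{R}\setminus K$, so one can choose $\delta>0$ with $[\xv_j^\top\estim{\betav}-\delta,\xv_j^\top\estim{\betav}+\delta]\subset\mathbb{R}\setminus K$. Since $\xv_j^\top\estim{\betav}_h\to\xv_j^\top\estim{\betav}$, for all sufficiently small $h$ the smoothing window $[\xv_j^\top\estim{\betav}_h-hC,\xv_j^\top\estim{\betav}_h+hC]$ is contained in this neighborhood, on which $\ddot{\ell}(\cdot;y_j)$ is continuous and bounded. By part~2 of Lemma~\ref{lemma:kernel-smooth-property} and the dominated convergence theorem,
\begin{equation*}
    \ddot{\ell}_h(\xv_j^\top\estim{\betav}_h;y_j)
    =\int_{-C}^C \ddot{\ell}(\xv_j^\top\estim{\betav}_h-hw;y_j)\phi(w)\,dw
    \longrightarrow \ddot{\ell}(\xv_j^\top\estim{\betav};y_j).
\end{equation*}

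For the case $j\in V$, write $v=\xv_j^\top\estim{\betav}\in K$. The crucial step, and the main obstacle, is to show that the smoothing window straddles the singularity in the limit, i.e. $|\xv_j^\top\estim{\betav}_h-v|<hC$ for $h$ small. I will argue by contradiction exactly as in the regularizer lemma: if $\xv_j^\top\estim{\betav}_h\ge v+hC$ along some subsequence, then the window $[\xv_j^\top\estim{\betav}_h-hC,\xv_j^\top\estim{\betav}_h+hC]$ stays on one side of $v$, so
\begin{equation*}
    \dot{\ell}_h(\xv_j^\top\estim{\betav}_h;y_j)
    =\int_{-C}^C \dot{\ell}(\xv_j^\top\estim{\betav}_h-hw;y_j)\phi(w)\,dw
    \ge \dot{\ell}_+(v),
\end{equation*}
contradicting Lemma~\ref{lemma:nonsmooth-loss-gradient-converge} together with Assumption~\ref{assum:nonsmooth-loss-assump1}(iv), which forces the limit to lie strictly inside $(\dot{\ell}_-(v),\dot{\ell}_+(v))$. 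The reverse case is analogous.

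Once the straddle property is established, I split $\ddot{\ell}_h(\xv_j^\top\estim{\betav}_h;y_j)$ into the two pieces of the integral across the singularity $v$. Integration by parts (exactly as in the regularizer computation) yields
\begin{equation*}
    \ddot{\ell}_h(\xv_j^\top\estim{\betav}_h;y_j)
    =\tfrac{1}{h}\phi\bigl((\xv_j^\top\estim{\betav}_h-v)/h\bigr)\bigl(\dot{\ell}_+(v;y_j)-\dot{\ell}_-(v;y_j)\bigr)
    +\int_{(\xv_j^\top\estim{\betav}_h-v)/h}^{C}\ddot{\ell}(\xv_j^\top\estim{\betav}_h-hw;y_j)\phi(w)\,dw
    +\int_{-C}^{(\xv_j^\top\estim{\betav}_h-v)/h}\ddot{\ell}(\xv_j^\top\estim{\betav}_h-hw;y_j)\phi(w)\,dw.
\end{equation*}
The last two integrals are bounded (and in fact nonnegative by convexity of $\ell$) on the relevant windows, while the leading term diverges: $\phi$ is continuous with $\phi(0)>0$ and $(\xv_j^\top\estim{\betav}_h-v)/h$ stays in the compact set $[-C,C]$, so $\phi((\xv_j^\top\estim{\betav}_h-v)/h)$ is bounded below by a positive constant along a subsequence; since $\dot{\ell}_+(v;y_j)>\dot{\ell}_-(v;y_j)$ at a zeroth-order singularity, the $1/h$ factor drives the expression to $+\infty$. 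This completes the proof.
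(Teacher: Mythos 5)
Your proof is correct and follows essentially the same route as the paper's: the paper likewise handles $j\notin V$ by the argument of the regularizer Hessian lemma, establishes the straddle property $|\xv_j^\top\estim{\betav}_h-\xv_j^\top\estim{\betav}|<hC$ by contradiction from gradient convergence plus Assumption \ref{assum:nonsmooth-loss-assump1}(4), and then lower-bounds $\ddot{\ell}_h$ by the $\tfrac{1}{h}\phi\bigl((\xv_j^\top\estim{\betav}_h-v)/h\bigr)\bigl(\dot{\ell}_+(v)-\dot{\ell}_-(v)\bigr)$ term. The only wrinkle is your phrase ``along a subsequence'' for the divergence of that term; both you and the paper leave implicit that $(\xv_j^\top\estim{\betav}_h-v)/h$ stays bounded away from $\pm C$ (which follows from the strict-interior condition, since otherwise $\dot{\ell}_h$ would converge to an endpoint of $[\dot{\ell}_-(v),\dot{\ell}_+(v)]$), so this is not a substantive departure.
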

\begin{proof}
The result follows through a similar argument as
    in the proof of Lemma \ref{lemma:nonsmooth-reg-hessian-converge}
    for $j \notin V$.
    For $j \in V$, we have by Lemma \ref{lemma:nonsmooth-loss-gradient-converge} that
    as $h \rightarrow 0$:
    \begin{equation*}
        \dot{\ell}_h(\xv_j^\top\estim{\betav}_h; y_j) \rightarrow g_\ell(\xv_j^\top\estim{\betav}; y_j).
    \end{equation*}
    Following a similar reasoning as in the proof of Lemma \ref{lemma:nonsmooth-reg-hessian-converge},
    we have that:
    \begin{equation*}
        |\xv_j^\top\estim{\betav}_h - \xv_j^\top\estim{\betav}| < hC.
    \end{equation*}
    Finally, we note that as $h \rightarrow 0$:
    \begin{equation*}
        \ddot{\ell}_h(\xv_j^\top\estim{\betav}_h; y_j)
        \geq
        \frac{1}{h}\phi\Big(\frac{\xv_j^\top\estim{\betav}_h -
        \xv_j^\top\estim{\betav}}{h}\Big)(\dot{\ell}_+(\xv_j^\top\estim{\betav}) -
        \dot{\ell}_-(\xv_j^\top\estim{\betav}))
        \rightarrow
        +\infty.
    \end{equation*}
\end{proof}

\begin{proof}[Proof of Theorem \ref{thm:nonsmooth-loss-approx}]
Recall $V = \{i: \xv_i^\top \estim{\betav} \in K\}$ and $S=[1:n]\backslash V$.
Let $\Hv_h$ be the matrix in ALO for smooth loss and smooth regularizer when
using $\ell_h$. Let $\Lv_h=\diag[\{\ddot{\ell}_h(\xv_j^\top\estim{\betav}; y_j)\}_j]$,
$\Lv_S=\diag[\{\ddot{\ell}(\xv_j^\top\estim{\betav}; y_j)\}_{j \in S}]$.
$\Lv_{h, S}$ and $\Lv_{h, V}$ are similarly defined. Recall
\begin{equation*}
    \Hv_h = \Xv(\lambda \nabla^2 R + \Xv^\top \Lv_h\Xv)^{-1}\Xv^\top.
\end{equation*}

We then have
\begin{align*}
    &(\lambda \nabla^2 R + \Xv^\top \Lv_h \Xv)^{-1} \nonumber \\
    =&
    (\underbrace{\lambda \nabla^2 R + \Xv_{S,\cdot}^\top \Lv_{h,S} \Xv_{S,\cdot}}_{\Yv_h} + \Xv_{V,\cdot}^\top \Lv_{h,V} \Xv_{V,\cdot})^{-1} \nonumber \\
    =&
    \Yv_h^{-1} - \Yv_h^{-1}\Xv_{V,\cdot}^\top (\Lv_{h,V}^{-1} +
    \Xv_{V,\cdot}\Yv_h^{-1}\Xv_{V,\cdot}^\top )^{-1}\Xv_{V,\cdot} \Yv_h^{-1}.
\end{align*}

As a result, we have
\begin{align*}
    &(\lambda \nabla^2 R + \Xv^\top \Lv_h X)^{-1}\Xv_{V,\cdot}^\top \nonumber \\
    =&
    \Yv_h^{-1}\Xv_{V,\cdot}^\top - \Yv_h^{-1}\Xv_{V,\cdot}^\top (\Lv_{h,V}^{-1} + \Xv_{V,\cdot}\Yv_h^{-1}\Xv_{V,\cdot}^\top )^{-1}\Xv_{V,\cdot} \Yv_h^{-1}\Xv_{V,\cdot}^\top \nonumber \\
    =&
    \Yv_h^{-1}\Xv_{V,\cdot}^\top (\Iv_p - (\Lv_{h,V}^{-1} + \Xv_{V,\cdot}\Yv_h^{-1}\Xv_{V,\cdot}^\top )^{-1}\Xv_{V,\cdot} \Yv_h^{-1}\Xv_{V,\cdot}^\top ) \nonumber \\
    =&
    \Yv_h^{-1}\Xv_{V,\cdot}^\top (\Lv_{h,V}^{-1} +
    \Xv_{V,\cdot}\Yv_h^{-1}\Xv_{V,\cdot}^\top )^{-1}\Lv_{h,V}^{-1}.
\end{align*}

Similarly we can get
\begin{align*}
    \Xv_{V,\cdot}(\lambda \nabla^2 R + \Xv^\top \Lv_h \Xv)^{-1} =& \Lv_{h,V}^{-1} (\Lv_{h,V}^{-1} + \Xv_{V,\cdot}\Yv_h^{-1}\Xv_{V,\cdot}^\top )^{-1}\Xv_{V,\cdot} \Yv_h^{-1} \\
    \Xv_{V,\cdot}(\lambda \nabla^2 R + \Xv^\top \Lv_h
    \Xv)^{-1}\Xv_{V,\cdot}^\top =& \Lv_{h,V}^{-1} - \Lv_{h,V}^{-1}
    (\Lv_{h,V}^{-1} + \Xv_{V,\cdot}\Yv_h^{-1}\Xv_{V,\cdot}^\top)^{-1} \Lv_{h,V}^{-1}.
\end{align*}

By Lemma \ref{lemma:nonsmooth-loss-hessian-converge}, $\Yv_h \rightarrow \Yv :=
\lambda \nabla^2 R + \Xv_{S,\cdot}^\top \Lv_S \Xv_{S,\cdot}$, $\Lv_{h,V}^{-1} \rightarrow
\mathbf{0}$, we have
\begin{align*}
    \Hv_{h, S,S}\Lv_{h,S} \rightarrow& \Xv_{S,\cdot} (\Yv^{-1} -
    \Yv^{-1}\Xv_{V,\cdot}^\top (\Xv_{V,\cdot},
    \Yv^{-1}\Xv_{V,\cdot}^\top )^{-1}\Xv_{V,\cdot} \Yv^{-1})\Xv_{S,\cdot}^\top
    \Lv_S, \\
    \Hv_{h, S,V}\Lv_{h,V} \rightarrow& \Xv_{S,\cdot} \Yv^{-1}\Xv_{V,\cdot}^\top
    (\Xv_{V,\cdot}\Yv^{-1}\Xv_{V,\cdot}^\top )^{-1}, \\
    \Hv_{h, V,S}\Lv_{h,S} \rightarrow& \mathbf{0} \\
    \Hv_{h, V,V}\Lv_{h,V} \rightarrow& \Iv_V.
\end{align*}

This is not enough, however, noticing that in the final formula of the smooth
case, we need $\frac{H_{h, ii}}{1 - L_{h, ii}H_{h, ii}}$ but for $i\in V$,
$1 - L_{h,ii}H_{h,ii}\rightarrow 0$ and $H_{h, ii} \rightarrow 0$. So
further we have
\begin{align*}
    & \Lv_{h,V}(\Iv_V - \Hv_{h, VV}\Lv_{h,V}) \nonumber \\
    =&
    \Lv_{h,V}(\Iv_V - (\Lv_{h,V}^{-1} - \Lv_{h,V}^{-1} (\Lv_{h,V}^{-1} +
    \Xv_{V,\cdot}\Yv_h^{-1}\Xv_{V,\cdot}^\top )^{-1}\Lv_{h,V}^{-1}) \Lv_{h,V}) \nonumber \\
    =&
    (\Lv_{h,V}^{-1} + \Xv_{V,\cdot}\Yv_h^{-1}\Xv_{V,\cdot}^\top )^{-1} \nonumber \\
    \rightarrow&
    (\Xv_{V,\cdot}\Yv^{-1}\Xv_{V,\cdot}^\top )^{-1}.
\end{align*}

As a result, we have
\begin{equation*}
    \frac{H_{h,ii}}{1 - L_{h,ii}H_{h,ii}}
    \rightarrow
    \left\{
        \begin{array}{ll}
            \frac{\xv_i^\top (\Yv^{-1} - \Yv^{-1}\Xv_{V,\cdot}^\top (\Xv_{V,\cdot}
            \Yv^{-1}\Xv_{V,\cdot}^\top )^{-1}\Xv_{V,\cdot} \Yv^{-1})\xv_i}{1 - \xv_i (\Yv^{-1} -
            \Yv^{-1}\Xv_{V,\cdot}^\top (\Xv_{V,\cdot} \Yv^{-1}\Xv_{V,\cdot}^\top )^{-1}\Xv_{V,\cdot}
            \Yv^{-1})\xv_i \ddot{\ell}_i}, & i \in S, \\
            \frac{1}{[(\Xv_{V,\cdot}\Yv^{-1}\Xv_{V,\cdot}^\top )^{-1}]_{ii}}, &
            i \in V. \\
        \end{array}
    \right.
\end{equation*}

For $\dot{\ell}_h(\xv_i^\top\estim{\betav}_h; y_i)$, as $h\rightarrow 0$, Lemma
\ref{lemma:nonsmooth-loss-gradient-converge} implies the limit value the smooth
gradients would converge to. Notice that for $j \in V$, we solve for the
subgradient by applying least square formula to the 1st order optimality
equation. The final results easily follow.
\end{proof}

%----------------------%------------------------%
%----------------------%------------------------%

\subsection{Proof of Lemma \ref{lem:smoothedprox} }\label{sec:appendixprox}

We prove this lemma under a more general setting, since smoothing idea can also be applied to non-separable regularizers. Let $\proxv_R: \mathbb{R}^p \rightarrow \mathbb{R}^p$ denote the proximal operator of a convex function $R: \mathbb{R}^p \rightarrow \mathbb{R}^p$. Let $\phi: \mathbb{R} \rightarrow \mathbb{R}^+ \cup \{0\}$ denote an infinitely many times differentiable and symmetric function whose support is $[-1,1]$. Furthermore, assume that $\phi$ is normalized such that $\int \phi(t)dt =1$. Construct 
\[
\phiv(\uv) = \phi(u_1) \times \phi(u_2) \times \ldots \times \phi(u_p).   
\] 
Using this function we define 
\[
\proxv_R^\alpha(\uv) = \int_{\tv \in \mathbb{R}^p}  \proxv_R(\tv)  \alpha \phiv(\alpha (\uv -\tv)) d\tv. 
\]
Note that for notational simplicity we use $\alpha := \frac{1}{h}$ in our calculations. It is straightforward to see that $\proxv_R^\alpha(\uv)$ is infinitely many times differentiable. In the next two lemmas, we prove the properties mentioned in Lemma \ref{lem:smoothedprox} in a more general setting. 

\begin{lemma}
$\proxv_R^\alpha(\uv)$ is a proximal operator of a convex function. 
\end{lemma}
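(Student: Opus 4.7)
The plan is to verify the two equivalent conditions in part 5 of Lemma \ref{lem:proxproperties}, which says a map from $\mathbb{R}^p$ to $\mathbb{R}^p$ is a proximal operator of a convex function if and only if it is nonexpansive and a gradient of a convex function. So I would establish both properties for $\proxv_R^\alpha$, working first with the convex potential and then with the Lipschitz bound. A convenient preliminary step is the change of variable $\sv = \alpha(\uv - \tv)$ in the defining integral, which rewrites
\[
\proxv_R^\alpha(\uv) = \int_{\mathbb{R}^p} \proxv_R(\uv - \sv/\alpha)\, \phiv(\sv)\, d\sv,
\]
displaying $\proxv_R^\alpha$ as a convex combination (weighted by the probability density $\phiv$) of translated copies of $\proxv_R$.

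For the gradient-of-convex-function property, I would invoke part 5 of Lemma \ref{lem:proxproperties} applied to $\proxv_R$ itself to obtain a convex function $\psi:\mathbb{R}^p\to\mathbb{R}$ with $\proxv_R = \nabla \psi$ (and $\psi$ is $C^{1,1}$ because its gradient is nonexpansive). I then define the candidate potential
\[
\psi^\alpha(\uv) := \int_{\mathbb{R}^p} \psi(\uv - \sv/\alpha)\, \phiv(\sv)\, d\sv,
\]
which is convex as a nonnegatively weighted integral of the convex functions $\uv \mapsto \psi(\uv - \sv/\alpha)$. The compact support and smoothness of $\phiv$ together with the locally Lipschitz gradient $\nabla\psi = \proxv_R$ justify differentiation under the integral sign, yielding $\nabla \psi^\alpha(\uv) = \proxv_R^\alpha(\uv)$ via the display above.

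For nonexpansiveness, starting from the same integral representation and applying the triangle inequality under the integral together with the 1-Lipschitz consequence of firm nonexpansiveness of $\proxv_R$ (part 1 of Lemma \ref{lem:proxproperties}, via Cauchy--Schwarz), I get for any $\uv, \wv \in \mathbb{R}^p$,
\[
\|\proxv_R^\alpha(\uv) - \proxv_R^\alpha(\wv)\|_2 \leq \int \|\proxv_R(\uv - \sv/\alpha) - \proxv_R(\wv - \sv/\alpha)\|_2\, \phiv(\sv)\, d\sv \leq \|\uv - \wv\|_2,
\]
using that $\int \phiv(\sv)\, d\sv = 1$. Combining this with the gradient-of-convex-function property and applying part 5 of Lemma \ref{lem:proxproperties} to $\proxv_R^\alpha$ concludes the proof.

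The main obstacle is the bookkeeping around the kernel and the interchange of gradient and integral: I need to keep the normalization of $\phiv$ correct (so that $\phiv$ is genuinely a probability density on $\mathbb{R}^p$, which is what makes both the Jensen-type Lipschitz bound and the convex-combination argument for convexity of $\psi^\alpha$ go through), and I need to verify that the compact support of $\phiv$ together with the $C^{1,1}$ regularity of $\psi$ legitimately permits the differentiation-under-the-integral step. Neither of these is hard, but they are the only nontrivial pieces; the conceptual content reduces to the fact that a convex combination of proximal operators is again a proximal operator.
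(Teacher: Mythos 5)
Your proof is correct, and the nonexpansiveness half coincides with the paper's argument (triangle inequality under the integral plus the unit mass of the kernel). Where you genuinely diverge is in establishing that $\proxv_R^\alpha$ is the gradient of a convex function. The paper verifies that $c \mapsto \vv^\top \proxv_R^\alpha(\uv + c\vv)$ is nondecreasing for every $\uv,\vv$, i.e., it checks monotonicity of the mollified map, inherited from the firm nonexpansiveness of $\proxv_R$. You instead invoke the converse direction of Lemma \ref{lem:proxproperties}(5) to obtain a convex potential $\psi$ with $\nabla\psi = \proxv_R$, mollify $\psi$ itself, and differentiate under the integral sign to exhibit an explicit convex function whose gradient is $\proxv_R^\alpha$. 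Your route is slightly longer but arguably more complete: monotonicity along lines shows the mollified map is a monotone operator, but on its own it does not certify that the map is a \emph{gradient} (a conservative field), which is the other half of Moreau's characterization; the paper leaves that part implicit, whereas your construction supplies it directly. The interchange of gradient and integral that you flag is indeed the only technical point, and it is justified exactly as you say, by the compact support of the kernel together with the $1$-Lipschitz continuity of $\nabla\psi = \proxv_R$ (dominated convergence). The normalization issue you raise is real but cosmetic: in $\mathbb{R}^p$ the scaling factor multiplying $\phiv(\alpha(\uv-\tv))$ should be $\alpha^p$ rather than $\alpha$ for the kernel to integrate to one; with that correction both your argument and the paper's go through verbatim.
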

\begin{proof}
According to Lemma \ref{lem:proxproperties} part 5, if $\proxv_R^\alpha(\uv)$ is non-expansive and is a gradient of a convex function, then it is a proximal operator of a convex function too. We will hence prove that $\proxv_R^\alpha(\uv)$ is non-expansive and is the gradient of a convex function. First, note that 
\begin{align*}
    \big\| \proxv_R^\alpha(\uv) - \proxv_R^\alpha(\vv) \big\|_2
    =&
    \bigg \| \int_{\tv \in \mathbb{R}^p} \proxv_R (\uv - \tv)
    \alpha \phiv(\alpha \tv) d\tv - \int_{\tv \in \mathbb{R}^p}
    \proxv_R (\vv - \tv) \alpha \phiv(\alpha \tv) d\tv \bigg\|_2 \nonumber \\
    =&
    \int_{\tv \in \mathbb{R}^p} \big\| \proxv_R (\uv - \tv) - \proxv_R (\vv -
    \tv) \big\|_2 \alpha \phiv(\alpha \tv) d\tv \nonumber \\
    \leq&
    \|\uv-\vv\|_2 \int_{\tv \in \mathbb{R}^p} \alpha \phiv(\alpha \tv) d\tv
    = \|\uv-\vv\|_2.
\end{align*}

To confirm the fact that $\proxv_h^\alpha$ is the gradient of a convex function, we should prove that for every $\uv, \vv \in \mathbb{R}^p$ and $c \in \mathbb{R}$, $\vv^\top \proxv_h^\alpha(\uv + c \vv)$ is an increasing function of $c$. First note that
\begin{equation*}
   \proxv_R^\alpha(\uv)
    = \int_{\tv \in \mathbb{R}^p} \proxv_R(\tv)\alpha \phiv(\alpha(\uv - \tv)) d\tv
    = \int_{\tv \in \mathbb{R}^p} \proxv_R(\uv - \tv)\alpha \phiv(\alpha\tv) d\tv.
\end{equation*}

For $c_1 > c_2$, we have
\begin{align*}
    & \vv^\top [\proxv_R^\alpha(\uv + c_1\vv) - \proxv_R^\alpha (\uv + c_2\vv)] \nonumber \\
    =& \int_{\tv \in \mathbb{R}^p} \vv^\top [\proxv(\uv + c_1\vv - \tv)
    - \proxv(\uv + c_2\vv - \tv)]\alpha \phiv(\alpha\tv) d\tv \nonumber \\
    \geq& \frac{1}{c_1 - c_2}\int_{\tv \in \mathbb{R}^p} \big\|
    \proxv(\uv + c_1\vv - \tv) - \proxv(\uv + c_2\vv - \tv) \big\|_2^2
    \alpha \phiv(\alpha\tv) d\tv  \geq 0.
\end{align*}

The first inequality follows from the nonexpansiveness of the proximal
operator. This justifies the monotonicity of $\proxv_R^\alpha$ along any
direction $\vv$.
\end{proof}

\begin{lemma}\label{lem:boundproxdif}
The approximation error of $\proxv_R^\alpha(\uv)$ satisfies
\begin{equation*}
    \|\proxv_R^\alpha(\uv) - \proxv(\uv)\|_2
    \leq
    \frac{p}{\alpha}\int_{-1}^{1} |u|\phiv (u)du. 
\end{equation*}
\end{lemma}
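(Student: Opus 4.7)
The plan is to exploit the non-expansiveness of $\proxv_R$ (the first part of Lemma \ref{lem:proxproperties}) together with the product structure of the mollifier $\phiv$, after a change of variables that turns the definition of $\proxv_R^\alpha$ into an ordinary average of shifted copies of $\proxv_R$.

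First I would make the substitution $\sv = \alpha(\uv - \tv)$ in the defining integral, which rewrites
\[
    \proxv_R^\alpha(\uv)
    = \int_{\sv \in \mathbb{R}^p} \proxv_R(\uv - \sv/\alpha)\, \phiv(\sv)\, d\sv,
\]
so that, using $\int \phiv(\sv) d\sv = 1$, we can write
\[
    \proxv_R^\alpha(\uv) - \proxv_R(\uv)
    = \int_{\sv \in \mathbb{R}^p} \bigl[\proxv_R(\uv - \sv/\alpha) - \proxv_R(\uv)\bigr] \phiv(\sv)\, d\sv.
\]
Taking the $\ell_2$ norm of both sides and pulling it inside the integral via the triangle inequality (i.e. Minkowski) gives
\[
    \|\proxv_R^\alpha(\uv) - \proxv_R(\uv)\|_2
    \leq \int \|\proxv_R(\uv - \sv/\alpha) - \proxv_R(\uv)\|_2\, \phiv(\sv)\, d\sv.
\]

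Next I would apply the non-expansiveness property from Lemma \ref{lem:proxproperties}(1), which via Cauchy--Schwarz implies the Lipschitz bound $\|\proxv_R(\xv) - \proxv_R(\yv)\|_2 \leq \|\xv - \yv\|_2$. This bounds the integrand by $\|\sv\|_2/\alpha$, so that
\[
    \|\proxv_R^\alpha(\uv) - \proxv_R(\uv)\|_2
    \leq \frac{1}{\alpha}\int \|\sv\|_2\, \phiv(\sv)\, d\sv.
\]

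Finally, to convert this into the stated per-coordinate bound, I would use $\|\sv\|_2 \leq \|\sv\|_1 = \sum_{i=1}^p |s_i|$ together with the product form $\phiv(\sv) = \prod_{j=1}^p \phi(s_j)$ and the fact that $\phi$ integrates to $1$ on $[-1,1]$. For each fixed $i$, Fubini gives
\[
    \int |s_i| \phiv(\sv)\, d\sv = \int_{-1}^{1} |s|\phi(s)\, ds,
\]
and summing over $i$ yields the claimed bound $\frac{p}{\alpha}\int_{-1}^{1}|u|\phi(u)\,du$. There is no real obstacle here; the only subtlety is keeping careful track of the normalization constant in the mollification so that the change of variables produces a probability measure, which is exactly why the non-expansiveness argument then passes cleanly through the integral.
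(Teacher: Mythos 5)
Your proposal is correct and follows essentially the same route as the paper's proof: triangle inequality to pull the norm inside the mollification integral, non-expansiveness of $\proxv_R$ (hence $1$-Lipschitz continuity) to bound the integrand, the bound $\|\cdot\|_2 \leq \|\cdot\|_1$, and the product structure of $\phiv$ with Fubini to reduce to the one-dimensional integral. Your explicit change of variables at the outset is a cosmetic difference only, and your remark about the normalization constant is a fair observation (the paper's density should really carry a factor $\alpha^p$ in dimension $p$), but it does not alter the argument.
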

\begin{proof}
\begin{eqnarray*}
    \|\proxv_R^\alpha(\uv) - \proxv(\uv)\|_2
    &\leq&
    \int \|\proxv_R(\uv - \tv) - \proxv_R(\uv)\|_2 \alpha \phiv(\alpha
    \tv) d \tv  \nonumber \\
    &\leq&
    \int \|\tv\|_2 \alpha \phiv (\alpha \tv) d\tv \nonumber
    \\
    &\leq&
    \int \|\tv\|_1 \alpha \phiv (\alpha \tv) d \tv
    = \frac{p}{\alpha} \int |u| \phi(u)du
\end{eqnarray*}
We remind the reader that we have used $\alpha := 1/h$ in this proof. 
\end{proof}

%--------------------------%-----------------------------%
%--------------------------%-----------------------------%

\subsection{Proof of Theorem \ref{thm:prox-smoothing}}
\label{append:proof:thm:prox-smoothing}

Suppose that $\estim{\betav}_h$ and $\estim{\betav}$ are all in a compact set for
small enough $h$. Then we do the rest of the proof in two steps.

Step 1: We first prove $\|\estim{\betav}_h - \estim{\betav}\|_2 \rightarrow 0$.
Since $\estim{\betav}_h$ are in a compact set for small enough $h$, for any
subsequence of $\estim{\betav}_h$ there is a convergent subsubsequence. We abuse
notation and still use $\estim{\betav}_h$ for this convergent subsubsequence
and assume it converges to $\estim{\betav}_0$. Then,
\begin{align*}
    & \Big\| \estim{\betav}_0 - \proxv_R(\estim{\betav}_0 - \sum_{j=1}^n
    \xv_j\dot{\ell}( \xv_j^\top\estim{\betav}_0; y_j)) \Big\|_2 \nonumber \\
    \leq &
    \| \estim{\betav}_h - \estim{\betav}_0 \|_2
    +
    \Big\| \proxv_R^h(\estim{\betav}_h - \sum_{j=1}^n
    \xv_j\dot{\ell}( \xv_j^\top\estim{\betav}_h; y_j)) -
    \proxv_R^h(\estim{\betav}_0 - \sum_{j=1}^n
    \xv_j\dot{\ell}( \xv_j^\top\estim{\betav}_0; y_j)) \Big\|_2 \nonumber \\
    & +
    \Big\| \proxv_R^h(\estim{\betav}_0 - \sum_{j=1}^n
    \xv_j\dot{\ell}( \xv_j^\top\estim{\betav}_0; y_j)) -
    \proxv_R(\estim{\betav}_0 - \sum_{j=1}^n
    \xv_j\dot{\ell}( \xv_j^\top\estim{\betav}_0; y_j)) \Big\|_2 \nonumber \\
    \overset{(a)}{\leq} &
    2\| \estim{\betav}_h - \estim{\betav}_0 \|_2
    +
    \sum_{j=1}^n \|\xv_j\|_2 | \dot{\ell}( \xv_j^\top\estim{\betav}_h; y_j) -
    \dot{\ell}( \xv_j^\top\estim{\betav}_0; y_j) | + ph\int |u|\phi(u)du
    \nonumber \\
    \rightarrow &
    0, \;\;\; \text{ as }h \rightarrow 0
\end{align*}
To obtain Inequality (a) we have used non-expansiveness of $\proxv_R^h(\cdot)$ and Lemma \ref{lem:boundproxdif}. 
The last limit is due to the continuity of $\dot{\ell}$. As a result,
$\estim{\betav}_0$ also satisfies the first order condition
\begin{equation*}
    \estim{\betav}_0
    =
    \proxv_R(\estim{\betav}_0 - \sum_{j=1}^n
    \xv_j\dot{\ell}( \xv_j^\top\estim{\betav}_0; y_j)).
\end{equation*}
The uniqueness of $\estim{\betav}$ implies that $\estim{\betav}_0 =
\estim{\betav}$, which indicates $\estim{\betav}_h \rightarrow \estim{\betav}$.\\

Step 2: We prove $\Jv_{h,k} \rightarrow \Jv_{k}$ for $k=1,\ldots, p$. By the
2nd part of Assumption \ref{assump:prox-smoothing}, noticing
$\estim{\betav}_h \rightarrow \estim{\betav}$, we have for small enough $h$,
$\estim{\beta}_{h,k} - \sum_{j}x_{jk}\dot{\ell}(\xv_j^\top\estim{\betav}_h;
y_j)$ falls in either the interior of one of the intervals with form $(v_m +
\dot{r}_-(v_m), v_m + \dot{r}_+(v_m))$ or the interior of their complement. Also, according to part (iv) of Lemma \ref{lem:proxproperties} we have $0 \leq \frac{d}{dt}\prox_r(t) \leq 1$ (whenever the derivative is well-defined). Hence, by the dominated convergence theorem, we have
\begin{align*}
    |J_{h, k} - J_{k}|
    =&
    \Big|\dot{\prox}_r^h(\estim{\beta}_{h,k} -
    \sum_{j}x_{jk}\dot{\ell}(\xv_j^\top\estim{\betav}_h; y_j)) -
    \dot{\prox}_r(\estim{\beta}_k - \sum_j
    x_{jk}\dot{\ell}(\xv_j^\top\estim{\betav}; y_j))\Big|
    \nonumber \\
    \leq&
    \int \Big|\dot{\prox}_r(\estim{\beta}_{h,k} - \sum_{j}x_{jk}
    \dot{\ell}(\xv_j^\top\estim{\betav}_h; y_j) - hu) -
    \dot{\prox}_r(\estim{\beta}_k - \sum_j x_{jk} \dot{\ell}
    (\xv_j^\top\estim{\betav}; y_j))\Big|\phi(u)du \nonumber \\
    \rightarrow&
    0,\;\;\; \text{ as } h \rightarrow 0
\end{align*}

Notice that $J_k = 0$ when $k \notin E$, our conclusion follows.

%--------------------------%-----------------------------%
%--------------------------%-----------------------------%

%-------------------------------%----------------------------------%
%-------------------------------%----------------------------------%

\subsection{Proof of Theorem \ref{thm:nonsmooth-loss-intercept}}
\label{ssec:proof-of-nonsmooth-loss-intercept}
In this section we prove the ALO formula for models with nonsmooth losses and
intercepts. We start our discussion from the conclusion of Theorem
\ref{thm:nonsmooth-loss-approx}. Recall that $S=\big\{j: \estim{\beta}_0 +
\xv^\top_j \estim{\betav} = v_t, \text{ for some } t \in \{1, \ldots, k\}
\big\}$ and $V = [1,\ldots, n] \backslash S$ where $v_t$'s are the zeroth-order
singular points of the nonsmooth loss function. First, note that when the
intercept is involved, the matrix $\Yv$ takes the following form
\begin{align*}
    \Yv_1
    =&
    \begin{bmatrix}
        0 & \\ & \nabla^2 R(\estim{\betav})
    \end{bmatrix}
    + \begin{bmatrix}
        \bm{1}^\top \\ \Xv_{S,\cdot}^\top
    \end{bmatrix}
    \diag[\{\ddot{\ell} (\estim{\beta}_0 + \xv_j^\top\estim{\betav})\}_{j \in S}]
    [\bm{1}, \Xv_{S,\cdot}] \nonumber \\
    =&
    \begin{bmatrix}
        \sum_{j\in S} \ddot{\ell}(\hat{\beta}_0 + \xv_j^\top\hat{\betav}; y_j)
        & 
        \sum_{j\in S} \ddot{\ell}(\hat{\beta}_0 + \xv_j^\top\hat{\betav}; y_j) \Xv_j^\top \\
        \sum_{j\in S} \ddot{\ell}(\hat{\beta}_0 + \xv_j^\top\hat{\betav}; y_j) \Xv_j
        &
        \Xv_{S,\cdot}^\top \diag[\{\ddot{\ell}(\hat{\beta}_0 + \xv_j^\top\hat{\betav};
        y_j)\}_{j \in S}] \Xv_{S,\cdot} + \nabla^2 R(\estim{\betav})
    \end{bmatrix}
\end{align*}

Since $\ddot{\ell}(\hat{\beta}_0 + \xv_j^\top\hat{\betav}; y_j)$ may be zero
for all $j \in S$ (such as in the case of SVM), we cannot directly apply the
matrix inversion formula to simplify $\Yv_1^{-1}$. Nevertheless we can still
use the smoothing techniques in Section \ref{ssec:nonsmooth-loss} by replacing
$\ddot{\ell}(\hat{\beta}_0 + \xv_j^\top\hat{\betav}; y_j)$ with
$\ddot{\ell}_h(\hat{\beta}_0 + \xv_j^\top\hat{\betav}; y_j)$ and setting $h$
goes to 0. Now take
\begin{equation*}
    \Yv_{1,h}
    =
    \begin{bmatrix}
        \sum_{j\in S} \ddot{\ell}_h(\hat{\beta}_0 + \xv_j^\top\hat{\betav}; y_j)
        & 
        (\sum_{j\in S} \ddot{\ell}_h(\hat{\beta}_0 + \xv_j^\top\hat{\betav}; y_j))\Xv_j^\top \\
        (\sum_{j\in S} \ddot{\ell}_h(\hat{\beta}_0 + \xv_j^\top\hat{\betav}; y_j))\Xv_j
        &
        \Xv_{S,\cdot}^\top \diag[\{\ddot{\ell}_h(\hat{\beta}_0 + \xv_j^\top\hat{\betav};
        y_j)\}_{j\in S}] \Xv_{S,\cdot} + \nabla^2 R(\estim{\betav})
    \end{bmatrix}
\end{equation*}

Let $a_h = \sum_{j\in S} \ddot{\ell}_h(\hat{\beta}_0 +
\xv_j^\top\hat{\betav}; y_j)$, $\bv_h = \sum_{j \in S}
(\ddot{\ell}_h (\hat{\beta}_0 + \xv_j^\top\hat{\betav}; y_j)) \xv_j$,
$\Yv_h = \Xv_{S,\cdot}^\top \diag[\{\ddot{\ell}_h(\hat{\beta}_0 +
\xv_j^\top\hat{\betav}; y_j)\}_{j\in S}] \Xv_{S,\cdot}
+ \nabla^2 R(\estim{\betav})$. Now we have

\begin{align} \label{eq:nonsmooth-loss-intercept1}
    &\bigg([\bm{1}, \Xv_{V,\cdot}] \Yv_{1, h}^{-1}
    \begin{bmatrix}
        \bm{1}^\top \\ \Xv_{V,\cdot}^\top
    \end{bmatrix}\bigg)^{-1} \nonumber \\
    =&
    \Bigg([\bm{1}, \Xv_{V,\cdot}]
    \begin{bmatrix}
        \frac{1}{a_h - \bv_h^\top \Yv_h^{-1} \bv_h} & - \frac{\bv_h^\top \Yv_h^{-1}}{a_h - \bv_h^\top \Yv_h^{-1} \bv_h} \\
        - \frac{\Yv_h^{-1}\bv_h}{a _h- \bv_h^\top \Yv_h^{-1} \bv_h}
        &
        \Yv_h^{-1} + \frac{\Yv_h^{-1}\bv_h\bv_h^\top\Yv_h^{-1}}{a_h - \bv_h^\top \Yv_h^{-1} \bv_h}
    \end{bmatrix}
\begin{bmatrix} \bm{1}^\top \\ \Xv_{V,\cdot}^\top \end{bmatrix} \Bigg)^{-1} \nonumber \\
    =&
    \Big[\Xv_{V,\cdot}\Yv_h^{-1}\Xv_{V,\cdot}^\top + \frac{1}{a_h - \bv_h^\top\Yv_h^{-1}\bv_h}
    \big(\bm{1} - \Xv_{V,\cdot} \Yv_h^{-1} \bv_h\big)
    \big(\bm{1} - \Xv_{V,\cdot} \Yv_h^{-1} \bv_h\big)^\top \Big]^{-1} \nonumber \\
    =&
    \big[\Xv_{V,\cdot}\Yv_h^{-1}\Xv_{V,\cdot}^\top\big]^{-1}
    -
    \frac{\big[\Xv_{V,\cdot}\Yv_h^{-1}\Xv_{V,\cdot}^\top\big]^{-1}
    \big(\bm{1} - \Xv_{V,\cdot} \Yv_h^{-1} \bv_h\big)
    \big(\bm{1} - \Xv_{V,\cdot} \Yv_h^{-1} \bv_h\big)^\top
    \big[\Xv_{V,\cdot}\Yv_h^{-1}\Xv_{V,\cdot}^\top\big]^{-1}}
    {a_h - \bv_h^\top\Yv_h^{-1}\bv_h
    + \big(\bm{1} - \Xv_{V,\cdot} \Yv_h^{-1} \bv_h\big)^\top
    \big[\Xv_{V,\cdot}\Yv_h^{-1}\Xv_{V,\cdot}^\top\big]^{-1}
    \big(\bm{1} - \Xv_{V,\cdot} \Yv_h^{-1} \bv_h\big)} \nonumber \\
    \rightarrow&
    \big[\Xv_{V,\cdot}\Yv^{-1}\Xv_{V,\cdot}^\top\big]^{-1}
    -
    \frac{\big[\Xv_{V,\cdot}\Yv^{-1}\Xv_{V,\cdot}^\top\big]^{-1}
    \big(\bm{1} - \Xv_{V,\cdot} \Yv^{-1} \bv\big)
    \big(\bm{1} - \Xv_{V,\cdot} \Yv^{-1} \bv\big)^\top
    \big[\Xv_{V,\cdot}\Yv^{-1}\Xv_{V,\cdot}^\top\big]^{-1}}
    {a - \bv^\top\Yv^{-1}\bv
    + \big(\bm{1} - \Xv_{V,\cdot} \Yv^{-1} \bv\big)^\top
    \big[\Xv_{V,\cdot}\Yv^{-1}\Xv_{V,\cdot}^\top\big]^{-1}
    \big(\bm{1} - \Xv_{V,\cdot} \Yv^{-1} \bv\big)},
    \quad \text{as } h \rightarrow 0.
\end{align}
where $\Yv = \Xv_{S,\cdot}^\top \diag[\{\ddot{\ell}(\hat{\beta}_0 +
\xv_j^\top\hat{\betav}; y_j)\}_{j\in S}] \Xv_{S,\cdot} + \nabla^2
R(\estim{\betav})$ takes the same form as in Theorem
\ref{thm:nonsmooth-loss-approx}, $a = \sum_{j\in S} \ddot{\ell}(\hat{\beta}_0 +
\xv_j^\top\hat{\betav}; y_j)$, $\bv = \sum_{j \in S} \ddot{\ell}
(\hat{\beta}_0 + \xv_j^\top\hat{\betav}; y_j) \xv_j$, here we use $\ddot{\ellv}_S$ to denote
$\bv_h$ at $h=0$.

Next we look at how does the value of $W_{ii}$ changes where $i \in S$. Note
that $W_{ii}$'s are the limiting value of the diagonals of the following matrix $\Wv_{1, h}$:
\begin{equation*}
    \Wv_{1,h}
    =
    [\bm{1}, \Xv_{S,\cdot}] \Yv_{1,h}^{-1}
    \begin{bmatrix}
        \bm{1}^\top \\ \Xv_{S,\cdot}^\top
    \end{bmatrix}
    - [\bm{1}, \Xv_{S,\cdot}] \Yv_{1,h}^{-1}
    \begin{bmatrix}
        \bm{1}^\top \\ \Xv_{V,\cdot}^\top
    \end{bmatrix}
    \bigg( [\bm{1}, \Xv_{V,\cdot}] \Yv_{1,h}^{-1}
    \begin{bmatrix}
        \bm{1}^\top \\ \Xv_{V,\cdot}^\top
    \end{bmatrix} \bigg)^{-1}
    [\bm{1}, \Xv_{V,\cdot}] \Yv_{1,h}^{-1}
    \begin{bmatrix}
        \bm{1}^\top \\ \Xv_{S,\cdot}^\top
    \end{bmatrix}
\end{equation*}

After pluggin \eqref{eq:nonsmooth-loss-intercept1} in the above equation, and a
few messy simplification steps, we reach to the follow expression for the
limiting value of $\Wv_1$:
\begin{align*}
    \Wv_1 =& \lim_{h \rightarrow 0} \Wv_{1, h} \nonumber \\
    =&
    \Xv_{S,\cdot}\Yv^{-1}\Xv_{S,\cdot}^\top -
    \Xv_{S,\cdot}\Yv^{-1}\Xv_{V,\cdot}^\top
    \big[\Xv_{V,\cdot}\Yv^{-1}\Xv_{V,\cdot}^\top \big]^{-1}
    \Xv_{V,\cdot}\Yv^{-1}\Xv_{S,\cdot}^\top \nonumber \\
    &+
    \frac{\dv \dv^\top}
    {a - \bv^\top\Yv^{-1}\bv + \big(\bm{1} - \Xv_{V,\cdot} \Yv^{-1} \bv\big)^\top
    \big[\Xv_{V,\cdot}\Yv^{-1}\Xv_{V,\cdot}^\top\big]^{-1}
    \big(\bm{1} - \Xv_{V,\cdot} \Yv^{-1} \bv\big)}
\end{align*}
where $\dv = \Xv_{S,\cdot}\Yv^{-1}\Xv_{V,\cdot}^\top \big[\Xv_{V,\cdot}
\Yv^{-1} \Xv_{V,\cdot}^\top \big]^{-1} (\bm{1} - \Xv_{V,\cdot}\Yv^{-1}\bv) -
(\bm{1} - \Xv_{S,\cdot}\Yv^{-1}\bv)$.

Finally for the (sub)gradients $g_{\ell, i}$, everything remains the same,
specifically we have:
\begin{equation*}
    g_{\ell, i} = \dot{\ell}(\estim{\beta}_0 + \xv_i^\top \estim{\betav};
    y_i),
    \quad \text{for } i \in S;
    \qquad
    \gv_{\ell,V} = (\Xv_{V,\cdot}\Xv_{V,\cdot}^\top)^{-1}\Xv_{V,\cdot}
    \Bigg[\nabla R(\estim{\betav}) - \sum_{j \in
    S}\xv_j\dot{\ell}(\estim{\beta}_0 + \xv_j^\top \estim{\betav}; y_j)\Bigg].
\end{equation*}

\subsection{Proof of Nuclear Norm ALO Formula}
\label{append:sec:nuclear-norm}

In this section, we prove Theorem \ref{thm:matrix-nonsmooth-alo}.  Consider the following problem
\begin{equation*}\label{eq:append:matrix-smooth-main}
    \estim{\Bv} = \arg\min_{\Bv} \sum_{j=1}^n \ell(\langle \Xv_j,
    \Bv\rangle; y_j)^2 + \lambda R(\Bv).
\end{equation*}
where $R$ is a unitarily invariant function, which will be
explained and studied in more detail in Section
\ref{append:ssec:unitary-matrix-function}. This section is laid out as follows: in
Section \ref{append:ssec:unitary-matrix-function}, we briefly discuss basic properties
of unitarily invariant functions; In Section \ref{append:ssec:smooth-unitary} we do
ALO for smooth unitarily invariant penalties; In Section
\ref{append:ssec:nuclear-proof} we prove Theorem \ref{thm:matrix-nonsmooth-alo}
where nuclear norm is considered.

\subsubsection{Properties of Unitarily Invariant Functions}
\label{append:ssec:unitary-matrix-function}

Let $\Bv \in \RR^{p_1 \times p_2}$, and consider the SVD of $\Bv$ as $\Bv = \Uv
\diag[\sigmav] \Vv^\top$ with $\Uv \in \mathbb{R}^{p_1 \times p_1}$, $\Vv \in
\mathbb{R}^{p_2 \times p_2}$. We say that a function $R : \RR^{p_1 \times
p_2} \rightarrow \RR$ is unitarily invariant if there exists an absolutely
symmetric function $f : \RR^{\min(p_1, p_2)} \rightarrow \RR$ such that:
\begin{equation*}
    R(\Bv) = f(\sigmav),
\end{equation*}
where we say that $f: \RR^q \rightarrow \RR$ is absolutely symmetric if for
any $\xv \in \RR^q$, any permutation $\tau$ and signs $\bm{\epsilon} \in \{ -1, 1 \}^q$ we have:
\begin{equation*}
    f(x_1, \dotsc, x_q) = f(\epsilon_1 x_{\tau(1)}, \dotsc, \epsilon_q x_{\tau(q)}).
\end{equation*}

The properties of $R$ and $f$ are closely related, and in particular we will make
use of the following lemma relating their convexity, smoothness and derivatives, proved in
\cite{lewis1995convex}.

\begin{lemma}[\cite{lewis1995convex}]\label{lemma:unitary1}
    Let $R(\Bv) = f(\sigmav)$ with $\Bv=\Uv\diag[\sigmav] \Vv^\top$ its SVD.
    There is a one-to-one correspondence between unitarily
    invariant matrix functions $R$ and symmetric functions $f$. Furthermore the
    convexity and/or differentiability of $f$ are equivalent to the convexity
    and/or
    differentiability of $R$ respectively. If $R$ is differentiable, its
    derivative is given by:
    \begin{equation*}
        \nabla R(\Bv) = \Uv \diag[\nabla f(\sigmav)]\Vv^\top.
    \end{equation*}
    When $f$ is not differentiable, a similar result holds with gradient
    replaced by subdifferentials
    \begin{equation*}
        \partial R(\Bv) = \Uv \diag[\partial f(\sigmav)]\Vv^\top.
    \end{equation*}
\end{lemma}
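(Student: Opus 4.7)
The plan is to establish each claim in turn, noting that this is a classical result and the heart of the argument goes back to Lewis's analysis of spectral functions. I would reduce everything to the diagonal case via unitary invariance.

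First, for the one-to-one correspondence, I would define $f : \mathbb{R}^{\min(p_1,p_2)} \to \mathbb{R}$ by $f(\sigmav) := R(\diag[\sigmav])$, where $\diag[\sigmav]$ is understood as the $p_1 \times p_2$ matrix with $\sigmav$ on its main diagonal and zeros elsewhere. Absolute symmetry of $f$ follows because any sign flip of $\sigma_i$ can be achieved by left-multiplying $\diag[\sigmav]$ by a diagonal $\pm 1$ orthogonal matrix, and any permutation by multiplying by permutation matrices on left and right; both are unitary, so $R$ is unchanged. Conversely, given absolutely symmetric $f$, the assignment $R(\Bv) := f(\sigmav(\Bv))$ is well-defined since singular values are unique up to permutation, and is unitarily invariant since singular values are preserved under unitary pre- and post-multiplication.

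Next, for the convexity equivalence, the forward direction (if $R$ convex then $f$ convex) is trivial by restriction to diagonal matrices. The reverse direction is the substantive one, and I would invoke von Neumann's trace inequality $\langle \Av, \Bv \rangle \le \sigmav(\Av)^\top \sigmav(\Bv)$, together with the fact that $f$ absolutely symmetric and convex admits a representation $f(\xv) = \sup_{\yv \in D} \yv^\top \xv$ where $D$ is a set closed under sign flips and permutations. Combining, one writes $R(\Bv) = f(\sigmav(\Bv))$ as a supremum of linear-in-$\Bv$ functionals, proving convexity.

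The main technical obstacle is the derivative formula, especially at matrices with repeated singular values. For a matrix $\Bv_0$ with distinct nonzero singular values, I would take a smooth curve $\Bv(t) = \Bv_0 + t\Hv$ and use the standard first-order perturbation of singular values: $\dot\sigma_i(0) = \estim{\uv}_i^\top \Hv \estim{\vv}_i$. Then $\frac{d}{dt}R(\Bv(t))|_{t=0} = \nabla f(\sigmav_0)^\top \dot\sigmav(0) = \sum_i \partial_i f(\sigmav_0)\, \estim{\uv}_i^\top \Hv \estim{\vv}_i = \langle \estim{\Uv} \diag[\nabla f(\sigmav_0)] \estim{\Vv}^\top, \Hv\rangle$, giving the stated formula. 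For repeated singular values, the SVD is non-unique and singular values are only directionally differentiable in general; here absolute symmetry of $f$ saves the day, because $\partial_i f$ is equal across indices of a repeated block, so the formula $\estim{\Uv}\diag[\nabla f(\sigmav_0)]\estim{\Vv}^\top$ is invariant under the orthogonal rotations within that block that are the ambiguity of the SVD. For the subdifferential version, one would argue similarly using that $\partial R(\Bv)$ must be unitarily invariant under the stabilizer of $\Bv$, reducing the characterization to absolutely symmetric subgradients of $f$; a short Fenchel-duality argument using $R^{**} = R$ and $R^*(\Yv) = f^*(\sigmav(\Yv))$ (itself proved by von Neumann's inequality) then yields $\partial R(\Bv) = \estim{\Uv}\diag[\partial f(\sigmav)]\estim{\Vv}^\top$.
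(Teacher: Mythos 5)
The paper does not actually prove this lemma: it is imported by citation from \cite{lewis1995convex}, so there is no in-paper argument to compare against. Your sketch is a faithful reconstruction of the standard proof from that reference — diagonal restriction for the one-to-one correspondence, von Neumann's trace inequality plus biconjugation for convexity, and first-order singular-value perturbation together with the SVD-ambiguity invariance argument for the gradient and subdifferential formulas — and this is essentially the only known route. Two points in your write-up deserve tightening, though neither is a gap in the idea. First, a general convex $f$ is a supremum of \emph{affine}, not linear, minorants, so the reverse convexity implication is cleanest phrased as $R = R^{**}$ with $R^*(\Yv) = f^*(\sigmav(\Yv))$, where the inner supremum over matrices with prescribed singular values is evaluated by the equality case of von Neumann's inequality (simultaneous SVD). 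Second, at a vanishing singular value the SVD ambiguity includes the sign flip $(\uv_i, \vv_i) \mapsto (\uv_i, -\vv_i)$ and the map $\Bv \mapsto \sigma_i(\Bv)$ is itself nonsmooth; well-definedness and validity of $\nabla R(\Bv) = \Uv\diag[\nabla f(\sigmav)]\Vv^\top$ there rely on $\partial_i f(\sigmav) = 0$ whenever $\sigma_i = 0$, which follows from absolute symmetry (evenness in each coordinate) plus differentiability of $f$. With those two clarifications your outline matches the cited proof.
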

Based on this lemma, we know that as long as $f$ is convex and/or
smooth, the corresponding matrix function will be convex and/or smooth. This enables
us to produce convex and smooth unitarily invariant approximation to non-smooth
unitarily invariant matrix regularizers. In addition to the gradient of the unitarily invariant matrix functions, we
also need their Hessians. The following Theorem  characterizes the hessian for a sub-class of unitarily invariant functions.
\begin{theorem}\label{thm:matrix-twice-diff}
    Consider a unitarily invariant function with form $R(\Bv) =
    \sum_{j=1}^{ \min(p_1, p_2)} f(\sigma_j)$, where $f$ is a
    smooth function on $\mathbb{R}$ and $\Bv=\Uv\diag[\sigmav]\Vv^\top$ is its
    SVD with $\Uv \in \mathbb{R}^{p_1 \times p_1}$, $\Vv \in \mathbb{R}^{p_2
    \times p_2}$. Further assume that all the $\sigma_j$'s
    are different from each other and nonzero. Let $p_3 = \min(p_1, p_2)$, $p_4
    = \max(p_1, p_2)$. Then the Hessian matrix $\nabla^2 R(\Bv) \in
    \mathbb{R}^{p_1p_2 \times p_1p_2}$ takes the following form
    \begin{equation}\label{eq:matrix-hessian-general}
        \nabla^2 R(\Bv)
        =
        \Qv
        \Bigg[\begin{array}{ccc}
                A_1 & 0 & 0 \\
                0 & A_2 & 0 \\
                0 & 0 & A_3
        \end{array}\Bigg]
        \Qv^\top,
    \end{equation}
    where the first block $A_1 \in \mathbb{R}^{p_3 \times p_3}$, is
    diagonal with $A_{1, (ss, ss)} = f''(\sigma_s)$, $1\leq s\leq p_3$. The
    second block $A_2 \in \mathbb{R}^{p_3(p_3 - 1) \times p_3(p_3 - 1)}$ satisfies the following properties: for
    $1 \leq s\neq t \leq p_3$, $A_{2,(st, st)} = A_{2,(ts, ts)} =
    \frac{\sigma_s f'(\sigma_s) - \sigma_t f'(\sigma_t)}{\sigma_s^2 -
    \sigma_t^2}$, $A_{2, (st, ts)} = A_{2, (ts, st)} = - \frac{\sigma_s
    f'(\sigma_t) - \sigma_t f'(\sigma_s)}{\sigma_s^2 - \sigma_t^2}$; The third
    block $A_3 \in \mathbb{R}^{(p_4 - p_3)p_3 \times (p_4 - p_3)p_3}$ satisfies $A_{3,
    (st, st)} = \frac{f'(\sigma_t)}{\sigma_t}$ for $1 \leq t \leq p_3 < s \leq
    p_4$. Except for these specified locations, all other components of $A_1,
    A_2, A_3$ are zero. $\Qv$ is an orthogonal matrix with $\Qv_{\cdot, st} =
    \vecop(\uv_s\vv_t^\top)$ where $\uv_s$, $\vv_t$ are the $s$\tsup{th}
    column of $\Uv$ and $t$\tsup{th} column of $\Vv$ respectively. $\vecop(\cdot)$
    denotes the vectorization operator, which aligns all the components of a
    matrix into a long vector.
\end{theorem}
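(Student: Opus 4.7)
The plan is to exploit the unitary invariance of $R$ to reduce the problem to computing the Hessian at a diagonal matrix, and then to identify the entries of $\nabla^2 R$ by doing second-order perturbation theory of singular values in a carefully chosen basis. First, I would observe that the columns $\{\vecop(\uv_s\vv_t^\top) : 1 \leq s \leq p_1,\, 1 \leq t \leq p_2\}$ form an orthonormal basis of $\RR^{p_1 p_2}$, so that $\Qv$ in \eqref{eq:matrix-hessian-general} is orthogonal and the claim reduces to computing $\Qv^\top \nabla^2 R(\Bv) \Qv$. For any perturbation $\Ev = \sum_{s,t} c_{st} \uv_s \vv_t^\top$, unitary invariance gives $R(\Bv + \Ev) = R(\Sigmav + \Cv)$ where $\Sigmav \in \RR^{p_1 \times p_2}$ has $\sigmav$ on its upper-diagonal block and $\Cv$ is the matrix with entries $c_{st}$ in the canonical basis. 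Hence the Hessian in the $\Qv$-basis equals the Hessian of $g(\Cv) := R(\Sigmav + \Cv)$ at $\Cv = \zerov$ in the canonical basis of $\RR^{p_1 \times p_2}$; this reduction kills the mixed partials between the $\{(s,t)\}$ and $\{(s',t')\}$ directions whenever they involve disjoint index-pairs up to the transposition $s \leftrightarrow t$, which is what produces the block-diagonal structure with blocks $A_1$, $A_2$, $A_3$.

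Next I would work out each block by a direct second-order expansion of the singular values. For $A_1$, the diagonal perturbation $\Cv = a\, e_s e_s^\top$ with $s \leq p_3$ only shifts $\sigma_s \mapsto \sigma_s + a$ exactly, so $g(\Cv) = f(\sigma_s+a) + \sum_{j\neq s} f(\sigma_j)$ and the second derivative in $a$ is $f''(\sigma_s)$. For $A_2$, with $1 \leq s \neq t \leq p_3$, I would consider the two-parameter family $\Cv = a e_s e_t^\top + b e_t e_s^\top$, look at the $2\times 2$ sub-block $\bigl(\begin{smallmatrix}\sigma_s & a\\ b & \sigma_t\end{smallmatrix}\bigr)$, and use the identities
\begin{equation*}
s_1^2 + s_2^2 = \sigma_s^2 + \sigma_t^2 + a^2 + b^2,\qquad s_1 s_2 = |\sigma_s \sigma_t - ab|
\end{equation*}
to solve for the second-order shifts $\delta_1, \delta_2$ of the two singular values. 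A short computation yields
\begin{equation*}
f'(\sigma_s)\delta_1 + f'(\sigma_t)\delta_2 = \frac{(a^2+b^2)/2\cdot[\sigma_s f'(\sigma_s) - \sigma_t f'(\sigma_t)] + ab\cdot[\sigma_t f'(\sigma_s) - \sigma_s f'(\sigma_t)]}{\sigma_s^2 - \sigma_t^2},
\end{equation*}
from which the four claimed entries $A_{2,(st,st)}$, $A_{2,(ts,ts)}$, $A_{2,(st,ts)}$, $A_{2,(ts,st)}$ can be read off (the second-order terms $f''\delta^2$ are of order four in $(a,b)$ and negligible). For $A_3$, with $t \leq p_3 < s \leq p_4$ (assuming $p_1 \geq p_2$), the perturbation $\Cv = a\, e_s e_t^\top$ augments column $t$ by a single entry outside the diagonal block, and one checks that the new $t$-th singular value is $\sqrt{\sigma_t^2 + a^2} = \sigma_t + a^2/(2\sigma_t) + O(a^4)$, giving $A_{3,(st,st)} = f'(\sigma_t)/\sigma_t$; all other singular values are unchanged to second order.

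Finally I would argue that the off-block mixed partials vanish, by the same second-order perturbation argument: combining a perturbation from one block with a perturbation from another only produces contributions of order three or higher in the combined parameters, because each block's perturbation affects only the singular values whose indices it "touches." The assumptions that all $\sigma_j$ are distinct and nonzero are used twice: nondegeneracy of the singular values is what permits the use of non-degenerate perturbation theory in the $2\times 2$ analysis (otherwise the denominator $\sigma_s^2 - \sigma_t^2$ would vanish and one would have to diagonalize the leading perturbation instead), and nonvanishing singular values are needed for the expansion $\sqrt{\sigma_t^2 + a^2} = \sigma_t + a^2/(2\sigma_t) + \cdots$ to be regular in the $A_3$ block. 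I expect the main obstacle to be the $A_2$ block: the bookkeeping for the second-order shifts $\delta_1, \delta_2$ and the verification that mixed partials across different $(s,t)$-pairs truly vanish requires care, and the final formula is asymmetric-looking because the symmetric combination $(a^2+b^2)/2$ and antisymmetric combination $ab$ decouple in a way specific to the singular-value structure.
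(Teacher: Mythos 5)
Your proposal is correct, but it takes a genuinely different route from the paper. The paper starts from the known gradient formula $\nabla R(\Bv) = \Uv\diag[\{f'(\sigma_j)\}_j]\Vv^\top$ (Lemma \ref{lemma:unitary1}) and differentiates it once more, invoking the first-order SVD perturbation formulas of Cand\`es et al.\ for $d\Uv[\Deltav]$, $d\Vv[\Deltav]$, $d\sigma_s[\Deltav]$ and then pairing $d\nabla R(\Bv)[\uv_s\vv_t^\top]$ against the basis $\{\uv_k\vv_l^\top\}$. You instead reduce to the diagonal matrix via unitary invariance ($R(\Uv(\diag[\sigmav]+\Cv)\Vv^\top) = R(\diag[\sigmav]+\Cv)$, so the conjugation by $\Qv$ is exactly the passage to canonical coordinates of $\Cv$) and then Taylor-expand $R$ itself to second order using only the trace and determinant of the relevant $2\times 2$ blocks. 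Your $A_2$ computation checks out: solving the linearized system for $(\delta_1,\delta_2)$ gives precisely the stated diagonal and off-diagonal entries once one remembers that the quadratic form is $\tfrac12(H_{11}a^2+2H_{12}ab+H_{22}b^2)$. What your approach buys is self-containedness — no external SVD-differential lemma is needed — at the cost of having to verify the off-block vanishing by hand, which the paper gets for free from the explicit expression for $d\nabla R(\Bv)[\Deltav]$. On that last point your justification is slightly too glib: perturbations $\Ev_{st}$ and $\Ev_{st'}$ with $t\neq t'$ \emph{do} touch the same singular value $\sigma_s$, so "disjoint indices" is not the right reason; the correct reason is that the second-order Rayleigh--Schr\"odinger correction to an eigenvalue of $(\diag[\sigmav]+\Cv)(\diag[\sigmav]+\Cv)^\top$ is a sum of \emph{squared} matrix elements over distinct off-diagonal positions, hence contains no $ab$ cross term. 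With that one sentence added, your argument is complete and yields the same matrix as the paper's.
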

\begin{remark}
    Since here we are talking about the Hessian matrix of functions on matrix
    space, we treat them as vectors. The correspondence between each block in
    \eqref{eq:matrix-hessian-general} and the components of the original
    matrix $\Bv$ are exhibited in Figure \ref{fig:matrix-general}.
    \begin{figure}[!htb]
        \begin{center}
            \begin{tikzpicture}[scale=1.0]
                % (0, 0) -- (2, 3)
                \path [fill=cyan] (0, 0) rectangle (2, 1);
                \path [fill=green] (0, 1) rectangle (2, 3);
                \path [fill=orange] (0, 3) -- (0.1, 3) -- (2, 1.1) -- (2, 1) --
                (1.9, 1) -- (0, 2.9) -- (0, 3);
                \draw (0, 0) rectangle (2, 3);
                \draw [dashed, thick] (0, 1) -- (2, 1);
                \draw [dashed, thick] (0, 3) -- (2, 1);
                
                \draw [fill] (0.75, 2.25) circle [radius=0.04];
                \node [right] at (0.75, 2.25) {\tiny $(s_1, s_1)$};
                \draw [fill] (0.3, 2) circle [radius=0.04];
                \node [below] at (0.4, 2) {\tiny $(s_2, t_2)$};
                \draw [fill] (1, 2.7) circle [radius=0.04];
                \node [right] at (1, 2.7) {\tiny $(t_2, s_2)$};
                \draw [fill] (1.5, 0.5) circle [radius=0.04];
                \node [left] at (1.5, 0.5) {\tiny $(s_3, t_3)$};
                
                \draw [ultra thick, ->] (2.5, 1.5) -- (3.8, 1.5);
                
                % (5, -0.5) -- (9, 3.5)
                \path [fill=lightgray] (5, -0.5) rectangle (9, 3.5);
                \path [fill=cyan] (8, 0.5) rectangle (9, -0.5);
                \path [fill=green] (6, 2.5) rectangle (8, 0.5);
                \path [fill=orange] (5, 3.5) rectangle (6, 2.5);
                \draw [dotted, thick] (5, 2.5) -- (8, 2.5);
                \draw [dotted, thick] (6, 0.5) -- (9, 0.5);
                \draw [dotted, thick] (6, 0.5) -- (6, 3.5);
                \draw [dotted, thick] (8, -0.5) -- (8, 2.5);
                \draw [dashed, thick] (5, 3.5) -- (9, -0.5);
                \draw (5, -0.5) rectangle (9, 3.5);
                
                \draw [fill] (5.3, 3.2) circle [radius=0.04];
                \draw [dotted, thick] (5, 3.2) -- (5.3, 3.2);
                \draw [dotted, thick] (5.3, 3.5) -- (5.3, 3.2);
                \node [left] at (5, 3.2) {\tiny $(s_1, s_1)$};
                \node [right, rotate=90] at (5.3, 3.5) {\tiny $(s_1, s_1)$};
                
                \draw [fill] (6.5, 2.0) circle [radius=0.04];
                \draw [fill] (7.4, 1.1) circle [radius=0.04];
                \draw [fill] (7.4, 2.0) circle [radius=0.04];
                \draw [fill] (6.5, 1.1) circle [radius=0.04];
                \draw [dotted, thick] (5, 2.0) -- (7.4, 2.0);
                \draw [dotted, thick] (5, 1.1) -- (7.4, 1.1);
                \draw [dotted, thick] (6.5, 3.5) -- (6.5, 1.1);
                \draw [dotted, thick] (7.4, 3.5) -- (7.4, 1.1);
                \node [left] at (5, 2.0) {\tiny $(s_2, t_2)$};
                \node [right, rotate=90] at (6.5, 3.5) {\tiny $(s_2, t_2)$};
                \node [left] at (5, 1.1) {\tiny $(t_2, s_2)$};
                \node [right, rotate=90] at (7.4, 3.5) {\tiny $(t_2, s_2)$};
                
                \draw [fill] (8.6, -0.1) circle [radius=0.04];
                \draw [dotted, thick] (5, -0.1) -- (8.6, -0.1);
                \draw [dotted, thick] (8.6, 3.5) -- (8.6, -0.1);
                \node [left] at (5, -0.1) {\tiny $(s_3, t_3)$};
                \node [right, rotate=90] at (8.6, 3.5) {\tiny $(s_3, t_3)$};
                
                \node [below] at (5.3, 3.2) {\scriptsize $a$};
                \node [below left] at (6.5, 2.0) {\scriptsize $b$};
                \node [right] at (7.4, 1.1) {\scriptsize $b$};
                \node [right] at (7.4, 2.0) {\scriptsize $c$};
                \node [below] at (6.5, 1.1) {\scriptsize $c$};
                \node [below] at (8.6, -0.1) {\scriptsize $d$};

                \draw [thin, ->] (9.6, 3) -- (6.1, 3);
                \draw [thin, ->] (9.6, 1.5) -- (8.1, 1.5);
                \draw [thin, ->] (9.6, 0) -- (9.1, 0);
                \node [right] at (9.6, 3) {$A_1$};
                \node [right] at (9.6, 1.5) {$A_2$};
                \node [right] at (9.6, 0) {$A_3$};

                \node [below] at (1, -0.5) {$\Uv^\top\Bv\Vv=\diag[\sigmav]$};
                \node [below] at (7, -0.5) {$\Qv^\top\nabla^2 R(\Bv)\Qv$};
            \end{tikzpicture}
        \end{center}
        \caption{An illustration of the correspondence between the structure of
            the original matrix and the structure of the Hessian matrix of $R$.
            As we have mentioned in Theorem \ref{thm:matrix-twice-diff},
            $a=f''(\sigma_{s_1})$,
            $b = \frac{\sigma_{s_2} f'(\sigma_{s_2}) - \sigma_{t_2}
            f'(\sigma_{t_2}) } {\sigma_{s_2}^2 - \sigma_{t_2}^2 }$,
            $c = - \frac{\sigma_{s_2} f'(\sigma_{t_2}) - \sigma_{t_2}
            f'(\sigma_{s_2}) } {\sigma_{s_2}^2 - \sigma_{t_2}^2}$; $d =
            \frac{f'(\sigma_{t_3})}{\sigma_{t_3}}$.} \label{fig:matrix-general}
    \end{figure}
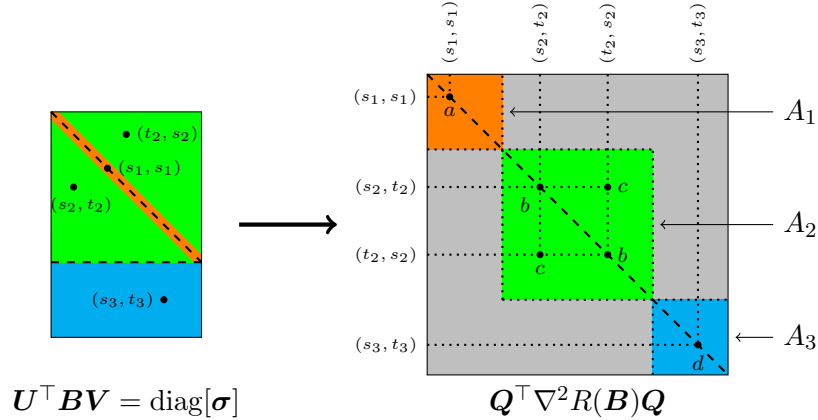
\end{remark}

\begin{proof}
    First by Lemma \ref{lemma:unitary1}, the gradient $\nabla R(\Bv)$ takes the
    following form:
    \begin{equation*}
        \nabla R(\Bv) = \Uv \diag[\{f'(\sigma_j)\}_{j}] \Vv^\top.
    \end{equation*}
    
    In order to find the differential of $\nabla R(\Bv)$, we use the similar
    techniques and notations as the ones used in Lemma IV.2 and Theorem IV.3 of
    \cite{candes2013unbiased}. To simplify our derivation, we assume $p_1 \geq
    p_2$. This does not affect the correctness of our final conclusion.

    We characterize the differential of the gradient as a linear form.
    Specifically, along a certain direction $\Deltav \in \mathbb{R}^{p_1 \times
    p_2}$, by Lemma IV.2 in
    \cite{candes2013unbiased}, we have
    \begin{equation}\label{eq:matrix-svd-differential}
        d\Uv[\Deltav] = \Uv\Omegav_{\Uv}[\Deltav]
        ,\quad
        d\Vv[\Deltav] = \Vv\Omegav_{\Vv}[\Deltav]^{\top}
        ,\quad
        d\sigma_s[\Deltav] = [\Uv^\top\Deltav\Vv]_{ss}.
    \end{equation}
    where $\Omegav_{\Uv}$ and $\Omegav_{\Vv}$ are assymmetric matrices (thus
    their diagonal values are 0) which can be found by solving the following
    linear system of equations:
    \begin{equation}\label{eq:matrix-svd-differential2}
        \left[
            \begin{array}{c}
                \Omegav_{\Uv, st}[\Delta] \\
                \Omegav_{\Vv, st}[\Delta]
            \end{array}
        \right]
        =
        -\frac{1}{\sigma_s^2 - \sigma_t^2}
        \left[
            \begin{array}{cc}
                \sigma_t & \sigma_s \\
                - \sigma_s & - \sigma_t
            \end{array}
        \right]
        \left[
            \begin{array}{c}
                (\Uv^\top \Deltav \Vv)_{st} \\
                (\Uv^\top \Deltav \Vv)_{ts}
            \end{array}
        \right]
        , \quad
        \text{if } s \neq t, s \leq p_2,
    \end{equation}
    and
    \begin{equation}\label{eq:matrix-svd-differential3}
        \Omegav_{\Uv, st}[\Delta]
        =
        \frac{(\Uv^\top\Deltav\Vv)_{st}}{\sigma_t}
        , \quad
        \text{if } s \neq t, s > p_2.
    \end{equation}

    The differential of $\nabla R(\Bv)$ along a certain direction $\Deltav$ can
    then be calculated through the chain,i.e., 
    \begin{align}\label{eq:matrix-reg-differential}
        &d\nabla R(\Bv)[\Deltav] \nonumber \\
        =&
        d\Uv[\Deltav]\diag[\{f'(\sigma_j)\}_{j}]\Vv^\top
        +
        \Uv\diag[\{f''(\sigma_j)d\sigma_j[\Deltav]\}_{j}]\Vv^\top
        +
        \Uv\diag[\{f'(\sigma_j)\}_{j}]d\Vv[\Deltav]^\top \nonumber \\
        =&
        \Uv\big(\Omegav_{\Uv}[\Deltav]\diag[\{f'(\sigma_j)\}_{j}]
        +
        \diag[\{f''(\sigma_j)d\sigma_j[\Deltav]\}_{j}]
        +
        \diag[\{f'(\sigma_j)\}_{j}]\Omegav_{\Vv}[\Deltav] \big)\Vv^\top.
    \end{align}

    In the original formula obtained from the primal approach, the Hessian is
    calculated under the canonical bases $\{\Ev_{st}\}_{s, t}$.\footnote{$\Ev_{st}$ is defined as a $p_1 \times p_2$ matrix with all of
    its components being 0 except the $(s, t)$ location being 1.} In order to
    simplify the calculation of the Hessian, we instead use the orthonormal bases
    $\{\uv_s\vv_t^\top\}_{s,t}$, and then transform back to
    $\{\Ev_{st}\}_{s, t}$. The $(kl, st)$ location of the Hessian matrix under $\{\uv_s\vv_t\}_{s, t}$
    bases can be calculated by
    \begin{equation}\label{eq:hess-component}
        \langle \uv_k\vv_l^\top, d\nabla R(\Bv)[\uv_s\vv_t^\top] \rangle.
    \end{equation}

    Plugging equation \eqref{eq:matrix-reg-differential}
    into \eqref{eq:hess-component} we obtain that
    \begin{align*}
        & \langle \uv_k\vv_l, d\nabla R(\Bv)[\uv_s\vv_t^\top] \rangle \nonumber
        \\
        =&
        \langle \Ev_{kl},
        \Omegav_{\Uv}[\uv_s\vv_t^\top]\diag[\{f'(\sigma_j)\}_j]
        +
        \diag[\{f''(\sigma_j)d\sigma_j[\uv_s\vv_t^\top]\}_{j}]
        +
        \diag[\{f'(\sigma_j)\}_j]\Omegav_{\Vv}[\uv_s\vv_t^\top] \rangle \nonumber \\
        =&
        \left\{
            \begin{array}{ll}
                f''(\sigma_t)d\sigma_t[\uv_t\vv_t^\top], & s = t = k = l,
                \\
                \Omegav_{\Uv, kl}[\uv_s\vv_t^\top]f'(\sigma_l)
                +
                f'(\sigma_k)\Omegav_{\Vv, kl}[\uv_s\vv_t^\top], & k \neq l, k
                \leq p_2, \\
                \Omegav_{\Uv, kl}[\uv_s\vv_t^\top]f'(\sigma_l), & 1\leq l \leq
                p_2 < k \leq p_1.
            \end{array}
        \right.
    \end{align*}

    By \eqref{eq:matrix-svd-differential}, we have $d\sigma_j[\uv_s\vv_t^\top]
    = [\Ev_{st}]_{jj} = \delta_{sj}\delta_{tj}$. In addition, $(\Uv^\top
    \uv_s\vv_t^\top\Vv^\top)_{kl} = (\Ev_{st})_{kl} = \delta_{sk}\delta_{tl}$,
    $(\Uv^\top \uv_s\vv_t^\top\Vv^\top)_{lk} = (\Ev_{st})_{lk} =
    \delta_{sl}\delta_{tk}$. Hence by \eqref{eq:matrix-svd-differential2} and
    \eqref{eq:matrix-svd-differential3}, we have that
    \begin{equation*}
        \Omegav_{\Uv, kl}[\uv_s\vv_t^\top]
        =
        - \frac{\delta_{sk}\delta_{tl}\sigma_l +
        \delta_{sl}\delta_{tk}\sigma_k}{\sigma_k^2 - \sigma_l^2}
        ,\quad
        \Omegav_{\Vv, kl}[\uv_s\vv_t^\top]
        =
        \frac{\delta_{sk}\delta_{tl}\sigma_k +
        \delta_{sl}\delta_{tk}\sigma_l}{\sigma_k^2 - \sigma_l^2}
        , \quad
        \text{if } s \neq t, s \leq p_2,
    \end{equation*}
    and
    \begin{equation*}
        \Omegav_{\Uv, kl}[\uv_s\vv_t^\top]
        =
        \frac{\delta_{sk}\delta_{tl}}{\sigma_l}
        , \quad
        \text{if } s \neq t, s > p_2.
    \end{equation*}

    Based on all these, we can obtain that
    \begin{equation*}
        \langle \uv_k\vv_l, d\nabla R(\Bv)[\uv_s\vv_t^\top] \rangle
        =
        \left\{
            \begin{array}{ll}
                f''(\sigma_t), & s = t = k = l, \\
                \frac{\sigma_s f'(\sigma_s) - \sigma_tf'(\sigma_t)}{\sigma_s^2
                - \sigma_t^2}, & s \neq t, s \leq p_2, (k, l) = (s, t), \\
                - \frac{\sigma_s f'(\sigma_t) - \sigma_tf'(\sigma_s)}{\sigma_s^2
                - \sigma_t^2}, & s \neq t, s \leq p_2, (k, l) = (t, s), \\
                \frac{f'(\sigma_t)}{\sigma_t}, & s \neq j, s > p_2,
                    (k,l)=(s,t), \\
                0, & \text{otherwise.}
            \end{array}
        \right.
    \end{equation*}

    Notice that we obtained the above expressions under the orthonormal bases
    $\{\uv_s\vv_t^\top\}_{s, t}$. In order to get the Hessian form under the
    canonical bases $\{\Ev_{st}\}_{s, t}$, let $\Qv \in \mathbb{R}^{p_1p_2 \times
    p_1p_2}$, with each column $\Qv_{\cdot, st} = \vecop(\uv_s\vv_t^\top)$.
    Denote the matrix form under the canonical bases by $\nabla^2 R(\Bv)$ and that
    under $\{\uv_s\vv_t^\top\}_{s,t}$ by $\widetilde{\nabla^2 R(\Bv)}$. We then have
    that
    \begin{equation*}
        \nabla^2 R(\Bv)
        =
        \Qv \widetilde{\nabla^2 R(\Bv)} \Qv^\top.
    \end{equation*}
    
    This completes our proof.
\end{proof}

\subsubsection{ALO for Smooth Unitarily Invariant Penalties}
\label{append:ssec:smooth-unitary}

In the following two sections, we discuss ALO formula for unitarily invariant
regularizer $R$ of the form:
\begin{equation*}\label{eq:separable-matrix-norm}
    R(\Bv) = \sum_{j=1}^{\min(p_1, p_2)} r(\sigma_j),
\end{equation*}
where $r$ is a convex and even scalar function. The nuclear norm, Frobenius and numerous
other matrix norms all fall in this category. In this section, we assume that $r$
is a twice differentiable function. In the next section, we consider
the case of the nuclear norm, where $r$ is nonsmooth.

Consider the matrix regression problem:
\begin{equation*}
    \estim{\Bv}
    =
    \arg\min_\Bv \sum_{j=1}^n \ell(\langle \Xv_j, \Bv \rangle; y_j) + \lambda
    R(\Bv).
\end{equation*}
Let $\estim{\Bv}=\estim{\Uv}\diag[\estim{\sigmav}]\estim{\Vv}^\top$.
By plugging the Hessian formula from Theorem \ref{thm:matrix-twice-diff}
in \eqref{eq:primal-alo-smooth} and \eqref{eq:smooth-H},
we have the following ALO formula:
\begin{equation}\label{eq:matrix-general-alo}
    \langle \Xv_i, \surrogi{\Bv} \rangle
    =
    \langle \Xv_i, \estim{\Bv} \rangle
    +
    \frac{H_{ii}}{1 - H_{ii}\ddot{\ell}(\langle \Xv_i, \estim{\Bv}
    \rangle; y_i)}\dot{\ell}(\langle \Xv_i, \estim{\Bv}\rangle; y_i),
\end{equation}
where
\begin{equation*}
    \Hv
    :=
    \cb{\tilde{X}}\Big[\cb{\tilde{X}}^\top \diag[\ddot{\ell}(\langle \Xv_j,
    \estim{\Bv}\rangle; y_j)]\cb{\tilde{X}} + \lambda \Qv \cb{G} \Qv^\top\Big]^{-1}
    \cb{\tilde{X}}^\top,
\end{equation*}
with the matrix $\cb{\tilde{X}} \in \mathbb{R}^{n \times p_1p_2}$, $\cb{G} \in
\mathbb{R}^{p_1p_2 \times p_1p_2}$. Each row $\cb{\tilde{X}}_{j,\cdot} =
\vecop(\Xv_j)$. $\cb{G}$ is defined by
\begin{equation}\label{eq:matrix-reg-formula-G}
    \cb{G}_{kl, st}
    =
    \left\{
        \begin{array}{ll}
            r''(\estim{\sigma}_t), & s = t = k = l, \\
            \frac{\estim{\sigma}_s r'(\estim{\sigma}_s) - \estim{\sigma}_tr'(\estim{\sigma}_t)}
            {\estim{\sigma}_s^2 - \estim{\sigma}_t^2},
            & i \neq t, s \leq p_2, (k, l) = (s, t), \\
            - \frac{\estim{\sigma}_s r'(\estim{\sigma}_t) - \estim{\sigma}_tr'(\estim{\sigma}_s)}
            {\estim{\sigma}_s^2 - \estim{\sigma}_t^2},
            & s \neq t, s \leq p_2, (k, l) = (t, s), \\
            \frac{r'(\estim{\sigma}_t)}{\estim{\sigma}_t},
            & s \neq t, s > p_2, (k,l)=(s,t), \\
            0, & \text{otherwise.}
        \end{array}
    \right.
\end{equation}

Note that $[\cb{\tilde{X}}\Qv]_{j, st} = \langle \Xv_j,
\estim{\uv}_s\estim{\vv}_t^\top \rangle = \estim{\uv}_s^\top \Xv_j
\estim{\vv}_t$, we have $[\cb{\tilde{X}}\Qv]_{j,\cdot} =
\vecop(\estim{\Uv}^\top\Xv_j\estim{\Vv})$. Let $\cb{X}=\cb{\tilde{X}}\Qv$. This
gives us the following nicer form of the $\Hv$ matrix:
\begin{equation*}
    \Hv
    :=
    \cb{X}\Big[\cb{X}^\top \diag[\ddot{\ell}(\langle \Xv_j,
    \estim{\Bv}\rangle; y_j)]\cb{X} + \lambda \cb{G} \Big]^{-1}
    \cb{X}^\top.
\end{equation*}

\subsubsection{Proof of Theorem \ref{thm:matrix-nonsmooth-alo}:
ALO for Nuclear Norm}
\label{append:ssec:nuclear-proof}

For the nuclear norm, we have:
\begin{equation*}
    \ell(u; y) = \frac{1}{2}(u - y)^2
    ,\quad
    R(\Bv) = \sum_{j=1}^{\min(p_1, p_2)} \sigma_j.
\end{equation*}

Let $P(\Bv) = \frac{1}{2}\sum_{j=1}^n (y_j - \langle \Xv_j, \Bv \rangle)^2 +
\lambda\|\Bv\|_*$ denote the primal objective. For the full data optimizer
$\estim{\Bv}$ with SVD $\estim{\Bv} = \estim{\Uv} \diag[\estim{\sigmav}]
\estim{\Vv}$, let $m=\rank(\estim{\Bv})$, the number of nonzero
$\estim{\sigma}_j$'s. Furthermore, suppose that we have
the following assumption on the full data solution $\estim{\Bv}$.

\begin{assumption}
    Let $\estim{\Bv}$ be the full-data minimizer, and let
    $\estim{\Bv} = \estim{\Uv} \diag[\estim{\sigmav}] \estim{\Vv}^\top$
    be its SVD.
    \begin{enumerate}
    \item
        $\estim{\Bv}$ is the unique optimizer of the nuclear norm minimization
        problem,

    \item
        For all $j$ such that $\estim{\sigma}_j = 0$, the subgradient
        $g_r[\estim{\sigma}_j]$ at $\estim{\sigma}_j$ satisfies
        $g_r[\estim{\sigma}_j] < 1$.
    \end{enumerate}
\end{assumption}
Note that the first assumption often holds in practice. The discussion of the second assumption is similar to the discussion of part (iii) of Assumption \ref{assum:nonsmooth-reg-assump1} and is hence skipped. Since the nuclear norm is nonsmooth, we consider a smoothed version of it. For a matrix and
its SVD $\Bv = \Uv\diag[\sigmav]\Vv^\top$, and a smoothing parameter $\epsilon > 0$,
 define the following smoothed version of nuclear norm as
\begin{equation*}
    R_\epsilon(\Bv) = \sum_{j=1}^{\min(p_1, p_2)} r_{\epsilon}(\sigma_j), \text{ where }
    r_{\epsilon}(x) = \sqrt{x^2 + \epsilon^2}.
\end{equation*}

Let $P_\epsilon(\Bv) = \frac{1}{2}\sum_{j=1}^n (y_j - \langle \Xv_j,
\Bv \rangle)^2 + \lambda R_\epsilon(\Bv)$ denote the smoothed primal objective, and
let $\estim{\Bv}_\epsilon$ be the minimizer of $P_\epsilon$.
Note that instead of using the general kernel smoothing strategy we mentioned
in the previous section, in this specific case we consider this
choice $R_\epsilon$ for technical convenience. There are no essential
differences between the two smoothing schemes. Finally, let $r(x) = \abs{x}$

Lemma \ref{lemma:unitary1} guarantees the smoothness and convexity of the function
$R_\epsilon$. Additionally, $r_\epsilon$ satisfies several desirable properties:
\begin{enumerate}
    \item
        $\dot{r}_\epsilon(x) = \frac{x}{\sqrt{x^2 + \epsilon^2}}$,
        $\ddot{r}_\epsilon(x) = \frac{\epsilon^2}{(x^2 +
        \epsilon^2)^{\frac{3}{2}}}$;
    \item
        $r(x) < r_\epsilon(x) < r(x) + \epsilon$. 
\end{enumerate}
In particular, we note that the second property implies that
$\sup_x |r(x) - r_\epsilon(x)| \leq \epsilon$ and that
$\sup_{\Bv} |R(\Bv) - R_\epsilon(\Bv)| \leq \epsilon \min(p_1, p_2)$. We now go through a similar strategy as the one presented in Section
\ref{append:ssec:primal-nonsmooth-reg} to obtain the limiting $\alo$ formula as $\epsilon
\rightarrow 0$.

\paragraph{Convergence of the optimizer ($\estim{\Bv}_\epsilon \rightarrow \estim{\Bv}$)}
By definition of $\estim{\Bv}$ as the minimizer of the primal objective, we have
\begin{equation*}
\lambda \|\estim{\Bv}\|_*
\leq \frac{1}{2}\sum_j (y_j - \langle \Xv_j, \estim{\Bv} \rangle)^2 + \lambda\|\estim{\Bv}\|_*
\leq \frac{1}{2}\|\yv\|_2^2.
\end{equation*}
Similarly, we have 
\begin{align*}
    \lambda\norm{\estim{\Bv}_\epsilon}_*
    &\leq \lambda R(\estim{\Bv}_\epsilon)
    \leq \lambda R_\epsilon(\estim{\Bv}_\epsilon) + \lambda\epsilon\min(p_1,
    p_2)\nonumber \\
&\leq \frac{1}{2}\sum_j (y_j - \langle \Xv_j, \estim{\Bv}_\epsilon \rangle)^2
    + \lambda R_\epsilon(\estim{\Bv}_\epsilon) + \lambda \epsilon \min(p_1, p_2) \nonumber \\
&\leq \frac{1}{2}\|\yv\|_2^2 + \lambda \epsilon \min(p_1, p_2).
\end{align*}
Thus, for all $\epsilon \leq 1$ both $\estim{\Bv}$ and $\estim{\Bv}_\epsilon$ are contained
in a compact set given by $\lambda\|\Bv\|_* \leq \frac{1}{2}\|\yv\|_2^2 +
\lambda\min(p_1, p_2)$. In particular, any subsequence of $\estim{\Bv}_\epsilon$ contains a convergent
sub-subsequence, let us abuse notations and still use $\estim{\Bv}_\epsilon$ for
this convergent sub-subsequence. The uniform bound between $R$ and $R_\epsilon$
implies that:
\begin{equation*}
    P(\lim_{\epsilon\rightarrow 0}\estim{\Bv}_\epsilon)
    =\lim_{\epsilon\rightarrow 0}P(\estim{\Bv}_\epsilon)
    =\lim_{\epsilon \rightarrow 0} P_\epsilon(\estim{\Bv}_\epsilon)
    \leq \lim_{\epsilon \rightarrow 0} P_\epsilon(\estim{\Bv})
    = P(\estim{\Bv}).
\end{equation*}
By the uniqueness of the optimizer $\estim{\Bv}$, we have
\begin{equation*}
    \lim_{\epsilon \rightarrow 0} \estim{\Bv}_\epsilon = \estim{\Bv}.
\end{equation*}
This is true for all such subsequences, which confirms the full sequence of $\estim{\Bv}_\epsilon$ converges to $\estim{\Bv}_\epsilon$ as $\epsilon \rightarrow 0$. 

\paragraph{Convergence of the gradient ($\nabla R_\epsilon(\estim{\Bv}_\epsilon) \rightarrow g_{\|\cdot\|_*}
(\estim{\Bv})$)}
Let $g_{\|\cdot\|_*}$ denote the subgradient of the nuclear norm $\|\cdot\|_*$ in
the first order optimality condition of $\estim{\Bv}$.
By the continuity of $\dot{\ell}$ and the first order condition, we have:
\begin{equation}\label{eq:nuclear:frobenius-bound}
    \big\| g_{\|\cdot\|_*}(\estim{\Bv}) - \nabla
    R_\epsilon(\estim{\Bv}_{\epsilon}) \big\|_F
    =
    \Bigg\| \sum_{j=1}^n \langle \Xv_j, \estim{\Bv} - \estim{\Bv}_\epsilon
    \rangle \Xv_j \Bigg\|_F \rightarrow 0.
\end{equation}

Let $\estim{\Bv}_\epsilon = \estim{\Uv}_\epsilon \diag[\estim{\sigmav}_\epsilon] \estim{\Vv}_\epsilon$
denote the SVD of $\estim{\Bv}_\epsilon$. By Lemma \ref{lemma:unitary1} we have:
\begin{align*}
    g_{\|\cdot\|_*} (\estim{\Bv})
    &=
    \estim{\Uv} \diag(\{g_r[\estim{\sigma}_j]\}_{j}) \estim{\Vv}^\top, \nonumber \\
    \nabla R_\epsilon(\estim{\Bv}^{\epsilon}) &= \estim{\Uv}_\epsilon
    \diag(\{\dot{r}_\epsilon(\estim{\sigma}_{\epsilon,j})\}_{j})
    \estim{\Vv}_\epsilon^\top.
\end{align*}
where $g_r[x] = 1$ if $x > 0$ and $0 \leq g_r[x] \leq 1$ if $x = 0$. We wish to translate the limit in matrix norm
\eqref{eq:nuclear:frobenius-bound} to a limit on their singular
values. In order to do this, we use the following lemma from Weyl
\cite{weyl1912das} or Mirsky \cite{mirsky1960symmetric}. We note that our
conclusion may follow from either, although we include both for completeness.

\begin{lemma}[\cite{weyl1912das},\cite{mirsky1960symmetric}]
    \label{lemma:sv-control}
    Let $A$ and $B$ be two rectangular matrices of the same shape.
    Let $\sigma_j$ denote the $j$\textsuperscript{th} largest eigenvalue, then we have that
    for all $j$:
    \begin{gather*}
        \abs{\sigma_j(A) - \sigma_j(B)} \leq \norm{A - B}_2, \nonumber \\
        \sqrt{\sum_j (\sigma_j(A) - \sigma_j(B))^2} \leq \norm{A - B}_F.
    \end{gather*}
\end{lemma}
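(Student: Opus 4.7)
The plan is to establish the two inequalities by different (but classical) routes. For the first (Weyl) bound, I would exploit the Courant--Fischer variational characterization of singular values,
\begin{equation*}
    \sigma_j(M) = \max_{\dim(V) = j}\; \min_{\substack{v \in V \\ \|v\|_2 = 1}} \|Mv\|_2,
\end{equation*}
together with the subadditivity of the operator norm. Writing $A = B + (A-B)$ and applying the triangle inequality inside the min-max formula yields $\sigma_j(A) \leq \sigma_j(B) + \|A-B\|_2$, and swapping the roles of $A$ and $B$ gives the matching lower bound; combined, these produce $|\sigma_j(A) - \sigma_j(B)| \leq \|A-B\|_2$.

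For the second (Mirsky) bound, the key device is the Jordan--Wielandt dilation: associate to any rectangular $M \in \mathbb{R}^{m \times n}$ the Hermitian matrix
\begin{equation*}
    \widetilde{M} = \begin{bmatrix} 0 & M \\ M^\top & 0 \end{bmatrix},
\end{equation*}
whose nonzero eigenvalues are precisely $\pm \sigma_j(M)$, with the remaining eigenvalues equal to zero. Applying this to both $A$ and $B$ reduces the claim to a statement about eigenvalues of Hermitian matrices, namely the Hoffman--Wielandt inequality: for Hermitian $X, Y$ with eigenvalues $\lambda_1(X) \geq \cdots$ and $\lambda_1(Y) \geq \cdots$, one has $\sum_j (\lambda_j(X) - \lambda_j(Y))^2 \leq \|X - Y\|_F^2$. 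Observing that $\|\widetilde{A} - \widetilde{B}\|_F^2 = 2\|A - B\|_F^2$ and that the matched eigenvalue differences on the dilation correspond in pairs to the singular value differences on the original matrices, the factors of $2$ cancel and the desired inequality follows.

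The main obstacle is Mirsky's inequality: Weyl's bound is a one-line consequence of the min-max formula, but the Frobenius (Schatten-$2$) statement requires either the Hoffman--Wielandt inequality or, alternatively, the Ky Fan dominance / Lidskii--Wielandt majorization framework. Either route needs an auxiliary ingredient beyond pure variational reasoning. I would take the Hoffman--Wielandt path since the dilation reduction is elementary, and the Hermitian Hoffman--Wielandt inequality itself admits a short proof via a doubly-stochastic matrix argument (writing one eigenbasis in terms of the other, expanding $\|X - Y\|_F^2$, and invoking Birkhoff's theorem to control the resulting bistochastic average by its permutation extremes).
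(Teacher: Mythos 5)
Your proposal is correct. Note, however, that the paper does not actually prove this lemma: it is stated as a classical result and discharged entirely by the citations to Weyl and to Mirsky, so there is no in-paper argument to compare against. What you have written is the standard textbook derivation of both bounds. The Weyl half is fine as sketched --- taking the optimal subspace $V$ for $B$ in the min--max formula and bounding $\|Av\|_2 \leq \|Bv\|_2 + \|A-B\|_2$ pointwise gives $\sigma_j(A) \leq \sigma_j(B) + \|A-B\|_2$, and symmetry finishes it. The Mirsky half via the Jordan--Wielandt dilation is also sound: the sorted spectrum of $\widetilde{M}$ is $\sigma_1 \geq \cdots \geq \sigma_p \geq 0 = \cdots = 0 \geq -\sigma_p \geq \cdots \geq -\sigma_1$, so the sorted matching between $\widetilde{A}$ and $\widetilde{B}$ pairs $\sigma_j(A)$ with $\sigma_j(B)$ and $-\sigma_j(A)$ with $-\sigma_j(B)$, giving $\sum_j (\lambda_j(\widetilde{A})-\lambda_j(\widetilde{B}))^2 = 2\sum_j(\sigma_j(A)-\sigma_j(B))^2$, while $\|\widetilde{A}-\widetilde{B}\|_F^2 = 2\|A-B\|_F^2$. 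The one point worth making explicit is that the Hoffman--Wielandt theorem only guarantees \emph{some} permutation matching with $\sum_j(\lambda_j(X)-\lambda_{\pi(j)}(Y))^2 \leq \|X-Y\|_F^2$; you need the additional (easy) rearrangement observation that for real spectra the similarly-sorted matching minimizes this sum, so the bound transfers to the identity matching you use. With that remark included, the argument is complete. (As an aside, the lemma's statement says ``$j$\textsuperscript{th} largest eigenvalue'' where it means singular value; that slip is the paper's, not yours.)
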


By Lemma \ref{lemma:sv-control}, we have that $\estim{\sigma}_{\epsilon, j} \rightarrow
\estim{\sigma}_j$ and $\frac{\estim{\sigma}_{\epsilon,j}}{\sqrt{\estim{\sigma}_{\epsilon, j}^2 +
\epsilon^2}} \rightarrow g_r[\estim{\sigma}_j]$ as $\epsilon \rightarrow 0$. Additionally,
by the assumption $g_r[\estim{\sigma}_j] < 1$ if $\estim{\sigma}_j=0$, we have that:
\begin{equation}\label{eq:nuclear-singular-asymp-behavior}
    \frac{\estim{\sigma}_{\epsilon, j}}{\epsilon}
    \rightarrow
    \begin{cases}
        +\infty, & \text{if } \estim{\sigma}_j > 0, \\
        < +\infty, & \text{if } \estim{\sigma}_j = 0. \\
    \end{cases}
\end{equation}

This further implies the matrices $\cb{G}_{\epsilon}$
defined as in \eqref{eq:matrix-reg-formula-G} for $R_\epsilon$ satisifies:
\begin{equation}\label{eq:nuclear-reg-formula-G}
    \lim_{\epsilon \rightarrow 0}
    \cb{G}_{\epsilon, kl, ij}
    =
    \left\{
        \begin{array}{ll}
            0, & s = t = k = l \leq m, \\
            \infty, & s = t = k = l > m, \\
            \frac{1}{\estim{\sigma}_s + \estim{\sigma}_t}, & 1 \leq s \neq t \leq m,(k,l)=(s,t), \\
            \frac{1}{\estim{\sigma}_s}, & 1 \leq s \leq m < t \leq p_2, (k, l) = (s, t), \\
            \frac{1}{\estim{\sigma}_t}, & 1 \leq t \leq m < s \leq p_2, (k, l) = (s, t), \\
            -\frac{1}{\estim{\sigma}_s + \estim{\sigma}_t}, & 1 \leq s \neq t \leq m, (k,l)=(t,s), \\
            -\frac{g_r[\estim{\sigma}_t]}{\estim{\sigma}_s}, & 1 \leq s \leq m < t \leq p_2,
            (k, l) = (t, s), \\
            -\frac{g_r[\estim{\sigma}_s]}{\estim{\sigma}_t}, & 1 \leq t \leq m < s \leq p_2,
            (k, l) = (t, s), \\
            \frac{1}{\estim{\sigma}_t}, & 1 \leq t \leq m \leq p_2 < s \leq p_1, (k,l)=(s,t), \\
            \infty, & m < t \leq p_2 < s \leq p_1, (k,l)=(s,t), \\
            0, & \text{otherwise.}
        \end{array}
    \right.
\end{equation}

By inspecting the indices in  \eqref{eq:nuclear-reg-formula-G} we note that two index sets are missing:
\begin{enumerate}
 \item $m < s \neq t \leq p_2$, $(k, l) = (s, t)$.
 \item $m < s \neq t \leq p_2$, $(k, l) =
(t, s)$.  
\end{enumerate}
We need to process these blocks separately. We will show that the
inverse of the corresponding blocks in $\cb{G}_{\epsilon}$ converges to
0. As a result, according to Lemma
\ref{lemma:woodbury-block} we can ignore these two parts. Each $2\times 2$ sub-matrix within these two blocks in $\cb{G}_{\epsilon}$ has the
form
\begin{equation*}
    \frac{1}{\estim{\sigma}_{\epsilon, s}^2 - \estim{\sigma}_{\epsilon, t}^2}
    \left[
        \begin{array}{cc}
            \estim{\sigma}_{\epsilon, s}\dot{r}_\epsilon(\estim{\sigma}_{\epsilon, s}) -
            \estim{\sigma}_{\epsilon, t} \dot{r}_\epsilon(\estim{\sigma}_{\epsilon, t})
            &
            - \estim{\sigma}_{\epsilon, s}\dot{r}_\epsilon(\estim{\sigma}_{\epsilon, t}) +
            \estim{\sigma}_{\epsilon, t} \dot{r}_\epsilon(\estim{\sigma}_{\epsilon, s}) \\
            - \estim{\sigma}_{\epsilon, s}\dot{r}_\epsilon(\estim{\sigma}_{\epsilon, t}) +
            \estim{\sigma}_{\epsilon, t} \dot{r}_\epsilon(\estim{\sigma}_{\epsilon, s})
            &
            \estim{\sigma}_{\epsilon, s}\dot{r}_\epsilon(\estim{\sigma}_{\epsilon, s}) -
            \estim{\sigma}_{\epsilon, t} \dot{r}_\epsilon(\estim{\sigma}_{\epsilon, t})
        \end{array}
    \right].
\end{equation*}
It is straightforward to verify that the inverse of the above matrix takes the following
form
\begin{equation}\label{eq:nuclear-inverse-sub22}
\frac{1}{\dot{r}^2(\estim{\sigma}_{\epsilon, s}) - \dot{r}^2(\estim{\sigma}_{\epsilon, t})}
    \left[
        \begin{array}{cc}
            \estim{\sigma}_{\epsilon, s}\dot{r}_\epsilon(\estim{\sigma}_{\epsilon, s}) -
            \estim{\sigma}_{\epsilon, t} \dot{r}_\epsilon(\estim{\sigma}_{\epsilon, t})
            &
            \estim{\sigma}_{\epsilon, s}\dot{r}_\epsilon(\estim{\sigma}_{\epsilon, t}) -
            \estim{\sigma}_{\epsilon, t} \dot{r}_\epsilon(\estim{\sigma}_{\epsilon, s}) \\
            \estim{\sigma}_{\epsilon, s}\dot{r}_\epsilon(\estim{\sigma}_{\epsilon, t}) -
            \estim{\sigma}_{\epsilon, t} \dot{r}_\epsilon(\estim{\sigma}_{\epsilon, s})
            &
            \estim{\sigma}_{\epsilon, s}\dot{r}_\epsilon(\estim{\sigma}_{\epsilon, s}) -
            \estim{\sigma}_{\epsilon, t} \dot{r}_\epsilon(\estim{\sigma}_{\epsilon, t})
        \end{array}
    \right].
\end{equation}
For the two distinct component values in the matrix in
\eqref{eq:nuclear-inverse-sub22}, we have that
\begin{equation*}
    \frac{\estim{\sigma}_{\epsilon, s}\dot{r}_\epsilon(\estim{\sigma}_{\epsilon, s}) -
    \estim{\sigma}_{\epsilon, t} \dot{r}_\epsilon(\estim{\sigma}_{\epsilon, t})}
    {\dot{r}^2(\estim{\sigma}_{\epsilon, s}) - \dot{r}^2(\estim{\sigma}_{\epsilon, t})}
    =
    \frac{
        \frac{\estim{\sigma}_{\epsilon, s}^2}{\sqrt{\estim{\sigma}_{\epsilon, s} + \epsilon^2}}
        -
        \frac{\estim{\sigma}_{\epsilon, t}^2}{\sqrt{\estim{\sigma}_{\epsilon, t} + \epsilon^2}}
    }
    {
        \frac{\estim{\sigma}_{\epsilon, s}^2}{\estim{\sigma}_{\epsilon, s} + \epsilon^2}
        -
        \frac{\estim{\sigma}_{\epsilon, t}^2}{\estim{\sigma}_{\epsilon, t} + \epsilon^2}
    }
    =
    \epsilon\frac{\frac{u_{\epsilon, s}}{\sqrt{1 - u_{\epsilon, s}}}
    - \frac{u_{\epsilon, t}}{\sqrt{1 - u_{\epsilon, t}}}}
    {u_{\epsilon, s} - u_{\epsilon, t}}
    =
    \epsilon \frac{1 - \frac{1}{2}\tilde{u}_{\epsilon}}{(1 -
    \tilde{u}_{\epsilon})^{\frac{3}{2}}}
    \rightarrow 0,
\end{equation*}
where we did a change of variable $u=\frac{\estim{\sigma}^2}{\estim{\sigma}^2 + \epsilon^2}$
and $\tilde{u}_{\epsilon}$ is a value between $u_{\epsilon, s}$ and
$u_{\epsilon, t}$ where we apply Taylor expansion to function
$\frac{x}{\sqrt{1-x}}$. The last convergence to 0 is
obtained by noticing that $\lim_{\epsilon \rightarrow 0} u_{\epsilon, s},
\lim_{\epsilon \rightarrow 0} u_{\epsilon, t} \in [0, 1)$ due to
\eqref{eq:nuclear-singular-asymp-behavior}. Similarly, we have the following analysis for the off-diagonal term
\begin{equation*}
    \frac{\estim{\sigma}_{\epsilon, s}\dot{r}_\epsilon(\estim{\sigma}_{\epsilon, t}) -
    \estim{\sigma}_{\epsilon, t} \dot{r}_\epsilon(\estim{\sigma}_{\epsilon, s})}
    {\dot{r}^2(\estim{\sigma}_{\epsilon, s}) - \dot{r}^2(\estim{\sigma}_{\epsilon, t})}
    =
    \frac{
        \frac{\estim{\sigma}_{\epsilon, s}\estim{\sigma}_{\epsilon, t}}
        {\sqrt{\estim{\sigma}_{\epsilon, t} + \epsilon^2}}
        -
        \frac{\estim{\sigma}_{\epsilon, s}\estim{\sigma}_{\epsilon, t}}
        {\sqrt{\estim{\sigma}_{\epsilon, s} + \epsilon^2}}
    }
    {
        \frac{\estim{\sigma}_{\epsilon, s}^2}{\estim{\sigma}_{\epsilon, s} + \epsilon^2}
        -
        \frac{\estim{\sigma}_{\epsilon, t}^2}{\estim{\sigma}_{\epsilon, t} + \epsilon^2}
    }
    =
    \frac{\estim{\sigma}_{\epsilon, s}\estim{\sigma}_{\epsilon, t}}{\epsilon}
    \frac{\sqrt{1 - u_{\epsilon, t}} - \sqrt{1 - u_{\epsilon, s}}}
    {u_{\epsilon, s} - u_{\epsilon, t}}
    =
    \frac{\estim{\sigma}_{\epsilon, s}\estim{\sigma}_{\epsilon, t}}{\epsilon^2}
    \frac{\epsilon}{2\sqrt{1 - \bar{u}_\epsilon}}
    \rightarrow 0,
\end{equation*}
where $\bar{u}_{\epsilon}$ is a value between $u_{\epsilon, s}$ and
$u_{\epsilon, t}$ where we use Taylor expansion to $\sqrt{1-x}$. The last convergence to 0 is
obtained based on the same reason as the previous one. Let $E:=\{kl : k \leq m \text{ or } l \leq m\}$, by Lemma
\ref{lemma:woodbury-block}, we have
\begin{equation*}
    \Hv_\epsilon
    \rightarrow
    \cb{X}_{\cdot,E}\Big[\cb{X}_{\cdot,E}^\top
    \cb{X}_{\cdot,E} + \lambda\cb{G} \Big]^{-1}
    \cb{X}_{\cdot,E}^\top := \Hv,
\end{equation*}
where $\cb{G}$ is defined in \eqref{eq:nuclear-hessian-G}. Finally, we obtain our approximation of leave-$i$-out prediction by substituting the
above formula of $\Hv$ into the general formula \eqref{eq:matrix-general-alo}.

\begin{remark}
    Similar to what we did in Figure \ref{fig:matrix-general}, it is helpful to
    visualize the structure of $\cb{G}$ in correspondence to the blocks of the
    original matrix. Specifically we have Figure \ref{fig:nuclear-general}.
    \begin{figure}[!th]
        \begin{center}
            \begin{tikzpicture}[scale=1.0]
                % (0, 0) -- (2, 3)
                \path [fill=cyan] (0, 0) rectangle (1.2, 1);
                \path [fill=darkgray] (1.2, 0) rectangle (2, 1.8);
                \path [fill=olive] (0, 1) rectangle (1.2, 1.8);
                \path [fill=olive] (1.2, 1.8) rectangle (2, 3);
                \path [fill=green] (0, 1.8) rectangle (1.2, 3);
                \path [fill=orange] (0, 3) -- (0.1, 3) -- (1.2, 1.9) -- (1.2,
                1.8) -- (1.1, 1.8) -- (0, 2.9) -- (0, 3);
                \draw (0, 0) rectangle (2, 3);
                \draw [dashed, thick] (0, 1) -- (1.2, 1);
                \draw [dashed, thick] (0, 1.8) -- (2, 1.8);
                \draw [dashed, thick] (1.2, 0) -- (1.2, 3);
                \draw [dashed, thick] (0, 3) -- (1.2, 1.8);
                
                \draw [fill] (0.5, 2.5) circle [radius=0.04];
                \node [left] at (0.5, 2.5) {\tiny $(s_1,s_1)$};
                \draw [fill] (0.3, 2) circle [radius=0.04];
                \node [left] at (0.3, 2) {\tiny $(s_2, t_2)$};
                \draw [fill] (1, 2.7) circle [radius=0.04];
                \node [right] at (1, 2.7) {\tiny $(t_2, s_2)$};
                \draw [fill] (0.6, 1.3) circle [radius=0.04];
                \node [below] at (0.6, 1.4) {\tiny $(s_3, t_3)$};
                \draw [fill] (1.7, 2.4) circle [radius=0.04];
                \node [below] at (1.6, 2.4) {\tiny $(t_3, s_3)$};
                \draw [fill] (0.6, 0.6) circle [radius=0.04];
                \node [below] at (0.6, 0.6) {\tiny $(s_4, t_4)$};
                \draw [fill] (1.5, 0.5) circle [radius=0.04];
                \node [above] at (1.6, 0.5) {\tiny $(s_5, t_5)$};
                
                % arrow
                \draw [ultra thick, ->] (2.5, 1.5) -- (3.8, 1.5);
                
                % (5, -0.5) -- (9, 3.5)
                \path [fill=lightgray] (5, -0.5) rectangle (9, 3.5);
                \path [fill=orange] (5, 3.5) rectangle (5.6, 2.9);
                \path [fill=green] (5.6, 2.9) rectangle (6.5, 2);
                \path [fill=olive] (6.5, 2) rectangle (7.5, 1.0);
                \path [fill=cyan] (7.5, 1.0) rectangle (8.4, 0.1);
                \path [fill=darkgray] (8.4, 0.1) rectangle (9, -0.5);
                \path [fill=white!70!lightgray] (8.4, 0.1) rectangle (9, 3.5);
                \path [fill=white!70!lightgray] (8.4, 0.1) rectangle (5, -0.5);
                \draw [dashed, thin] (5, 3.5) -- (8.4, 0.1);

                \draw [dotted, thick] (5, 2.9) -- (6.5, 2.9);
                \draw [dotted, thick] (5.6, 3.5) -- (5.6, 2);
                \draw [dotted, thick] (5.6, 2) -- (7.5, 2);
                \draw [dotted, thick] (6.5, 2.9) -- (6.5, 1.0);
                \draw [dotted, thick] (6.5, 1.0) -- (8.4, 1.0);
                \draw [dotted, thick] (7.5, 2) -- (7.5, 0.1);
                \draw [dotted, thick] (7.5, 0.1) -- (9, 0.1);
                \draw [dotted, thick] (8.4, 1.0) -- (8.4, -0.5);
                \draw [dotted, thick] (5, 0.1) -- (8.4, 0.1);
                \draw [dotted, thick] (8.4, 0.1) -- (8.4, 3.5);
                \draw (5, -0.5) rectangle (9, 3.5);
                
                \draw [fill] (5.3, 3.2) circle [radius=0.04];
                \draw [dotted, thick] (5, 3.2) -- (5.3, 3.2);
                \draw [dotted, thick] (5.3, 3.5) -- (5.3, 3.2);
                \node [left] at (5, 3.2) {\tiny $(s_1, s_1)$};
                \node [right, rotate=90] at (5.3, 3.5) {\tiny $(s_1, s_1)$};
                
                \draw [fill] (5.9, 2.6) circle [radius=0.04];
                \draw [fill] (5.9, 2.2) circle [radius=0.04];
                \draw [fill] (6.3, 2.2) circle [radius=0.04];
                \draw [fill] (6.3, 2.6) circle [radius=0.04];
                \draw [dotted, thick] (5, 2.6) -- (6.3, 2.6);
                \draw [dotted, thick] (5, 2.2) -- (6.3, 2.2);
                \draw [dotted, thick] (5.9, 3.5) -- (5.9, 2.2);
                \draw [dotted, thick] (6.3, 3.5) -- (6.3, 2.2);
                \node [left] at (5, 2.6) {\tiny $(s_2, t_2)$};
                \node [right, rotate=90] at (5.9, 3.5) {\tiny $(s_2, t_2)$};
                \node [left] at (5, 2.2) {\tiny $(t_2, s_2)$};
                \node [right, rotate=90] at (6.3, 3.5) {\tiny $(t_2, s_2)$};

                \draw [fill] (6.8, 1.7) circle [radius=0.04];
                \draw [fill] (6.8, 1.3) circle [radius=0.04];
                \draw [fill] (7.2, 1.7) circle [radius=0.04];
                \draw [fill] (7.2, 1.3) circle [radius=0.04];
                \draw [dotted, thick] (5, 1.7) -- (7.2, 1.7);
                \draw [dotted, thick] (5, 1.3) -- (7.2, 1.3);
                \draw [dotted, thick] (6.8, 3.5) -- (6.8, 1.3);
                \draw [dotted, thick] (7.2, 3.5) -- (7.2, 1.3);
                \node [left] at (5, 1.7) {\tiny $(s_3, t_3)$};
                \node [right, rotate=90] at (6.8, 3.5) {\tiny $(s_3, t_3)$};
                \node [left] at (5, 1.3) {\tiny $(t_3, s_3)$};
                \node [right, rotate=90] at (7.2, 3.5) {\tiny $(t_3, s_3)$};
 
                \draw [fill] (8.0, 0.5) circle [radius=0.04];
                \draw [dotted, thick] (5, 0.5) -- (8.0, 0.5);
                \draw [dotted, thick] (8.0, 3.5) -- (8.0, 0.5);
                \node [left] at (5, 0.5) {\tiny $(s_4, t_4)$};
                \node [right, rotate=90] at (8.0, 3.5) {\tiny $(s_4, t_4)$};
 
                \node [below] at (5.3, 3.2) {\scriptsize $a$};
                \node [below left] at (5.95, 2.70) {\scriptsize $b$};
                \node [right] at (6.25, 2.22) {\scriptsize $b$};
                \node [below left] at (5.95, 2.25) {\scriptsize $c$};
                \node [above right] at (6.23, 2.55) {\scriptsize $c$};
                \node [below left] at (6.85, 1.80) {\scriptsize $d$};
                \node [right] at (7.15, 1.32) {\scriptsize $d$};
                \node [below left] at (6.85, 1.35) {\scriptsize $e$};
                \node [above right] at (7.15, 1.65) {\scriptsize $e$};
                \node [below] at (8.0, 0.5) {\scriptsize $f$};

                \draw [thin, ->] (9.6, 1.8) -- (8.5, 1.8);
                \draw [thin, ->] (9.6, -0.2) -- (9.1, -0.2);
                \node [right] at (9.6, 1.8) {$\cb{G}$};
                \node [right] at (9.6, -0.2) {removed};

                \node [below] at (1, -0.5)
                {$\estim{\Uv}^\top\estim{\Bv}\estim{\Vv}$};
                \node [below] at (7, -0.5) {$\Qv^\top\nabla^2 R(\Bv)\Qv$};
            \end{tikzpicture}
        \end{center}
        \caption{An illustration of the correspondence between the structure of
            the original matrix and the structure of the $\cb{G}$ matrix.
            As we have mentioned in Theorem \ref{thm:matrix-twice-diff}, $a=0$,
            $b = \frac{1}{\estim{\sigma}_{s_2} + \estim{\sigma}_{t_2}}$,
            $c = - \frac{1}{\estim{\sigma}_{s_2} + \estim{\sigma}_{t_2}}$,
            $d = \frac{1}{\estim{\sigma}_{t_3}}$,
            $e=-\frac{g_r[\estim{\sigma}_{s_3}]}{\estim{\sigma}_{t_3}}$,
            $f = \frac{1}{\estim{\sigma}_{t_4}}$.}
            \label{fig:nuclear-general}
    \end{figure}
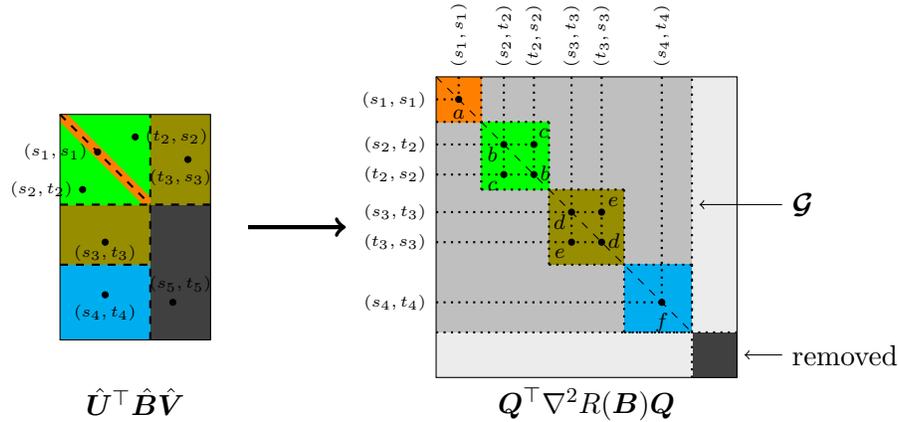
\end{remark}

%--------------------------------%-----------------------------------------%
%---------------------------------%----------------------------------------%

\section*{Acknowledgements}
Arian Maleki, Wenda Zhou, and Shuaiwen Wang would like to acknowledge NSF grant DMS-1810888. 
We would also like to acknowledge computing resources from Columbia University's Shared Research
Computing Facility project, which is supported by NIH Research Facility
Improvement Grant 1G20RR030893-01, and associated funds from the New York State
Empire State Development, Division of Science Technology and Innovation
(NYSTAR) Contract C090171, both awarded April 15, 2010. 
\bibliography{reference}
\bibliographystyle{plain}

\clearpage

\appendix

\section{Proof of Equation \ref{eq:methods:dual}}
\label{sec:dual-derivation}
In this Section, we prove the primal-dual correspondence in
\eqref{eq:methods:primal} and \eqref{eq:methods:dual}.
Recall the form of the primal problem:
\begin{equation} \label{eq:primal-dual-proof:primal}
    \min_{\betav} \sum_{j = 1}^n \ell(\xv_j^\top \betav; y_j) + R(\betav).
\end{equation}

With a change of variable, we may transform \eqref{eq:primal-dual-proof:primal}
into the following form:
\begin{equation*}
    \min_{\betav, \muv} \sum_{j = 1}^n \ell(-\mu_j; y_j) + R(\betav),
    \quad
    \text{subject to: }
    \muv = -\Xv\betav.
\end{equation*}

We may further absorb the constraint into the objective function by adding a
Lagrangian multiplier $\thetav \in \mathbb{R}^n$:
\begin{equation} \label{eq:primal-dual-proof:primal1}
    \max_{\thetav}\min_{\betav, \muv}
    \sum_{j = 1}^n \ell(-\mu_j; y_j) + R(\betav) - \thetav^\top(\Xv\betav + \muv).
\end{equation}

Note that in \eqref{eq:primal-dual-proof:primal1}, $\betav$ and $\muv$ decoupled
from each other and we can optimize over them respectively. Specifically, we
have that
\begin{align}
    &\min_{\betav} R(\betav) - \thetav^\top\Xv\betav
    =
    - \max_{\betav} \big\{ \langle \betav, \Xv^\top\thetav \rangle - R(\betav)
    \big\}
    =
    - R^*(\Xv^\top\thetav), \label{eq:primal-dual-proof:beta} \\
    &\min_{\mu_j} \ell(-\mu_j; y_j) - \theta_j \mu_j
    =
    - \max \{ \mu_j\theta_j - \ell(-\mu_j; y_j) \}
    =
    - \ell^*(-\theta_j; y_j). \label{eq:primal-dual-proof:theta}
\end{align}

We plug \eqref{eq:primal-dual-proof:beta} and
\eqref{eq:primal-dual-proof:theta} in \eqref{eq:primal-dual-proof:primal1} and
obtain that
\begin{equation*}
    \max_{\thetav}
    \sum_{j = 1}^n - \ell^*(-\theta_j; y_j) - R^*(\Xv^\top \thetav).
\end{equation*}

\section{Proof of Lemma \ref{lem:proxproperties}} \label{proof:proxproperties}

\paragraph{Part 3.}
$u$ minimizes $\frac{1}{2\tau}(z - u)^2 + h(u)$ if and only if
\begin{equation*}
    \frac{z}{\tau} \in \frac{u}{\tau} + \partial h(u).
\end{equation*}

Obviously when $u = v_j$, $\partial h(v_j) = [\dot{h}_-(v_j), \dot{h}_+(v_j)]$.
This implies the set of possible values of $z$ is $[v_j + \tau \dot{h}_-(v_j),
v_j + \tau \dot{h}_+(v_j)]$. The convexity of $h$ guarantees that for different
$v_j$, these intervals are non-overlapping with each other.

\paragraph{Part 4.}
Note that since
\begin{equation*}
    \proxv_h(\uv) = \argmin_{\zv \in \mathbb{R}^p}\frac{1}{2} \|\uv-\zv\|_2^2 + h(\zv),
\end{equation*}
we have that
\begin{equation*}
    \proxv_h(\uv)- \uv + \nabla h(\proxv_h(\uv))= \bm{0}.
\end{equation*}

Let $\Jv$ be the Jacobina of $\proxv_h$. By taking derivatives of both sides of
the above equation we have
\begin{equation*}
    \Jv(\uv) - \Iv + \nabla^2 h(\proxv_h(\uv)) \Jv(\uv) =0,
    \quad \Rightarrow \quad
    \Jv(\uv) = [\Iv + \nabla^2 h(\proxv_R(\uv))]^{-1}.
\end{equation*}

Note that since $h$ is convex, $\nabla^2 h$ is a positive semidefinite matrix.
This means that all the eigenvalues of $\Iv + \nabla^2 h(\proxv_R(\uv))$ are
greater than or equal to one. This completes our proof.

\section{Derivation of the Dual for Generalized LASSO}
\label{append:sec:generalized-lasso-dual}

In this section we derive the dual form of the generalized LASSO stated in
the main paper. We recall that for a given matrix $\Dv \in \RR^{m \times p}$,
the generalized LASSO is given by:
\begin{equation*}
    \min_{\betav} \frac{1}{2} \sum_{j = 1}^n (y_j - \xv_j^\top \betav)^2
    + \lambda \norm{\Dv \betav}_1.
\end{equation*}

Introduce dummy variables $\zv \in \RR^n$, $\wv \in \RR^m$, and consider the following
equivalent constrained optimization problem:
\begin{equation*}
    \min_{\betav, \zv, \wv} \frac{1}{2}\norm{\zv}_2^2 + \lambda \norm{\wv}_1,
    \quad
    \text{subject to: } \yv - \Xv\betav = \zv \text{ and } \Dv \betav = \wv.
\end{equation*}

We may now consider the Lagrangian form of the optimization problem, introducing dual variables
$\dualv \in \RR^n$ and $\uv \in \RR^m$, the dual problem is
\begin{align*}\label{eq:genlasso-dual:lagrangian}
    &
    \max_{\dualv, \uv} \min_{\betav, \zv, \wv} \frac{1}{2} \norm{\zv}_2^2 + \lambda \norm{\wv}_1
    + \dualv^\top(\yv - \Xv\betav - \zv) + \uv^\top(\Dv \betav - \wv) \nonumber \\
    =&
    - \min_{\dualv, \uv} \bigg[ \max_{\zv} \bigg\{\thetav^\top \zv
    - \frac{1}{2}\|\zv\|_2^2 \bigg\} + \max_{\wv} \big\{ \uv^\top \wv - \lambda
    \norm{\wv}_1 \big\} + \max_{\betav} \big\{\thetav^\top\Xv\betav -
    \uv^\top\Dv\betav \big\} - \thetav^\top\yv \bigg].
\end{align*}

Consider the three subproblems within square brackets respectively, we have
\begin{equation*}
    \max_{\zv} \bigg\{\dualv^\top \zv - \frac{1}{2}\norm{\zv}_2^2 \bigg\}
    = \frac{1}{2} \norm{\dualv}_2^2,
    \qquad
    \max_{\wv} \big\{\uv^\top \wv - \lambda \norm{\wv}_1 \big\} = \begin{cases}
        0 & \text{ if } \norm{\uv}_\infty \leq \lambda, \\
        \infty & \text{ otherwise. }
    \end{cases}
\end{equation*}
where $ \thetav^\top\Xv\betav - \uv^\top\Dv\betav $ is unbounded unless
$\Xv^\top \dualv = \Dv^\top \uv$.
Finally, we substitute the above results into our Lagrangian dual problem to obtain:
\begin{equation*}
    \min_{\dualv, \uv} \frac{1}{2} \norm{\dualv}_2^2 - \dualv^\top \yv,
    \quad
    \text{subject to: }  \Dv^\top \uv = \Xv^\top \dualv \text{ and }
    \norm{\uv}_\infty \leq \lambda.
\end{equation*}
which is equivalent to the stated dual problem.

\section{Jacobian of the Projection on Positive Semidefinite Cone}\label{ssec:JacobianPSDcalc}
First note that for an arbitrary matrix $\Bv$  the projection involves two steps:
(i) symmetrization, i.e. projecting $\Bv$ to $\calS_p$ and obtain
$\projv_{\calS^p}(\Bv) = \frac{1}{2}(\Bv + \Bv^\top)$; and
(ii) projection of $\projv_{\calS^p}(\Bv)$ on $\calS_{+}^p$: if
$\projv_{\calS^p}(\Bv) = \Qv \diag[\{\lambda_j\}_j] \Qv$, then the projection on
$\calS_+^p$ is $\projv_{\calS_+}(\Bv) = \Qv \diag[\{(\lambda_j)_+\}_j] \Qv^\top.$
Hence, by using the chain rule, the Jacobian $\Jv$ of the entire projection
process can be written as $\Jv = \Jv_1 \Jv_2$, where $\Jv_2$ is the Jacobian of the
$\projv_{\calS^p}(\Bv)$, and $\Jv_1$ is the Jacobian of $\projv_{S_+^p}(\cdot)$ at
$\projv_{\calS^p}(\Bv)$.
The calculation of $\Jv_2$ is simple. In the rest of this section, we only
focus on characterizing $\Jv_1$. Let $\Av = \frac{1}{2}(\Bv + \Bv^\top)$. Define
$F(\Av) = \Qv \diag[\{f(\lambda_j)\}_j] \Qv^\top$. The directional
derivative of $F(A)$ in the direction of $\Deltav$ is given by
\begin{equation*}
    d F(A)[\Deltav] =
    d\Qv[\Deltav] \diag[\{f(\lambda_j)\}_j] \Qv^\top +
    \Qv \diag[\{f(\lambda_j)\}_j] d\Qv[\Deltav]^\top +
    \Qv \diag[\{f'(\lambda_j)\}_j] \diag[\{d\lambda_j[\Deltav]\}_j] \Qv^{\top}.
\end{equation*}

This leads to
\begin{align}\label{eq:firstonepsd1}
     & \Qv^\top d F(A)[\Deltav] \Qv \nonumber \\
    =& \Qv^\top d\Qv[\Deltav] \diag[\{f(\lambda_j)\}_j] +
    \diag[\{f(\lambda_j)\}_j] d\Qv[\Deltav]^\top\Qv +
    \diag[\{f'(\lambda_j)\}_j] \diag[\{d\lambda_j[\Deltav]\}_j] \nonumber \\
    =& \Qv^\top d\Qv[\Deltav] \diag[\{f(\lambda_j)\}_j] -
    \diag[\{f(\lambda_j)\}_j] \Qv^\top d\Qv[\Deltav]
    +  \diag[\{f' (\lambda_j)\}_j] \diag[\{d\lambda_j[\Deltav]\}_j].
\end{align}
where the last equality is due to the fact that $\Qv^\top \Qv = \Iv$, and thus
$\Qv^\top d\Qv[\Deltav] = -d\Qv[\Deltav]^\top\Qv$. In order to find the
elements of the Jacobian, we consider the following bases for the space of
symmetric matrices $\calS^p$:
\begin{align*}
    \Kv_{ii} =& \qv_i \qv_i^\top, \quad  i= 1, \ldots, p, \nonumber \\
    \Kv_{ij} =& \frac{1}{\sqrt{2}} \qv_i \qv_j^\top + \frac{1}{\sqrt{2}}
    \qv_j \qv_i^\top, \quad 1 \leq i < j \leq p.
\end{align*}

Let ${\Ev}_{ij}$ denote the canonical basis for $\calS^p$:
for $i < j$, $\Ev_{ij}$ denotes the matrix which equals $1/\sqrt{2}$ at
$(i,j)^{\rm th}$ and $(j,i)^{\rm th}$ location and 0 elsewhere; for $i = j$,
$\Ev_{ii}$ has only a $1$ at $(i,i)^{\rm th}$ and 0 elsewhere.
Define $\Omegav[\Deltav] =  \Qv^\top d\Qv[\Deltav]$. By setting $f(\lambda) =
\lambda$ in \eqref{eq:firstonepsd1} and taking inner product with $\Ev_{ij}$ of
both sides, it is not hard to see that
\begin{align}\label{eq:psd2}
    \langle \Omegav[\Deltav] , \Ev_{ij} \rangle
    =&
    \frac{\langle \Qv^\top \Deltav \Qv, \Ev_{ij} \rangle}
    {\lambda_j - \lambda_i}, \quad i \neq j \nonumber \\
    \langle \Omegav[\Deltav] , \Ev_{ii}\rangle
    =& 0, \nonumber \\
    d\lambda_i [\Deltav]
    =& \langle \Qv^\top \Deltav \Qv, \Ev_{ii} \rangle.
\end{align}

Set $\Deltav = \Kv_{st}$ in \eqref{eq:firstonepsd1}, we have that
\begin{equation*}
      \langle dF(A)[\Kv_{s t}],  \Kv_{ij} \rangle
    = \langle \Qv^\top dF(A)[\Kv_{s t}] \Qv,  \Qv^\top \Kv_{ij} \Qv \rangle
    = \langle \Qv^\top dF(A)[\Kv_{s t}] \Qv,  \Ev_{ij} \rangle
\end{equation*}

Using \eqref{eq:psd2}, it is straightforward to see that, when $s < t$, the
only way to make $\langle dF(\Av)[\Kv_{st}], \Kv_{ij} \rangle$ not zero is when
$s = i$ and $t=j$. In that case $\langle dF(\Av)[\Kv_{st}], \Kv_{ij} \rangle =
\frac{f(\lambda_t) - f(\lambda_s)} {\lambda_t - \lambda_s}$. Similarly when $s
= t$, we need $i=j=s=t$ to have nonzero inner product and in this case $\langle
dF(\Av)[\Kv_{ss}], \Kv_{ij} \rangle = f'(\lambda_s)$.

Finally to obtain the result for projection on $\calS_+^p$, we pick $f(\lambda)
= (\lambda)_+$ and everything then follows.

\end{document}